\definecolor{linkcol}{rgb}{0,0,0.4}
\definecolor{citecol}{rgb}{0.5,0,0}
\let\headruleORIG\headrule
\renewcommand{\headrule}{\color{black} \headruleORIG}
\def\cleardoublepage{\clearpage\if@twoside \ifodd\c@page\else%
  \hbox{}%
  \thispagestyle{empty}%              % Empty header styles
  \newpage%
  \if@twocolumn\hbox{}\newpage\fi\fi\fi}
\let\minitocORIG\minitoc
\renewcommand{\minitoc}{\minitocORIG \vspace{1.5em}}
\newtheorem{theorem}{Theorem}[section]
\newtheorem{lemma}[theorem]{Lemma}
\newtheorem{proposition}[theorem]{Proposition}
\newenvironment{proof}[1][Proof]{\begin{trivlist}
\item[\hskip \labelsep {\bfseries #1}]}{\end{trivlist}}
\newenvironment{definition}[1][Definition]{\begin{trivlist}
\item[\hskip \labelsep {\bfseries #1}]}{\end{trivlist}}
\newcommand{\qed}{\nobreak \ifvmode \relax \else
      \ifdim\lastskip<1.5em \hskip-\lastskip
      \hskip1.5em plus0em minus0.5em \fi \nobreak
      \vrule height0.75em width0.5em depth0.25em\fi}
\renewcommand{\epsilon}{\varepsilon}
\newenvironment{vcenterpage}
{\newpage\vspace*{\fill}\thispagestyle{empty}}
{\vspace*{\fill}}
\newcommand{\ignore}[1]{}
\definecolor{green}{rgb}{0,0.70,0}
\newcommand{\codice}[1]{\mbox{\textsc{#1}}}
\begin{document}

\begin{titlepage}
\begin{center}
\begin{figure}[htbp]
    \begin{center}
     \epsfig{file=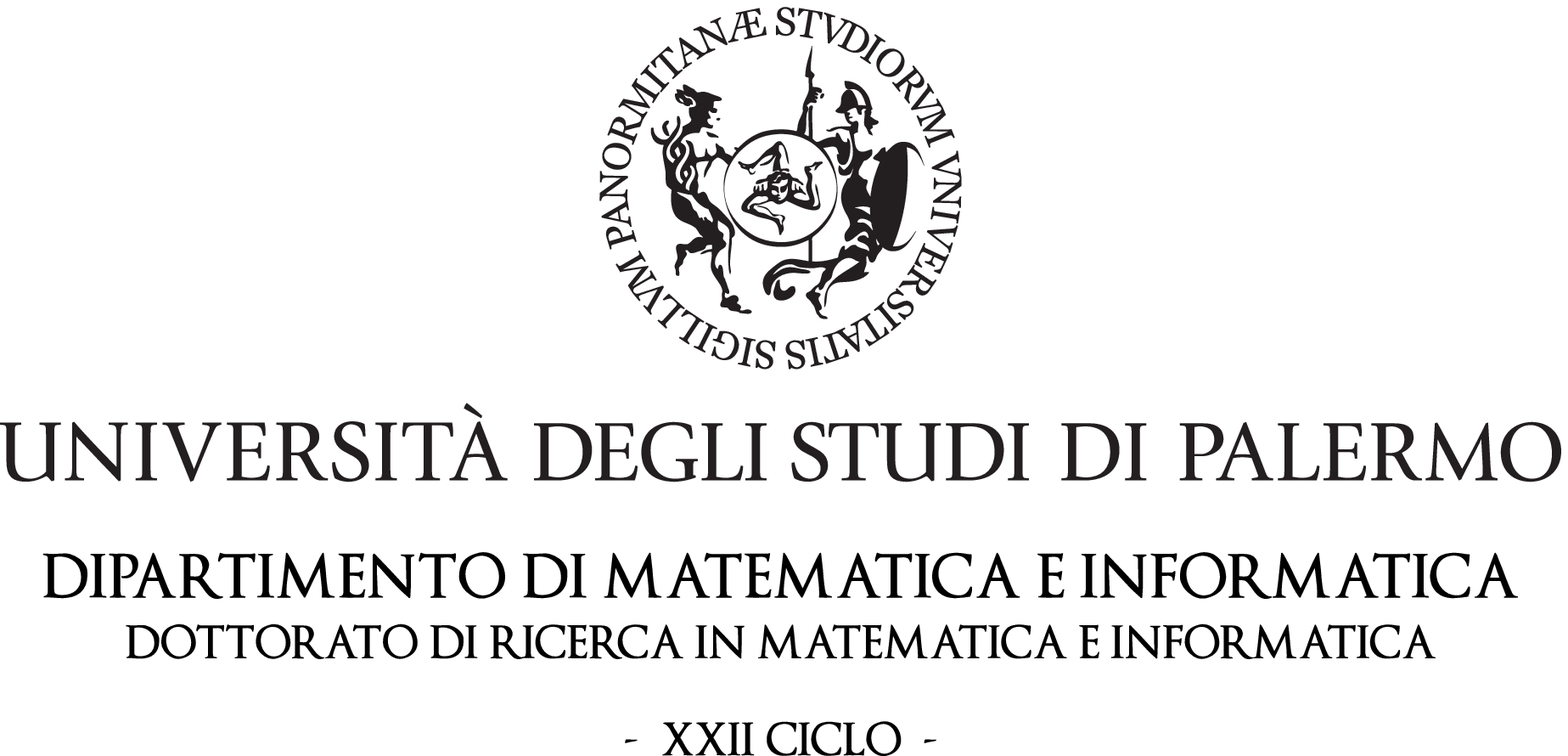,scale=0.7}
    \end{center}
  \end{figure}
\vspace*{0.5cm}
\vspace*{0.5cm}
\vspace*{0.3cm}
\vspace*{0.3cm}
\vspace*{0.3cm}
\vspace*{0.4cm}
\vspace*{0.6cm}
\noindent {\Huge \textbf{\textsc{Multi Layer Analysis}}} \\
\vspace*{5.0cm}

\begin{minipage}[t]{5cm}
\center Author\\ \vspace{-0.2cm}
\center \textsc{Luca Pinello}\\
\end{minipage}
\hfill
\begin{minipage}[t]{5cm}
\center Coordinator\\ \vspace{-0.2cm}
\center \emph{Prof. }\textsc{Camillo Trapani}\\ % <--------- COORDINATORE
\vskip 0.3cm
\end{minipage}
\vspace{1cm}
\begin{center}
\center Thesis Advisor\\ \vspace{-0.22cm}
\center \textsc{Prof. Domenico Tegolo}\\
\center Co-Advisor\\ \vspace{-0.22cm}
\center \textsc{Dott. Giosuè Lo Bosco}\\
\null\vspace{-0.2cm}\hrulefill\\
Settore Scientifico Disciplinare INF/01
\end{center}

\end{center}
\end{titlepage}
\sloppy

\titlepage

%\printnomenclature

\clearpage
\begin{vcenterpage}
\noindent\rule[2pt]{\textwidth}{0.5pt}
\begin{center}
{\large\textbf{Multi Layer Analysis\\}}
\end{center}
{\large\textbf{Abstract:}}
This thesis presents a new methodology to analyze one-dimensional signals trough a new approach called Multi Layer Analysis, for short MLA. It also provides some new insights on the relationship between one-dimensional signals processed by MLA and tree kernels, test of randomness and signal processing techniques.
\\The MLA approach has a wide range of application to the fields of pattern discovery and matching, computational biology and many other areas of computer science and signal processing. This thesis includes also some applications of this approach to real problems in biology and sismology.\\\\
{\large\textbf{Keywords:}}
multi layer analysis, machine learning, pattern discovery, classification, clustering,  tree kernel, test of randomness.\\
\noindent\rule[2pt]{\textwidth}{0.5pt}
\end{vcenterpage}

\dominitoc

\pagenumbering{roman}

\clearpage

\section*{Acknowledgments}

I owe a great deal of thanks to many people for making this thesis possible. First of all I dedicate this dissertation to Prof. qVito Di Gesù, who has been leading and supporting me and my research to be fruitful in his patience and I'm very sad that unfortunately he can no longer follow me at this important step. I would like to express my gratitude for my current advisor Prof. Domenico Tegolo, who has continued to leading and supporting me in this last year of research.

\medskip

\noindent A huge thanks goes to Giosuè Lo Bosco for his fundamental and precious collaboration in all aspects of my work. Without his skillful and infinite support my projects would not have been possible.

\medskip

\noindent I would specially like to thank Guocheng Yuan for his extremely valuable experience, support, insights and, most important his friendship.

\medskip

\noindent Thanks to my fellow PhD friends, in particular Filippo Utro, Fabio Bellavia, Marco Cipolla, Filippo Millonzi for our broad-ranging discussions and for sharing the joys and worries of the academic research.

\medskip

\noindent Furthermore, I am deeply indebted to my colleagues at Department of Mathematics and Computer Science that have provided the environment for sharing their experiences about the problem issues involved as well as participated in stimulating team exercises developing solutions to the identified problems.

\medskip

\noindent Finally, I wish to express my gratitude to my family and friends who provided continuous understanding, patience, love and energy. In particular, I would like to express a heartfelt thanks to my parents and my girlfriend Valeria for their infinite support in my research endeavors.

\medskip

\medskip

Thanks to all of you.

%\cleardoublepage
\clearpage

\section*{Originality Declaration}
%\addcontentsline{toc}{chapter}{Originality Declaration} %\pagenumbering{arabic}
This work contains no material which has been accepted for the award of any other degree or diploma in any university or other tertiary institution and, to the best of my knowledge and belief, contains no material previously published or written by another person, except where due reference has been made in the test. I give consent to this copy of my thesis, when deposited in the University Library, begin available for loan and photocopying.

\vspace{2cm}

Signed \ldots\ldots\ldots\ldots\ldots\ldots\ldots\ldots \hspace{150pt} January 2011

\tableofcontents
\listoffigures
\listoftables

\mainmatter
\chapter*{Introduction}
\label{chap:intro}
\addcontentsline{toc}{chapter}{Introduction} \pagenumbering{arabic}\markboth{Introduction}{Contributions and Thesis Outline}

\section*{What this thesis is about}
\addcontentsline{toc}{section}{What this thesis is about} %\pagenumbering{arabic}
This thesis presents a new methodology called Multi Layer Analysis (MLA) that acts a transformation from the space of one-dimensional signals to a new space called space of intervals. The main idea of this approach, shared by several other ones, is the decomposition of the input signal into basic features that allows to better extract its useful information.\\

The main motivation of this study was to develop a new high scalable methodology in order to extract shape information from one-dimensional signals. This because a lot of real problems fall in this context. In fact, several application domains such as Geology, Biomedicine and Biology require the analysis of one-dimensional signals in which their features are encoded in the shapes of whole signals or on the shapes of their sub-fragments (e.g seismic signals, ECG tracks or chip-chip or chip-seq tracks). The kind of analysis obviously depends on the application domains but usually involves  Pattern Discovery, Clustering or Classification methodologies. The main advantages of the MLA compared to other similar methods, are its scalability and the possibility to represent a one-dimensional signal in terms of a tree  of intervals, and this permits to express or characterize explicitly any kind of shape. Consequently, this has strong implications since it establishes a connection between the class of algorithms that process one dimensional signals, such as digital signal processing techniques, and algorithms on trees and graphs.

\section*{Contributions and Thesis Outline}
\addcontentsline{toc}{section}{Our Contributions} %\pagenumbering{arabic}
The MLA methodology can be used as preprocessing step in different fields of application e.g.: Classification,
Clustering, Pattern Discovery and Test of Randomness. Thus, it can be used as tool in the field of data analysis.
More in details:

 \begin{itemize}
   \item This method has been applied to the biological problem of nucleosome positioning providing similar performances to the state of the art method, but better scalability and computational time. This is a fundamental point because it allows to analyze more complex organisms. It is also able to recover the positions of fuzzy nucleosomes.
   \item A new nonparametric test of randomness based on MLA, that exploits shape features that are rare in a random signal, was developed.
   \item It allows to map a one-dimensional signal in a tree of intervals. Consequently some tree kernels, used in different contexts, have been adapted to this representation, providing new kernels that explicitly encode the shape information of a one-dimensional signal expressed as a tree of intervals.
   \item The mapping of a one-dimensional signal in a tree of intervals creates a new  and important connection between two  fundamental classes of algorithms: signal processing algorithms and algorithms on trees and graphs.\\\\
 \end{itemize}

Chapter 1 presents the motivations of MLA, focusing on different methodologies that exploit and share the same idea. Some approaches, at first sight disjointed, but actually exploiting the same idea of multi-resolution or multi-views analysis,  are presented. Some aspects of these methods are related to the MLA analysis; in particular similarities or advantages of one method with respect to the others are highlighted. In addition, all the basic definitions of the problems where the MLA can be productively applied are briefly given.\\

Chapter 2 provides a detailed and formal description of the MLA, explaining step by step the MLA transformation and highlighting its limits and properties. Finally,  some general guidelines on how to use the MLA as a preprocessing step for several problems are provided.\\

Chapter 3 explains how MLA can be integrated in the context of Pattern Discovery and Classification. In addition, a case study that regards a particular biological problem in which the  MLA was successfully used is introduced: the nucleosome spacing. Moreover, an alternative approach for the same problem based on Hidden Markov Model and a comparison of the two methods are presented. Finally, the last section is devoted to the description of a new one-class classifier that was used as new classifier module of the MLA.\\

Chapter 4 presents a new nonparametric test of randomness applicable to a set of one-dimensional signals that takes advantage of MLA preprocessing step. In particular, this procedure is based on the probability density function of the symmetrized Kullback-Leibler distance, estimated via a Monte Carlo simulation on the intervals lengths obtained by MLA. The main advantage of this new approach is to perform an exploratory analysis in order to directly verify the presence of several kinds of structures in an input signal. In particular, this test differs from the other approaches since it exploits shape features that are rare in a random signal.\\

Chapter 5 presents how the MLA can help on designing new kernel functions that explicitly take into account the shape information contained in a one-dimensional signal. The main idea of Kernel Methods is presented, giving more details on a particular subclass of kernel functions applicable to structured data, in particular trees. The MLA is used to define a mapping from the set of one-dimensional signal to the set of trees. Two new kernels that use the MLA representation are finally defined and a case study that regards sismographic signals is presented.

\chapter{Multi-resolution or multi-scale methodologies} \label{chap:1}
%\minitoc

The proposed methodology is essentially a multi-level decomposition of a one-dimensional signal. The key point of this method is the multi-level analysis. The idea of ``multi-level'' or ``multi-resolution'' is shared by several apparently disjointed methodologies.

\section{Motivation of Multi Layer Analysis}
Recently the multi-scale or multi-resolution models have been research topics in rapid evolution, with great impact on Computer Science, Applied Mathematic, Image Analysis and Signal Processing. The key idea of the MLA is to obtain several ``views'' or ``features'' of the same input data (at different scale, resolution or in a different domain) in order to perform a better and maybe more understandable analysis. Using this approach it is possible to focus on the regions of interest with a finer resolution, having as a consequence an increase on the precision. The regions of interest can be detected by views or features at lower resolution; in this way it is possible to both obtain better results and an improvement in computational time. The idea of multi-scale analysis comes from the fact that many real systems have different behaviors at different scales. For example in physics there are different laws to describe a phenomenon at different scale or resolution, e.g. classical mechanics for describing the motion of macroscopic objects in opposition to quantum mechanics that describes atoms and molecules. It is not an exaggeration to say that many real problems can be handled using different scales or resolutions. For example the human being organizes his time using seconds, hours, days, weeks, months, years reflecting the multi-scale dynamic of the solar system, using scale depending on the problem he is handling. The folding of a protein can require a time in the scale of seconds, while the scale of vibration of covalent bonds is in the order of $10^{-15}$ seconds. In general, the more details of a system we want to model, the more complex the required laws to describe it becomes.

\subsection{Multi-resolution or multi-scale methodologies}
In the following sections some approaches will be presented, at first sight disjointed, but actually exploiting the same idea of multi-resolution or multi-views analysis. In fact, the shared motivation of all these approaches is that in some cases it is easier, given an input signal, to extract and analyze a set of features or views that represents different information contained in it that analyzes the original signal. This is done by each methodology in different ways but the main idea that connects them is to decompose a signals into simpler parts (in frequency, time domain or in another scale or resolution) and perform the analysis combining the results information on each part. The MLA as well as the other methods exploits the same idea, in which the analysis is performed on several ``parts'' of the original signal obtained, as it will be explained in the next chapter, by a simple operation called threshold. Some aspects of these methods will be related to the MLA analysis in particular where there are strongly similarities or advantages of one method respect to the others.

\subsection{Discrete Fourier Transform}
One of the well-known methods that firstly exploited this idea is the Fourier Transform and in particular its variant for discrete signals called Discrete Fourier Transform (DFT). This transformation is mainly adopted when the information of interest are encoded in the frequency domain of a signal. In fact, the Fourier Transform and its discrete version i.e.  DFT is an operation able to transform a discrete signal from the time domain into the frequency domain. This is done by decomposing it as a linear combination of  sinusoidal components. Here the parts  of the original signal are the pure sinusoids at different frequencies and phases. In more details the DFT decomposes a signal into a discrete spectra composed by its frequency components, while the inverse transform synthesizes the original signal from the frequency components into its spectra\cite{Steven_2007}. More formally: \\

\begin{definition}\emph{DFT}\\
Given a discrete signal $x(n)$ of $N$ samples its $DFT$, and its inverse $DFT$ are defined by these equations:\\

\begin{itemize}
  \item Synthesis equation:
  \begin{equation}
    x(n)=\sum\limits_{k=0}^{N-1}c_ke^\frac{2 \pi j kn}{N}
  \end{equation}\\
  \item Analysis equation:
  \begin{equation}
    c_k=\frac{1}{N} \sum\limits_{n=0}^{N-1}x(n)e^{-\frac{j2 \pi kn}{N}}
 \end{equation}
\end{itemize}

\end{definition}

In more details, the DFT allows to extract frequency, phase and amplitude information of the sinusoids coming from the decomposition of a signal.
In addition, with the DFT, it is possible to find the frequency response of a system from its impulse response and viceversa. In this way it is possible to analyze a system in the frequency domain as it is possible to use the convolution to analyze a signal in the time domain. This approach in some sense extracts several views of the same input signal correspondent to the frequency components that it contains. However one of the main limitation of this approach is that it not perform well for non-stationary signals, and in addition it cannot characterize directly the shapes contained in a signal as it is possible instead to do with the MLA analysis.

\subsection{Wavelet Analysis}
A method that overcome some limitations of the Fourier Analysis is the Wavelet Analysis. A wavelet is a mathematical function and it is used to decompose a signal in components with different frequencies, resolutions and positions\cite{Addison_2009_Wavelet}. The position component is particulary useful when the input signal is not stationary i.e. it has been generated by a stochastic process whose joint probability distribution does not change when shifted in time or space. For this reason wavelets are become popular and nowadays are widely used in multi-resolution analysis. The wavelets transform is the representation of a signal in term of scaled and translated copies of the same function called \emph{mother} wavelet. More in detail, the wavelet transform is obtained by the convolution between a signal and a wavelet function, as illustrated in figure \ref{fig:wavelet2}. It is possible to see in figure \ref{fig:wavelet} an example of scaling and translating a mother wavelet. A mother wavelet needs to satisfy some properties such as finite length and zero mean value. These properties make wavelet analysis more powerful than Fourier analysis since a signal can be decomposed as a sum of the same wavelet properly translated and scaled, instead of using smooth and continuous function like sinusoids. This leads to a good decomposition also in the case of signal that shows discontinuities or in the case of non stationary processes. Figures \ref{fig:haar_wavelet},\ref{fig:mex_hat_wavelet},\ref{fig:morlet_wavelet} show some possible \emph{mother} wavelets.

\begin{figure}[!htb]
\centering
\includegraphics[width=0.7\textwidth]{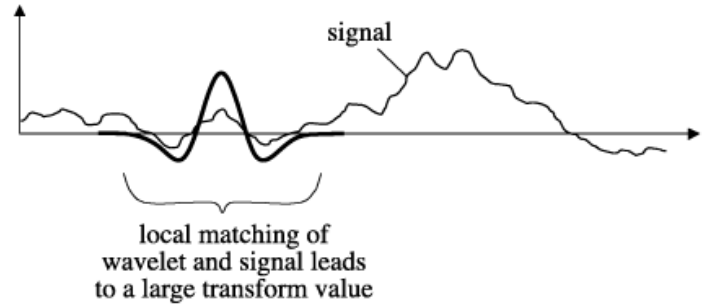}
\caption{Convolution of a signal with a wavelet function. (Part of) this figure  is taken from \cite{Addison_2009_Wavelet} }
\label{fig:wavelet2}
\end{figure}

\begin{figure}[!htb]
\centering
\includegraphics[width=0.7\textwidth]{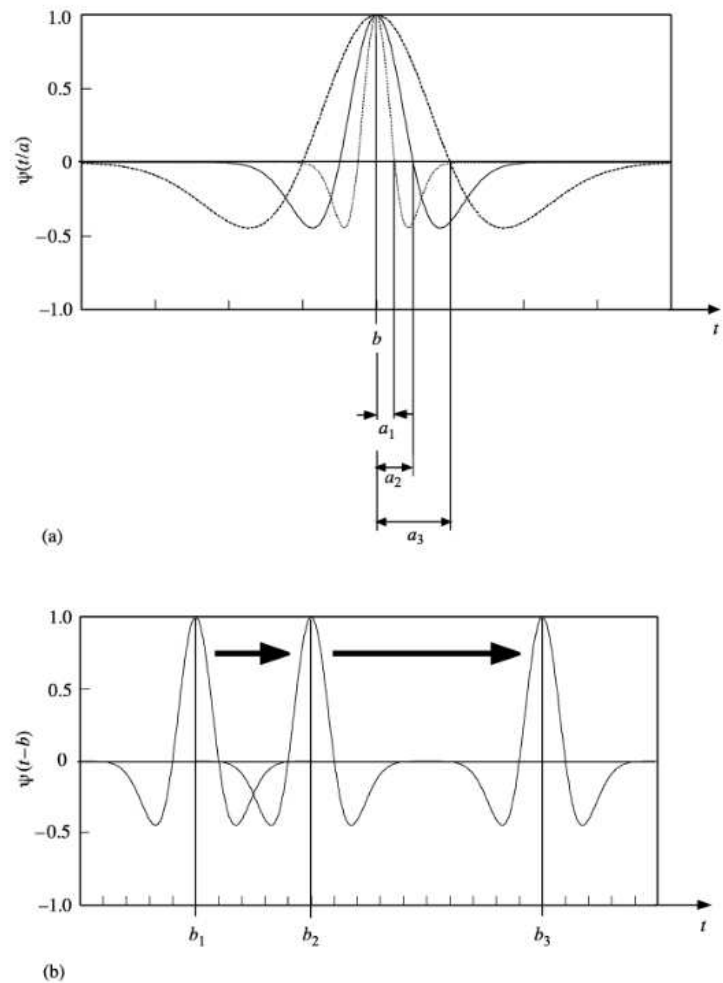}
\caption{Scaling and translation of a mother wavelet. (Part of) this figure  is taken from \cite{Addison_2009_Wavelet} }
\label{fig:wavelet}
\end{figure}

Now it will be formally introduced  the Continuous Wavelet Transform and the Inverse Continuous Wavelet Transform.
\newpage
\begin{definition} \emph{Continuous Wavelet Transform}\\
The continuous wavelet transform or CWT of a continuous signal $x(t)$, considering the mother wavelet $\psi(a,b)$ is defined as:
\begin{equation}
T(a,b) = w(a) \int \limits _{-\infty} ^{\infty} x(t) \psi^* \left ( \frac{t-b}{a} \right) dt
\end{equation}

\noindent where $\psi^*$ is the complex conjugate of the function $\psi$, $w(a)$ is a weighting function usually equal to $\frac{1}{\sqrt a}$ or $\frac{1}{a}$, $a$ control the location of $\psi$ and $b$ its scale.
\end{definition}

\begin{definition}\emph{Inverse Continuous  Wavelet Transform}\\
The continuous inverse wavelet transform or ICWT of the wavelet transform $T(a,b)$ of continuous signal $x(t)$ with respect to the mother wavelet $\psi(a,b)$ is defined as:

\begin{equation}
    x(t) = \frac {1}{C_g} \int \limits _{-\infty} ^{\infty} \int \limits _{0} ^{\infty} T(a,b) \psi_{a,b}(t) \frac{da db}{a^2}
\end{equation}
\noindent where $a$ control the location of $\psi$ used and $b$ its scale.
\end{definition}

\begin{figure}[!htb]
\centering
\includegraphics[scale=.52]{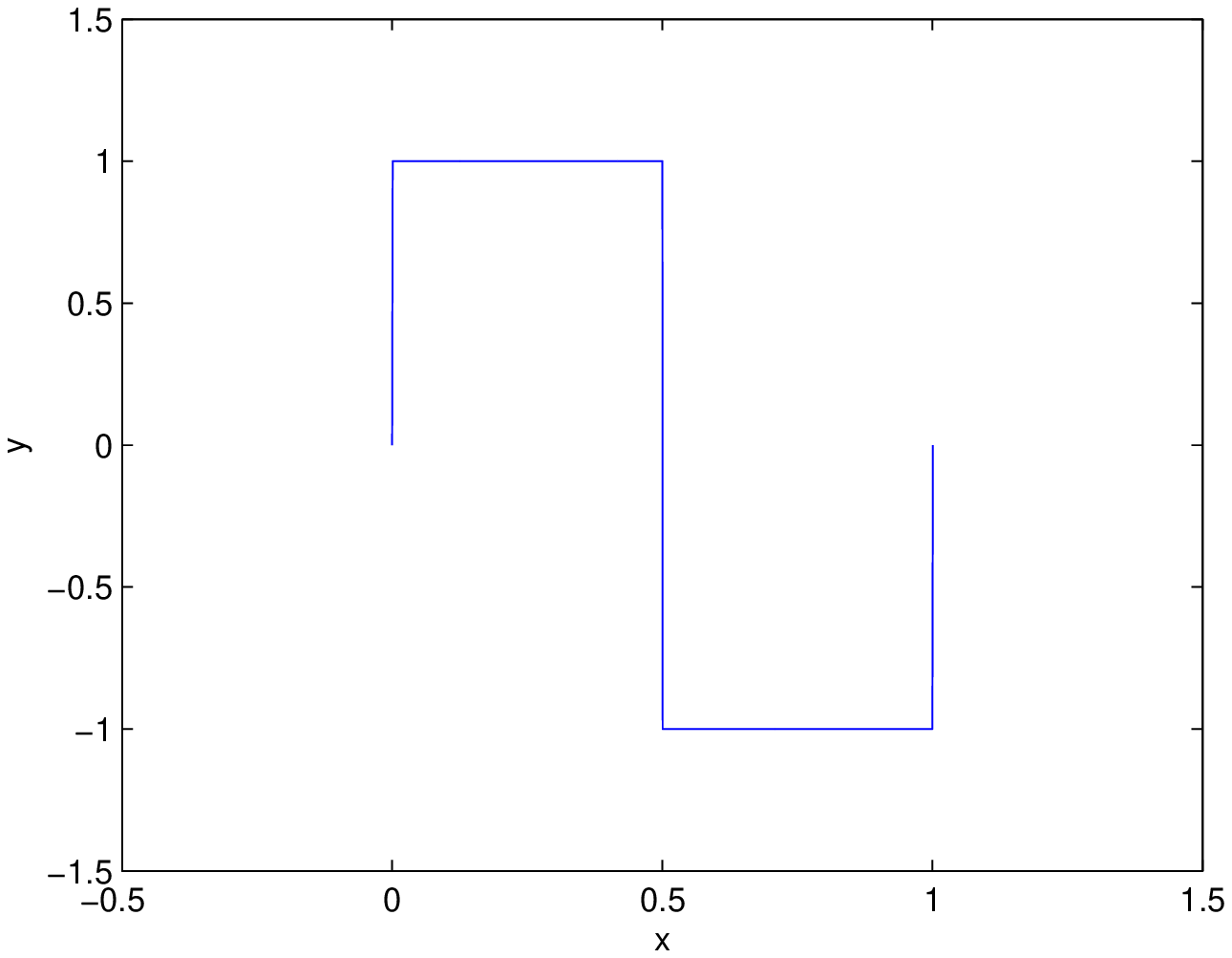}
\caption{Haar wavelet.}
\label{fig:haar_wavelet}
\end{figure}

\begin{figure}[!htb]
\centering
\includegraphics[scale=.52]{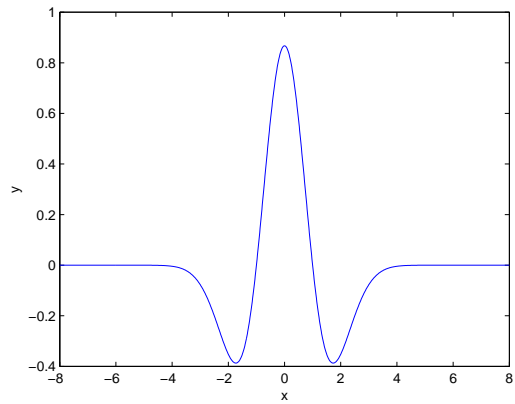}
\caption{Mexican hat wavelet.}
\label{fig:mex_hat_wavelet}
\end{figure}

\begin{figure}[!htb]
\centering
\includegraphics[scale=.52]{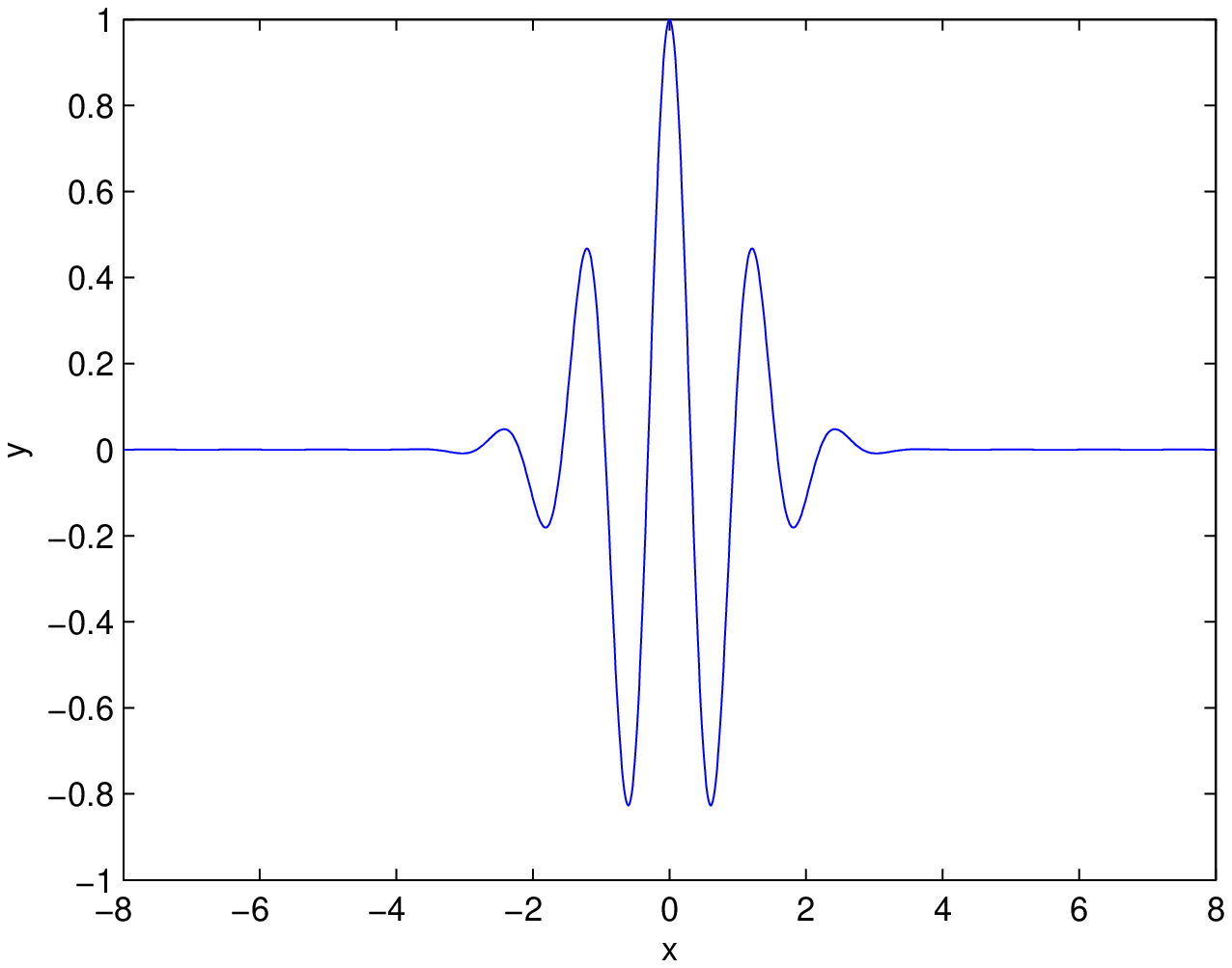}
\caption{Morlet wavelet.}
\label{fig:morlet_wavelet}
\end{figure}

\subsection{Scale Space Theory}
Another methodology that exploits the idea of decomposition of a signal in simpler ``parts'' is the Scale Space Theory that is a framework for a multi-scale representation of signals developed in the fields of computer vision, image processing and signal processing \cite{Lindeberg_1990_Scalespace}. It is a formal theory applied to manipulate signals of one or more dimensions at different scales. Here the ``parts'' of a signal are structures or features at different scales contained in it and as in the wavelet approach the parts are obtained by a convolution of a base signal at different scales. The main difference is how the convolution is performed and how the information of the parts are combined. The concept of scale space is general and  it can be used in  an arbitrary number of dimensions. For simplicity, here the most used framework, that is the case of linear scale space in two dimensions, will be described.

\begin{definition}\emph{Linear Scale Space}\\
Given a two-dimensional signal $f (x,y)$ (e.g. an image), its linear scale space is a family of derived signals $L(x,y,t)$ defined by the convolution of signal $f(x,y)$ with a Gaussian kernel $g$:

  \begin{equation}
    g(x,y,t)=\frac{1}{2\pi t}e^{-\frac{x^2 +y^2}{2t}}
  \end{equation}\\
such that:
 \begin{equation}
    L(x,y,t)=g(x,y,t)*f(x,y)
 \end{equation}

Where $t = \sigma^2$ is the variance of the Gaussian.
\end{definition}
The reason for generating a scale space representation of an image, for example, derives from the consideration that real world objects consist of different structures at different scales. This implies that the real-world objects are different from those of the idealized mathematical entities, such as points or lines, and may appear differently depending on  the scale we use to observe them. For example, the concept of tree is appropriate if we think in the scale of meters, while the concept of leaf requires a finer scale. For example, a machine vision system  that has to analyze an unknown scene, cannot know in advance which scales are appropriate to describe the data in the scene. For this reason, a reasonable approach is to consider descriptions of the scene at different scales simultaneously. An example of this approach is illustrated in figure \ref{fig:scalespace}.

\begin{figure}[!hb]
  \centering
  \subfigure[ $L(x,y,t)$ at scale $t = 0$ (original image)]{\label{fig:s0}\includegraphics[width=0.3\textwidth, bb=0 0 286 205]{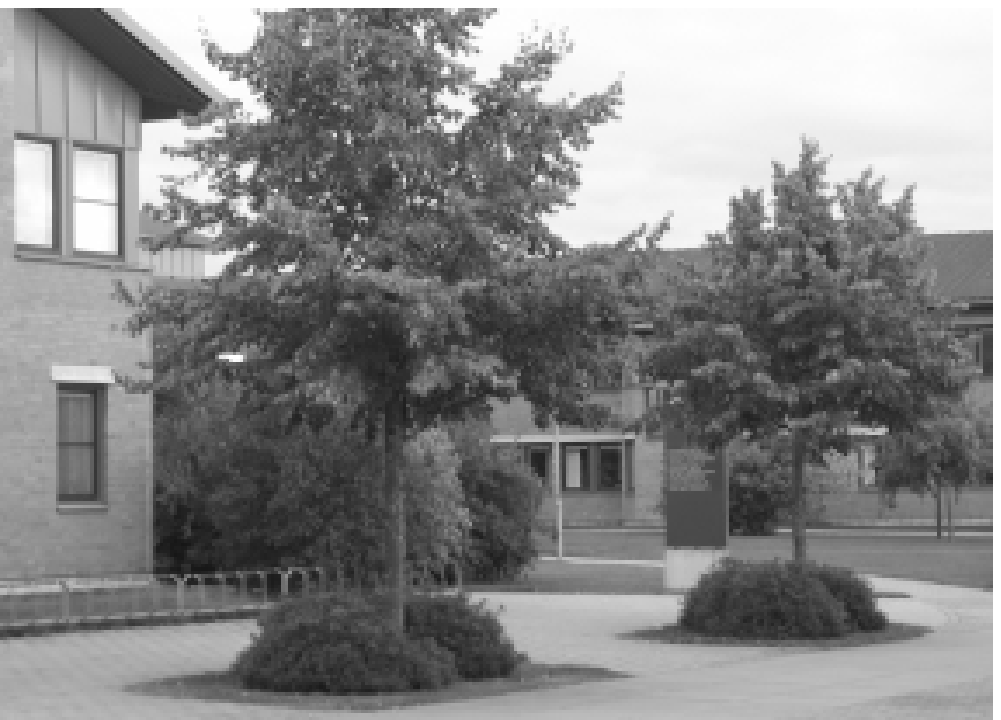}}
  \subfigure[$L(x,y,t)$ at scale $t = 1$]{\label{fig:s1}\includegraphics[width=0.3\textwidth, bb=0 0 286 205]{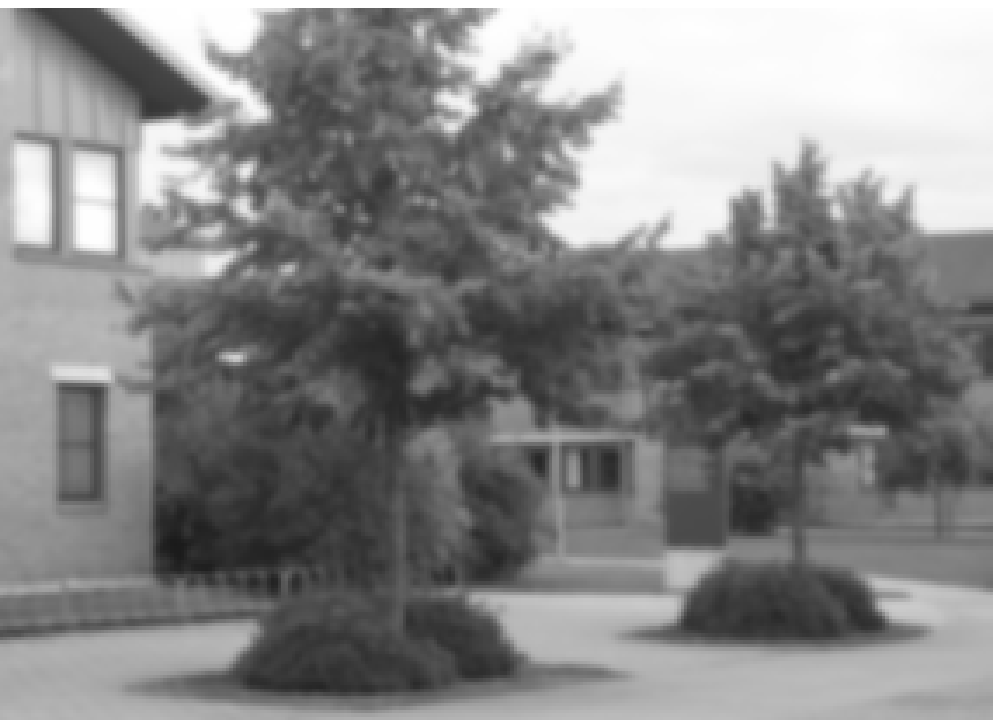}}
  \subfigure[$L(x,y,t)$ at scale $t = 4$]{\label{fig:s2}\includegraphics[width=0.3\textwidth,bb=0 0 286 205]{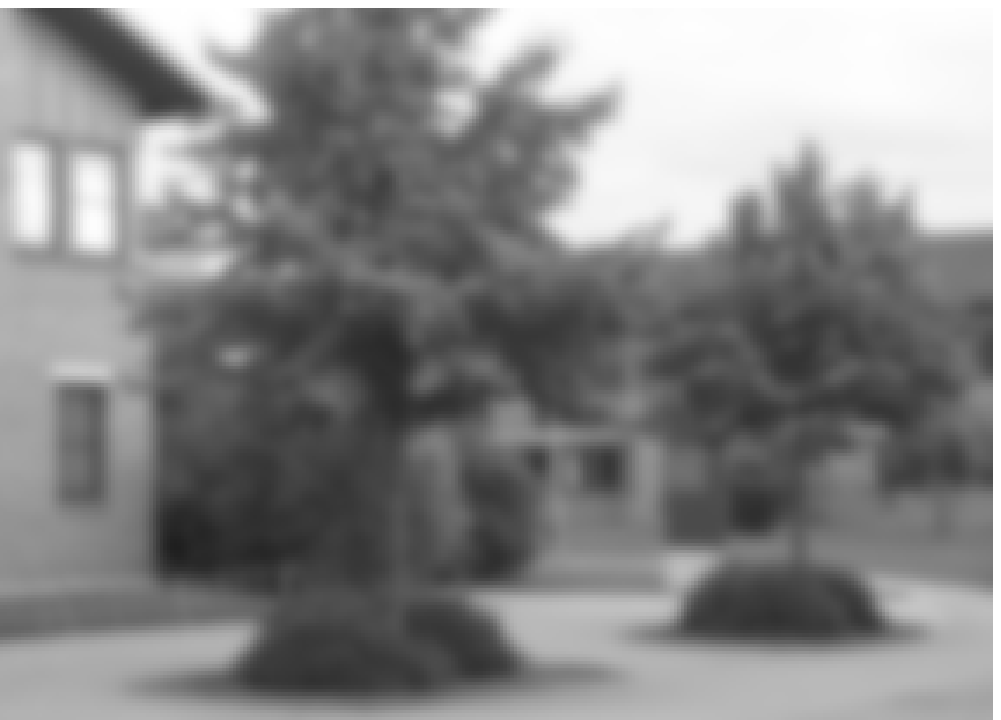}}
  \subfigure[$L(x,y,t)$ at scale $t = 16$]{\label{fig:s3}\includegraphics[width=0.3\textwidth,bb=0 0 286 205]{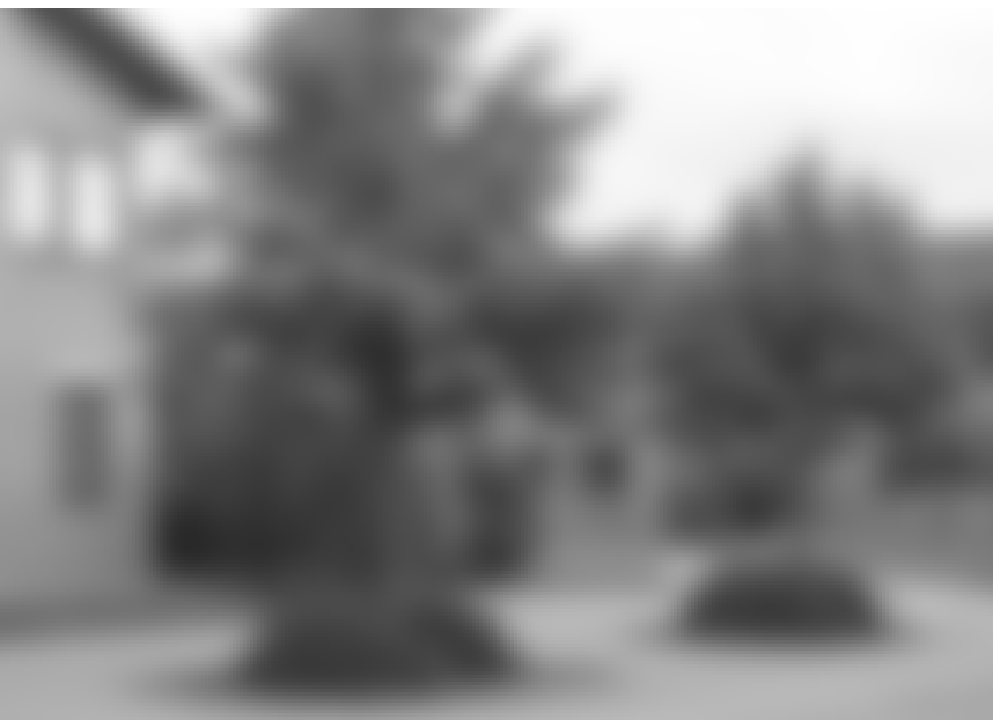}}
  \subfigure[$L(x,y,t)$ at scale $t = 64$]{\label{fig:s4}\includegraphics[width=0.3\textwidth,bb=0 0 286 205]{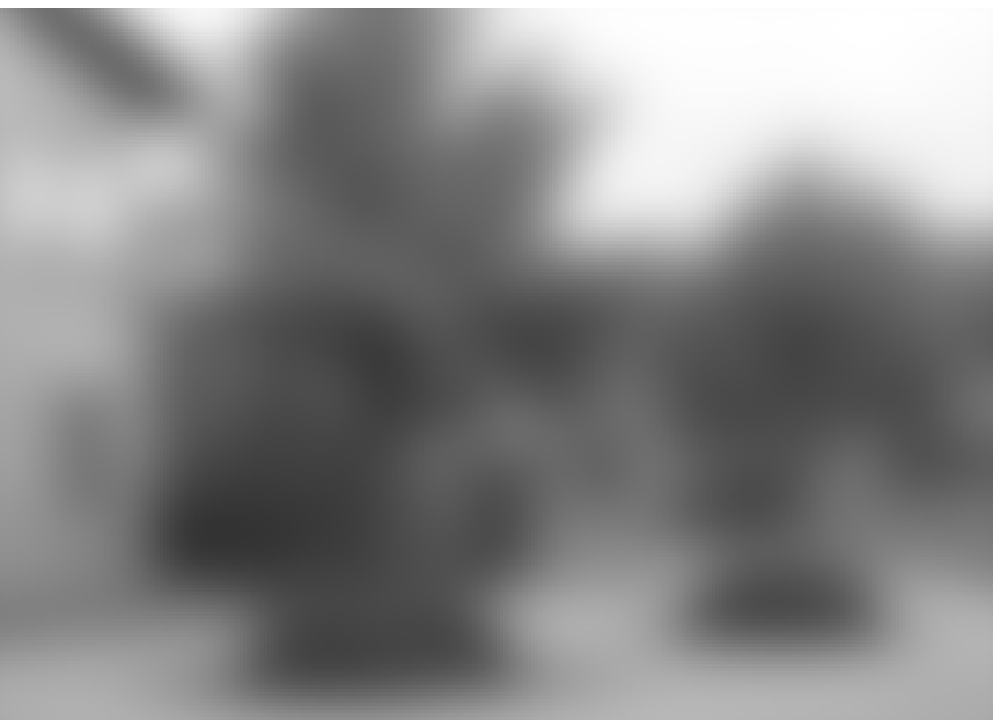}}
  \subfigure[$L(x,y,t)$ at scale $t = 256$]{\label{fig:s5}\includegraphics[width=0.3\textwidth,bb=0 0 286 205]{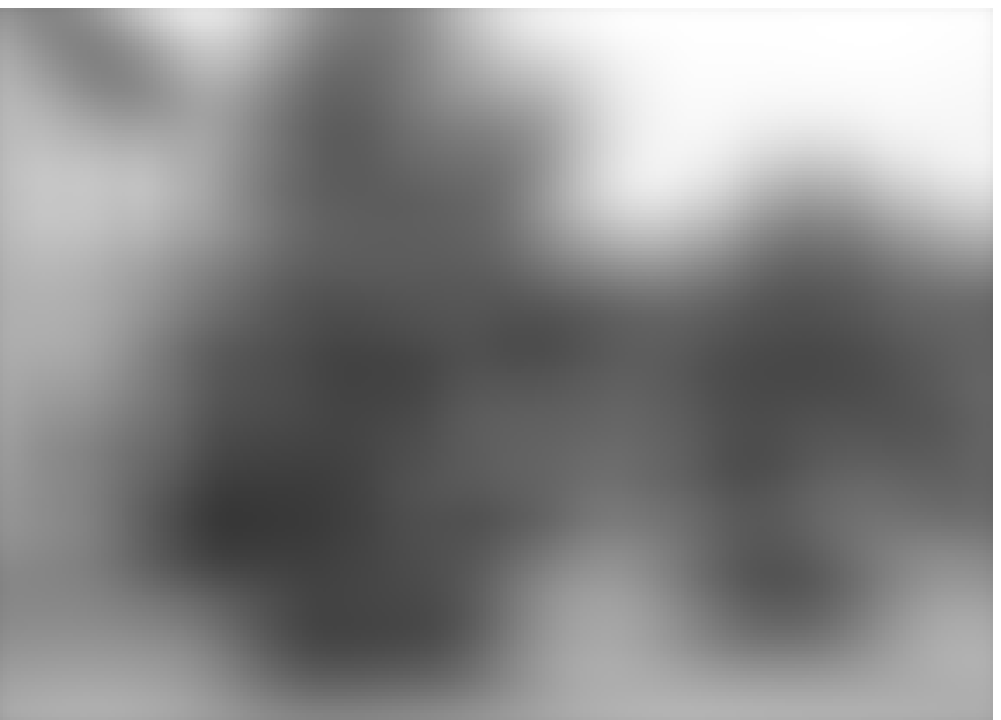}}
  \caption{Scale Space representation}
  \label{fig:scalespace}
\end{figure}

\subsection{Quadtree Analysis}
Quadtree Analysis is another image analysis technique that consists in  iteratively splitting an image into blocks that are more homogeneous than the image itself by using a particular data structure called \emph{quadtree} \cite{De_Berg_2000}. This technique, examining the image at different resolutions, allows to obtain information about its structure. It is also used as the first step in adaptive algorithms for image compression. The technique consists in dividing a square image into four blocks of equal size, and then test whether each block meets some homogeneity criterion (for example, if the gray levels of all pixels belonging into a block have a specific range of values). If the block meets the criterion it  will not be further splitted, otherwise it will be again divided into four blocks that will be tested again according to some homogeneity criterion. This process is iterated until each block meets the criteria. The entire process obviously   will split the image into blocks of different sizes. An example of quadtree analysis, used to detect salient objects in an image, is shown in figure \ref{fig:quadtree}.

\begin{figure}[!htb]
   \centering
   \includegraphics[width=0.7\textwidth]{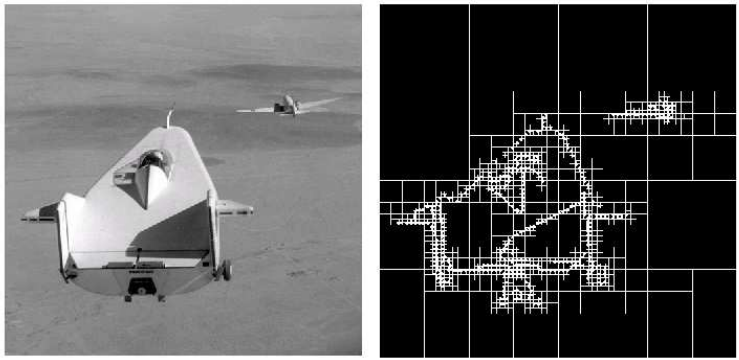}
 \caption{Quadtree image segmentation}
  \label{fig:quadtree}
\end{figure}

%\section{Bootstrap}
%In statistics the bootstrapping is a modern technique of statistical inference within the great family of resampling methods \cite{Efron_1994}. This %technique is used to estimate the sampling distribution of an estimator, resampling (without substitution) from the available sample instead of  %resampling from the considered population  (not always available).
%Intuitively, each resampling corresponds to a different view of the same input data and this allows to extract more useful information. Bootstrapping %allows to derive an estimate of standard deviation and confidence intervals for  the population parameters such as mean, variance, correlation %coefficient and others. This technique is also useful to design hypothesis tests.

\subsection{String methods}
In a lot of discipline the input data comes in natural form as string: bio-sequences, graphs and text documents. In this scenario there are several methodologies that exploit the ``multi views'' approach in terms of subsequences or substrings of the input string. For example there are several similarity measures between string objects in which the more similar, the greater it is the number of the factors they share \cite{Mantaci_2008_DistMeasuresBioSeq}. Another example that will be presented in detail in chapter 5 is the family of convolution kernels\cite{Haussler_1999_ConvolutionKernels}. The basic idea of a convolution kernel is to decompose a data object into simpler parts and then define a kernel function in terms of such parts. A very common kernel for string classification (especially protein sequences) that exploits this idea is the spectrum kernel. The main idea behind it, is that the more substrings with a fixed length are shared by two string, the more similar they are (see \cite{Leslie_2002_SpectrumKernel} for details). More formally let's consider the following definition:
\begin{definition} \emph{Spectrum Kernel}\\
\noindent Let $\Sigma$ be a finite alphabet, $\Sigma^*$ denote all possible string over $\Sigma$ and $\Sigma^k$ all the string over $\Sigma$ of length $k$.
Let $\#x[w]$ denote the number of occurrences of $w$ in $x$ e.g.  $\#x[w]=|\{y|x =y\cdot w\cdot z \wedge y,z \in \Sigma^* \}|$ and
$G_k[x]$ the k-gram vector of $x$ over all the string in $\Sigma^k$ e.g. $G_k[x]=(\#x[w])_{w \in \Sigma^k}.$
Given a $k \in \mathbb{N}$ the spectrum kernel can be defined as:
\begin{equation}
    S_k(s_1,s_2)=\sum\limits_{w\in\Sigma^k}^{} \# s_1[w] \cdot \# s_2[w] = \langle G_k[s_1],G_k[s_2] \rangle
  \end{equation}
\end{definition}

\subsection{Level Set}
Another approach that decompose a signal in parts and that is very close to the MLA is the Level Set method that is a numerical technique for the recognition of shapes in a signal \cite{Sethian:level_set}. This methods is based on the fact that usually, it is easier to characterize a shape using a particular set of auxiliary functions called \emph{Level Sets} than using the shape directly. In fact the level sets allow to characterize a shape considering several of its levels or subviews. In figure \ref{fig:level_set} it is possible to see a pictorial representation of this approach on a function of $2$ variables. Now it will be provided the formal definition of Level Set:

\begin{definition} \emph{Level Set of a function }\\
Starting from a function $f:\mathbb{R}^n->\mathbb{R}$ a \emph{level set} is a set of the form:
\begin{equation}
\{(x_1,\ldots , x_n) | f (x_1,\ldots , x_n)=k\}
\end{equation}

\noindent If $n=2$ this set is called \emph{level curve}, if $n=3$ the set is called \emph{level surface} or more in general if  $n>3$ it is called \emph{level hypersurface}. In particular using a level set  it is possible to express a closed curve $\Gamma$ indirectly using the the function $f$ and considering the level set: $\Gamma=\{(x_1,\ldots , x_n) | f (x_1,\ldots , x_n)=0\}$
\end{definition}

As it will be possible to see later, the MLA idea in some sense is very close to this approach since the information that characterize the signal are similar. The main difference is the way the information are organized, in fact with MLA it is possible to characterize any shape in a natural and elegant way using a particular structure to store these information.

\begin{figure}[htb]
   \centering
   \includegraphics[width=.7 \textwidth]{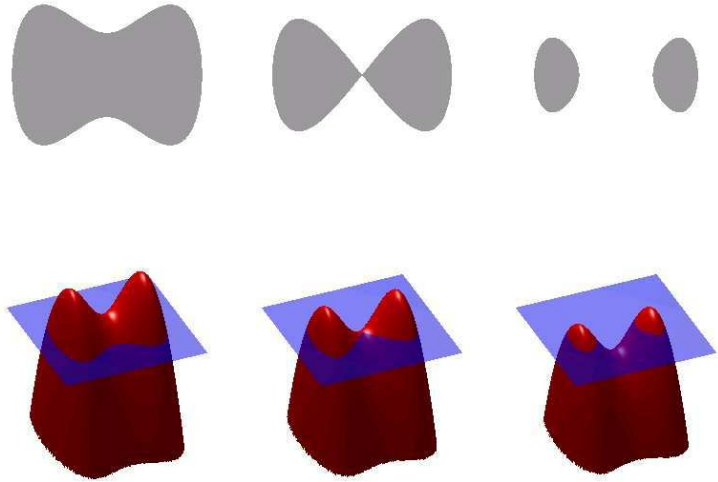}
 \caption{Level Set representation for a function depending on 2 variables.}
  \label{fig:level_set}
\end{figure}

\section{Pattern Discovery and Classification}
The next section presents two machine learning techniques in which the MLA can be promiscuously integrated. For this reason here will be introduced the general problems of Pattern Discovery and Classification, while in chapter 3 will be cover in  detail how to integrate the MLA in these contexts.

\subsection{Pattern Discovery}\label{pattern_discovery}
Pattern discovery is a general discipline in which the main goal is to process a large amounts of data in order to efficiently extract unknown useful knowledge  \cite{Andrew_pattern_discovery}. In other words a pattern discovery method discovers subsets of input data that are meaningful accordingly to a formal criteria. More in general, the pattern discovery is a research area that provides efficient methods to uncover, without using ``a priory'' knowledge on the data, patterns that are repetitive, unexpected or interesting, using a formal criteria.

In order to better understand pattern discovery, it is first necessary to define the meaning of pattern.  Informally a pattern is any relation in the data that is of our interest and that is not casual or random. In other words it is necessary to answer to the question: \emph{how meaningful is a pattern?} This is because the human mind has the tendency to see patterns everywhere. For this reason,  it is necessary to understand if a pattern is significative in a rigorous way. More formally, a pattern is a data vector serving to describe an anomalously high local density of data points \cite{Hand_pattern_discovery}. This means that particular points have a different behavior than the points in other regions usually called ``background'' and that are not interesting since in those regions they have a behavior not related to the true process that has generated the ``anomalies''.

During the last years a lot of attention was paid to this problem so that it is possible to find several tools in the realm of Statistic and in the Computer Science to address this problem. In particular these techniques can be fruitfully applied to several unconnected application domains such as: speech recognition, biology, finance and econometric, biomedicine, text analysis, statistics. As a matter of fact the data involved in the pattern discovery methods are of different kinds such as sequence, image, sound and structured data such as tree and graphs \cite{Andrew_pattern_discovery,Cooley_webmining,Bolton_stat,Park88unsupervisedpattern,Lee_patterndiscovery,Vilo02patterndiscovery}.

\subsection{General schema of a Pattern Discovery method}
A general pattern discovery method can be subdivided in three main parts\cite{Vilo02patterndiscovery} as it is possible to see in figure \ref{fig:pattern_discovery}.

\begin{figure}[htb!]
    \centering
    \includegraphics [scale=.5]{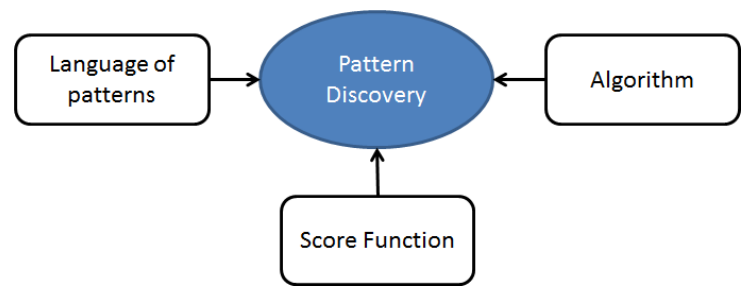}
    \caption {Pattern Discovery parts}
    \label{fig:pattern_discovery}
\end{figure}

\begin{itemize}
  \item A \textbf{language} to describe the pattern;
  \item a \textbf{score function} to assesses the interestingness of a pattern;
  \item an efficient \textbf{algorithm} that identifies the most interesting patterns using the score function.
\end{itemize}

\noindent Obviously, these three parts depend strongly on the particular application domains taken into consideration.
In particular, this is true for the language used to describe the patterns, in fact the data are not always
in the form of  feature vectors or in term of some formal languages (or grammars). In this sense languages can be thought as a transformation that encodes the information present in the data in a suitable form for a particular score function. Another important point is the choice of the most suitable score function for the particular process that has generated the data, in order to discover the ``anomalies''. The last but not least important point is the scalability of the algorithm that is fundamental in many practical application domains. In particular, this last point usually depends on the complexity of the language used to express the patterns and on the computational efficiency of the score function. For this reason it is necessary to consider a compromise between the expressivity and the computational efficiency of languages and score functions.

\subsection{Classification}\label{classification}
In recent years, several algorithms have been developed for classification, but all allow, albeit with different techniques, to match a set of elements defined over a space of features, with a set of labels corresponding to different groups or classes \cite{DUDA}. This is equivalent to partition the space of features into regions, assigning to each region a specific label. In general, classification refers to the class of methodologies of machine learning that given in input a set of data assign subparts of the input data to a given class taken from a finite number of categories. More formally, let's consider a set of observations $X \in \mathbb{R}^n$, a set of elements $Y={y_1, \ldots, y_M}$ called labels and a function $f:X \rightarrow Y$ that defines the true mapping from the set $X$ of observations to the set of labels. A classification algorithms considering a set $D={(x_1,y_1), \ldots ,(x_n,y_n)}$ called \emph{training set}  produce in output a function $g:X \rightarrow Y$ that approximate as close as possible the function $f$.
The classification can also be seen as a problem of parameter estimation, where the goal is to estimate a set of functions of the form:
\begin{equation}
P\left( {class|x } \right) = f\left( { x ;\overrightarrow \theta  } \right)
\end{equation}

\noindent where $x \in X$ represents the vector of input features for each item to be classified, and $f$ is a function depending on a vector of parameters denoted by $\overrightarrow \theta  $ related to the specific classification problem. This function represents the probability that the element represented by the vector of characteristics, belongs to a particular class.
In any case, the classification process generally follows the following steps:

\begin{enumerate}
  \item Selection of the classes of interest;
  \item Selection of the set of training;
  \item Statistical analysis of the set of training in order to assess whether they represent well the problem being tackled;
  \item Algorithm Selection for classification;
  \item Classification of data using the chosen  algorithm;
  \item Validation of the results and their interpretation.
\end{enumerate}

The most common algorithms to perform classification are: Bayesian Classifier, K-Nearest Neighbors, Support Vector Machine, Decision Tree and Neural Networks. The interested readers can found a good survey of the principal classification algorithms here \cite{DUDA}.

\chapter{Multi Layer Analysis}
\label{chap:2}
%\minitoc

In this chapter  a detailed and formal description of the Multi Layer Analysis (MLA) will be presented.
The MLA is a general feature extraction method that can be adapted to discover patterns on one-dimensional signals
or as a preprocessing step to classification, clustering and other data analysis techniques.

\section{The Multi Layer Analysis}
The MLA is a feature extraction method in which the processed input data  can be used by a classifier or a clustering method in order to distinguish between several kinds of patterns. It is based on the generation of several sub-samples of the input signal, each one carried out by a particular threshold operation, chosen by respecting cut-set optimal conditions, within respect to the input data. In figure \ref{fig:MA_Schema}, it is shown a flowchart of the whole methodology. As it is possible to see in that figure, the method starting from the input signal and applying a set of simple operations, called thresholds, extracts a set of intervals. These intervals opportunely aggregated can encode the shape information of the input signal that can be used to characterize it or to discover structures contained in it. In the following, the formal definition of the threshold operation will be given, together with some some generic application of this transformation.

\begin{figure}[htb]
    \centering
    \includegraphics[width=0.8 \textwidth]{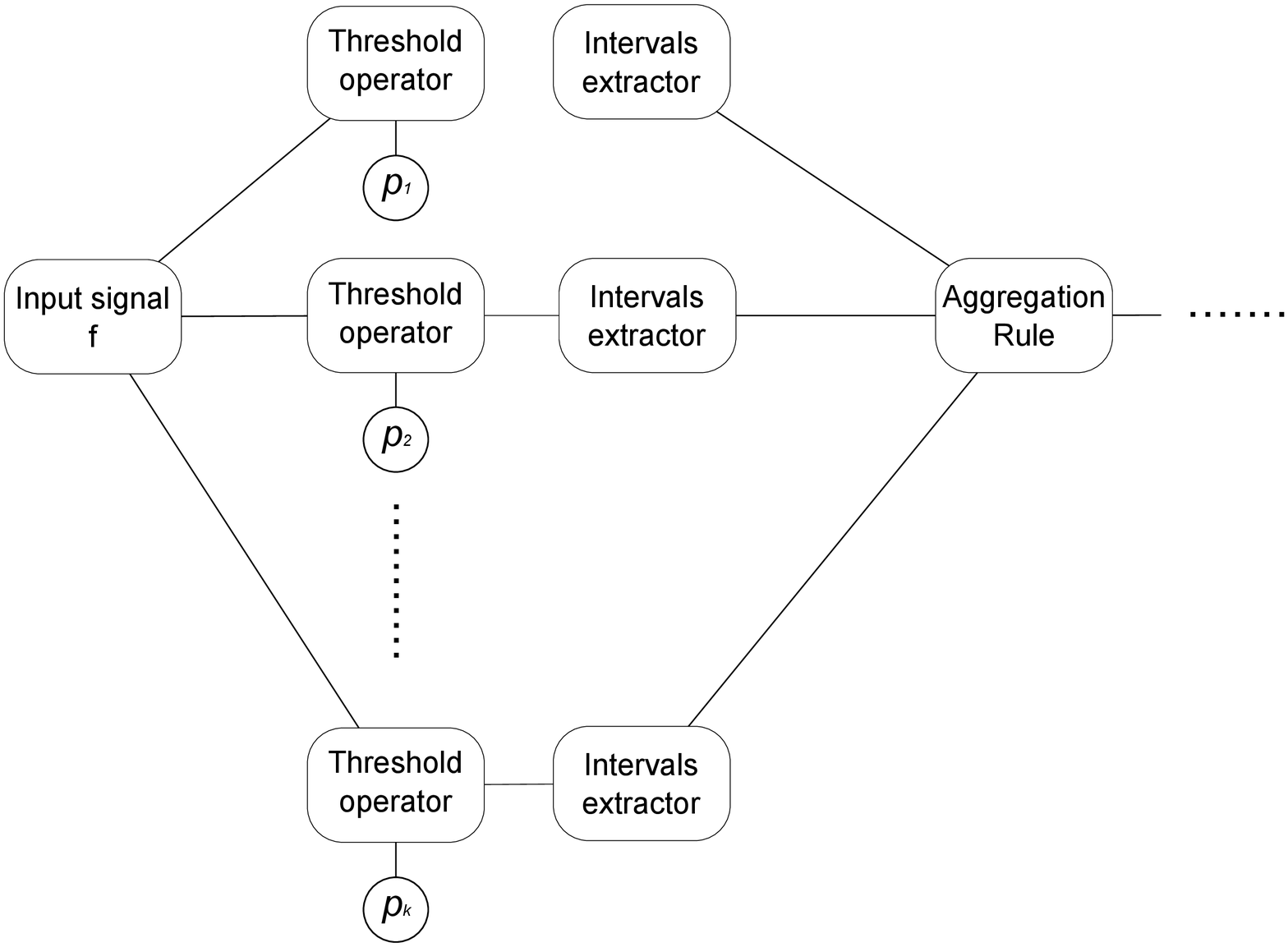}
    \caption{Schema of  MLA processing}\label{fig:MA_Schema}
\end{figure}

\subsection{The threshold operation}
\begin{definition}\emph{Threshold operation}\\
Given an input signal $f$ the threshold operation $\sigma_k$ is defined as follows:

\begin{displaymath}
\sigma_k(x)=
\left\{\begin{array}{lr}
    f(x)    & \mbox{if } p(f(x)) \mbox{ is true}\\
    k       & \mbox{otherwise}\\
\end{array}
\right.
\end{displaymath}
\\
\noindent where $p$ is a generic condition defined on the elements of $f$.
\end{definition}
\noindent In the simplest case $f$ can be defined in $\mathbb{R}$ and it is possible to set:

\begin{equation}
p(f(x))=
\begin{cases}
\text{true} & \text{if $f(x) \leq \phi$}\\
\text{false} & \text{otherwise}
\end{cases}
\label{eq:simple_condition}
\end{equation}

\noindent This approach detects sub-samples deriving from threshold operations that satisfy structural or shape properties. An example of a simple threshold operation with condition expressed in equation \ref{eq:simple_condition} is depicted in figure \ref{fig:thresholds}.\\

\begin{figure}[!htb]
   \centering
   \includegraphics[width=1\textwidth]{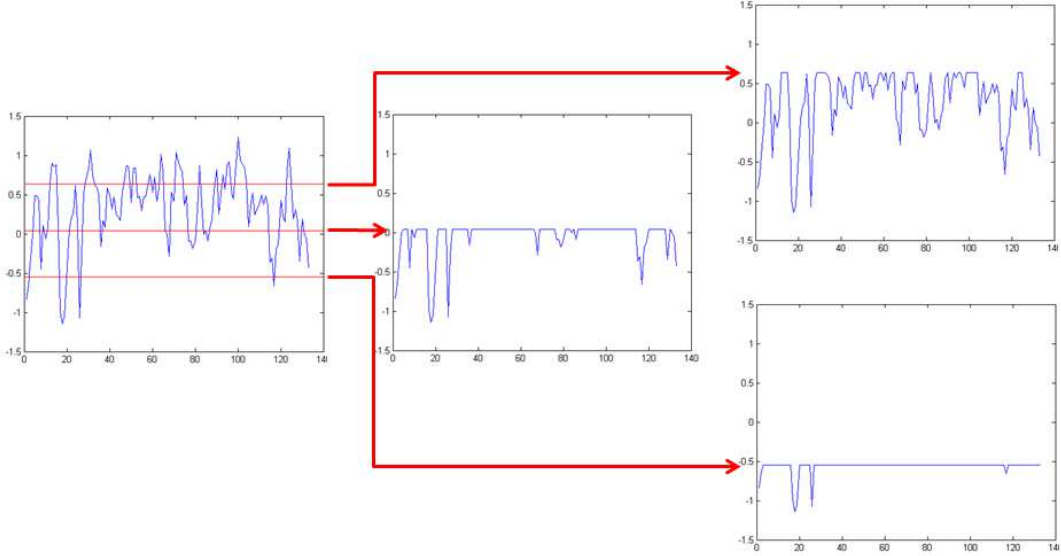}
 \caption{Thresold operation for three different values of $\phi$}
  \label{fig:thresholds}
\end{figure}

The key idea behind the MLA is to explore the input signal at different threshold levels that corresponds to its decomposition into several sub-signals, in order to discover the hidden pattern of interest.

\begin{definition}
\emph{General MLA}\\
The MLA can be defined as a set of sub-samples of a one-dimensional signal $f$
  \begin{equation}
    MLA(f)=\{\sigma_1(x),\sigma_2(x),\cdots,\sigma_K(x)\}
  \end{equation}

\noindent where each threshold operation indicated by the subscript of $\sigma$ could be characterized by a specific condition.
\end{definition}

\noindent The MLA is more accurate and robust in comparison to a naive methodology that, using a single threshold operation could give inaccurate results especially in the real case when the input data is affected by noise. The accuracy and robustness are due the fact that MLA uses more conditions $p$ in order to validate the same hypothesis or conditions on the multiple sub-samples extracted from the input signals $f$. For this reason this technique  introduces a sort of ``flexibility'' to the analysis of a signal. After the multiple threshold operations called \emph{horizontal sampling} it is possible to extract a set of intervals from the original signal and define its \emph{interval representation}; it is also possible to organize these intervals using a particular rule called \emph{aggregation rule}. A summary of the overall process is shown in figure \ref{fig:MA_Schema}. The next two subparagraph explain in details the  \emph{horizontal sampling}, the \emph{interval representation} and the \emph{aggregation rule} of a signal.

\subsection{The Horizontal Sampling, the Intervals Representation and the Aggregation Rule}
The core of the MLA is the interval identification obtained through the horizontal sampling procedure.

\begin{definition}
\emph{Horizontal sampling}\\
Given a bounded signal $f:[\alpha,\beta] \rightarrow \mathbb{R^+}$ and $K \in \mathbb{N}$ threshold operations $\sigma_k$ ($k=1,...,K$) for each $k$  it is possible to build a set of intervals:

\begin{equation}
I_k=\left \{ i^1_k,i^2_k,\cdots, i^{n_k}_k \right \}
\end{equation}
\
\noindent where $i^t_k = [a^{t}_{k},b^{t}_{k}]$ with $t=1,\cdots,n_k$, and $a^{t}_{k},b^{t}_{k}\in \mathbf{R}$
\end{definition}
In the simple case in which the condition $p$ of the generic threshold operation $\sigma_k$ is that expressed in equation \ref{eq:simple_condition} it is easy to prove that $f(a^t_k)=f(b^t_k)=t_k$. After the horizontal sampling process, a different representation of the input signal, called \emph{Interval representation} of $f$ is drawn and it will be denoted with $\Upsilon(f)$.\\

\begin{definition}
\emph{Disambiguation operation}\\
To avoid ambiguities in the case $f$ is discrete i.e. $f:\{1,2,\cdots,L\}\rightarrow \mathbb{R^+}$, and $f(1) \neq min(f)$ or $f(L) \neq min(f)$,  $f$ is transformed into a new signal $f':[a,b] \rightarrow \mathbb{R^+}$:

\begin{displaymath}
f'(x)=
\left\{\begin{array}{lr}
    min(f) & \mbox{if } x=a \bigvee x=b\\
    f(x)   & \mbox{if } 1 \le x \le L\\
\end{array}
\right.
\end{displaymath}

\noindent where

\begin{displaymath}
a=
\left\{\begin{array}{lr}
    0 & \mbox{if } f(1) \neq min(f)\\
    1  & \mbox{otherwise}\\
\end{array}
\right.
\end{displaymath}

\noindent and

\begin{displaymath}
b=
\left\{\begin{array}{lr}
    L+1 & \mbox{if } f(L) \neq min(f)\\
    L  & \mbox{otherwise}\\
\end{array}
\right.
\end{displaymath}
\end{definition}

\begin{definition}
\emph{Interval Representation}\\
Given a signal $f$ and $K$ threshold operations $\sigma_k$ ($k=1,...,K$), and let $I_k=\left \{ i^1_k,i^2_k,\cdots, i^{n_k}_k \right \}$ the set of intervals corresponding to  $\sigma_k$,
then the interval representation of $f$ indicated as $\Upsilon(f)$ is:

\begin{equation}
 \Upsilon(f) =\left \{ I_1,I_2,\cdots, I_K \right \}
\end{equation}

\end{definition}

\begin{definition}
\emph{Aggregation Rule}\\
Given a signal $f$ and its interval representation $\Upsilon(f) =\left \{ I_1,I_2,\cdots, I_K \right \}$ an aggregation rule is a rule that constructs sets of intervals taken from $\Upsilon(f)$ in order to characterize or represent ``interesting'' subparts of $f$. In general it is possible to define several aggregation rules to express different shape properties present in a signal. In the next chapters it will be presented several examples of aggregation rules applied to different application domains.
\end{definition}

\begin{definition}
\label{def:simple_mla}
\emph{Equally spaced simple MLA}
\\Without loss of generality, let assume that $f: \mathbb{R}\rightarrow[0,1]$ and  $K\geq2$. The equally spaced simple MLA is carried out by considering the thresholds $\sigma_k$ with $1 \leq k \leq K$ defined as follow:

\begin{displaymath}
\sigma_k(x,\phi_k)=
\left\{\begin{array}{lr}
    f(x)    & \mbox{if } f(x) \leq \phi_k\\
    \phi_k       & \mbox{otherwise}\\
\end{array}
\right.
\end{displaymath}

\noindent with $\phi_k=\frac{1}{K} \times (k-1)$\\

\noindent As convention the first threshold operation corresponds to $\sigma_1(x,0)$ and the last to $\sigma_K(x,1)$.
Note that all the intervals extracted by the last threshold operation $\sigma_K$ by convention
encompass a single point corresponding to the intersection of the signal with the straight line of equation: $y=1$.
In other words, these intervals $I_K$ have the property that $a_K^t=b_K^t$, $\forall 1 \le t \le K$.
In addition, by definition the first threshold
operation collects only one intervals $[1,L]$ where $L=\beta+1$. An example of
equally spaced simple MLA is depicted in figure \ref{fig:equally_spacing_MLA}.
\end{definition}

\begin{figure}[!htb]
\centering
\includegraphics[width=.6 \textwidth]{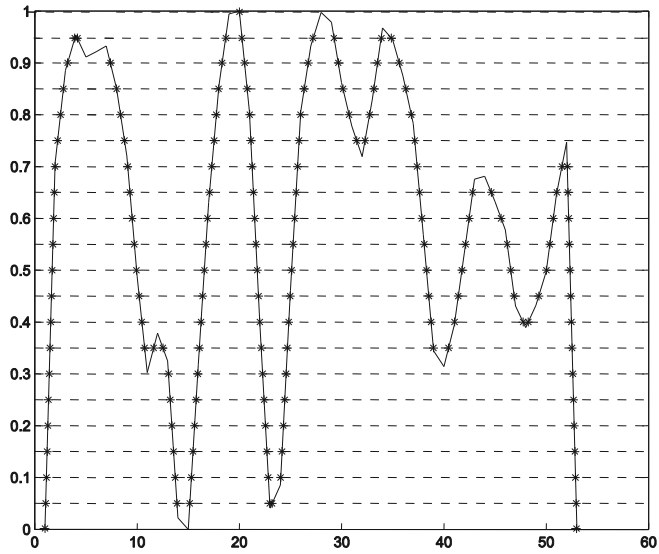}
\caption{Equally spaced simple MLA}
\label{fig:equally_spacing_MLA}
\end{figure}

\noindent In general the interval representation is lossy because it can only keep a subset of points of $f$ that form the intervals in $\Upsilon(f)$ (see figure \ref{fig:MA_intervals}).\\

\begin{figure}[htb!]
    \centering
    \includegraphics[width=0.9 \textwidth]{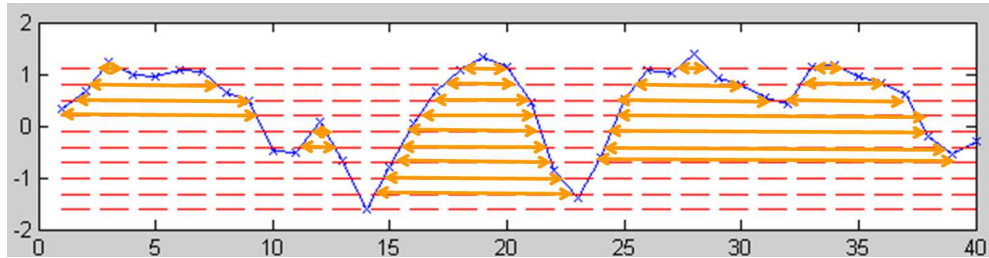}
    \caption {Interval representation of a signal}
    \label{fig:MA_intervals}
\end{figure}

Notice that as many other transformations presented in chapter \ref{chap:1}, by using the MLA it is always possible to reconstruct a lossless version of the input signal if some conditions arise, and this will be discussed later. Obviously, the information loss in this representation decreases as the number $K$ of threshold operations increases. Of course, it is always possible to reconstruct a lossy version of the original signal using an interpolation algorithm and using only the points of its interval representation. Given a generic signal $f$ it is also obvious that it is always possible to obtain a lossless reconstruction of $f$ from its representation $\Upsilon(f)$ as $k\rightarrow\infty$. If $f$ is a discrete signal, it is easy to prove that is always possible to obtain a lossless representation imposing that at least one of the threshold levels intersect each point of $f$, in particular the following theorem gives a way to calculate the minimum number of thresholds operations $K$  to use in order to build a lossless representation using equally spaced thresholds.

\begin{theorem}
\label{cut}
Let $\epsilon_{min}$ be the precision required, and let $f:[\alpha,\beta] \rightarrow [0,1]$ be a discrete time signal of length $L$ ($|[\alpha,\beta]|=L$). Then the lower bound of threshold operations $K$ allowing a lossless representation $h$ of $f$ using the equally spaced simple MLA (i.e. for each pair of adjacent point in $h$,  $d_n=|h(n+1)-h(n)|>=c$ with $c\in\mathbb{R}$) is:
\begin{equation}
%K=\left [ \frac{1}{\epsilon_{min}} \right ]\frac{1}{g}
K= \frac{1}{g} \sum_{n=1}^{L-1} \left [{\frac{d_n}{\epsilon_{min}}}\right ] \approx \left [ \frac{1}{g \times \epsilon_{min}} \right ]
\end{equation}\\
\noindent with $g$ the $GCD$ (Greatest Common Divisor) between all the integers: $F=\left \{\left [\frac{d_n}{\epsilon_{min}}\right]  ,\quad  n=1,2,\cdots,L \right \}$.
\end{theorem}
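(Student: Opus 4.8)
The plan is to reduce the notion of \emph{lossless representation} to a purely arithmetic condition on the threshold spacing, and then to read off the optimal (largest) admissible spacing as a greatest common divisor. First I would invoke the observation recorded just before the statement: for a discrete signal, the interval representation is lossless precisely when every sample of $f$ is intersected by at least one threshold level. Since the equally spaced simple MLA of Definition \ref{def:simple_mla} places its levels at the equispaced values $\phi_k=(k-1)/K$, this condition says that each sample value $f(n)$ must be an integer multiple of the spacing $\delta=1/K$. Thus the problem becomes: find the largest admissible $\delta$ (equivalently the smallest $K$) such that every $f(n)$ lands on the grid $\{0,\delta,2\delta,\dots\}$.

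Second, I would bring in the precision $\epsilon_{min}$ to make the arithmetic finite. I replace the real increments $d_n=|f(n+1)-f(n)|$ by the integers $D_n=[d_n/\epsilon_{min}]$, so that, to within the required precision, consecutive samples differ by $D_n\epsilon_{min}$. Telescoping these increments shows that every sample value differs from the first one by a signed sum of terms $D_n\epsilon_{min}$, hence is congruent modulo $g\,\epsilon_{min}$ to $f(1)$, where $g=\gcd(D_1,\dots,D_{L-1})$ is exactly the GCD appearing in the statement. The disambiguation operation lets me anchor the baseline at $\min f$, so that $f(1)$ itself sits on a threshold level; consequently every sample lies on the grid of spacing $g\,\epsilon_{min}$.

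Third, I would argue optimality in order to justify calling $K$ a \emph{lower bound}. Because $g$ is the \emph{greatest} common divisor of the $D_n$, the spacing $g\,\epsilon_{min}$ is the coarsest one compatible with every jump: any strictly coarser spacing would fail to divide some $D_n$, leaving the corresponding endpoint off every level and destroying losslessness. Covering the range $[0,1]$ with this optimal spacing then needs $K=1/(g\,\epsilon_{min})$ levels, which, rounded to an integer count, gives the approximate closed form $K\approx[\,1/(g\times\epsilon_{min})\,]$.

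Finally, I would recover the exact summation formula by counting threshold incidences rather than distinct levels: the total number of level-crossings accumulated along the signal is $\sum_{n}d_n/(g\,\epsilon_{min})=\tfrac{1}{g}\sum_{n=1}^{L-1}[d_n/\epsilon_{min}]$, where each term is an integer since $g$ divides every $D_n$. The two expressions coincide when the total variation $\sum_n d_n$ equals the amplitude range, i.e. for a signal that sweeps $[0,1]$ monotonically, and the approximation sign in the statement records precisely this gap. I expect the main obstacle to be controlling the rounding: the telescoped quantities built from the $D_n\epsilon_{min}$ need not coincide with the direct quantization of $f(n)$, because rounding errors can accumulate, so the divisibility-by-$g\,\epsilon_{min}$ claim must be stated at the level of the quantized proxy values and then shown to respect the precision budget $\epsilon_{min}$. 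Reconciling the exact total-variation count with the range-based approximation is the second delicate point, and I would make the monotonicity hypothesis explicit there.
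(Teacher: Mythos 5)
Your proposal follows essentially the same route as the paper's proof: quantize the adjacent differences $d_n$ into the integers $\left[d_n/\epsilon_{min}\right]$, take their GCD $g$, factor each one as $g \, m_n$, and obtain $K=\sum_n m_n=\frac{1}{g}\sum_n\left[d_n/\epsilon_{min}\right]$. The paper's own proof consists of only this factorization and summation, so the extra material you supply (the reduction of losslessness to every sample lying on a threshold level, the maximality argument for the spacing $g\,\epsilon_{min}$, and the monotone-signal reconciliation of the sum with $\left[1/(g\times\epsilon_{min})\right]$) amounts to filling in justifications the paper leaves implicit rather than taking a different approach.
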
\label{th:min_eq_K}

\begin{proof}
Using a precision of $\epsilon_{min}$ it is possible to map the set of the absolute differences $D=\{ d_n=|f(n+1)-f(n)|,\quad n=1,2,\cdots,L \}$ in the set of natural numbers $F= \left\{ \left[{ \frac {d_n} {\epsilon_{min}}} \right] ,\quad  n=1,2,\cdots,L \right \} $ and let $g =GCD(F)$. By definition of $g$ it results that \[\left[{\frac{d_n}{\epsilon_{min}}}\right]=g \times m_n\]

\noindent with $m_n \in \mathbb{N}$, and

$$K=\sum_{n=1}^{L-1} m_n = \sum_{n=1}^{L-1} \frac{1}{g} \left [{\frac{d_n}{\epsilon_{min}}}\right ] $$

\end{proof}

\begin{lemma}
Let $\epsilon_{min}$ the precision required, and let $f$ a discrete signal of length $L$ and without loss of generality let us assume that $f$ as values in $[0,1]$.
Then
\begin{equation}
K= \sum_{n=1}^{L-1} \left [{\frac{d_n}{\epsilon_{min}}}\right ]
\end{equation}
is the upper bound on the number of threshold operations $K$ to obtain a lossless representation of $f$ using an equally spaced subdivision of $f$.
\end{lemma}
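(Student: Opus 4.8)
The plan is to obtain this upper bound as a direct consequence of the exact count already established in Theorem~\ref{cut}, using nothing more than the fact that a greatest common divisor of natural numbers is at least $1$. First I would recall that Theorem~\ref{cut} identifies the \emph{minimal} number of equally spaced thresholds yielding a lossless representation as
\[
K_{\min}=\frac{1}{g}\sum_{n=1}^{L-1}\left[\frac{d_n}{\epsilon_{min}}\right],
\qquad g=GCD(F),\quad F=\left\{\left[\frac{d_n}{\epsilon_{min}}\right]:n=1,\dots,L-1\right\}.
\]
The proof of that theorem also records the decomposition $\left[\frac{d_n}{\epsilon_{min}}\right]=g\times m_n$ with $m_n\in\mathbb{N}$, so that $K_{\min}=\sum_{n=1}^{L-1}m_n$. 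Since $F$ is a finite set of natural numbers and $g$ divides each of its elements, $g\ge 1$.

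From here the bound is immediate. Because $g\ge 1$, each term satisfies $m_n=\tfrac{1}{g}\left[\frac{d_n}{\epsilon_{min}}\right]\le\left[\frac{d_n}{\epsilon_{min}}\right]$, and summing over $n$ gives
\[
K_{\min}=\sum_{n=1}^{L-1}m_n=\frac{1}{g}\sum_{n=1}^{L-1}\left[\frac{d_n}{\epsilon_{min}}\right]\le\sum_{n=1}^{L-1}\left[\frac{d_n}{\epsilon_{min}}\right].
\]
Hence the right-hand side bounds from above the number of equally spaced thresholds actually required for a lossless representation. I would then point out that this bound is attained exactly when $g=1$, i.e. when the integer heights $\left[\frac{d_n}{\epsilon_{min}}\right]$ are collectively coprime, so that no common refinement coarser than spacing $\epsilon_{min}$ exists; this shows the bound is tight and cannot be improved uniformly over all admissible signals $f$.

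There is essentially no analytic obstacle here, since the estimate reduces to the one-line inequality $g\ge 1$. The only points demanding care are interpretive. I must state clearly that $\sum_{n=1}^{L-1}\left[\frac{d_n}{\epsilon_{min}}\right]$ plays the dual role of bounding the minimal count $K_{\min}$ from above while simultaneously being a \emph{sufficient} count: refining the optimal grid of spacing $g\,\epsilon_{min}$ (which achieves losslessness by Theorem~\ref{cut}) to the finer equally spaced grid of step $\epsilon_{min}$ only adds threshold levels and therefore continues to intersect every sampled point of $f$, so losslessness is preserved once this many thresholds are used. Finally I would confirm attainability of the worst case $g=1$ by exhibiting a short signal whose consecutive integer heights are coprime, ensuring the stated upper bound is genuinely realized and not merely a crude estimate.
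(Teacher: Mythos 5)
Your proposal is correct and follows essentially the same route as the paper: both reduce the claim to the formula $K=\frac{1}{g}\sum_{n=1}^{L-1}\left[\frac{d_n}{\epsilon_{min}}\right]$ from Theorem~\ref{cut} and observe that, since $g=GCD(F)\geq 1$, the count is largest precisely when $g=1$. Your additional remarks on tightness and on why refining the grid to spacing $\epsilon_{min}$ preserves losslessness are sound elaborations, but the core argument coincides with the paper's one-line proof.
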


\begin{proof}
The proof is straightforward, it is possible to obtain the largest $K$ when the $GCD$ $g$ assume its minimum value, this value is 1 because one property of $GCD$ is that $g\geq1$.
\end{proof}

Although the previous theorem and lemma show a lower and upper bound on $K$ allowing a lossless representation of a discrete signal $f$, it is usually convenient for several reasons to optimize the search for the best smallest $K$ allowing a reasonable lossy representation of $f$. It is obvious that the number of threshold operations strongly depends on the signal shape. For this reason, this representation is suggested when the information of the signal is encoded in the time space because it well characterizes  the shape information (as a solution to this problem it could be possible to use the Fourier Transform and apply this methodology on the spectra of the signal). In figure \ref{fig:degradation} it is shown the progressive degradation of a signal as the number of threshold operations decreases, and in table \ref{tab:degradation} the number of points required to represent a signal giving a fixed level of $K$ thresholds, and the correlation coefficient between the original and the reconstructed signal. In the subsection \ref{choose_K} a calibration procedure to select the proper value of $K$ will be described.

%\begin{figure}[!hb]
%\subfigure[ $L(x,y,t)$ at scale $t = 0$ (original signal)]{\label{fig:s0}\includegraphics[scale=.52]{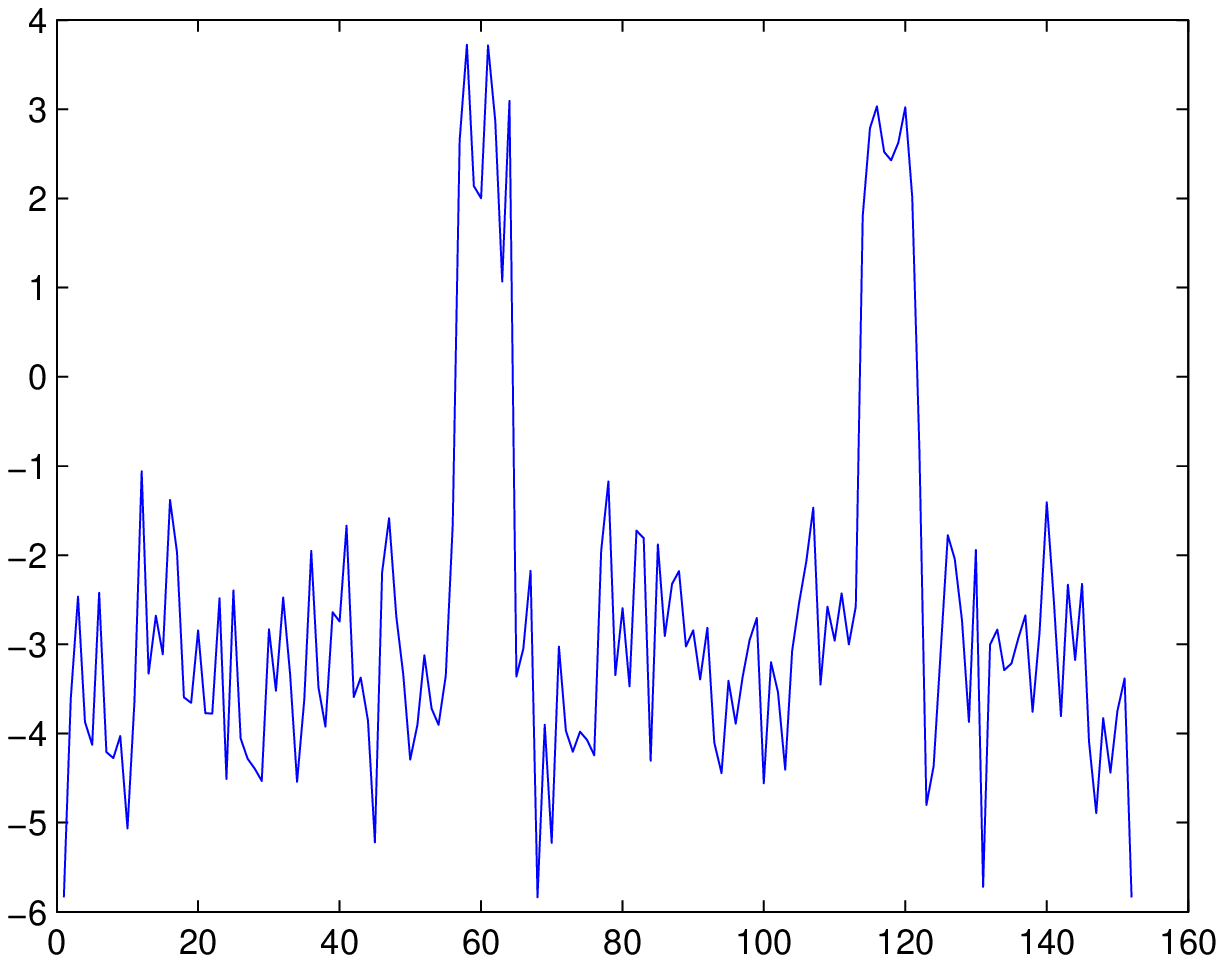}}
%\end{figure}

\begin{figure}[!hb]
  \centering
  \includegraphics[scale=.4]{Chapter2/degradation_original.eps}
  \caption{Original signal}

  \subfigure[Signal reconstructed with $K = 3$]{\label{fig:s1}\includegraphics[scale=.3]{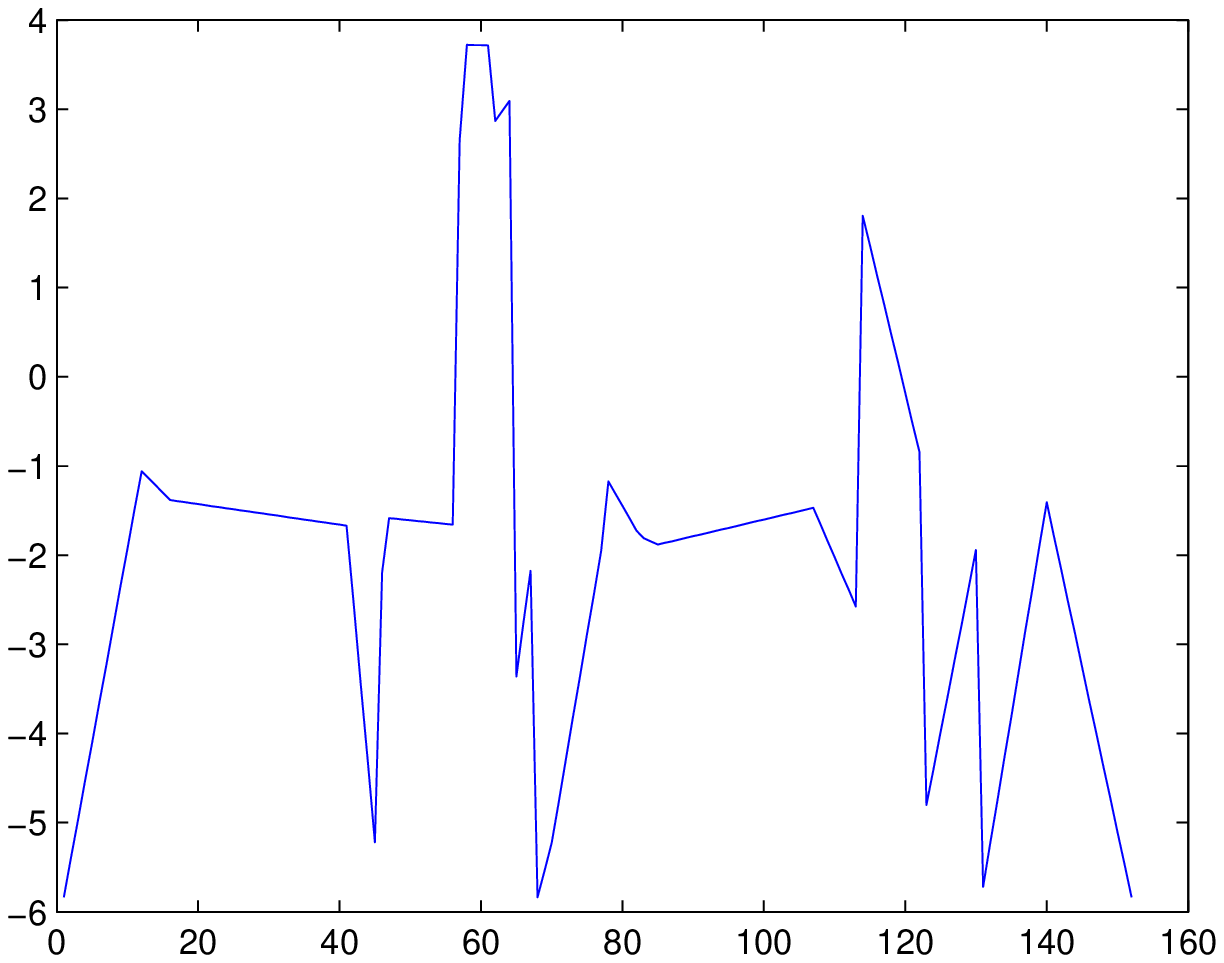}}
  \subfigure[Signal reconstructed with $K = 4$]{\label{fig:s2}\includegraphics[scale=.3]{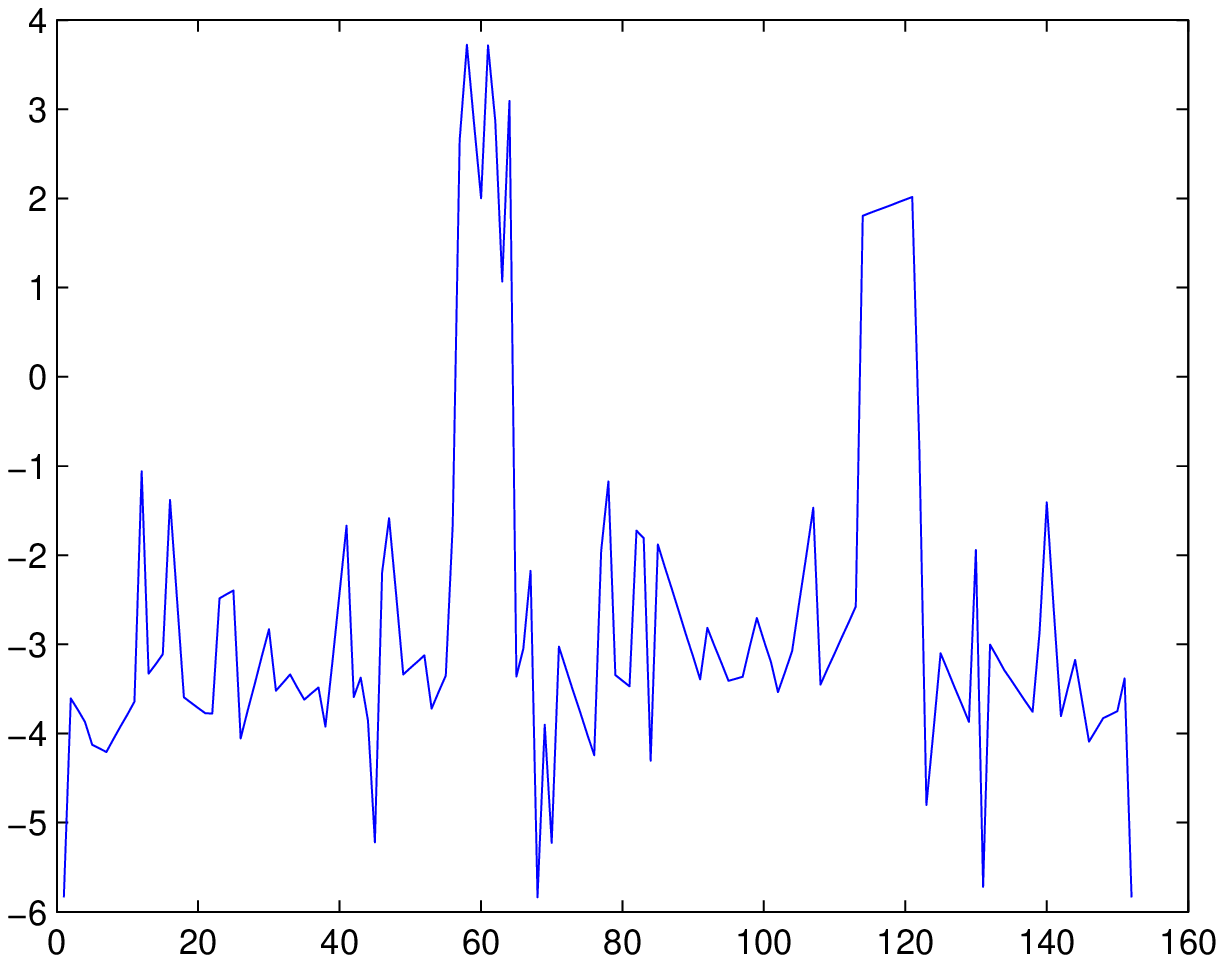}}
  \subfigure[Signal reconstructed with $K = 8$]{\label{fig:s3}\includegraphics[scale=.3]{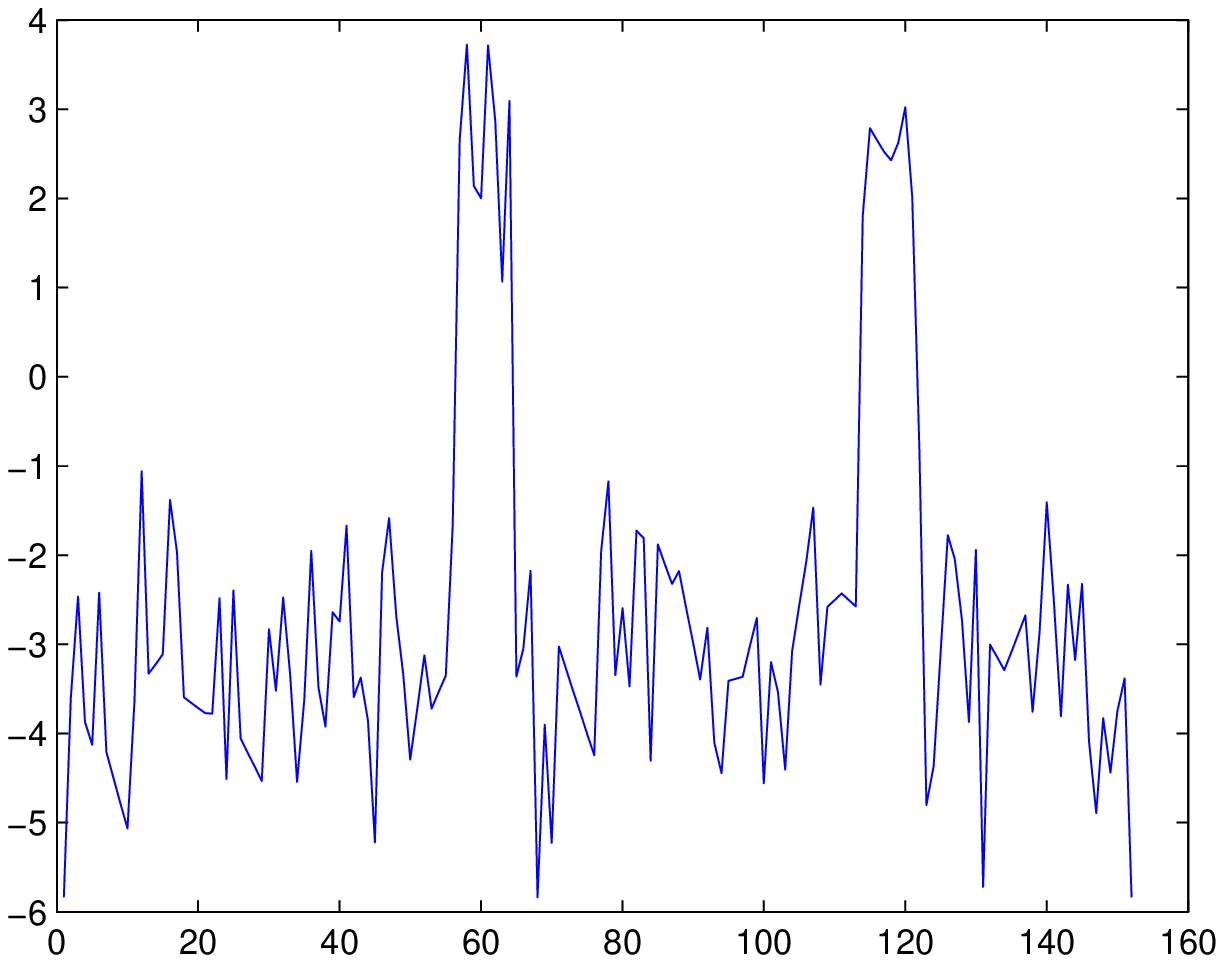}}
  \subfigure[Signal reconstructed with $K = 16$]{\label{fig:s4}\includegraphics[scale=.3]{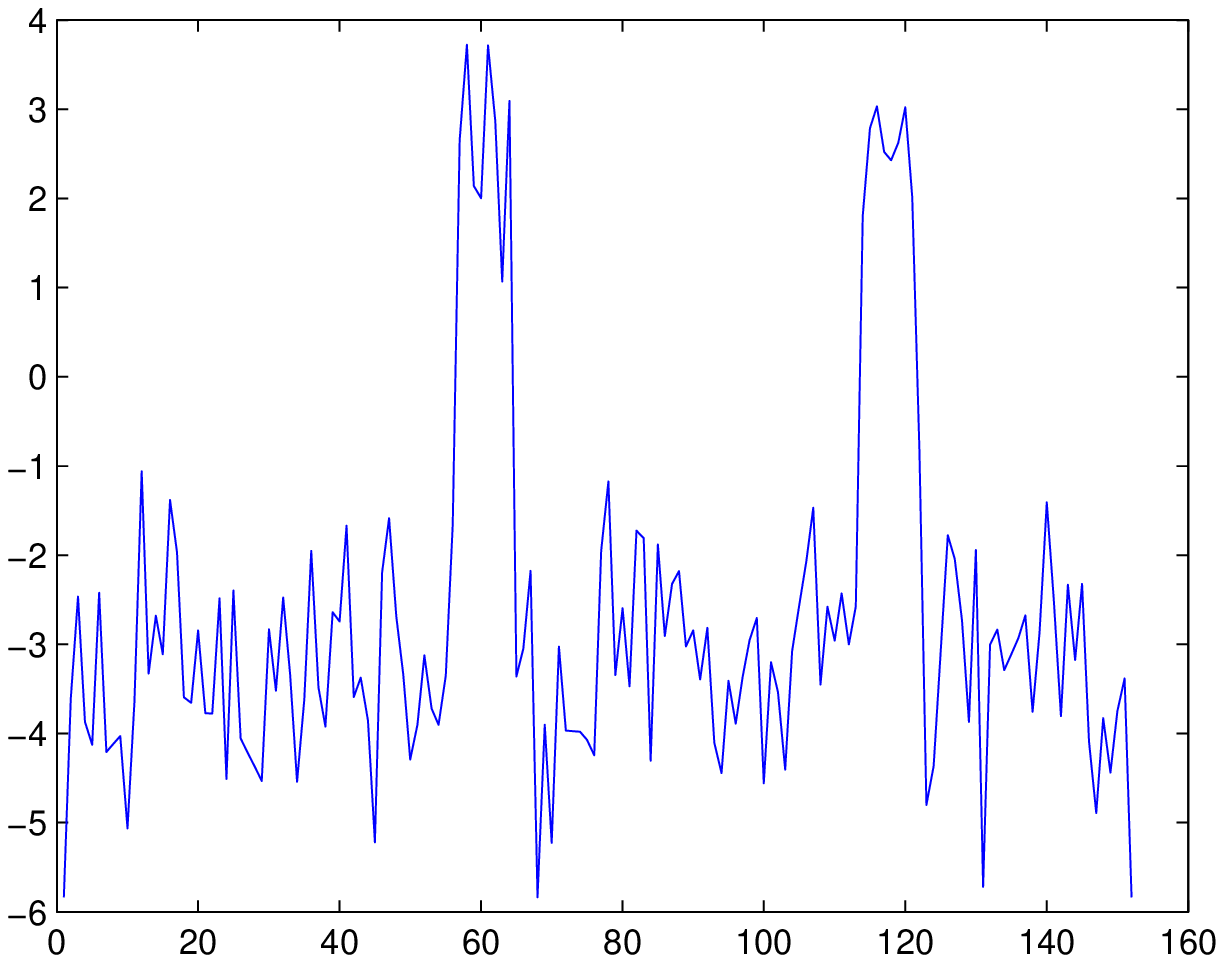}}
  \subfigure[Signal reconstructed with $K = 32$]{\label{fig:s5}\includegraphics[scale=.3]{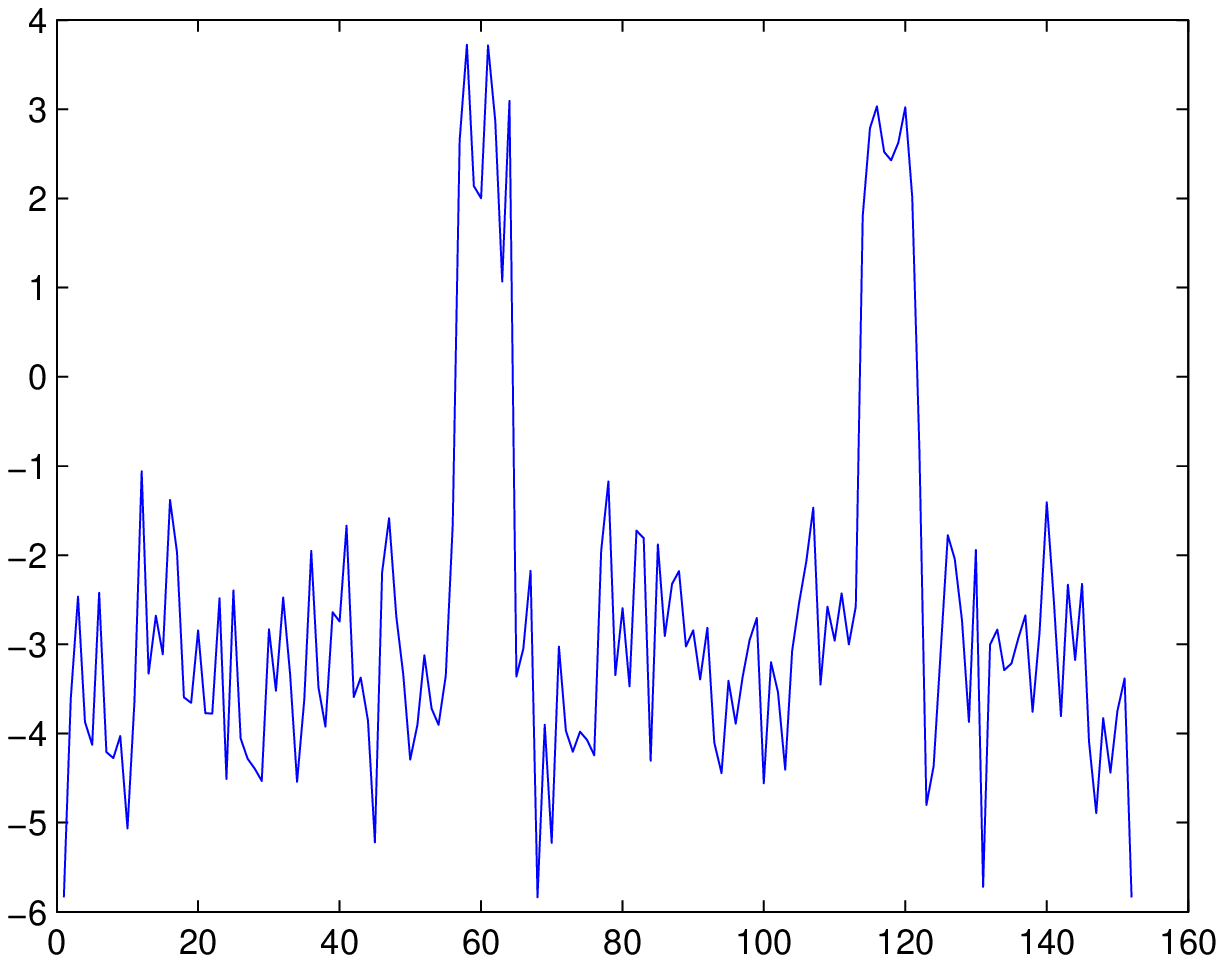}}
  \subfigure[Signal reconstructed with $K = 64$]{\label{fig:s5}\includegraphics[scale=.3]{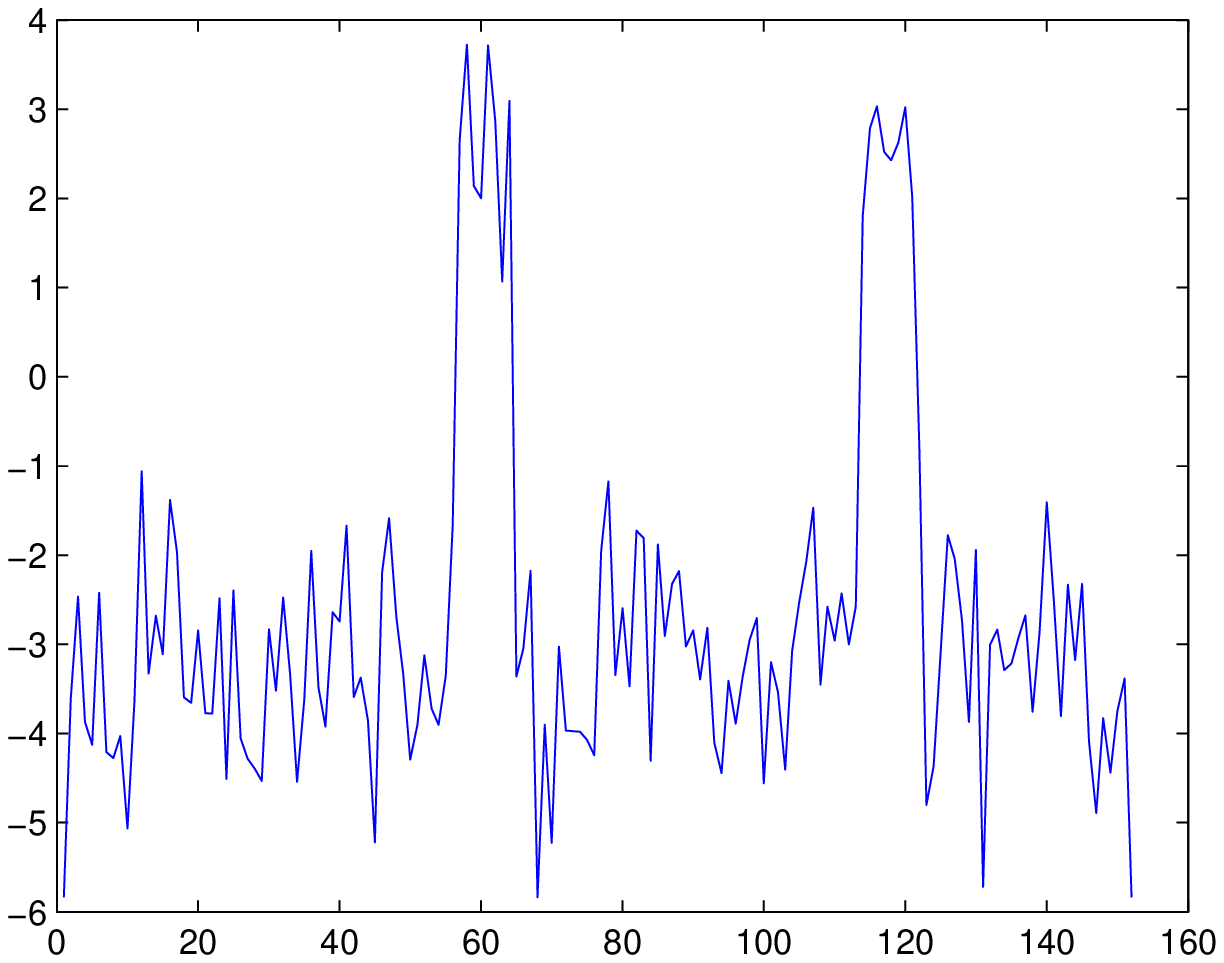}}
  \caption{Degradation of the signal for different values of $K$}
  \label{fig:degradation}
\end{figure}

\begin{table}
\centering
\begin{tabular}{|c|c|c|}
  \hline
  Number of threshold operation $K$ & Kendall Correlation & Length of representation\\
  \hline

  2 & 0.6900 & 4 \\
  4 & 0.2846 & 68 \\
  8 & 0.9420 & 130 \\
  16 & 0.9973 & 280 \\
  32 &  0.9987  & 566 \\
  64 & 0.9999  & 1440 \\ \hline \end{tabular}
\caption{Degradation of the signal for different values of $K$}
\label{tab:degradation}
\end{table}
Note that this transformation cannot be simply related to the theory of sampling and in particular to the Sampling Theorem\cite{Proakis:DSP}, because the non trivial distortion in the spectral components of the original signal that MLA could be introduce.

\begin{theorem}
\emph{(Sampling Theorem \cite{Proakis:DSP})}
If the highest frequency contained in an analog signals , $x_a(t )$ is $F_{max} = B$ and the signal is sampled at a rate $F_s > 2F_{max} = 2B$, then $x_a(t )$
can be exactly recovered from its sample values using the interpolation function:
\begin{equation}
g(t)=\frac{sin (2 \pi B t)}{2 \pi B t}
\end{equation}
\end{theorem}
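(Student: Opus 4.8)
The plan is to prove this classical result (the Nyquist--Shannon sampling theorem) entirely in the frequency domain, exploiting the duality between sampling in time and periodization in frequency. First I would fix notation: write $T = 1/F_s$ for the sampling period and let $X_a(F) = \int_{-\infty}^{\infty} x_a(t)\, e^{-j 2\pi F t}\, dt$ denote the Fourier transform of the analog signal, so that the band-limiting hypothesis becomes the statement that $X_a(F) = 0$ for all $|F| \geq B$. The goal is then to show that $x_a$ is completely determined by the samples $\{x_a(nT)\}_{n\in\mathbb{Z}}$ and to exhibit the reconstruction explicitly.

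The central step is to relate the spectrum of the sampled sequence to $X_a$. Modelling ideal sampling as multiplication by a Dirac comb and expanding that comb into its Fourier series (equivalently, invoking the Poisson summation formula), I would establish the periodization identity
\begin{equation}
X_s(F) = F_s \sum_{k=-\infty}^{\infty} X_a(F - k F_s),
\end{equation}
which says that sampling replicates the baseband spectrum at every integer multiple of $F_s$. I expect this identity to be the main obstacle: making the interchange of summation and integration rigorous requires $x_a$ to be sufficiently well behaved (for instance in $L^1\cap L^2$ with adequate decay), and one must justify pointwise convergence of the replicated series rather than merely its formal manipulation.

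Once the periodization identity is in hand, the hypothesis $F_s > 2B$ does the essential work. The translated copies $X_a(F - kF_s)$ are supported in intervals of width $2B < F_s$, hence no two replicas overlap, i.e. there is no aliasing. Consequently, on the baseband $|F| < F_s/2$ one has $X_s(F) = F_s\, X_a(F)$ exactly, so the original spectrum is recovered by multiplying $X_s$ by the ideal low-pass window $\mathbf{1}_{\{|F| < F_s/2\}}$ and dividing by $F_s$. This is precisely where the strict inequality $F_s>2B$ is needed, since at $F_s=2B$ the replicas meet at the band edges and recovery becomes delicate.

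The final step is to translate this frequency-domain recovery back into the time domain. Multiplication by the low-pass window corresponds to convolution with its inverse Fourier transform, and a direct computation $\int_{-B}^{B} e^{j2\pi F t}\, dF$ shows that this kernel is exactly the sinc function stated in the theorem. Carrying out the inverse transform of the windowed, sampled spectrum then yields the interpolation series
\begin{equation}
x_a(t) = \sum_{n=-\infty}^{\infty} x_a(nT)\, g(t - nT), \qquad g(t) = \frac{\sin(2\pi B t)}{2\pi B t},
\end{equation}
where the displayed $g$ is the inverse transform of the ideal low-pass window of cutoff $B$ (corresponding to the critical rate $T = 1/(2B)$). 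Verifying this last identification and confirming uniform convergence of the series completes the argument.
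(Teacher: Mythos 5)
The paper itself contains no proof of this statement: the Sampling Theorem is quoted verbatim from Proakis \cite{Proakis:DSP} as classical background, used only to contrast the MLA representation with frequency-domain sampling. So there is no in-paper argument to compare yours against, and your proposal must be judged on its own merits. Your outline is the standard and correct route: model ideal sampling by a Dirac comb, invoke Poisson summation to obtain the periodization identity $X_s(F) = F_s\sum_{k} X_a(F-kF_s)$, observe that $F_s > 2B$ prevents the replicas from overlapping, recover the baseband spectrum with an ideal low-pass filter, and invert to get the sinc interpolation series; the analytic caveats you flag (justifying the summation/integration interchange and the convergence of the series for suitably decaying band-limited signals) are exactly the right ones.

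One point deserves care, and it is a defect of the theorem statement itself (inherited from Proakis) rather than of your plan. If you low-pass on $|F| < F_s/2$, as in your third step, the time-domain kernel is $\sin(\pi F_s t)/(\pi F_s t)$, not the stated $g(t)=\sin(2\pi B t)/(2\pi B t)$. With the stated $g$ (cutoff $B$, so $G(F) = \tfrac{1}{2B}\mathbf{1}_{\{|F|\le B\}}$), the series $\sum_n x_a(nT)\,g(t-nT)$ reproduces $(F_s/(2B))\,x_a(t)$, which equals $x_a(t)$ only at the critical rate $F_s = 2B$ --- precisely the case the strict inequality excludes. Your parenthetical remark about the kernel "corresponding to the critical rate" shows you noticed the tension; to make the proof actually match the displayed formula you must either carry out the reconstruction at $T = 1/(2B)$ or replace $g$ by the sinc of cutoff $F_s/2$ (any cutoff in $[B, F_s-B]$ works, with the matching gain). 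Apart from this normalization mismatch, which you should state explicitly rather than leave implicit, the approach is sound.
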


In other words it does not exist a simple mathematical relation that link the two transformations because they extract different information from the signal, frequency and shape information as stressed before. As a enlighten example, consider two simple but opposite cases: a sinusoidal signal and a rectangular pulse signal. Looking at the figure \ref{fig:sin_comp} and \ref{fig:rect_comp} it is clear that this transformation introduces artifact on the spectrum for the simple sinusoidal signal, that can be represented only by one component with the Fourier Transform, but  it is not present any artifact on the rectangular pulse signal that, in the continuous case, require infinite components to be represented properly in the frequency domain. In other words the number of threshold operations doesn't depend directly on the frequency content of the input signal but only on the quantization levels needed to properly represent it. The quantization levels are obviously proportional to the smallest variation $\epsilon_{min}$ that it is necessary to capture in the signal. If it is necessary to obtain in term of threshold operations an equally spaced ``horizontally sampling'' as in the case of \emph{equally spaced simple MLA} it is possible to use the theorem \ref{th:min_eq_K}.

\begin{figure}[!htb]
\centering
\includegraphics [width=1\textwidth]{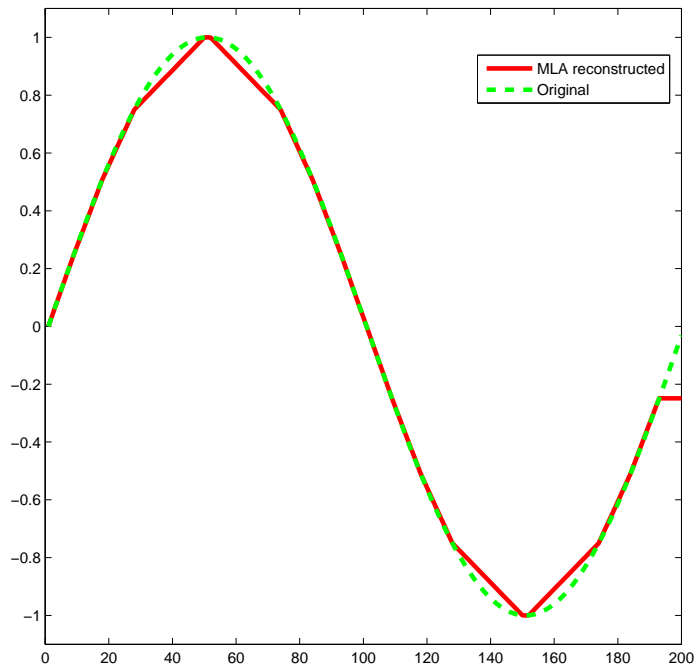}
\caption{MLA reconstruction of the simple sinusoidal signal with $K=8$}
\label{fig:sin_comp}
\end{figure}

%\begin{figure}[!hb]
%  \centering
%  \subfigure[Sinusoid and its MLA reconstruction]{\label{fig:scomp}\includegraphics[scale=.5]{Chapter2/sinusoid_comparison.eps}}
%  \subfigure[Original sinusoidal signal only]{\label{fig:soriginal}\includegraphics[scale=.5]{}}
%  \subfigure[MLA reconstruction]{\label{fig:smla}\includegraphics[scale=.5]{}}
%  \caption{MLA reconstruction of the simple sinusoidal signal with $K=8$}
%  \label{fig:sin_comp}
%\end{figure}

%\begin{figure}[!hb]
%  \centering
%  \subfigure[Rectangular pulse and it's MLA reconstruction]{\label{fig:scomp}\includegraphics[scale=.25]{}}
%  \subfigure[Original rectangular pulse signal]{\label{fig:soriginal}\includegraphics[scale=.25]{}}
%  \subfigure[MLA reconstruction]{\label{fig:smla}\includegraphics[scale=.25]{}}
%  \caption{MLA reconstruction of the rectangular pulse signal with $K=2$}
%  \label{fig:rect_comp}
%\end{figure}

\begin{figure}[!htb]
\centering
\includegraphics [width=1\textwidth]{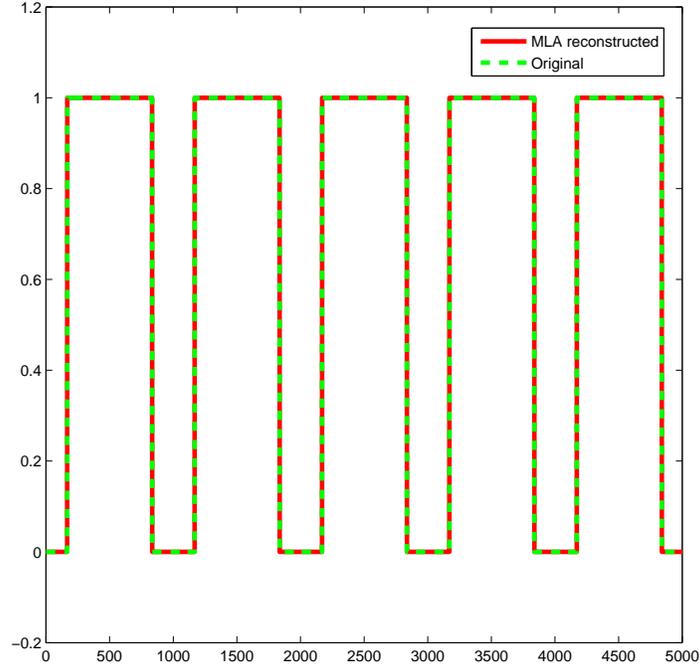}
\caption{MLA reconstruction of the rectangular pulse signal with $K=2$}
\label{fig:rect_comp}
\end{figure}

In some sense the MLA representation is related to the wavelet representation. In fact it is possible to think a signal as composed by scaled and shifted components (in sense of wavelet components) in which the mother wavelet is a single rectangle pulse as depicted in figure \ref{fig:mla_wavelet}. The main difference with wavelet approach is that in MLA transformation the data are represented in a different way and the MLA ``mother'' doesn't need to have mean zero although it has finite duration.

\begin{figure}[!htb]
\centering
\includegraphics [width=1\textwidth]{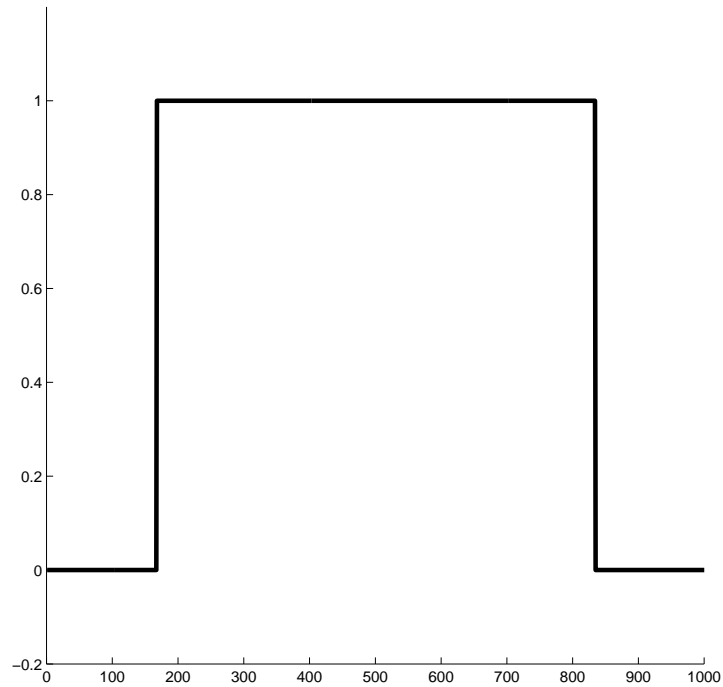}
\caption{MLA ``mother'' function.}
\label{fig:mla_wavelet}
\end{figure}

\section{Choosing the right value for the number of thresholds}\label{choose_K}
The bounds on the values of $K$ given a quantization precision of $\epsilon_{min}$ in the case of $N$ equally spaced thresholds have been previously stated. An interesting question is: is it necessary to use all the levels that the upper bound stated in theorem \ref{th:min_eq_K}? The short answer is no. A practical approach to follow, is to define a similarity measure between the original input signal and the reconstructed signal in order to have an idea on the ``amount'' of information that MLA representation induces. A set of natural similarity functions that can be suitable to this scope belongs to the family of correlation functions. Among the correlation functions, the most known are the \emph{Pearson}, \emph{Spearman} and \emph{Kendall} correlation indices.

\begin{definition}
\emph{Pearson, Spearman, and Kendall correlation}
Given two signal $x$ and $y$ then the correlation indices are defined as:\\

\begin{itemize}
  \item
    \emph{Pearson correlation}
\begin{equation} \label{eq:Pearson}
        r=\frac{\sum_{i=1}^m (x_i-\bar{x})(y_i-\bar{y})}{\sum_{i=1}^m (x_j-\bar{x})^2 \sum_{j=1}^m (y_j-\bar{y})^2}
\end{equation}

  \item
  \emph{Spearman correlation}
\begin{equation}\label{eq:Spearman}
        \rho=1-\frac{6 \sum_{i=1}^m \Delta_i^2}{n(n^2-1)}
\end{equation}

  \item
  \emph{Kendall correlation}
\begin{equation}\label{eq:Kendall}
        \tau=\frac{n_c-n_d}{\frac{1}{2} n(n-1)}
\end{equation}
\end{itemize}

\noindent where $\bar{x}=\frac{1}{m} \sum_i x_i$, $\bar{y}=\frac{1}{m} \sum_i y_i$, $\Delta_i$ is the difference between the ranks of $x_i$ and $y_i$, while $n_c$ and $n_d$ are their number of concording and discording pairs,  respectively.
\end{definition}

In figure \ref{fig:signals} it is possible to see four examples of real world and synthetic signals: an earthquake signal, a gaussian noise signal generated in accordance to the Gaussian distribution equation \ref{eq:gaussian}, a random uniform signal, generated in accordance to the uniform distribution equation \ref{eq:uniform}, and a sinusoidal signal.

\begin{definition}\label{Spearman}
\emph{Uniform Distribution}
The uniform distribution \cite{Feller:Probability} is a distribution that has constant probability over an interval $[a,b]$, and its probability density function $p$ is:

\begin{equation}\label{eq:uniform}
 p(x)= \left \{
 \begin{array}{cc}
   0 & \mbox{for } x<a \\
   \frac {1} {b-a}  & \mbox{for } a \leq x \leq b \\
   0 & \mbox{for } x>b \\
 \end{array}
 \right .
\end{equation}
\end{definition}

\begin{definition}\label{Spearman}
\emph{Normal or Gaussian Distribution}
The Normal or Gaussian distribution \cite{Feller:Probability} is a probability distribution  with probability density function:

\begin{equation}\label{eq:gaussian}
  f(x) = \frac{1}{\sqrt{2\pi}}\; e^{-x^2/2}.
\end{equation}

\end{definition}

Table \ref{tab:varing_K} shows the number of levels required to obtain a correlation value of at least $0.9$ (using the Kendall's correlation, equation \ref{eq:Kendall}) in the case of four examples. It is also important to take into account the length of the signal representation that obviously strongly depends on the number of levels used. The following theorem gives an upper bound on the length of the representation of a signal using $K$ threshold operations.

\begin{figure}[!hb]
  \centering
  \subfigure[Eartquake signal]{\label{fig:scomp}\includegraphics[scale=.25]{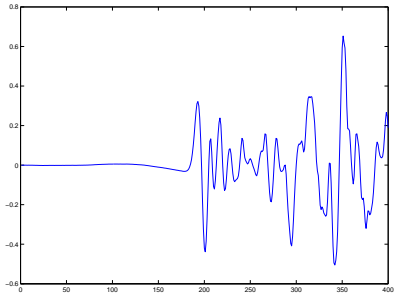}}
  \subfigure[Gaussian noise]{\label{fig:soriginal}\includegraphics[scale=.25]{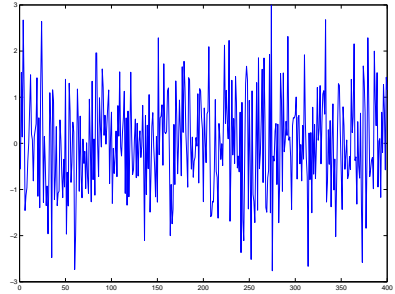}}
  \subfigure[Uniform noise]{\label{fig:smla}\includegraphics[scale=.25]{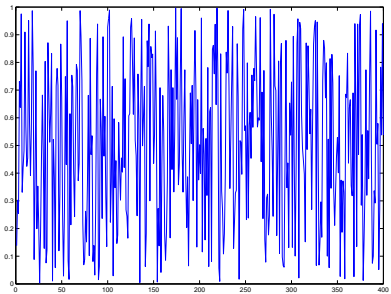}}
  \subfigure[Sinusoidal signal]{\label{fig:smla}\includegraphics[scale=.25]{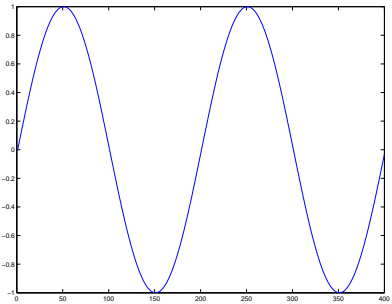}}
  \caption{Different examples of signals (all of length 400) }

  \label{fig:signals}
\end{figure}

\begin{table}
\centering
\begin{tabular}{|c|c|c|c|c|}
  \hline
  signal / $K$ &5 &	10 & 50 & 100 \\
  \hline
  earthquake & 0.3856 & 0.6488 & 0.9399 & 0.9470 \\
  gaussian & 0.9484 & 0.9890 & 0.9994 & 1 \\
  uniform & 0.9916 & 0.9990 & 1 & 1 \\
  sin & 0.9936 & 0.9937 & 0.9950 & 0.9950 \\

  \hline \end{tabular}
\caption{Information loss on the signal for different values of $K$}
\label{tab:varing_K}
\end{table}

\newpage
\begin{theorem}
Given a discrete signal  $f$  of length $L\geq3$  and let $K\geq2$ the number of threshold levels in the equally spaced simple MLA transformation then the upper bound $I_{max}$ on the number of intervals of its representation $\Upsilon(f)$ is:
\begin{equation}
I_{max}(L)= \left \lceil\frac{L}{2} \right  \rceil * (K-1) + 1
\end{equation}

and the real numbers required to represent the intervals are in number of:
\begin{equation}
n_{max}(L)= 2*\left \lceil \frac{L}{2} \right  \rceil * (K-1) +2
\end{equation}

\end{theorem}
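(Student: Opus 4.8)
The plan is to count the total number of intervals $\sum_{k=1}^{K} n_k$ of $\Upsilon(f)$ one threshold level at a time, where $n_k = |I_k|$, and then to observe that storing each interval costs exactly two real numbers. First I would isolate the bottom level: by the convention fixed in Definition~\ref{def:simple_mla}, the threshold operation $\sigma_1$ (with $\phi_1 = 0$) collects the single interval $[1,L]$, so $n_1 = 1$ exactly.

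The heart of the argument is a uniform bound on the number of intervals produced by any one of the remaining thresholds $\sigma_k$, $k \in \{2,\ldots,K\}$. In the discrete setting an interval $i_k^t = [a_k^t, b_k^t]$ is a maximal run of consecutive sample indices on which $f$ lies above the level $\phi_k$, so $n_k$ equals the number of connected components (maximal runs of consecutive integers) of the excursion set $A_k = \{ n : f(n) > \phi_k \} \subseteq \{1,\ldots,L\}$. The key combinatorial lemma I would prove is: \emph{any subset of $\{1,\ldots,L\}$ has at most $\lceil L/2 \rceil$ maximal runs of consecutive integers}. This follows from a packing count: if there are $m$ runs, then consecutive runs must be separated by at least one gap index, giving at least $m-1$ gap indices, while each run occupies at least one index; hence $L \ge m + (m-1) = 2m - 1$, so $m \le \lceil L/2 \rceil$. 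This same bound also covers the topmost level $\sigma_K$, whose intervals degenerate to single points ($a_K^t = b_K^t$) but still count as distinct runs of length one, so $n_k \le \lceil L/2 \rceil$ for every $k \ge 2$.

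Summing then gives the first claim: $I_{max}(L) = n_1 + \sum_{k=2}^{K} n_k \le 1 + (K-1)\lceil L/2 \rceil$, since there are exactly $K-1$ levels above the bottom one. For the second claim I would note that each interval is fully described by its two endpoints $a_k^t, b_k^t$, so the number of real numbers required is exactly twice the number of intervals, yielding $n_{max}(L) = 2\,I_{max}(L) = 2\lceil L/2 \rceil (K-1) + 2$.

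The step I expect to be the main obstacle is the per-level peak bound: making precise, in the discrete post-disambiguation model, that every threshold contributes at most $\lceil L/2 \rceil$ intervals, and checking that the degenerate single-point intervals at the top level are counted consistently with this bound. Note that the padding introduced by the disambiguation operation forces the endpoints below any positive threshold, which can only \emph{reduce} the attainable number of runs, so $\lceil L/2 \rceil$ remains a valid (if not always tight) upper bound. Once the lemma is established, the remaining additions and the endpoint count are routine bookkeeping.
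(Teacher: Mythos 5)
Your proof is correct, but it takes a genuinely different route from the paper's. You bound the interval count level by level: $n_1=1$ by convention for the bottom threshold, and for each of the $K-1$ remaining thresholds you invoke a packing lemma — any subset of $\{1,\dots,L\}$ has at most $\lceil L/2\rceil$ maximal runs, since $m$ runs and the $m-1$ separating gap indices force $L\ge 2m-1$ — then sum and double for the endpoint count. The paper instead argues by extremal construction and induction: it exhibits worst-case (alternating) signals separately for $L$ even and $L$ odd, tracks how many points the disambiguation operation appends (one or two, respectively), uses the fact that each appended point introduces at most $K-1$ new intervals (one per non-trivial level), and reduces everything to the best-case odd signal with $I_{min}(L)=\lfloor L/2\rfloor(K-1)+1$ via floor/ceiling identities. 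Your approach buys rigor and economy: it avoids the parity case split, the reliance on figures, and the somewhat informal induction, and your observation that the padded points sit at $\min(f)$ — hence never lie inside an excursion set at any level $k\ge 2$ and so cannot create new runs — disposes of the disambiguation issue in one line. What the paper's construction buys in exchange is attainability: by computing the interval count of an explicit worst-case signal it shows the formula is actually achieved, i.e.\ the bound is tight, which your pure upper-bound argument does not establish; since the theorem as stated only claims an upper bound, however, your proof does suffice for the statement.
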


\begin{proof}
To avoid confusion, remember that for definition the equally spaced simple MLA adds at the beginning (or to the end) of the signal $f$ a point equal to $min(f)$ if $f(1) \neq min(f)$ (or if $f(L) \neq min(f)$ ) by the \emph{disambiguation} operation.
It is possible to define two kinds of worst case signal, one for $L$ odd (see figure \ref{fig:worst_case} (a)), and one for $L$ even (see figure \ref{fig:worst_case} (b)). The even worst case signal involves always the addiction of a single new point by disambiguation, while two points are added in the case of a worst case odd signal. Moreover, the addition of a new point to the signal, involves the introduction of $K-1$ new intervals as it possible to see in figure \ref{fig:intervals_adding}.
Further it will be  considered a generic threshold operation $\sigma_k$  with $k \neq 1 $ because by definition, the first threshold operation extracts always only one interval independently on the length of the signal.
Note also that, in the case of the best case signal with $L$ odd points, the number of interval is exactly $I_{min}(L)= \left \lfloor \frac{L}{2}  \right  \rfloor * (K-1) + 1$ (see figure \ref{fig:worst_case} (c)).

\begin{figure}[!htb]
\centering
\includegraphics[scale=.52]{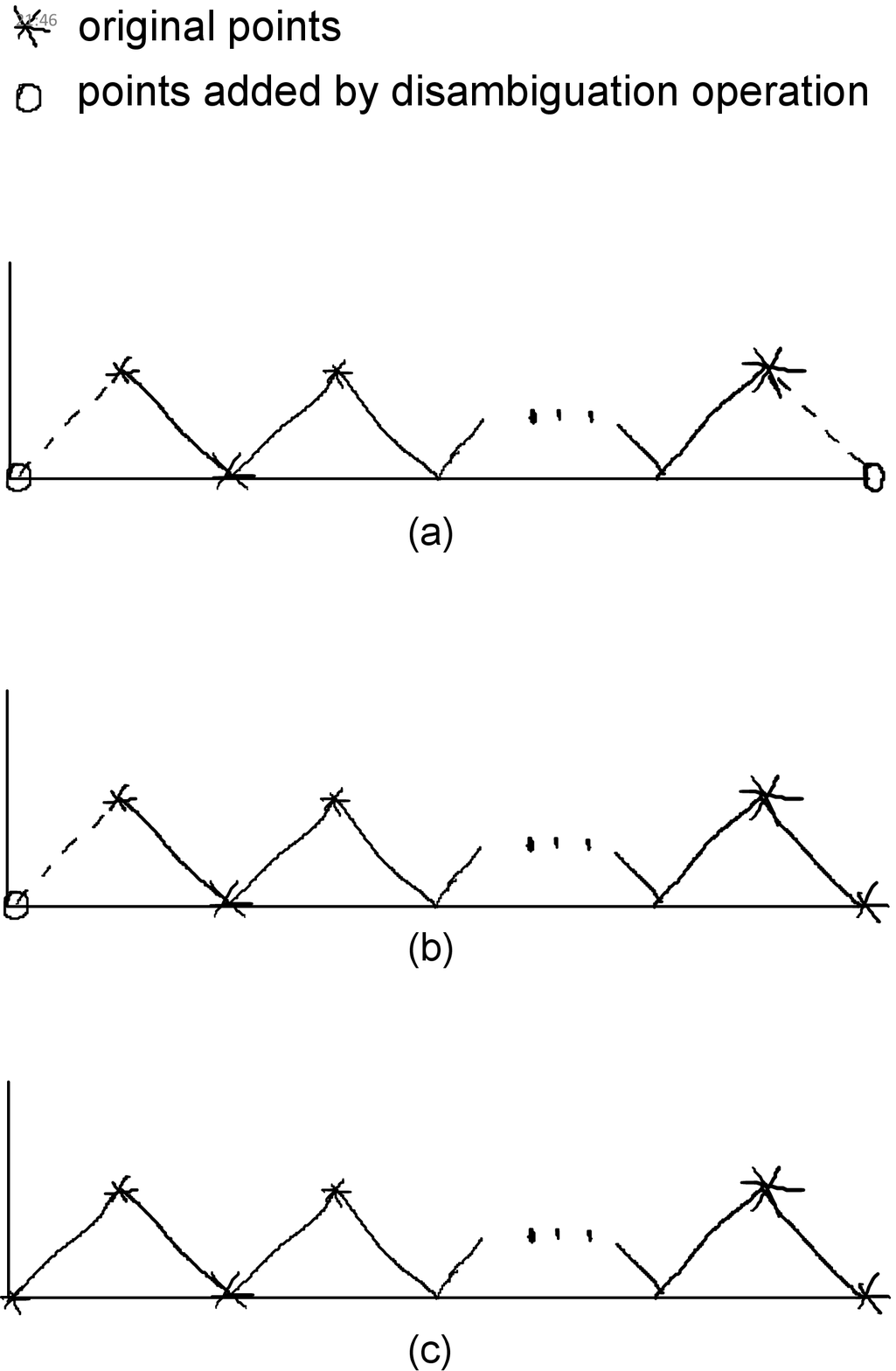}
\caption{(a) Odd worst case,(b) Even best and worst case,(c) Odd best case}
\label{fig:worst_case}
\end{figure}

\begin{figure}[!htb]
\centering
\includegraphics[scale=.52]{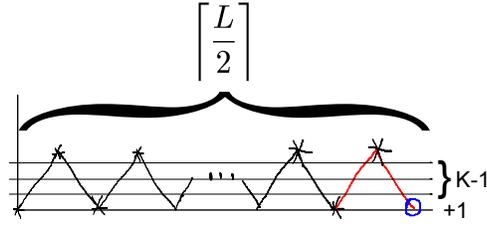}
\caption{Intervals increment: each point added can be add no more than $k-1$ intervals}
\label{fig:intervals_adding}
\end{figure}

%\noindent \emph{Base of induction:} $L=3$\\
%\noindent For $L=3$  the worst case signal is shown in figure \ref{fig:base_case}, and involve the addiction of two new points. In this case  $I_{max}(3)= \left \lceil \frac{3}{2}  \right  \rceil * (K-1) + 1$.\\
%\begin{figure}[!htb]
%\centering
%\includegraphics[scale=.52]{}
%\caption{base case signal.}
%\label{fig:base_case}
%\end{figure}
%\noindent \emph{Induction on L:}

\newpage
Let's recall two simple properties of the ceil and floor function:\\

\noindent if $L \in N$ is even then:
\begin{equation}\label{eq:ceil_even}
\left \lceil \frac{L}{2}  \right  \rceil = \left \lceil \frac {L-1}{2} \right  \rceil
\end{equation}\\

\noindent if $L \in N$ is odd then:
\begin{equation}\label{eq:ceil_odd}
\left \lfloor \frac{L}{2}  \right  \rfloor + 1 = \left \lceil \frac {L}{2} \right  \rceil
\end{equation}\\

\noindent Suppose to have a signal of length $L$, consider two cases, $L$ even or odd:\\

\begin{itemize}
  \item \emph{(L) even}: Since $L$ is even, only a new point has to be added. The resulting signal can be seen as the extension of the best case signal with $L-1$ odd points by adding two new points, and applying the induction, and the properties \ref{eq:ceil_even}, \ref{eq:ceil_odd} it results that $I_{max}(L) = I_{min}(L-1)+(K-1)=\left \lfloor \frac{L-1}{2} \right  \rfloor *(K-1)+1+(K-1)= (\left \lfloor \frac{L-1}{2} \right  \rfloor +1 ) * (K-1) + 1= \left \lceil \frac{L-1}{2} \right  \rceil * (K-1) + 1 = \left  \lceil \frac{L}{2} \right  \rceil * (K-1) + 1$.

  \item \emph{(L) odd}: Since $L$ is odd, the worst case signal involve the addiction of two new points. The resulting signal can be seen as the extension of a best case signal with $L$ odd points by adding two new points, and by applying the induction, and the property \ref{eq:ceil_odd} it results that $I_{max}(L)=I_{min}(L)+(K-1)=\left \lfloor \frac{L}{2} \right  \rfloor *(K-1)+1+(K-1)= (\left \lfloor \frac{L}{2} \right  \rfloor +1 ) * (K-1) + 1 = \left \lceil \frac{L}{2} \right  \rceil * (K-1) + 1$.
\end{itemize}
\end{proof}

\begin{lemma}
Given a discrete signal  $f$  of length $L$  and let $K\geq2$ the number of threshold levels in the equally spaced simple MLA then the complexity of this transformation is $O(K*L)$
\end{lemma}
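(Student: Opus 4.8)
The plan is to bound the running time by decomposing the equally spaced simple MLA into its elementary steps and charging a cost to each. By definition the transformation carries out exactly $K$ threshold operations $\sigma_k$ for $k=1,\dots,K$, so the natural approach is to bound the cost of a single $\sigma_k$ and then sum over the $K$ levels.

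First I would observe that for a fixed level $k$ the threshold value $\phi_k=\frac{1}{K}(k-1)$ is computed in constant time. To build the interval set $I_k$ I would perform one left-to-right scan over the $L$ samples of $f$: at each sample $x$ the predicate $f(x)\le\phi_k$ is evaluated in $O(1)$ time, and the points at which this predicate switches truth value are exactly the endpoints $a_k^t,b_k^t$ of the intervals of $I_k$. Maintaining the currently open interval and closing it at each such transition costs $O(1)$ per sample, so the whole scan, and hence the extraction of $I_k$, runs in $O(L)$ time. Since a single scan produces at most $\lceil L/2\rceil$ intervals, writing $I_k$ down also stays within $O(L)$.

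Next I would account for the preprocessing. The disambiguation operation inspects only $f(1)$ and $f(L)$ and appends at most two samples, so its cost is bounded by $O(L)$ once and does not affect the asymptotics. Summing the per-level bound over all levels then gives $\sum_{k=1}^{K}O(L)=O(K\cdot L)$ for the interval representation $\Upsilon(f)=\{I_1,\dots,I_K\}$, which is the claimed bound.

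I do not expect a genuine obstacle here, as the argument is a routine accounting of a fixed number of linear passes; the only point requiring a little care is justifying that the endpoints of the intervals in $I_k$ coincide exactly with the truth-value transitions of the predicate $f(x)\le\phi_k$, so that a single $O(L)$ pass per level suffices and no interval is missed or double counted. As a consistency check I would compare against the previous theorem, whose bound $I_{max}(L)=\lceil L/2\rceil(K-1)+1=O(K\cdot L)$ on the total number of extracted intervals matches the time bound, confirming that the output itself never exceeds $O(K\cdot L)$.
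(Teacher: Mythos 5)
Your proof is correct, but it follows a different route from the paper's. The paper's proof is a one-step corollary of the preceding theorem on interval counts: it observes that a generic threshold operation can yield at most $\left\lceil \frac{L}{2} \right\rceil$ intervals in the worst case, and since there are $K$ threshold operations, the transformation performs at most $\left\lceil \frac{L}{2} \right\rceil \cdot K$ interval extractions, hence $O(K \cdot L)$. In other words, the paper measures complexity by the worst-case \emph{output size} (number of extracted intervals) and inherits that bound from the previous result. You instead give a direct running-time analysis of the algorithm itself: a constant-time predicate evaluated in one left-to-right scan per level, $O(L)$ per scan, summed over $K$ levels, with the disambiguation preprocessing accounted for separately; the interval-count theorem appears in your argument only as a consistency check. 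Your route buys something the paper's terse argument glosses over: counting interval extractions alone does not, strictly speaking, bound the time needed to \emph{locate} the intervals (a level producing a single interval still requires scanning all $L$ samples), whereas your per-level scan cost bounds the work regardless of how many intervals each level yields. The paper's route, conversely, buys brevity and reuse of an already-proven theorem, and makes explicit that the output size itself is $\Theta$ of the same quantity, which your check also confirms. Both arguments land on the same bound and both are sound; yours is the more complete analysis of time complexity as such.
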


\begin{proof}
Using the previous theorem, it is clear that in the worst case it is possible to obtain  $\left \lceil \frac{L}{2}  \right  \rceil$  intervals for a generic threshold operation. Since the transformation uses in total $K$ threshold operations in the worst case it is possible to obtain  $\left \lceil \frac {L}{2} \right \rceil *K$ interval extractions.
\end{proof}

%\section{Aggregation Rule: Explore the interval space}
\section{Usage of the MLA as preprocessing step}
In general it is possible to find two principal problems in which MLA can be successfully used:
\begin{itemize}
  \item given a family of signals and a signal in this family, characterize it in terms of the other signals in the family;
  \item given a signal, discover if it contains interesting substructures in some formal sense.
\end{itemize}

\noindent In more details given a signal $f$ and its MLA representation $\Upsilon(f)$ there are several ways to use it, the most trivial is to use the intervals ``as they are'' in  a feature vectors fashion. It is important to note that they are not real feature vectors since given two signals of equal length not always involve the same representation. In other words it is not possible to have a positional representation of the feature of a signal as in a classic feature vector. For this reason, in order to compare two or more signal using the MLA representation, special distances or more in general dissimilarity functions need to be defined. One way to overcome this problem is to use a set of probability distributions to model the output of threshold operations. It will be shown an example of this approach on chapter $4$ where a randomness test that exploits this idea will be presented. If we need intstead to characterize subparts of a signal, it is necessary to define aggregation rules that reflect our ``interestingness''.  It will be presented this approach in the next chapter where a rule that well characterizes a biological structure (the nucleosomes) will be defined. An extension of this approach will be presented in chapter $5$ where a new structure using a particular intervals aggregation rule, called Tree Interval Representation, will be introduced. It will give also the possibility to define a new kernel function by taking inspiration from the well-known tree kernels that have been successfully used in a completely different context: the processing of natural languages and the text categorization. In particular each of these chapters will be organized in two parts: the first part will show the formal definitions and the second part will present the real problem and the proposed solution, highlighting where the MLA takes place and, if possible, a comparison with the state of the art methodologies.

\chapter{Pattern Discovery and Classification by MLA}
\label{chap:3}
%\minitoc

This chapter presents the MLA in the context of Pattern Discovery and Classification; in particular the section \ref{sec:MLA in Pattern Discovery and classification} explains how MLA can be integrated in these contexts. Then in section \ref{sec:biological_problem} a case study is introduced: it regards a particular biological problem, the nucleosome spacing, in which the  MLA was successfully used (see section \ref{sec:MLA in Pattern Discovery and classification}). In addition, in section \ref{sec:First solution Hidden Markov Model}, an alternative approach for this problem based on Hidden Markov Model is presented, while in section \ref{experiment} a comparison of the two methods is presented. Finally, the last section is devoted to the description of a new one-class classifier that was used as new classifier module of the MLA.

\section{MLA in Pattern Discovery and Classification}\label{sec:MLA in Pattern Discovery and classification}
This section explains how it is possible to apply the MLA in the context of pattern discovery. A general schema of pattern discovery
that takes advantage of the MLA is presented in figure \ref{fig:Pattern_discovery_MLA}. The important point here is that MLA  plays the role of the language to express the pattern as it was explained in chapter 1. In particular, given a signal $f$ the patterns correspond to subregions of $f$ that can be found using its interval representation $\Upsilon(f)$ together with an appropriative aggregation rule. In particular as expressed in chapter $2$ it is convenient to use the MLA in order to characterize or discover patterns in term of their shapes. This means that a general criteria to assess if a pattern is interesting into this context, is to check how close a subregion of a signal expressed in term of intervals meets a particular aggregation rule criteria or intervals distribution. In the latter case this means that it is possible to define an expected intervals distribution for a ``background'' that can be used to assesse how interesting a pattern is. This approach, as it will be shown in a case study described in the next section, is particularly useful and natural for signal segmentation.

\begin{figure}[htb!]
    \centering
    \includegraphics [scale=.6]{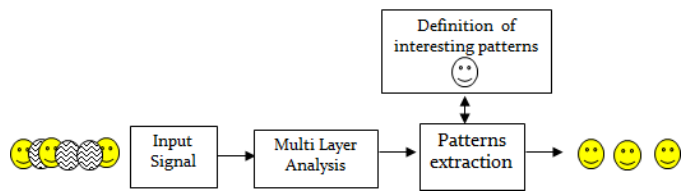}
    \caption {Pattern Discovery by MLA and signal segmentation}
    \label{fig:Pattern_discovery_MLA}
\end{figure}

In the classification problem, since it is necessary to provide an explicit training set (i.e. some examples for each class to discriminate), the MLA can be used as feature extractor, in the sense that each element can be expressed using  MLA as its interval representation, or more in general in a structure built on its interval representation using a particular aggregation rule. Here, an element of a class can be a whole signal or a subpart of a signal maybe extracted with a pattern discovery approach.\\\\
In the next section, the basic biological notions will be provided in order to introduce the MLA in the context of pattern discovery and classification for a particular biological problem: the nucleosome spacing.

\section{Fundamentals of Molecular Biology} \label{sec:Concepts and notions of Biology}
In this section  some concepts and notions of biology  will  be described, in order to introduce the basic terminology that can be  useful  for the comprehension of the matter.

\subsection{DNA}
DNA is a double helix molecule formed by two chains (helices) oriented in  opposite directions, as shown in the figure \ref{fig:dna}. DNA is present in every cell in the body and contains all the genetic information necessary for the body. The major classes of organisms are eukaryotes and prokaryotes. In eukaryotes DNA is contained within the nucleus, separated from the cytoplasm; in prokaryotes, instead, it is contained in cytoplasm. DNA is composed of four distinct types of bases, called nucleotides, that consist of three parts: a phosphate group, a sugar (deoxyribose) and a nitrogenous base (purine or pyrimidine). The four bases that forms the DNA are: adenine (indicated by A), cytosine (indicated by C), thymine (indicated by T) and guanine (indicated by G). The DNA bases are complementary: a C always pairs with a G and an A with a T. The complementarity of the two chains allows to represent a DNA sequence using only one of the two because the other one is complementary and then the information it contains is redundant.

\begin{figure}[!tb]
   \centering
    \includegraphics[width=0.45\textwidth]{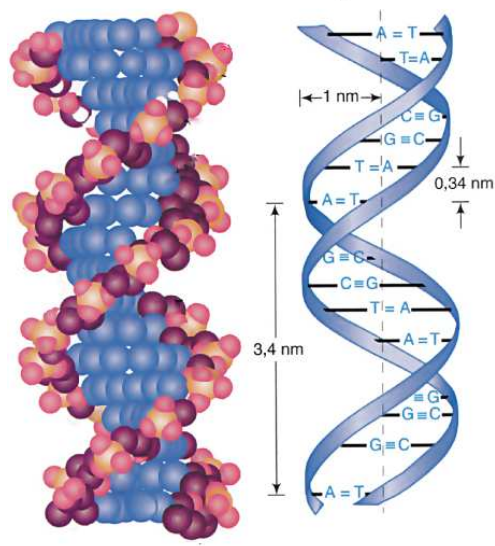}
    \caption{DNA structure}
    \label{fig:dna}
\end{figure}

\subsection{Genes and proteins}
Genes correspond to particular  sub-sequences of  DNA. They belong to the genome of an organism, which can be composed of DNA or RNA;  the genes in particular  direct physical and  behavioral development of the body. Genes also determine the amino acid sequence of proteins, which are  the most involved macromolecules  in biochemical and metabolic processes of the cell. Some other genes do not encode proteins but encode RNA that plays a key role in gene expression. In a cell there are thousands of different proteins, each with a distinct amino acid sequence. In particular each amino acid is encoded by exactly 3 nucleotides as it is possible to see in figure \ref{fig:aminoacids} and there are 20 amino acids in total. In general, a protein is a polymer composed by different combinations of amino acids that bind each other through some interactions that are called peptide bonds. Proteins play a variety of tasks in the cell. In fact, they transmit messages between cells,  turn on and off genes, are essential in muscle contraction, and finally build structures such as hair. Proteins are characterized by a three-dimensional structure articulated on four structural levels, in relation to each other:
\begin{enumerate}
  \item The primary structure is the one that identifies the specific sequence of amino acids from the peptide chain.
  \item The secondary structure corresponds to several configurations such as the spiral shape (or alpha helix),  the planar (or beta sheet), the three intertwined filaments and those belonging to the globular KEMF (keratin, epidermina, myosin, fibrinogen).
  \item The tertiary structure represents the three-dimensional configuration of the polypeptide chain. This configuration is  permitted and maintained by different chemical bonds, including the sulfide bridges and the forces of Van der Waals.
  \item The quaternary structure determines the association of two or more polypeptide units, or of protein  and non-protein units, joined together by weak bonds,such as sulfide bridges, but in a very specific way, such as it occurs in the formation of the enzyme phosphorylase, consisting of four sub-units, or from hemoglobin, which is  the molecule responsible for transporting oxygen in the body.
\end{enumerate}

\begin{figure}[!htb]
   \centering
   \includegraphics[width=0.45\textwidth]{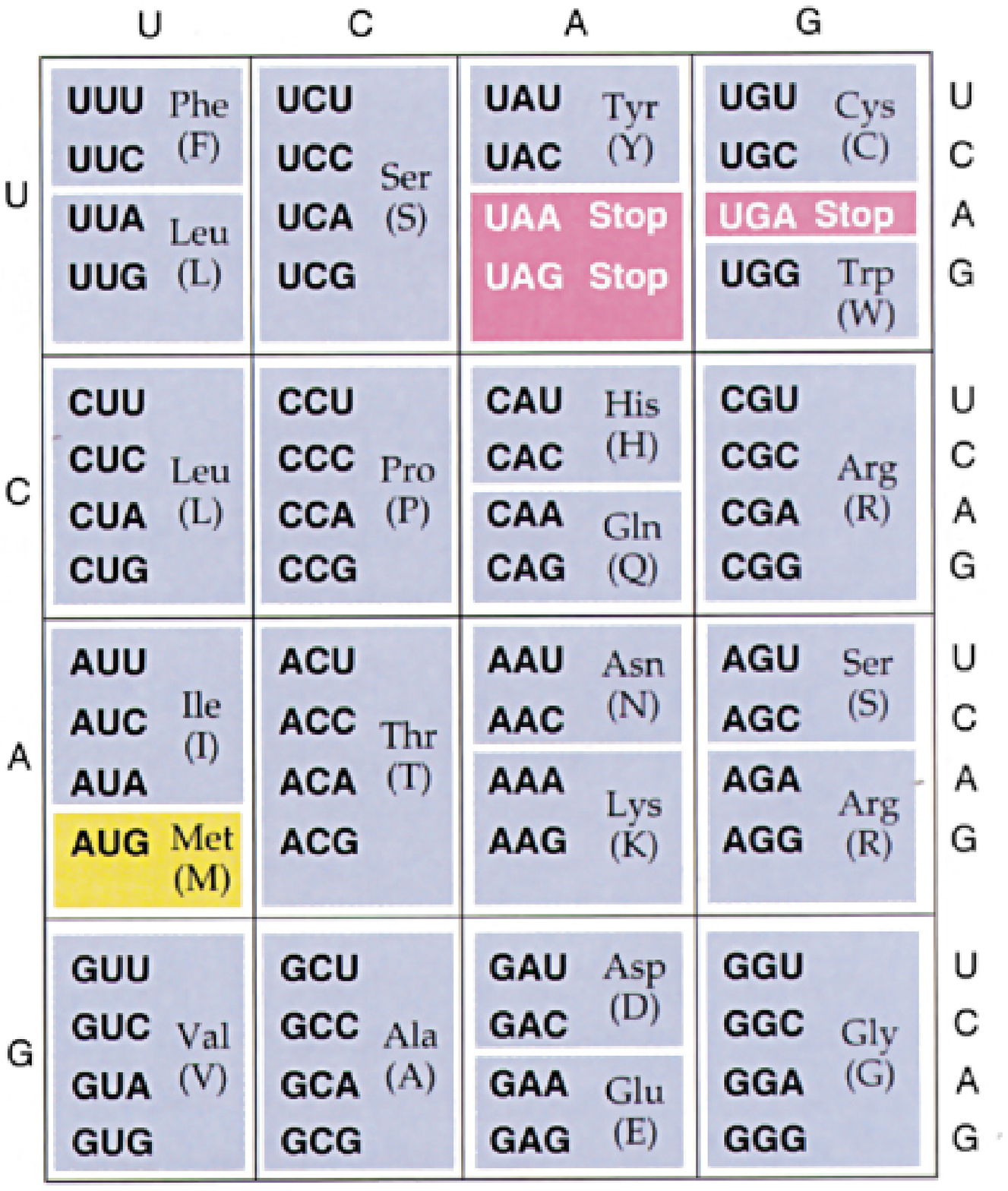}
 \caption{Amino acids alphabet in terms of DNA alphabet}
  \label{fig:aminoacids}
\end{figure}

\subsection{Protein production and  expression level of a gene}
The production of a protein from a gene is called gene expression. To obtain a protein from a gene, the information in  DNA is copied through a process called RNA transcription.  RNA in the form of mRNA acts as a messenger and delivers information from the cell nucleus (where DNA is located) to the cytoplasm. Once in the cytoplasm, the mRNA is translated in its product, the protein, thanks to the usage of the alphabet of amino acids. Then the protein is built starting from the original DNA sequence representing the gene, as it possible to see in figure \ref{fig:dna_rna_protein}. Each cell of an organism contains the same DNA, so the same information; however cells are specialized according to their function. This specialization is because not all genes are expressed at the same time and within the same cell. In fact, gene expression is a controlled dynamic phenomenon so that the processes of a cell are carried out in a controlled way. This phenomenon is regulated by several proteins that bind each other different regions of DNA.
This adjustment may depend on the function that a cell has to make and  it is regulated by both external factors and internal factors produced by the cell.

\begin{figure}[!htb]
   \centering
   \includegraphics[width=0.45\textwidth]{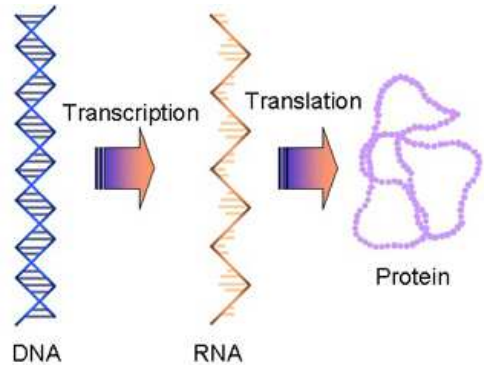}
 \caption{From a genomic sequence to a protein}
  \label{fig:dna_rna_protein}
\end{figure}

\subsection{Nucleosome and chromatin}
As said before, DNA contains all the information of an organism and it is organized in a specific space configuration called chromatin and in particular in chromosomes. More in detail, there are fundamental units called nucleosomes that package DNA into chromatin and there are several levels of space organization from DNA to a chromosome as it is possible to see in figure \ref{fig:chromatin}. The nucleosome, whose discovery dates back to $1974$, is the fundamental unit of chromatin structure and consists of a segment of about $150$bp of DNA associated with a quaternary structure of proteins called histone octamer. The nucleosome has a compact globular shape and plays the role to compact DNA in a eukaryotic cell. In figure \ref{fig:nucleosome} it is possible to see the stylized structure of a nucleosome. Nucleosomes have a diameter of about $11$ nm and are spaced from each other by a stretch of DNA linker varying in length from a few to about $80$ pairs of nucleotides. The resulting structure has the characteristic appearance of a necklace of pearls and is the first level of compaction of chromatin. The formation of nucleosomes in fact converts a molecule of DNA in a strand of chromatin along about a third of  the original length. This structural organization was highlighted after isolating the nucleosomes from chromatin. Several factors can influence the nucleosome organizations \cite{WidomSegal_what} and therefore the chromatin. Recent studies has shown that one of this factor is the sequence specificity that consists in the nucleosomes preference for some sequences: in particular, in vitro studies have shown that nucleosomes have a strong preference for some DNA sequences \cite{Segal_seq_code} and instead ``don't like'' other sequences such as poly (da,dt) tracts \cite{Segal_dadt}. Another important factor is their statistical positioning \cite{Kornberg_statistical}. This theory is based on the concept of barriers, that are regions on the dna in which the nucleosomes cannot stay.  Barriers in particular on average regulates the positions of nucleosomes around them. An important result is that it is possible to derive mathematically the probability  function on the preferences of nucleosome around the barrier. The last point is the set of chromatin remodeler complexes that actively move the nucleosomes across DNA \cite{Rippe_remodelers}.

\begin{figure}[!htb]
   \centering
   \includegraphics[width=.7 \textwidth]{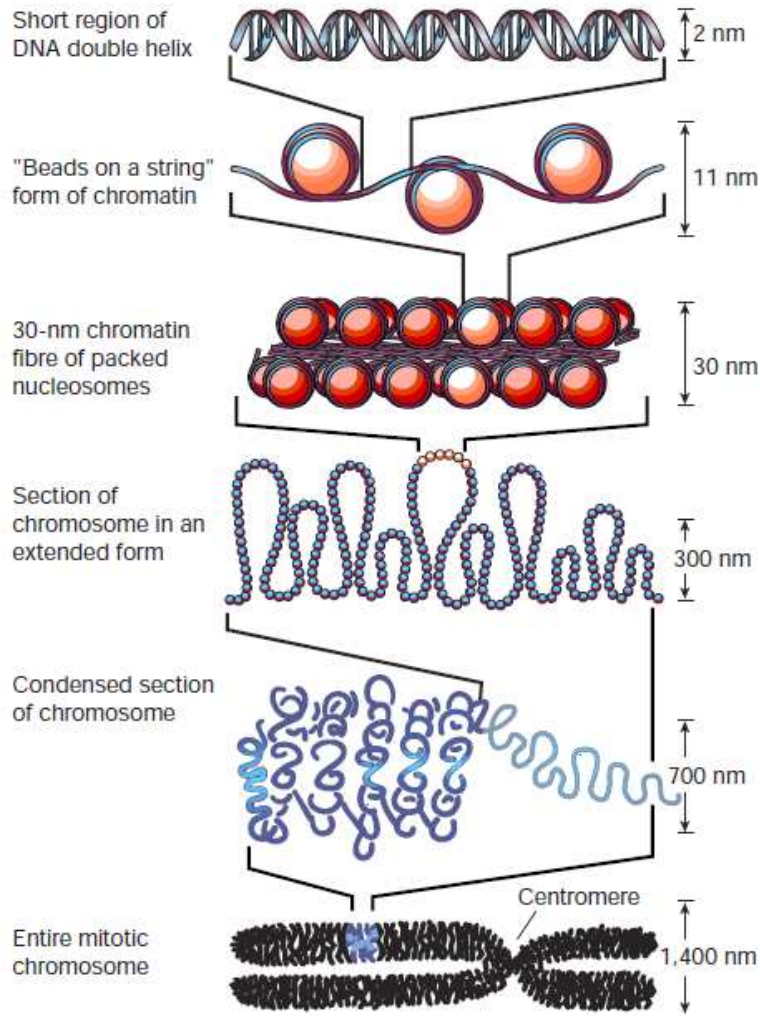}
 \caption{From DNA to chromatin}
  \label{fig:chromatin}
\end{figure}

\begin{figure}[!htb]
   \centering
      \includegraphics[width=.5 \textwidth]{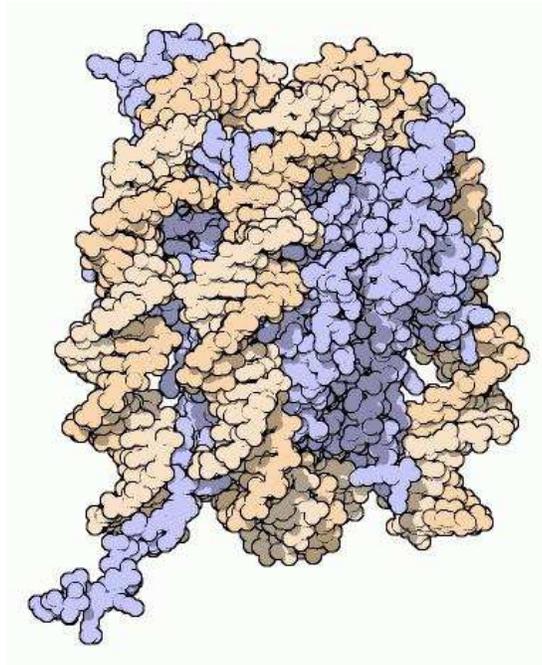}
 \caption{Nucleosome structure: in blue the octamer, in orange the DNA}
  \label{fig:nucleosome}
\end{figure}

\subsection{Microarray}
A DNA microarray (commonly known as gene chip, DNA chip, or biochip) is a collection of microscopic DNA probes attached to a solid surface such as glass, plastic or silicon chip forming an array \cite{Baldi2002}. These arrays are used to examine the expression profile of a gene or to identify the presence of a gene or of a short sequence on thousands (often the entire genome of an organism). Each location corresponds to a specific gene (or a specific sequence) and it does contain multiple copies of a filament with a particular sequence of bases. These DNA strands are anchored to the surface of the substrate, and are used as probes to measure the amount of other DNA molecules (which are also single-stranded) derived from mRNA transcripts and contained in a solution that is deposited on the surface of the microarray. The main approaches used in the manufacturing process of the microarrays are two:
one process is to deposit, with the help of a robot, a solution containing the DNA probes on the surface of the solid support. The probes can be made of a single-stranded  cDNA (complementary DNA obtained by an mRNA transcript having a length of 200-2400 bases) or can be made of pre-chemically synthesized oligonucleotides (short sequences of nucleotides with a length of 50-100 bases). Microarrays made by this process,  are called ``cDNA microarraies'' \cite{Baldi2002}. The other process is to directly synthesize oligonucleotides  on the surface of the microarray(in situ); this operation is carried out mainly with photolithographic techniques (typical of Affymetrix) and inkjet printing \cite{Baldi2002}.

The advantage of using microarrays is the possibility to examine a large amount of data per experiment; for example, it is possible to monitor the expression levels of thousands of genes at a time. In the figure \ref{fig:microarray} it is possible to see the workflow that is usually followed when using the microarray technique:
\begin{itemize}
  \item Preparation and marking of the sample (different samples are labeled with different markers)
  \item Hybridization and alignment
  \item Cleaning
  \item Image acquisition and data analysis
\end{itemize}

\begin{figure}[!htb]
   \centering
   \includegraphics[width=.7 \textwidth]{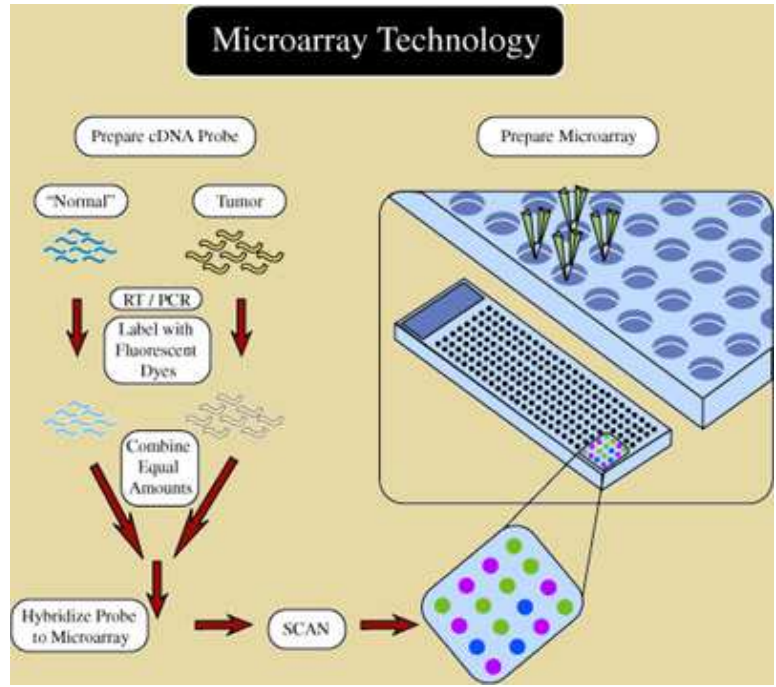}
 \caption{Microarray workflow}
  \label{fig:microarray}
\end{figure}

\section{Case Study: Nucleosome Positioning}\label{sec:biological_problem}
The biological problem under consideration  concerns the positioning of nucleosomes in DNA. This problem is very interesting because the accurate and precise measurement of the  nucleosomes  position  on genomic scale could improve the understanding of the chromatin  structure  and its function.
  Alterations in chromatin and hence in nucleosome organizations can result in a variety of diseases. In fact, the emergence of diseases is thought to be due to the fact that the altered chromosomes condensation leads to  the expression increase of certain genes, causing abnormal production of proteins in the cell. This motivates the use of a methodology capable of determining the position of nucleosomes,  in order to study the implication of nucleosome spacing in the chromatin condensation phenomena. This may be investigated by comparing the positions of nucleosomes in different contexts in which there are different amounts of proteins that remodel chromatin by changing their position. This would figure out which is the molecular basis of chromosome condensation defects or defects in gene expression caused by the partial or total absence of these molecular machines. In fact, it would be possible that the  nuclesome spacing is the basis of this, which would mean that in the absence of such molecular machines, nucleosomes were not spaced properly carrying abnormalities in the cell. So it is very important to understand the processes that modulate the chromatin dynamics and in particular the nucleosome positioning. Their positioning in fact plays a direct role in gene regulation \cite{Luger97}. While the packaging that they provide allows the cell to organize a large and complex genome in the nucleus, they can also block the access of transcription factors and other proteins to DNA \cite{Coro}. For example, under normal conditions the Pho5 promoter in yeast is occupied by well-positioned nucleosomes, preventing the transcription factor Pho4 from binding to its target binding site. When induced by phosphate starvation, the nucleosomes are depleted from the promoter region so that Pho4 can bind to its target DNA binding sequence thus activating the Pho5 gene transcription \cite{HORZ97}.  However, nucleosome binding can sometimes enhance transcription by bringing distant DNA regulatory elements together \cite{STUNKEL97}.  Genome-wide studies have found that transcription activity is inversely proportional to nucleosome depletion in promoter regions in general \cite{BERN04,POKH05,Lee04}. With the help of tiling arrays at $20$bp resolution, Yuan et Al. \cite{YUAN05} have looked at nucleosome occupancy relative to gene regulatory regions on ~4\% of the yeast genome by using an Hidden Markov Model approach HMM. The used microarray-based method allows the identification of nucleosomal and linker DNA sequences on the basis of susceptibility of linker DNA to micrococcal nuclease. This method allows the representation of microarray data as a signal  of green/red ratio values showing nucleosomes as peaks of about 150 base pairs long, surrounded by lower ratio  values corresponding to linker regions.  Consistent with previous studies, Yuan et Al. found that $87\%$ of the transcription factor binding sites \cite{HARB04} are free of nucleosome binding. A substantial improvement over this work has been recently done by Lee et al. \cite{LEE07} where the genome-wide nucleosome positions in yeast have been mapped at $4$bp resolution. A similar approach has also been used to look at differences in nucleosome spacing occurring in the absence of a chromatin remodeler \cite{RANDO07}. A number of other groups have developed analysis methods to detect nucleosomes as well as transcription factor binding sites \cite{BUCK05,JI05,KIM06,ZANG07,JOHNSONS06,KELES06,MIELE08,YASSOUR08}. Compared to transcription factors, it is more challenging to detect nucleosome positions since the majority of a eukaryotic genome is wrapped into nucleosomes.  Another difficulty is that the raw data may contain complex trends that are unrelated to nucleosome binding \cite{YUAN05}. An intuitive method to deconvolve data trend is to define a peak-to-trough difference measure and to detect its local maxima. However, Yuan et Al. \cite{YUAN05} have found that although this method can detect local peaks, it suffers from amplifying observation noise. A similar approach has been adapted in \cite{OZSOLAK07} to map nucleosome positions in human. Although an intrinsic DNA code  for nucleosome positioning has been recently reported \cite{widom},  a significant technological development in genome-wide location of nucleosomes has been made using ``deep sequencing'' approaches \cite{pugh,Barski07,Mikkelsen07,Wold07}, which differs from microarray-based approach in that  the  isolated DNA of interest is mapped to genome via direct DNA sequencing, instead of microarray hybridization. For this new technology, the input data correspond to peaks of DNA fragment counts instead of high hybridization ratio. However, the task of peak detection remains a key problem for the statistical analysis of the input data. Unlike microarray-based approaches, where data collection is constraint to a regular grid, ``deep sequencing'' data are intrinsically base-pair resolution and therefore less statistically stable. One solution to this problem is to first map the data onto a regular grid by binning. However, more sophisticated methods need to be developed to balance the resolution vs variance dilemma. The analysis of stochastic signals aims to both extract \emph{significant} patterns from noisy background and to study their spatial relations (periodicity, long term variation, burst, etc.). The problem becomes more complex whenever the noise background is structured and unknown. Examples of
such kind of data correspond to protein-sequences in the study of folding \cite{DEL93} and the positioning of nucleosomes along chromatin in the study of gene expression \cite{YUAN05}. The analysis carried out in both cases has been based on probabilistic networks \cite{JEN01} (for example, Hidden Markov Models \cite{EPH02}, Bayesian networks). Methods based on probabilistic networks are suitable for the analysis of such kind of signal
data; however, they suffer of high computational complexity and results can be biased from locality that depends on the memory steps they use \cite{YUAN05,DEL93}. In the next section it will be presented an approach that takes advantage of the MLA and its comparison with the proposed method based on HMM. The main advantage of MLA over HMM is its scalability that produce a significant reduction in computational time over the HMM. In this case study in particular it was considered the performances of these two methods to both synthetic and microarray-based nucleosome positioning data and their ability to recover distinct nucleosome configuration. This configurations could be underlie important regulatory roles, highlighting the impact of these methodologies on genome-wide nucleosome positioning studies in higher eukaryotes.

\subsection{The microarray and the signal}\label{microarray}
The following describes the microarray  structure designed and used in the Bauer Center laboratory  for Genomics Research, Harvard University \cite{YUAN05}. As mentioned before, a DNA microarray was used to extract the sequences corresponding to nucleosomes and those corresponding to the linker, in order to identify the nucleosomes on a genomic scale. In particular the microarray data, $\mathbf{S}$, are organized in  $T$ contiguous fragments $S_1,\cdots, S_T$ which represents $DNA$ sub-sequences. In order to obtain the signal on which  subsequent processing are made,  carrying out as follows is needed: Firstly, DNA wrapped in the nucleosome is isolated and labeled with a green fluorescent dye (it is marked the entire genomic DNA of the organism, chromatin is then digested with a particular enzyme that cuts in the linker regions of nucleosomes but leaves intact the DNA around the nucleosome). At the same time the genomic DNA is marked with a red fluorescent dye. At this point there is a  competitive hybridization; if both probes are hybridized in equal proportions, a yellow spot will be obtained, while a red spot if the probe with the red marker is the more hybridized, otherwise a green spot. As a result red or green spots will be obtained as it is possible to see in figure \ref{fig:microarray2}.

\begin{figure}[!htb]
   \centering
   \includegraphics[width=.7 \textwidth]{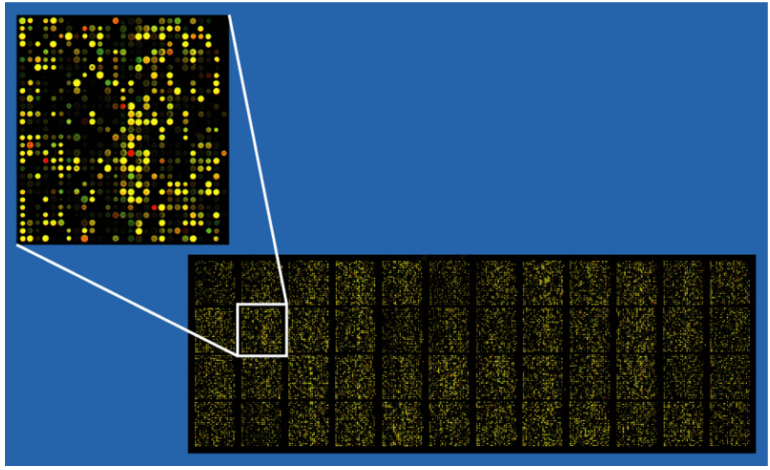}
 \caption{Microarray probes}
  \label{fig:microarray2}
\end{figure}

In particular, in such data, each spot corresponds to a sequence of 50 base pairs. These sequences are  overlapped  of 30 base pairs in order to obtain a final resolution of 20 base pairs. With this resolution a nucleosome, which occupies about 150 base pairs, will correspond to about 6-8 probes in the microarray. These nucleosomes are called  well-positioned  nucleosomes. There is also a class of decentralized nucleosomes, that  can occupy multiple positions due to thermodynamics factors or that can correspond to segments that may come from cells in different states. The next step is to  excite the two dyes with a laser scanner,  using different wavelengths; in this way a separate scanning  of  red and green channels is obtained. To see if the sequences are hybridized or not, their logarithmic ratio has to be considered:
\begin{equation}
S = {\log _2}\left( {\frac{G}{R}} \right)
\end{equation}
This  will give a signal with a pattern which will have peaks in the presence of nucleosomes. An overview of this method and a fragment of this signal is shown in figure \ref{fig:signal}

\begin{figure}[!htb]
   \centering
     \includegraphics[width=0.8\textwidth]{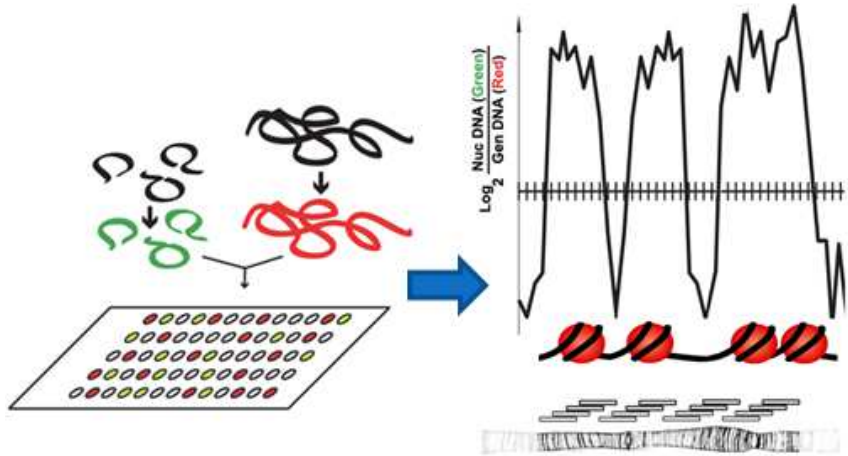}
 \caption{From microarray to one-dimensional signal}
  \label{fig:signal}
\end{figure}

\subsection{Preprocessing}
Before the analysis, the signal coming from the microarray is normalized in order to remove possible measurement errors (bias) and to reduce the influence of cross-hybridization. Normalization is a two-step process:
\begin{itemize}
  \item the mean and variance of each group of spots is taken into account,
  \item the cross-hybridization and the entropy of the signal (base sequence) is taken into account.
\end{itemize}

The cross-hybridization is the hybridization of segments that do not have a perfect match but only a partial one, and consequently do not match and should not be considered. The entropy here is intended the classic definition proposed by Shannon:
\begin{equation}
{E_i} =  - \sum\limits_{k = 1}^{{l_i}} {{p_k}\log {p_k}}
\end{equation}

\noindent Where $p_k$   represents   the probability of emission of the $k-th$ symbol, that is defined in the alphabet of the bases that constitute the DNA (A, T, C, G), and $l_i$ indicates the length of segments in each spot.\\

The first phase of standardization will reduce the bias caused of different groups in which take place the hybridization. In particular this phase uses the following model:
\begin{equation}
{y_{ij}} = {\sigma _j}\left( {{\mu _i} + {\beta _j}} \right) + \varepsilon
\end{equation}

\noindent where  $y_{ij}$ represents the logarithmic ratio of the observed value of $i-th$ probe of the $j-th$ group, $\mu_i$ is the normalized value desired, $\beta_j$ and $\sigma_j$ are respectively the mean and variance of the $j-th$ group and $\varepsilon$ is an instrumental error term, which is assumed to be independent and have zero mean.\\

In the second phase of standardization the objective is at least to reduce the effects of cross-hybridization, as this is considered unavoidable because of the large number of bases considered. In trying to reduce cross-hybridization  two factors are considered:
\begin{itemize}
  \item A specific component that measures the number of small sequences that cross-hybridize with long overlaps with the sequences of the probes;
  \item An unspecified component that measures  the case in which a large number of sequences are weakly cross-hybridized with small overlaps with the sequences of the probes.
\end{itemize}

The first component was modeled by a discrete value $B_i$, which is set to 1 if the sequence of a probe, (which as mentioned before is 50 bases long) corresponds to another sequence of equal length for at least 30 pairs of basis (a partial match, but not negligible), which would introduce an unwanted  positive contribution  to the signal of the logarithmic ratio. Otherwise, the value of $B_i$ is set to 0. The second component was modeled with $E_i$, i.e. the entropy of the $i-th$ sequence present in a probe. The normalized value $v$ of the probe $i$ of the group $j$ is then obtained as:

\begin{equation}
v_i = \mu_i + (w_B \mu_i + q_b)B_i + (w_E \mu_i + q_E) E_i
\end{equation}
where $w_B$ $q_B$ e $w_E$ $q_E$  are the linear coefficients estimated respectively for the first and second component, obtained by linear regression.

\section{First solution: Hidden Markov Model} \label{sec:First solution Hidden Markov Model}
\label{HMM}
In this paragraph a formal definition of HMM will be outlined, and then a model topology designed for the particular biological problem of nucleosome identification will be given.\\

The HMM is a statistical signal modeling technique used in various disciplines such as alignment of gene sequences, acoustic modeling, speech recognition and OCR techniques \cite{Durbin,Rabiner,Giancarlo_HMM}.  In this model, once defined the alphabet of symbols that make up the signal, a set of states are defined, each of one is associated with a particular probability distribution to produce a particular symbol of the alphabet. It also necessary to define the probability of transition from one state to another, and the probability distribution of initial states. In this way this model leads to a weighted graph where the edge weights represent the probability of transiting from one vertex to an adjacent one. The modeling of the signal can then be seen as a visit on this graph, where every time a vertex is visited, a symbol is produced.  A formal definition of HMM will now be given.

\begin{definition}\emph{Hidden Markov Model}\\

Let  $\Sigma$ an alphabet of $M$ symbols.

A HMM is a quintuple:
$\lambda  = \left( {N,M,A,B,\pi } \right)$
where:
\begin{itemize}
  \item $N$ is the number of states of the model indicated by the integers 1,2, $\ldots$, $N$;
  \item $M$ is the number of symbols of  the alphabet that each state can produce or recognize;
   \item $A = \left( {{a_{ij}}} \right)$ is a matrix called \emph{transition matrix }where $a_{ij}$ represent  the probability of transition from the state  $i$ to the state $j$ with  $1 \le i,j \le N$. This matrix must also satisfies the following condition: $\sum \limits_j {{a_{ij}}}  = 1, \quad \forall i$

  \item $B$ is the probability distribution of the observations, where  ${b_j}\left( \sigma  \right)$ represents the probability of recognizing or generating the symbol $\sigma  \in \Sigma $  if you are in the state $j$. In addition, The condition $\sum\limits_{\sigma  \in \Sigma } {{b_j}\left( \sigma  \right) = 1,\quad \forall j} $ needs to be met;
  \item $\pi$ ut is the probability distribution of initial states, where with $\pi_i$ is denoted  the probability of starting from the state $i$. In addition, the condition $\sum\limits_i {{\pi _i} = 1,\quad \forall i} $ needs to be met;
\end{itemize}

\end{definition}
The transition matrix $A$ induces a directed graph where nodes represent states, and arcs are labeled with their corresponding transition probabilities. The term  \emph{hidden} refers to the fact that, given a sequence of symbols that composes the signal you want to model, and set a model, the sequence of states is hidden and not unique, unlike other models such as Markov Chains \cite{Ching_Markov_Chains} for example.

The HMMs can be used, as it will be shown in the following paragraphs, both as generators and as recognizers of signals.

\subsection{HMM as generators}
A HMM can be used to generate a sequence of  $\Sigma^*$. Let $X = {x_1}{x_2} \ldots {x_T} \in {\Sigma ^*}$.
This sequence can be generated by a sequence of states $Q = {q_1}{q_2} \ldots {q_T}$ as follows:

\begin{enumerate}
  \item Set  $i \leftarrow 1$ and choose the state  $q_i$ according to the probability distribution $\pi$ of initial states;
  \item Assuming to be in the state  $q_i$ (having already generated  ${x_1}{x_2} \ldots {x_{i - 1}}$)  produce  in output $x_i$ according to the probability distribution $b_{i}$ ;
  \item If  $i<T$, then  $i \leftarrow i + 1$ and go to the state  ${q_{i + 1}}$ in agreement with $A\left[ {{i},1:N} \right]$ and repeat step $2$ otherwise end.
\end{enumerate}

The probability of observing  $X = {x_1}{x_2} \ldots {x_T}$ and the sequence of states $Q = {q_1}{q_2} \ldots {q_T}$  is:
\begin{equation}
%P\left( {X,Q} \right) = {\pi _{{1}}}\prod\limits_{i = 1}^T {{b_{{i}}}} \left( {{x_i}} \right){a_{{i}{{(i + 1)}}}}
P(X,Q)= \pi_{1} \prod \limits_{i = 1}^T b_{i} (x_i) a_{i {i+1}}
\end{equation}

This probability is often not very useful because it is unknown which sequence of states has produced the string $X$ (since it is possible to have multiple sequences of states that can generate it). Algorithms that solve this problem will be shown later.

\subsection{HMM as recognizers}
A HMM can be used as a probabilistic validator of a sequence of $\Sigma^*$  because  it returns a measure, in terms of mass of the probability of how well a HMM  recognizes or observes $X$. This probability is defined as:

\begin{equation}
\begin{array}{l}
P\left( {X|\lambda } \right) = \prod\limits_{t = 1}^T {\sum\limits_{i = 1}^N {P\left( {{q_t} = i} \right){b_i}\left( {{x_t}} \right)} } \quad \\
\rm{with}\quad \quad P\left( {{q_t} = j} \right) = \left\{ {\begin{array}{*{20}{c}}
{{\pi _j}\quad {\rm{if}}\quad t = 1{\rm{                 }}}\\
{\sum\limits_{i = 1}^N {P\left( {{q_{t - 1}} = i} \right){a_{{i}{j}}}{b_i}\left( {{x_{t - 1}}} \right)} }
\end{array}} \right.
\end{array}
\end{equation}

As mentioned earlier, the HMM  through the transition matrix $A$ induces a multi-parted graph.
This graph can be represented as a matrix with $N$ rows, which correspond to $N$ states of $\lambda$, and for all $t \ge 1$ columns $t$ and $t+1$  form a complete bipartite graph, with arcs directed from vertices in column $t$ to vertices in column  $t+1$ ($1 \le t \le T-1 $).
The recognition consists of superimpose  $X$ over all possible paths of length $T$ in this graph (which is called  \emph{trellis}), starting from the vertices in column 1.
For a given vertex $i$ in column $t$ on a given path, the measure of how well it is possible to recognize the symbol $x_t$ consists of two parts: the probability of being in the state $P( q_t = i)$ and the probability that the state emits the symbol  $x_t$ given by  $b_i( x_t)$.

\subsection{Problems related to HMM}
Given an HMM model $\lambda$, three main issues are considered:

\begin{enumerate}
  \item Given a sequence of observations $X = {x_1}{x_2} \ldots {x_T} \in {\Sigma ^*}$ and a model $\lambda  = \left( {N,M,A,B,\pi } \right)$, calculate the probability of observing the sequence $X$ using the model  $\lambda$ i.e. $P( X|\lambda  )$;
  \item Given a sequence of observations $X = {x_1}{x_2} \ldots {x_T} \in {\Sigma ^*}$ and a model $\lambda  = \left( {N,M,A,B,\pi } \right)$,choose the corresponding sequence of states $Q = {q_1}{q_2} \ldots {q_T}$ that best explains the observations using the model  $\lambda$ and an optimization criterion;
  \item Calculate the values of model parameters  $\lambda  = \left( {N,M,A,B,\pi } \right)$ in order  to maximize $P( X|\lambda  )$.
\end{enumerate}

The first problem is solved efficiently by an algorithm called \emph{forward procedures}, the second by the \emph{Viterbi} algorithm, while the third by the \emph{Baum Welch} algorithm.

\subsection{Forward procedures}
By using this algorithm, is possible to calculate  $P( X|\lambda  )$ in $O( N \times T  \times \delta \max )$ where $\delta \max$  is the maximum degree among all HMM states. This algorithm uses dynamic programming and consider a variable  $\alpha _t(i)$ defined as:
\begin{equation}
{\alpha _t}\left( i \right) = P\left( {{x_1}{x_2} \ldots {x_t},{q_t} = i|\lambda } \right)
\end{equation}
that is the probability that at time $t$, it is possible to observe the partial sequence ${x_1}{x_2} \ldots {x_t}$ and reach the state $i$.
The procedure consists of three phases:
\begin{itemize}
  \item Initialization:
  \begin{equation}
  {\alpha _1}\left( i \right) = {\pi _i}{b_i}\left( {{x_1}} \right)\quad {\rm{with}}\quad 1 \le i \le N
  \end{equation}
  \item Induction:
  \begin{equation}
  \begin{array}{c}
{\alpha _{t + 1}}\left( j \right) = \left( {\sum\limits_{i = 1}^N {{\alpha _t}\left( i \right)} \,{a_{{i}{j}}}} \right){b_j}\left( {{x_{t + 1}}} \right)\quad \\
{\rm{with}}\\
1 \le t \le T - 1,\quad 1 \le j \le N
\end{array}
\end{equation}
  \item Termination:\\
  \begin{equation}
  P\left( {X|\lambda } \right) = \sum\limits_{i = 1}^N {{\alpha _{_T}}\left( i \right)}
  \end{equation}
\end{itemize}

In figure \ref{fig:fp} the single steps that allow to calculate ${\alpha _{t + 1}}\left( j \right)$ are shown.
\begin{figure}[!htb]
   \centering
     \includegraphics[width=0.4\textwidth]{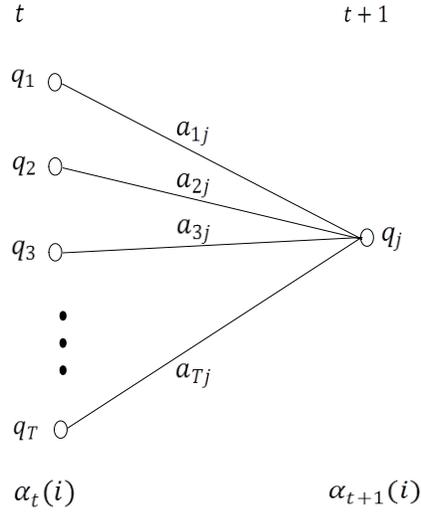}
 \caption{Forward procedure}
  \label{fig:fp}
\end{figure}
The number of possible paths grows exponentially with the length of the sequence, so it is not possible, in many applications, to consider all paths. For this reason a good approximation is to consider only the probability of the most likely path. There is also a variant of this algorithm  that,at the end of computation, calculates the same probability starting from the possible terminal states used to recognize (or generate) the sequence $X$. This variant, which is called the \emph{backward procedures}, as well as the forward procedure, uses a variable  ${\beta _t}\left( i \right)$ defined as:
\begin{equation}
{\beta _t}\left( i \right) = P\left( {{x_{t + 1}}{x_{t + 2}} \ldots {x_T}|{q_t} = i,\lambda } \right)
\end{equation}
that represents the probability at time $t$, to observe a partial sequence from time $t +1$ until the end, being in the state $i$ under the assumption of the model $\lambda$.
In figure \ref{fig:bp} the single steps that allow to calculate ${\beta _t}\left( i \right)$ are shown.

\begin{figure}[!htb]
   \centering
    \includegraphics[width=0.4\textwidth]{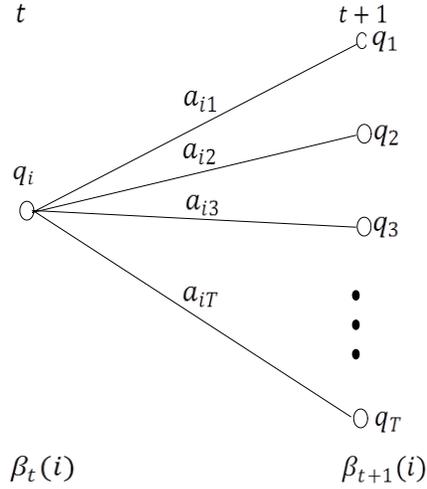}
 \caption{Backward procedure}
  \label{fig:bp}
\end{figure}

\subsection{Viterbi algorithm}
The Viterbri algorithm provides an efficient solution to the second problem of HMM  i.e. computing the optimal sequence of states for the recognition of the sequence $X$ with the model $\lambda$.
The  term ``optimum'' depends on the particular problem taken in exam. In any case, one of the  most used criteria is to find the best sequence of states that generates $X$ maximizing  $P\left( {Q|X,\lambda } \right)$ or equivalently $P\left( {Q,X|\lambda } \right)$ .
The Viterbi algorithm uses dynamic programming and computes:
\begin{itemize}
  \item ${\beta _t}\left( i \right) = \mathop {\max }\limits_{{q_1}{q_2} \ldots {q_{t - 1}}} P\left( {{q_1}{q_2} \ldots {q_{t - 1}},{q_t} = i,{x_1}{x_2} \ldots {x_t}|\lambda } \right)$ i.e. the probability of the  most likely path that takes into account of the first $t$ observations and that ends in state $i$;
  \item ${\gamma _t}\left( i \right)$ that represents the state that leads to the state $i$ at time $t$.
\end{itemize}

The procedure consists of four phases:
\begin{enumerate}
  \item Initialization:

    \begin{equation}
            \begin{array}{*{20}{c}}
            {{\beta _1}\left( i \right) = {\pi _i}{b_i}\left( {{x_1}} \right)\quad }\\
            {{\gamma _1}\left( i \right) = 0{\rm{               }}}
            \end{array}with\quad 1 \le i \le N
    \end{equation}
  \item Induction:
      \begin{equation}
        \begin{array}{*{20}{c}}
    {{\beta _t}\left( j \right) = \mathop {\max }\limits_{1 \le i \le N} \left\{ {{\beta _{t - 1}}\left( i \right){a_{{i}{j}}}} \right\}{b_j}\left( {{x_t}} \right){\rm{      with 2}} \le t \le T{\rm{       }}}\\
    {{\gamma _t}\left( j \right) = \mathop {\arg \max }\limits_{1 \le i \le N} \left\{ {{\beta _{t - 1}}\left( i \right){a_{{i}{j}}}} \right\}{\rm{            with }} 1 \le j \le N{\rm{      }}}
    \end{array}
    \end{equation}

  \item Termination:
     \begin{equation}
            \begin{array}{l}
            P\left( {Q|X,\lambda } \right) = \mathop {\max }\limits_{1 \le i \le N} \left\{ {{\beta _T}\left( i \right)} \right\}\\
            {q_T} = \mathop {\arg \max }\limits_{1 \le i \le N} \left\{ {{\beta _T}\left( i \right)} \right\}
            \end{array}
    \end{equation}

  \item Backtracing:
      \begin{equation}
      {q_t} = \mathop {{\gamma _{t + 1}}\left( {t + 1} \right),\quad t = T - 1, \ldots ,1}\limits_{}
    \end{equation}

\end{enumerate}

This algorithm has a computational cost equivalent to $O\left( {N \times T \times \delta } \right)$ where $\sigma$ represents the maximum degree of the graph induced by the transition matrix of $\lambda$. Again, as in the forward procedure, the number of possible paths grows exponentially with the length of the sequence, making this method not always feasible in the case of large amounts of data.\\

\subsection{Baum Welch algorithm}
The calculation of the values of model  parameters  $\lambda  = \left( {N,M,A,B,\pi } \right)$ that maximize $P\left( {X|\lambda } \right)$, is not an easy task. In fact, there isn't any analytical method that solves the problem by maximizing the probability of observing the sequence: given a finite sequence as a training set, there isn't a perfect way to estimate the parameters of the model. However, it is possible to derive a model $\lambda  = \left( {N,M,A,B,\pi } \right)$ so that  $P(X|\lambda)$ is locally maximized using an iterative procedure.
The best-known iterative procedure that solves this problem is the \emph{Baum Welch algorithm}. To describe how this algorithm works first define this function:
\begin{equation}
{\xi _t}\left( {i,j} \right) = P\left( {{q_t} = i, q_{t + 1} = j|X,\lambda } \right)
\end{equation}

\noindent i.e. the probability of being in state $i$ at time $t$ and in state $j$ at time $t +1$, given the model and the sequence of observations $X$. The sequence of events leading to the conditions required by this variable is shown in the figure \ref{fig:bw}.

\begin{figure}[!htb]
   \centering
    \includegraphics[width=0.65\textwidth]{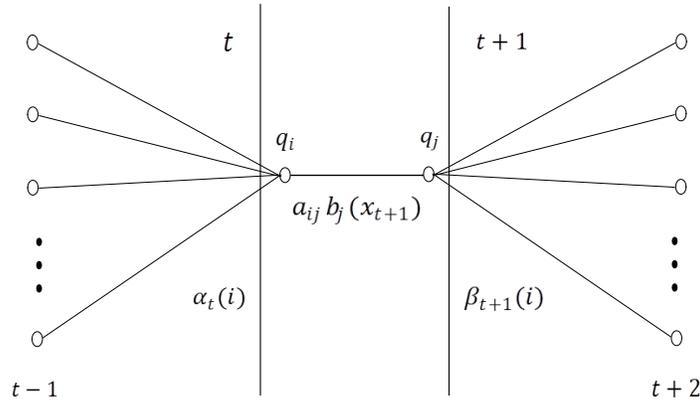}
 \caption{Baum Welch algorithm}
  \label{fig:bw}
\end{figure}

Obviously, it is clear that looking at the definition of the variables used in the procedures of backward and forward, it is possible to rewrite:
\begin{equation}
\begin{array}{l}
{\xi _t}\left( {i,j} \right) = \frac{{{\alpha _t}\left( i \right){a_{{i}{j}}}{b_j}\left( {{x_{t + 1}}} \right){\beta _{t + 1}}\left( j \right)}}{{P\left( {X|\lambda } \right)}} = \\
 = \frac{{{\alpha _t}\left( i \right){a_{{i}{j}}}{b_j}\left( {{x_{t + 1}}} \right){\beta _{t + 1}}\left( j \right)}}{{\sum\limits_{i = 1}^N {\sum\limits_{j = 1}^N {{\alpha _t}\left( i \right){a_{{i}{j}}}{b_j}\left( {{x_{t + 1}}} \right){\beta _{t + 1}}\left( j \right)} } }}
\end{array}
\end{equation}

Where the numerator is simply the probability $P\left( {{q_t} = i,q_{t + 1} = j,X|\lambda } \right)$ .
Previously ${\alpha _t}\left( i \right)$ was defined  as the probability of being in state $i$ at time $t$, by observing the partial sequence ${x_1}{x_2} \ldots {x_t}$. Let's see how  ${\alpha _t}\left( i \right)$  can be defined in terms of ${\xi _t}\left( {i,j} \right)$ :

\begin{equation}
{\alpha _t}\left( i \right) = \sum\limits_{j = 1}^N {{\xi _t}\left( {i,j} \right)}
\end{equation}

Summing over $t$  the functions ${\alpha _t}\left( i \right)$ and ${\xi _t}\left( {i,j} \right)$ it is possible to obtain:

\begin{equation}
\sum \limits_{t = 1}^{T - 1} {{\alpha _t}\left( {i,j} \right)}  = {\mbox{ number of expected transitions from state }}{i}
\end{equation}

\begin{equation}
\sum\limits_{t = 1}^{T - 1} {{\xi _t}\left( {i,j} \right)}  = {\mbox{ expected number of transitions between state }}{i}{\mbox{ and state }}{j}
\end{equation}

Using the defined  formulas will be shown now  the method for estimating parameters for a HMM using the Baum Welch procedure.

Reasonable estimates for the parameters are:
\begin{equation}
\overline {{\pi _i}}  = {\mbox{expected number of times to being in state }}{i}{\mbox{ at time }}\left( {t = 1} \right) = {\alpha _1}\left( i \right)
\end{equation}

\begin{equation}
\overline {{a_{{i}{j}}}}  = \frac{{\sum\limits_{t = 1}^{T - 1} {{\xi _t}\left( {i,j} \right)} }}{{\sum\limits_{t = 1}^{T - 1} {{\alpha _t}\left( {i,j} \right)} }} =
 \frac{{{\mbox{expected number of transitions from state }}{i}{\mbox{ to state }}{j}}}{{{\mbox{expected number of transition from state }}{S_i}}}
\end{equation}

\begin{equation}
\begin{array}{l}
\overline {{b_j}\left( k \right)}  = \frac{{\sum\limits_{t = 1 \wedge {x_t} = {v_k}}^{T - 1} {{\alpha _t}\left( j \right)} }}{{\sum\limits_{t = 1}^{T - 1} {{\alpha _t}\left( j \right)} }} =\\\\
 =\frac{{{\mbox{ expected number of times of being in the state }}j{\mbox{ and observing the simbol }}{v_k}}}{{{\mbox{ expected number of times of being in the state }}j}}
 \end{array}
\end{equation}

these equations can be used in order to develop an iterative process that, starting from a model $\lambda  = \left( {N,M,A,B,\pi } \right)$, allows us to estimate at each step a new model $ \overline \lambda  = \left( {N,M,\overline A ,\overline B ,\overline \pi  } \right)$.

In addition it can be proven that:
\begin{itemize}
  \item The model $\lambda$ represents a critical point of the likelihood function in the case $\overline \lambda   = \lambda$;
  \item The model $\overline \lambda$  is better than the model  $\lambda$, which means that the probability of observing $X$ given the model  $\lambda$ is greater than the probability of observing $X$ given the model  $\lambda$ i.e $P\left( {X|\overline \lambda  } \right) > P\left( {X|\lambda } \right)$  .
\end{itemize}

These two statements tell us that this procedure converges to a critical point. This can be done using  iteratively the model  $\overline \lambda$ instead of  $\lambda$ and repeating the process of parameters estimating, gradually increasing the likelihood of the observations of the training sequence, until a critical point is reached. The end result of this procedure is called the \emph{maximum likelihood estimate} of a HMM. It is important to underline  that this algorithm leads to a local maximum point, and in many real application  the surface to optimize is very complex and has many local maxima.
The formulas to estimate parameters can also be derived directly from the  Blum's auxiliary function $Q\left( {\lambda ,\overline \lambda  } \right)$ in respect to $\overline \lambda  $; this function is defined as:

\begin{equation}
Q\left( {\lambda ,\overline \lambda  } \right) = \sum\limits_Q {P\left( {Q|X,\lambda } \right)\log } \left[ {P\left( {X,Q|\overline \lambda  } \right)} \right]
\end{equation}
It can be proven also that the maximization of the function increases the likelihood:
\begin{equation}
\mathop {\max }\limits_{\overline \lambda  } \left[ {Q\left( {\lambda ,\overline \lambda  } \right)} \right] \Rightarrow P\left( {X|\overline \lambda  } \right) \ge P\left( {X|\lambda } \right)
\end{equation}

\subsection{The proposed HMM for nucleosome positioning}
As mentioned earlier in \cite{YUAN05} the problem of identifying the nucleosome using data from a process of microarray hybridization  and modeling observations with a particular HMM, was addressed. This is because a simple thresholding technique has not sufficient accuracy because of noise and trend in the data. The proposed model for the detection of nucleosomes in chromatin regions is shown in figure \ref{fig:hmm}. In this model, several different states for  different types of nucleosomes with special connections are considered; in particular the states model the sequences of chromosomes corresponding to  a linker (state $L$), well-positioned nucleosomes (states $N_1$, $N_2$, ..., $N_8$) and  delocalized nucleosomes  (states $DN_1$, $DN_2$, ..., $DN_9$).
\begin{figure}[!htb]
   \centering
      \includegraphics[width=.7 \textwidth]{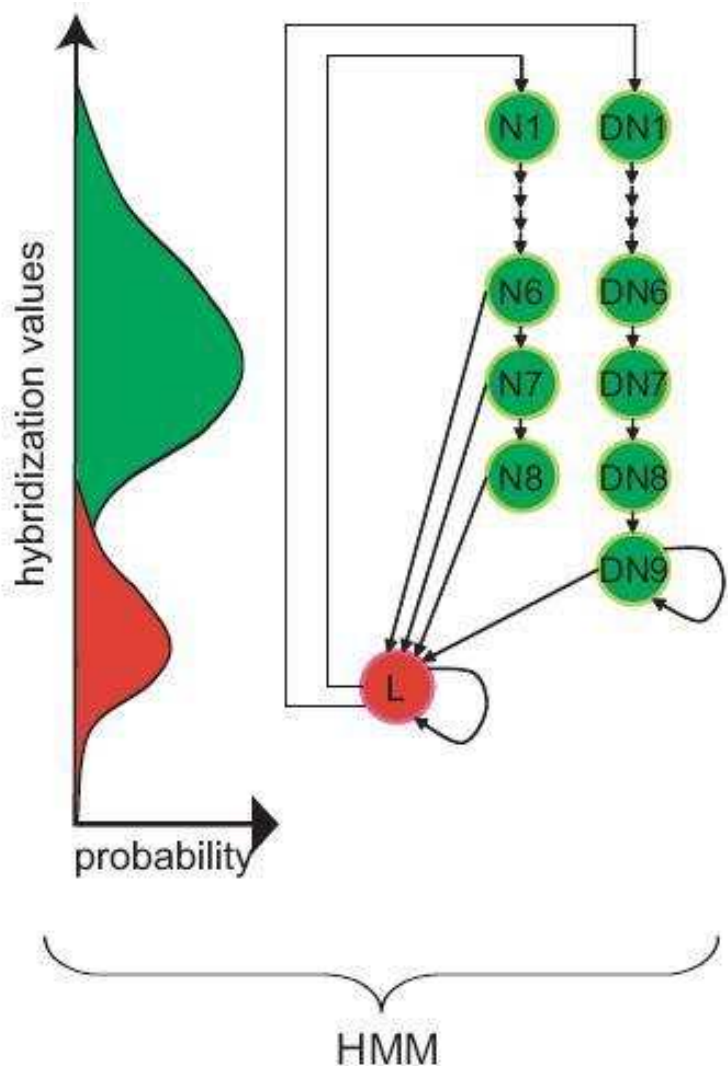}
 \caption{HMM topology for nucleosome positiong}
  \label{fig:hmm}
\end{figure}
The values of the measures that can be observed by each state correspond to the physical values that the system outputs, which in this case represent the logarithmic ratio between the intensity of red and green for each spot of the microarray. The transition matrix that establishes  which are the allowed transitions between states  and their probabilities, is estimated with the Baum Welch algorithm together with the other parameters. In this model there is only one state that represent the class of probes corresponding to linker regions, and this state has a loop in order to model variable length linker regions. The number of states for the class of well-positioned nucleosomes in this model is 8. This choice is justified considering the length of a nucleosome in normal conditions (about 6-8 probe). In this way, the information about the expected length of a nucleosome is encoded in the model. Similarly, it is possible to note that the number of states for the class of delocalized nucleosomes in this model is 9 and the last state has a loop (similar to the state linker)  in order to model the different lengths of nucleosomes regions that cover a number of probe greater than 9.
Finally, a well-positioned nucleosomes in this model have a length between 6 and 8 probes, the  delocalized nucleosomes have a number of probes equal to or greater than 9, and linkers  have a variable length greater or equal to one.

\section {Second solution: MLA} \label{sec:Second solution MLA}
In this section the application of MLA to face the problem of identifying and classifying nucleosomes will be described. The following subsections will show the various steps that allow the classification of the nucleosomes identified trough the MLA and  the construction of a model for well-positioned nucleosomes. Firstly, let's recall that the signal $\mathbf{S}$  is divided into segments in which  probes can  be  not contiguous (due to data referring to different regions of chromosomes, or missing data). In particular $\mathbf{S}$  is organized in  $T$ contiguous fragments $S_1,\cdots, S_T$ which represent $DNA$ sub-sequences.

\subsection{Preprocessing}
In the first stage of processing  a convolution process is applied in order to reduce the noise in the signal.
%The signal convolution is performed using a Hamming window, which carries out the smoothing of the signal. In fact, this convolution corresponds to a low-pass filtering in the frequency domain. In particular, the Hamming window in question is the following:
%The only parameter of this box is its length.
%In the figure (see Figure 21) you can see how  this window is transformed  as you increase the number of points ( from left 6 points, 10 points, 20 points).
%Figure 21
The smoothing is done for each  probe segment  corresponding to adjacent regions of the signal  i.e each fragment $S_t$, $1 \le t \le T$ of the input signal, $\mathbf{S}$, is smoothed by a convolution operator that perform the weighted average of three consecutive signal values, where the weights are provided by the \emph{kernel window} $w= [\frac{1}{4},\frac{1}{2},\frac{1}{4}]$ \cite{Lyo97}.

\subsection{Creating the model}
The construction of the model represents a phase of training, where it is possible to  learn the shape of the pattern corresponding to the nucleosome considering only the regions that corresponds with high probability to well-positioned nucleosomes. Since well-positioned nucleosomes are shown as peaks of a bell shaped curve, in order to locate the position of a nucleosome, all local maxima of the input signal are automatically extracted from the convolved signal $\mathbf{X}$ of $\mathbf{S}$. Then a subset of maxima are opportunely selected for the model definition. Each convolved fragment $X_t$ is processed in order to find $L(X_t)$ local maxima $M_t^{(l)}$ for $l=1,\cdots,L(X_t)$. The extraction of each sub-fragment for each $M_t^{(l)}$ is performed by assigning all values in a window of radius $os$ centered in $M_t^{(l)}$ to a vector, $F_t^l$ of size $2\times os+1$: $F_t^l(j)=X_t(M_t^{(l)}-os+j-1)$, for $j=1,2,...,2\times os+1$. The selection process extracts the \emph{significant} sub-fragments to be used in the model definition. This is performed by satisfying the following rule:

\begin{equation}
  \begin{cases}
    F_t^l(j+1)-F_t^l(j)>0 & j=1,\cdots,os\\
    F_t^l(j+1)-F_t^l(j)<0 & j=os+1,\cdots,2\times os
  \end{cases}\label{eq0}
\end{equation}

\noindent This condition is equivalent to verify that the signal in that fragment is increasing to the right of the maximum and descending to the left (condition of convexity). If the pattern respects this condition, it will be used for the next phase of construction of the model  of the well-positioned nucleosome. The process continues in a similar way for the other points of relative maximum (if present) in the segment  considered in descending order. After this selection process $G(X_t)$ sub-fragments remain for each $X_t$. The model of the \emph{interesting pattern} is then defined by considering the following average:
\begin{equation}
 \overline{F}(j)=\frac{1}{T} {\sum_{t=1}^T {\frac{1}{G(X_t)} \sum_{k=1}^{G(X_t)}
 F_t^k(j)}}\ \ \ j=1,\cdots,2\times os+1
\end{equation}

That is, for each $j$, the average value of all the sub-fragments satisfying Eq. \ref{eq0}. The model then  will represent the average pattern of a well-positioned nucleosome through its expected shape. Applying this procedure a model shown in figure \ref{fig:model}(a) is carried out averaging the pattern in figure \ref{fig:patterns}(b).

\begin{figure}[!htb]
   \centering
      \includegraphics[width=.7 \textwidth]{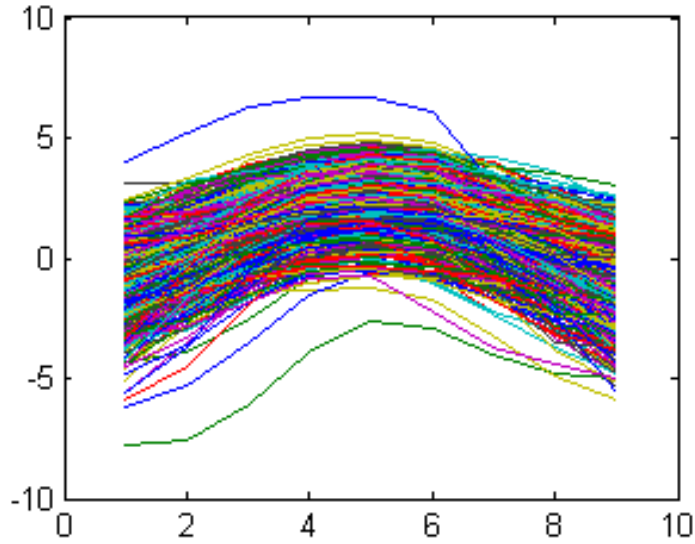}
 \caption{Patterns that meet the condition of convexity}
  \label{fig:patterns}
\end{figure}

\begin{figure}[!htb]
   \centering
      \includegraphics[width=.7 \textwidth]{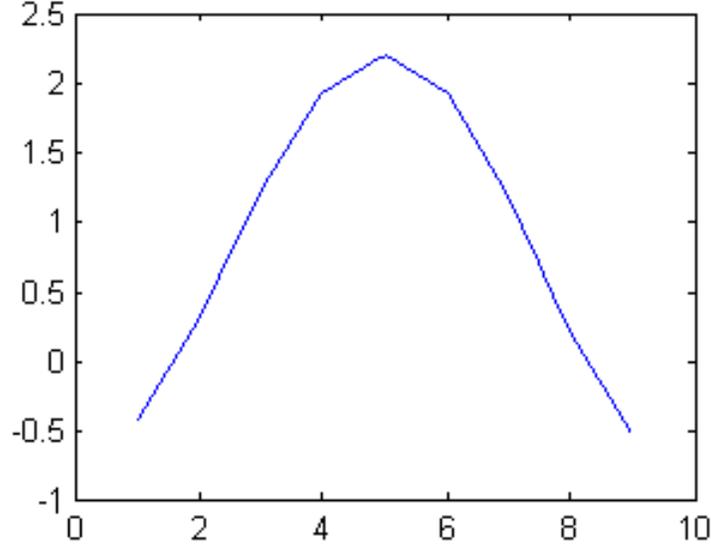}
 \caption{Model of well-positioned nucleosome}
  \label{fig:model}
\end{figure}

\subsection{Interval identification}
 This step is the core of the method i.e. the interval identification obtained by the \emph{Simply Equally spaced MLA} presented in chapter $2$. In particular by considering $K$ threshold levels $t_k$ ($k=1,...,K$) of the convolved signal $\mathbf{X}$, for each $t_k$ a set of intervals $R_k=\left \{ I^1_k,I^2_k,\cdots, I^{n_k}_k \right \}$ is obtained, where, $I^i_k = [b^{i}_{k},e^{i}_{k}]$ and $\mathbf{X}(b^i_k)=\mathbf{X}(e^i_k)=t_k$. This set of intervals as explained in chapter $2$ constitutes the interval representation $\Upsilon(X)$ of the input signal $X$. In Section  \ref{estimation} a calibration procedure to select the proper value of $K$ is described.

\subsection{Aggregation rule and Pattern Definition}
This step is performed by taking into account that bell shaped pattern must be extracted for the classification phase. Such kind of patterns are characterized by sequences of intervals $\left \{ I^1_j,I^2_{j+1},\cdots, I^{n}_{j+l} \right \}$ such that $I^i_{j}\supseteq I^{i+1}_{j+1}$; more formally a pattern $P_i$ is defined using the following aggregation rule:

\begin{equation}
 P_i = \{ I^{i_j}_j, I^{i_{j+1}}_{j+1}, \cdots, I^{i_{j+l}}_{j+l} \ |  \ \forall I^{i_k}_k \ \exists !I \in R_{k+1} :  I=I^{i_{k+1}}_{k+1} \subseteq I^{i_k}_k  \}
\end{equation}

\noindent where, $j$ defines the threshold, $t_j$, of the widest interval of the pattern. From the previous definition it follows that $P_i$ is build by adding an interval $I^{i_{k+1}}_{k+1}$ only if it is the unique in $R_{k+1}$ that is included in $I^{i_k}_k$. Note that, this criterion is inspired by the consideration that a nucleosome is identified by bell shaped fragment of the signal, and the intersection of such fragment with horizontal threshold lines results on a sequence of nested intervals. In figure \ref{fig2} two examples of shapes with the relative patterns are shown.

\begin{figure}[!htb]
\centering
\includegraphics [scale=1.5]{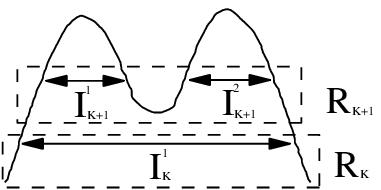}
\includegraphics [scale=1.5]{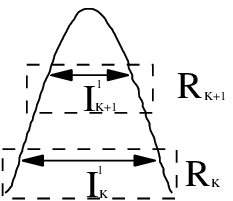}
\caption{\emph{Two different shapes of the input signal:} \emph{(on the left)} Since at threshold level $K+1$ the interval $R_k=\{ I^1_K \}$ has two subset $R_{k+1}=\{I^1_{K+1},I^2_{K+1} \}$, it is possible to set three pattern $P_1 = \{ I^1_K \}$, $P_2 = \{ I^1_{K+1} \}$ and $P_3 = \{ I^2_{K+1} \}$. \emph{(on the right)} In this case, $I^1_{K+1}$ is the unique subset of  $I^1_K$, thus it is possible to set an unique pattern  $P_1 = \{ I^1_K, I^1_{K+1}  \}$ }\label{fig2}
\end{figure}

\subsection{Pattern selection}
In this step the \emph{interesting patterns} $\mathbf{P^{(m)}}$ are selected following the criterium:

\begin{equation}
\mathbf{P^{(m)}} = \{ P_i \ : \ |P_i| > m \}
\end{equation}
\noindent i.e. patterns containing intervals that persists at least for $m$ increasing thresholds. This further selection criterion is related to the height of the shaped bell fragment, in fact a small value of $m$ could represents noise rather than nucleosomes. The value $m$ is said the \emph{minimum number of permanences}; in subsection \ref{estimation} a calibration procedure to  estimate the best value of $m$ is described.

\subsection{Feature extraction}
Each pattern $P_i \in \mathbf{P^{(m)}}$ is identified by $I^{i_j}_j, I^{i_{j+1}}_{j+1}, \cdots, I^{i_{j+l}}_{j+l}$, with $l \ge m$. Straightforwardly, the feature vector of $P_i$ is a $2 \times l$ matrix where each column represents the lower and upper limits of each interval from the lower threshold $j$ to the upper threshold $j+l$. The representation in this multi-dimensional feature space is used to characterize different types of patterns.

\subsection{Dissimilarity function}
A dissimilarity function between patterns  is defined in order to characterize their shape:

\begin{equation}
\delta(P_r,P_s)=(1- \alpha) (A_r-A_s) + \alpha \mathop{\sum}_{i \in I} (a^{r_i}_{i}-a^{s_i}_{i}) \label{eq1}
\end{equation}

\noindent where, $A_r$ and $A_s$ are the surfaces of the two polygons bounded by the set of vertexes
$V=\mathop{\bigcup}_{i \in I} \{ (b^{r_i}_{i}, e^{r_i}_{i}) , (b^{s_i}_{i}, e^{s_i}_{i}) \}$,
$a^{r_i}_{i}=e^{r_i}_{i}-b^{s_i}_{i}$, $a^{s_i}_{i}=e^{s_i}_{i}-b^{s_i}_{i}$, and $\alpha$ is a user parameter ranging in the interval $[0,1]$ to set the weight of the two dissimilarity components.

The first component of this dissimilarity allow us to consider patterns of close dimensions, while the second component has been introduced to include shape information since it can be considered a correlation measure of the two bounding polygons. This dissimilarity can be used by a general classifier in order to distinguish the kind of pattern. An example of input signal and the extracted interesting patterns is given in figure \ref{fig1}.

\begin{figure}[!htb]
\centering
\includegraphics [width=0.8\textwidth]{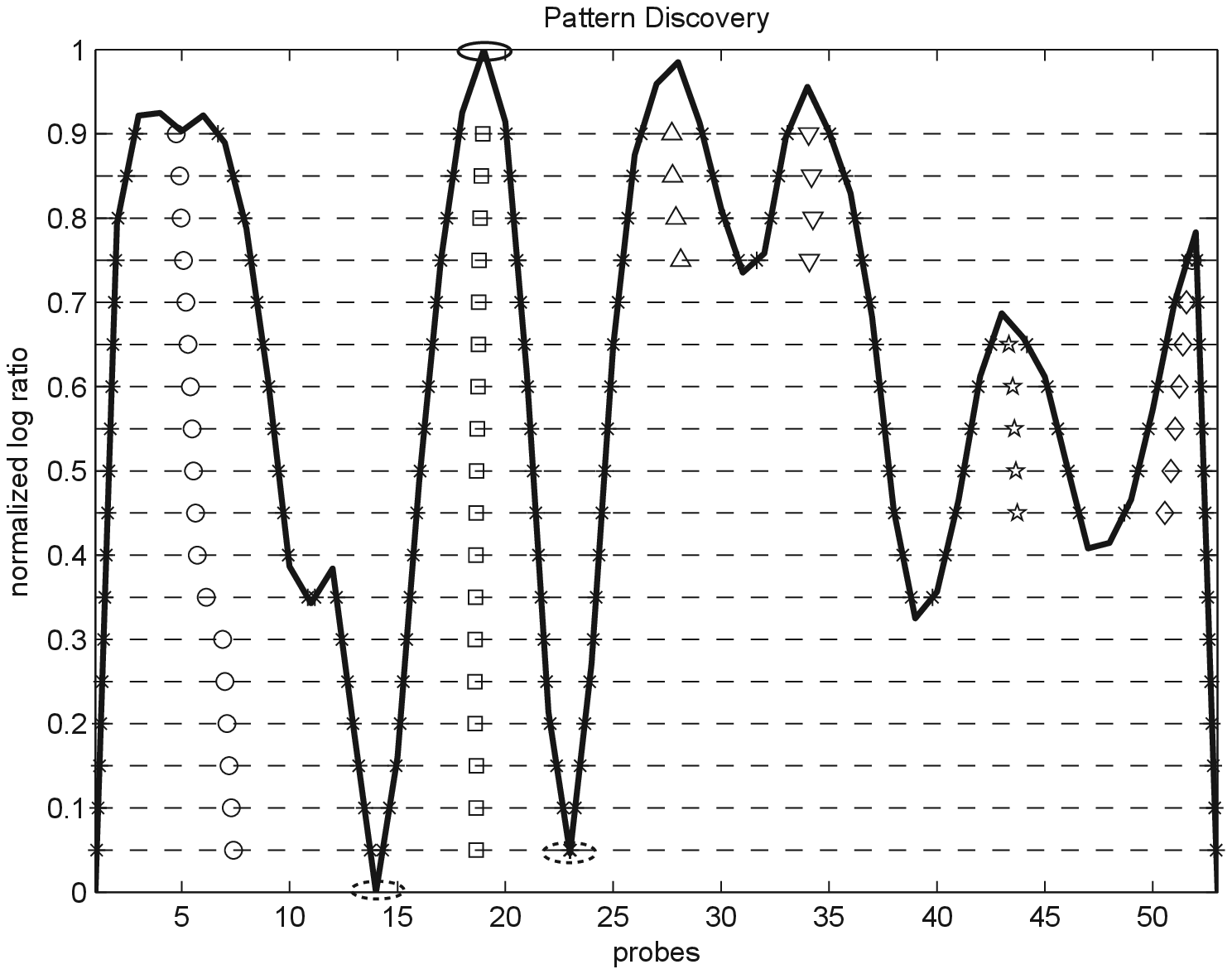}
\includegraphics [width=0.8\textwidth]{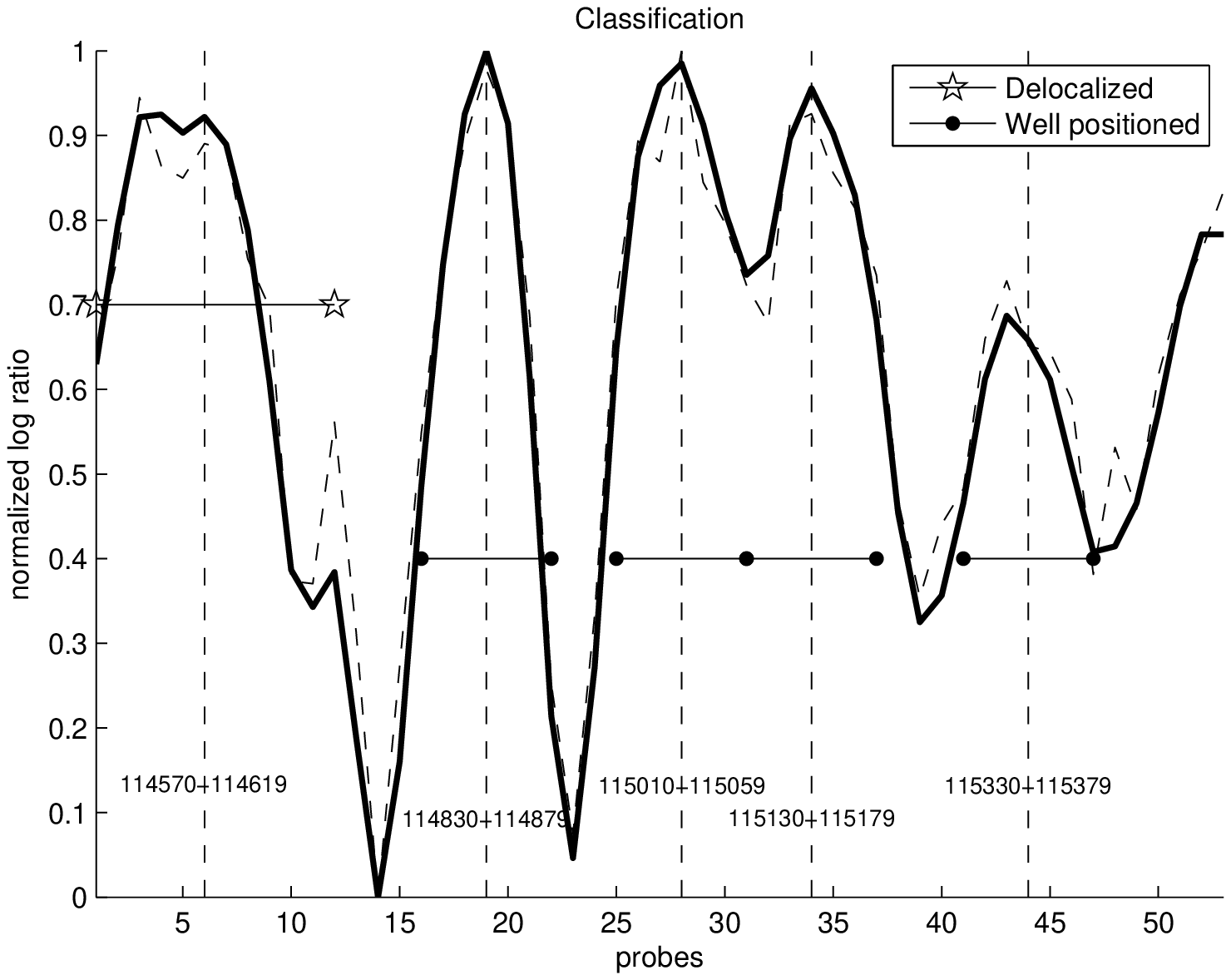}
\caption{ \emph{(a) Input signal, smoothing, pattern identification and extraction:}
A \emph{Saccharomyces cerevisiae} microarray data portion. Each $x$ value represents a spot (probe) on
the microarray and the corresponding $y$ value is the logarithmic ratio of its Green and Red values.
Nucleosomes regions are around the peaks signal (one is marked by black circle), while lower ratio values
show linker regions (marked by dashed circles).
The dashed lines represents the threshold levels, in this example  $6$ patterns are retrieved, identified by rhombus, circle, square,
triangle down, triangle up, star. Each pattern
identifier is replicated for each of its feature values and pointed in
each one of its middle point.
\emph{(b) An example of classification:} In this portion $5$ nucleosome regions are shown together with
its range in base pairs. In particular $1$ out of the $5$ regions is classified as \emph{delocalized} while the remaining \emph{well-positioned.}}\label{fig1}
\end{figure}

\subsection{Nucleosome Classification} \label{bio_goal}
With the MLA, one is able to classify four ``refined nucleosomal states'':  \emph{linkers}, \emph{well-positioned}, \emph{delocalized} and \emph{fused nucleosomes}. (see figure \ref{fig3a}). In the following, the classification rules which allow us to  automatically discriminate such kind of patterns are stated. The classification was conducted in two steps, in the first step the \emph{linker patterns}, the \emph{expected well-positioned patterns} and \emph{expected delocalized patterns} are found. Afterwards, the ranges of the regions representing the expected well-positioned and delocalized nucleosomal patterns are set, defining the \emph{expected regions}. Finally, the classification is performed by testing the intersection of such regions (see figure \ref{fig3b}).

\begin{figure}[!htb]
\centering
\includegraphics [width=0.6\textwidth]{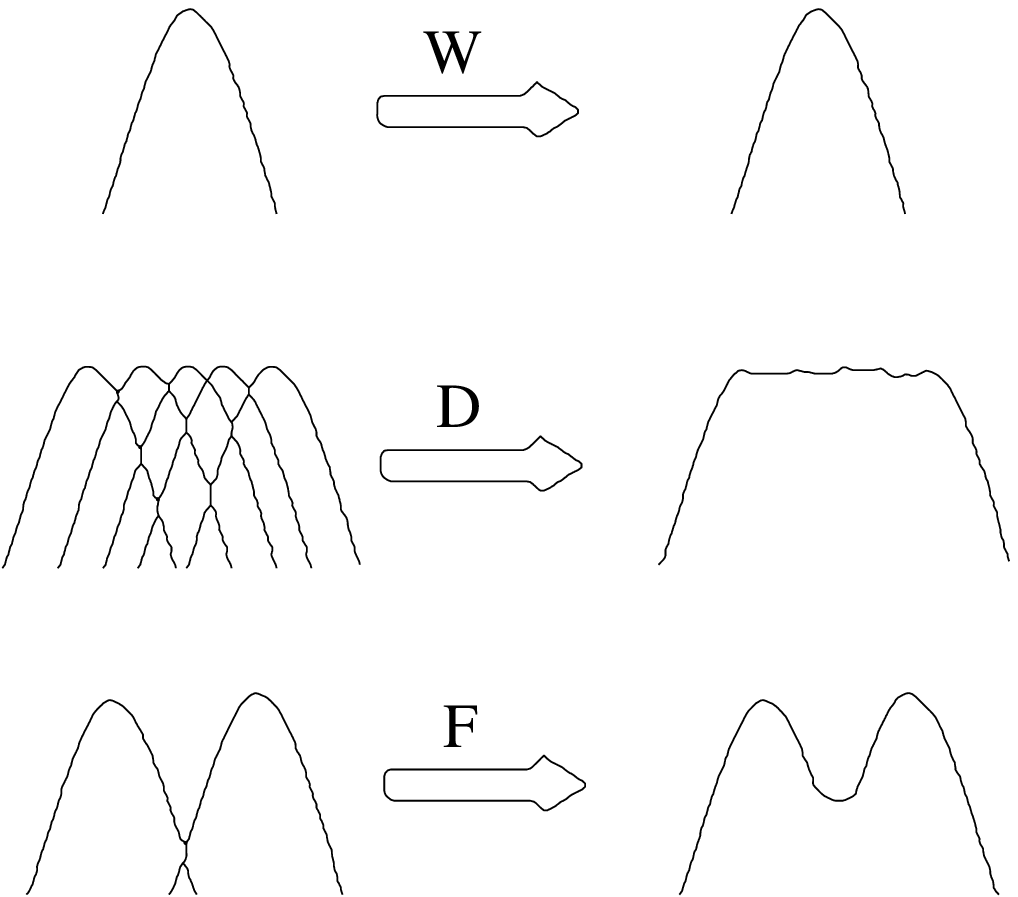}
\caption{\emph{Shapes of the patterns:} The three classes of nucleosomes it is possible to detect with the MLA very
likely reflect different nucleosome mobility existing in vivo at specific chromatin loci. Delocalized
nucleosomes probably represent single nucleosomes or arrays of nucleosomes with high mobility, while fused
nucleosomes  may reflect a single nucleosome that occupies two distinct close positions in different cells. On
the left of the arrows, the particular nucleosome configuration which generates the resulting shape of well-positioned (W), delocalized (D) and fused (F) nucleosome classes are shown.}\label{fig3a}
\end{figure}

\begin{figure}[!htb]
\centering
\includegraphics [width=0.5\textwidth]{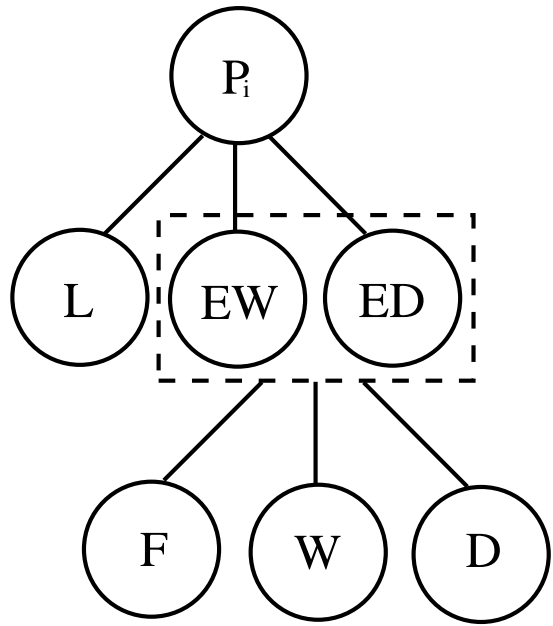}
\caption{\emph{Classification:} The
classification of a generic pattern $P_i$ is performed into two phases. In the first phase the linker ($L$), the
expected well-positioned ($EW$) and the expected delocalized ($ED$) patterns are established by using the
classification rule defined by $c_1$. In the second phase, the expected regions $A_i$ are defined by opportunely
processing $EW$ and $ED$ patterns, and afterwards used by the classification rule $c_2$ in order to finally
classify between well-positioned ($W$), delocalized ($D$) and fused ($F$) nucleosomes.}\label{fig3b}
\end{figure}

\noindent \emph{First phase:}\\
For each interesting pattern $P_i$, the dissimilarity $\delta(P_i,\overline{F})$ is evaluated ($\delta$ is defined in Eq. \ref{eq1}, $\overline{F}$ is the model), the rule to classify $P_i$ is :

\begin{equation}
c_1(P_i)=
  \begin{cases}
    L  & \mbox{ if } \delta(P_i, \overline{F}) \le \phi_1 \\
    EW & \mbox{ if } \phi_1 < \delta(P_i, \overline{F}) \le \phi_2 \\
    ED & \mbox{ otherwise }\\
  \end{cases}
  \label{eq3}
\end{equation}

\noindent where $L$ means \emph{linker pattern}, $EW$ or $ED$  are nucleosomal pattern, and in particular \emph{expected
well-positioned patterns} and \emph{expected delocalized patterns} respectively.

\emph{Second phase:} Afterwards, for  each expected well-positioned nucleosomal pattern $P_i = \{I^{i_j}_j,I^{i_{j+1}}_{j+1}, \cdots,I^{i_{j+l}}_{j+l} \}$ (e.g. $c_1(P_i)=EW$), the \emph{center of the
nucleosomal region} $C_i$ is calculated:

\begin{equation}
C_i = \frac{1}{l} \sum_{k=j}^{j+l} \frac{e^{i}_{k}+b^{i}_{k}}{2}
\end{equation}

\noindent which represents the mean of the first $l$ intervals defining the pattern $P_i$. Conversely,  for each expected delocalized nucleosomal pattern (e.g. $c_1(P_i)=ED$), the \emph{delocalized interval} $[B^{i},E^{i}]$ is defined such that:

\begin{equation}
B^{i} = \frac{1}{l/2} \sum_{k=j}^{j+(l/2)} b^{i}_{k} \hbox{ and } E^{i} = \frac{1}{l/2} \sum_{k=j}^{j+(l/2)} e^{i}_{k}
\end{equation}

\noindent Note that,  $B^{i}$ and $E^{i}$  represent respectively the mean of the first $l/2$ beginning  and ending of each interval belonging to the pattern $P_i$. The \emph{expected regions} is so defined:

\begin{equation}
A_i=
  \begin{cases}
    [C_i(l)-3,C_i(l)+3]  & \mbox{ if } c_1(P_i)=EW\\
    [B^{i},E^{i}] & \mbox{ otherwise }\\
   \end{cases}
\end{equation}

\noindent In particular,  each expected region $A_i$ is, in the case $P_i$ is an expected well-positioned pattern, an interval with beginning $3$ probes before and ending $3$ probes after the center $C_i$, otherwise it is the interval $[B^{i},E^{i}]$. Finally, the classification rule is:

\begin{equation}
c_2(P_i)=
  \begin{cases}
    F \ \ \mbox{if }A_i\cap A_j \neq \emptyset \  j \neq i \mbox{ otherwise}\\
    \left [\begin{array}{ll}
    W &\mbox{if } c_1(P_i)=EW\\
    D &\hbox{if } c_1(P_i)=ED\\
    \end{array} \right.
   \end{cases}
  \label{eq4}
\end{equation}

\noindent where $F$, $W$ and $D$ stands for \emph{fused}, \emph{well-positioned}, \emph{delocalized} nucleosomes respectively (see figure \ref{fig3a}). Informally, the classification rule in Equation \ref{eq4}  assign the \emph{fused} class if the expected nucleosomal regions overlap otherwise confirm the classification of the first phase.

\subsection{Parameter selection by calibration} \label{estimation}
In order to set the proper values of $K$ (number of thresholds), and $m$ (the minimum  number of permanences), a calibration procedure has been used. In particular, such values has been estimated by studying the plots of particular functions able to measure the goodness of several $K$ and $m$.

\subsubsection{Estimation of $m$} The \emph{minimum number of permanences} $m$ has been estimated by using the synthetic signal generator described above. This gives the opportunity to make a massive experimental study on the relation between $K$ and $m$. In particular, $c=10$ copies at different signal to noise ratios $j=1,2,4$ has been generated, resulting in a total of $3 \times 10$ synthetic signals $V_{ij}$. Once fixed a signal to noise
ratio $j$, for each $V_{ij}$ the value of $m$ which maximizes the recognition performances for several thresholds for $k=20, \cdots, 50$ has been found.

\begin{figure}[!htb]
\centering
   \includegraphics[width=1 \textwidth]{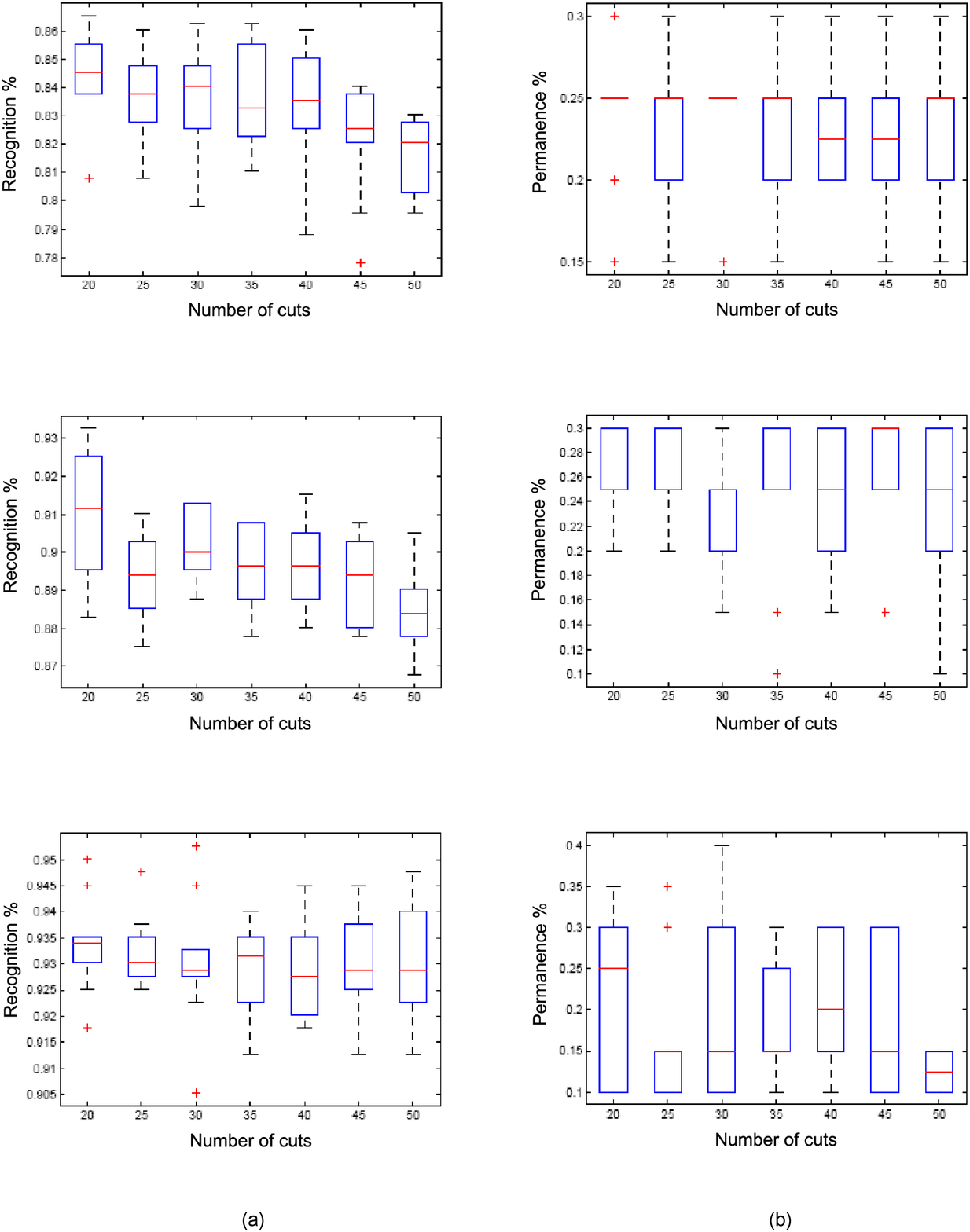}
\caption{\emph{Calibration phase for the choice of $m$}: Recognition performance plots (group a) and percentage of minimum number of permanences plots (group b) for 3 different signal to noise ratios, SNR = 1,2,4 (first, second, third column respectively). The bar in each plot groups the results for 10 experiments occurring at several threshold values (i.e number of cuts).}\label{fig:s1}
\end{figure}

Figure \ref{fig:s1} shows the results performed by considering $c=10$ copies, three signal to noise ratio values $1,2,4$, and $k=20,\cdots,50$ thresholds. In each plot, the $x$ axis represents the number of thresholds $k$ (i.e. number of cuts), the column  bar groups the best recognition and the percentage of minimum number of permanences which causes the best performances on all the $10$ experiments. From this experimental study, it emerges that the use of an high number of thresholds can compromise the recognition process (due to the fact that an high value of $K$ can capture also the noise present in the signal), moreover, the $m$ value seems not dependent from $K$, and the one which causes the best recognition ranges in an interval of $[0.15 \times K , 0.30 \times K]$.

\subsubsection{Estimation of $K$}
The proper value of $K$ is estimated starting from the convolved input signal $X$. Giving a convoluted signal fragment $X_t$ it is resampled it in the $y$ direction resulting in several samples $X_{t}^{(k)}$ for different threshold values $k=1,\cdots,K_{max}$. It is possible to measure the goodness of $k$ by the \emph{average normalized correlation} $\overline{\varrho(k)}$ and the \emph{average missing probes} $\overline{MS(k)}$ so defined:

\begin{equation}
\overline{\varrho(k)}  =\frac{1}{T} \sum^{T}_{t=1} \frac{1+\rho^2(S_t,S_{t}^{(k)})}{2}\\
\end{equation}

\begin{equation}
\overline{MS(k)} =\frac{1}{T} \sum^{T}_{t=1} MS(k,t)
\end{equation}

\noindent In particular $\overline{\varrho(k)}$  measures the average normalized correlation between each resample $X_{t}^{(K)}$ and the generic fragment $X_t$ ($\rho$ is the correlation coefficient), while $\overline{MS(k)}$ the average of the missing probe values $MS(k,t)$ due to the resample of $X_t$ by $k$ thresholds. Finally the value $K$ is selected interactively by looking both at the plots of $\overline{\varrho}$ and $\overline{MS}$,searching for the best compromise of maximum $\overline{\varrho}$ and minimum $\overline{MS}$ (see figure \ref{fig:s2}). In this way the signal obtained has an high correlation with the original signal and a reasonable number of missing samples in order to not capture the noise present in the signal.

\begin{figure}[!htb]
\centering
   \includegraphics[width=0.8 \textwidth]{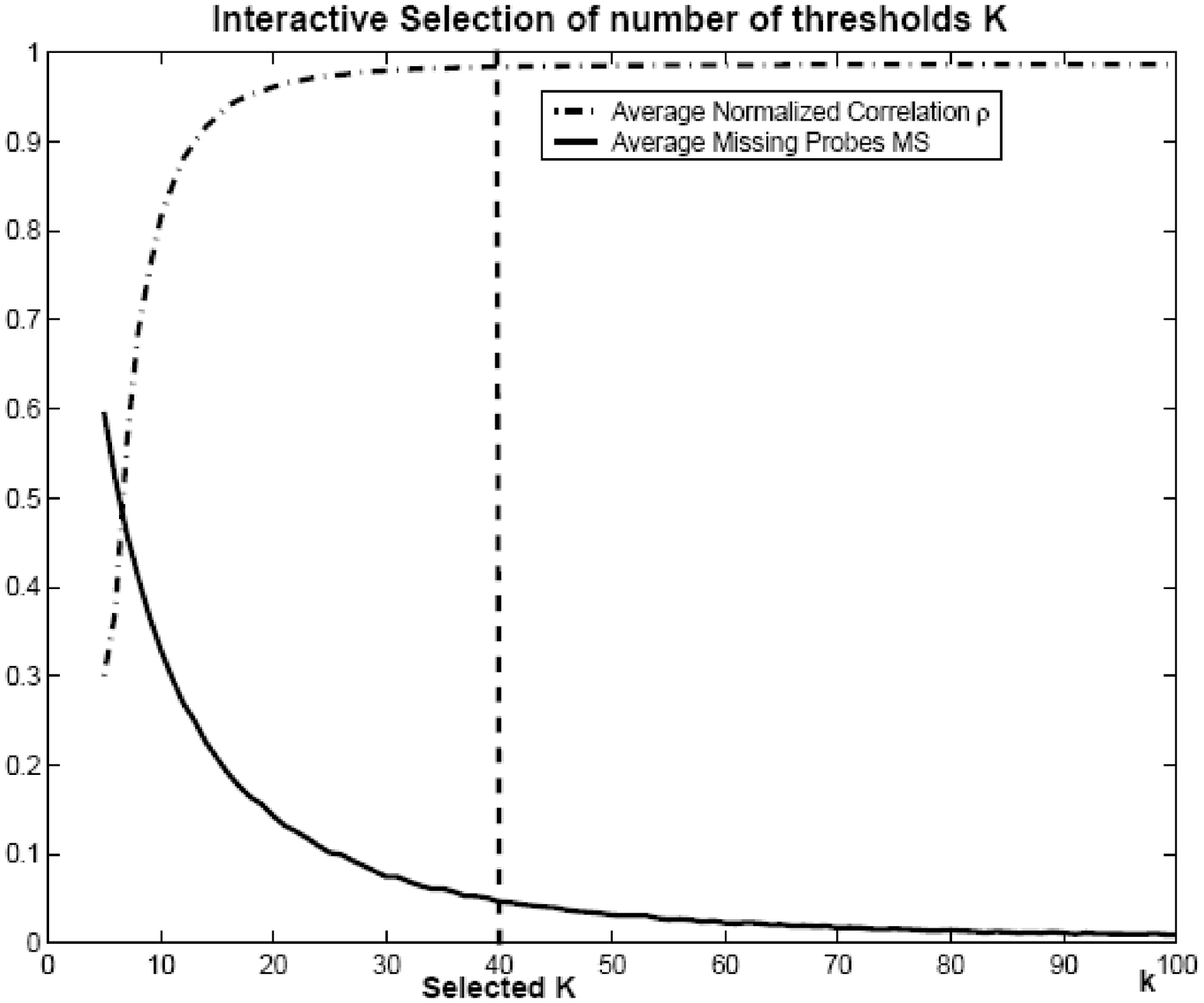}
\caption{ \emph{Calibration phase for the choice of K}: The value for $K$ is selected interactively by looking both at the plots of $\overline{\varrho}$ and $\overline{MS}$} \label{fig:s2}
\end{figure}

\subsection{Synthetic generation of biological signals} \label{simula}
Before validating the $MLA$ approach on biological data, a procedure to generate synthetic signal has been developed in order to assess the feasibility of the method on controlled data. Generated signals emulate the one coming from a tiling microarray where each spot represents a \emph{probe} $i$ of resolution $r$ base pairs overlapping $o$ base pairs with probe $i+1$. In particular, the chromosome is spanned by moving a window (probe) $i$ of width $r$ base pairs from left to right, measuring both the percentage of mononucleosomal DNA $G_i$ (\emph{green channel}) and whole genomic DNA $R_i$ (\emph{red channel}) within such window, respecting also that two consecutive windows (probes) have an overlap of $o$ base pairs. The resulting signal $V(i)$ for each probe $i$ is the logarithmic ratio of the \emph{green channel} $G_i$ to \emph{red channel} $R_i$. Intuitively, nucleosomes presence is related to peaks of $V$ which correspond to higher logarithmic ratio values, while lower ratio values shows nucleosome free regions called \emph{linker regions}. This genomic tiling microarray approach takes inspiration from the work of Yuan et al. \cite{YUAN05} where the authors have used the same methodology on the \emph{Saccharomyces cerevisiae} DNA. Here it is defined a model able to generate such signals characterized by the following parameters:
\begin{itemize}
\item \emph{nn:} The number of nucleosomes to add to the synthetic signal.

\item \emph{nl:} The length of a nucleosome (in real case a nucleosome is $146$ base pairs long)

\item \emph{$\lambda$:} Mean of the Poisson distribution used to model the expected distances between adjacent nucleosomes;

\item \emph{r:} The resolution of a single microarray probe.

\item \emph{o:} The length in base pairs of the overlapping zone between two consecutive probes.

\item \emph{nr:} The number of spotted copies (replicates) of nucleosomal and genomic DNA on each probe of the microarray;

\item \emph{dp:} The percentage of the delocalized nucleosomes over the total number of nucleosomes;

\item \emph{dr:} The range which limits the delocalization of a nucleosome in each copy of \emph{nr}. It is defined in base pairs.

\item \emph{nsv:} The variance of the green channel in each probe, even in absence of nucleosomes due to the cross hybridization. This variance follows a normal distribution with mean $0.1$.

\item \emph{pur:} The percentage of DNA purification, which is the probability that each single DNA fragment of the \emph{nr} copies appears in the microarray hybridization.

\item \emph{ra:} Relative abundance between nucleosomal and genomic DNA.

\item \emph{SNR:} The linear signal to noise ratio of the synthetic signal to generate. Note that the noise is assumed to be gaussian.
\end{itemize}

\noindent Initially, a binary mask signal $M$ is generated by considering as $1$'s all the base pairs representing a nucleosome (the \emph{nucleosomal regions}) and as $0$'s the regions representing linkers (\emph{the linker regions}). Note that, the beginning of each nucleosomal region is established by the Poisson distribution with mean $\lambda$. The mask signal $M$ will be used in order to validate the classification results. The red channel of the microarray (the genomic channel) results from the generation of \emph{nr} replicates $I^R_1, \cdots,I^R_{nr}$  each one starting from an initial nucleosomal region of random size $b \sim  U(0,r)$ (uniformly distributed in the range $[0, r]$), followed by continuous nucleosomic region of \emph{r} base pairs. Conversely, in order to simulate the green channel (the nucleosomic channel) \emph{nr} replicates , $I^G_1, \cdots,I^G_{nr}$ are considered, each one initially equal to $M$ and subsequently modified by perturbing each starting points $x^i_D$ of the nucleosome to consider as delocalized such that $x^i_D=x^i_D+\mu$ with random $\mu \sim U(dr)$. Note that the percentage of nucleosomes to consider as delocalized is established by the parameter \emph{dp}. Afterwards, each nucleosomal region on the generic replicate $I^R_i$ and  $I^G_i$ can be switched off depending on the value of a random variable $\alpha \sim U(0,1)$. Precisely, each nucleosomal region veryfing the test $\alpha<pur$ is considered and set to $1$, otherwise it is not considered and set to $0$. This results in new replicates $T^R_i$ and $T^G_i$. Finally, the generated synthetic signal $V$ for a probe $i$ is so defined:

\begin{equation}
\begin{array}{ll}
V(i)=  & \{ \mathop{log_2}{( \sum_{j=1}^{nr}
{\frac{T^G_j(k)*ra}{T^R_j(k)}} + \epsilon})  | (r-o)i-r+o+1 \le k \le (r-o)i+o \} \label{eq2}
\end{array}
\end{equation}

\noindent where $\epsilon \sim N(0.1,nsv)$. In figure \ref{fig:synth} it is possible to see the steps of this process.

\begin{figure}[!htb]
\centering
\includegraphics[width=.7 \textwidth]{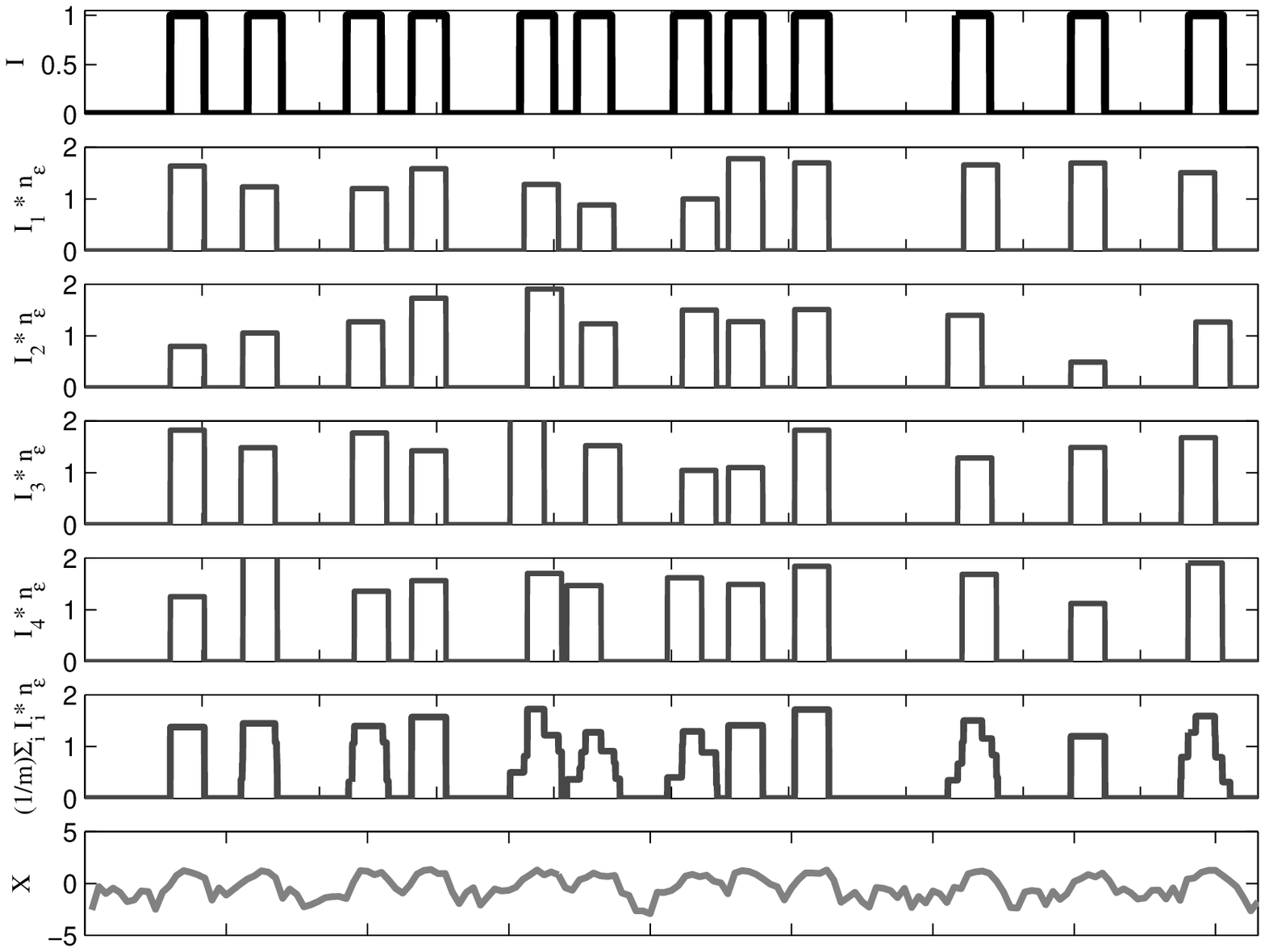}
\caption{{\emph{An example of synthetic signal generation.}
}} \label{fig:synth}
\end{figure}

\section{Results}\label{experiment}
The following experiments have been carried out by measuring the correspondence between nucleosome and linker
regions. In the case of the synthetic signal, the output of the classifier has been compared with a mask $M'$ derived from $M$ while in the case of the real data set it has been compared with the output of the HMM for nucleosome positioning (see section \ref{HMM}) optimally converted into a binary string.\\ In all the experiments, the same value $(\phi_1,\phi_2)=(mean(\delta(F_t^l,\overline{F}))-3std(\delta(F_t^l,\overline{F})), mean(\delta(F_t^l,\overline{F}))+3std(\delta(F_t^l,\overline{F}))$ has been considered, where $F_t^l$ are all the sub-fragments used on the construction of the model $\overline{F}$. Moreover, by biological consideration, the radius $os$ has been set to $os=4$. The performances have been evaluated in terms of \emph{Recognition Accuracy}, $RA$. The $RA$ uses a new mask $M'$ obtained by converting $M$ into probe coordinates such that a
probe value is set to $1$ (e.g. shows a nulceosome portion) if the corresponding base pairs in $M$ include at least a $1$. The real nucleosomal (linker) regions $RNR$ ($RLR$) are represented by $M'$ as contiguous sequence of $1$'s or $0$'s respectively, here a nucleosomal (linker) region $CNR$ ($CLR$) has been classified correctly if there is a match of at least $l=0.7\times L$ contiguous $1$'s ($0$'s) between $CNR$
($CLR$) and the corresponding $RNR$ ($RLR$) in $M'$ where $L$ is the length $RNR$ ($RLR$). The value $0.7$ has been chosen because it represents a $70\%$ of regions overlap very unlikely to be due to chance.

\subsection {MLA vs HMM on Synthetic Nucleosome Positioning data}

For MLA, we have chosen by the calibration phase $K=20$ and $m=5$, the value of $\alpha$ in Eq. \ref{eq1} has been set to $0.5$ to equally balance the two component of the dissimilarity. In particular, $6$ signals of length ranging from $2337$ probes ($70130$ base pairs) to $2361$ probes ($70850$ base pairs) have been generated for the  signal to noise ratio values $1,2,4,6,8,10$.  In Fig.\ref{fig6} the results of the total $RA$ for all the experiments are reported. The confusion matrices of $HMM$ and MLA for all the experiments are reported in the tables \ref{tab2} and \ref{tab3}. In Fig.\ref{fig6} the results of the total $RA$ for all the experiments are summarized. Fig.\ref{fig6} shows that the $HMM$ is slightly more accurate in finding the bounds of the nucleosome regions. The synthetic results can be summarized in an overall $RA$ of $0.96$ for the MLA and $0.98$ for $HMM$.

\begin{figure}[!htb]
\centering
\includegraphics[width=.7 \textwidth]{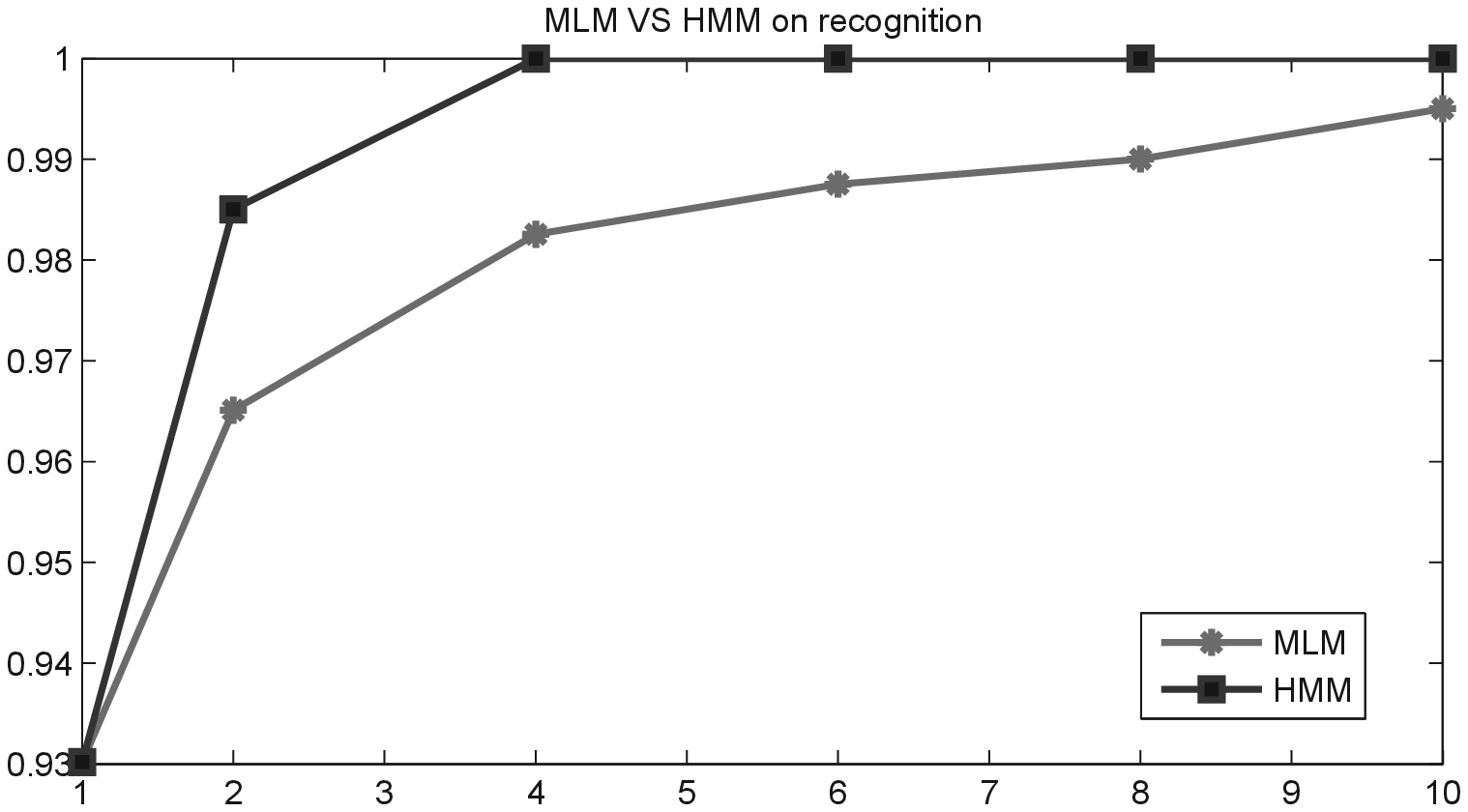}
\caption{\emph{Results on synthetic data:} The Recognition Accuracy
of MLA and $HMM$ on $6$ synthetic signals generated at signal to noise ratios $1,2,4,6,8,10$.}\label{fig6}
\end{figure}

\begin{table}[!htb]
\begin{center}
\begin{tabular}{|c|c|c||c|c|c|}
\hline
$snr= 1$ & $L$  & $N$ & $snr= 2$ & $L$  & $N$\\
\hline
$L$ &  $0,82$ & $0,18$ & $L$ &  $0,96$ & $0,04$\\
$N$ &  $0,03$ & $0,97$ & $N$ &  $0,01$ & $0,99$\\
\hline
\hline
$snr= 4$ & $L$  & $N$ & $snr= 6$ & $L$  & $N$\\
\hline
$L$ &  $1$ & $0$ & $L$ &  $1$ & $0$\\
$N$ &  $0$ & $1$ & $N$ &  $0$ & $1$\\
\hline
\hline
$snr= 8$ & $L$  & $N$ & $snr=10$ & $L$  & $N$\\
\hline
$L$ &  $0.99$ & $0.01$ & $L$ &  $1$ & $0$\\
$N$ &  $0$ & $1$ & $N$ &  $0$ & $1$\\
\hline
\end{tabular}
\caption{Confusion matrices of $HMM$ on $6$ different signal to noise ratios for nucleosome (N) and linker (L) regions.}
\label{tab2}
\end{center}
\end{table}

\begin{table}[!htb]
\begin{center}
\begin{tabular}{|c|c|c||c|c|c|}
\hline
$snr=1$ & $L$  & $N$ & $snr=2$ & $L$  & $N$\\
\hline
$L$ &  $0,81$ & $0,19$ & $L$ &  $0,88$ & $0,12$\\
$N$ &  $0,04$ & $0,96$ & $N$ &  $0$ & $1$\\
\hline
\hline
$snr=4$ & $L$  & $N$ & $snr=6$ & $L$  & $N$\\
\hline
$L$ &  $0,94$ & $0,06$ & $L$ &  $0,96$ & $0,04$\\
$N$ &  $0,01$ & $0,99$ & $N$ &  $0$ & $1$\\
\hline
\hline
$snr=8$ & $L$  & $N$ & $snr=10$ & $L$  & $N$\\
\hline
$L$ &  $0,96$ & $0,04$ & $L$ &  $0,97$ & $0,03$\\
$N$ &  $0$ & $1$ & $N$ &  $0$ & $1$\\
\hline
\end{tabular}
\caption{Confusion matrices of MLA on $6$ different signal to noise ratios for nucleosome (N) and linker (L) regions.}
\label{tab3}
\end{center}
\end{table}

\subsection{MLA vs HMM on real data}
 In this experiment, it has been compared the accordance of the two models on the \emph{Saccharomyces cerevisiae} real data. The input signal representing this data is composed by $215$ contiguous fragments for a total of $24167$ base pairs. In such experiment,  $K=40$, $m=6$ were chosen by the calibration phase ($m=0.15 \times 40$) and $\alpha=0.5$ was considered to equally balance the two components of the dissimilarity (see the definition in Eq. \ref{eq1}). The confusion matrices which show the $RA$ of $HMM$ considering MLA as the truth classification and $RA$ of MLA considering $HMM$ as the truth classification are reported in table \ref{tab4}. The results can be summarized in an overall $RA$ of $0.83$ for the $HMM$ (MLA true) and $0.69$ for MLA ($HMM$ true). In particular, from this studies it is possible to conclude that MLA does not fully agree with $HMM$ on the linkers patterns. Remarkably, comparing MLA and $HMM$  on the data coming from recently developed \emph{deep sequencing approach} ($DS$) \cite{pugh} it is possible to see a better agreement with MLA ($0.58$) rather than with $HMM$ ($0.44$) (table \ref{tab5} and figure \ref{fig:s3}). These analysis indicate that the integration of the $HMM$ and MLA could improve the overall classification.

\begin{table}[!htb]
\begin{center}
\begin{tabular}{|c|ccc||c|ccc|}
\hline
& $M$ & $L$ & $M$ & & $H$ & $M$ & $M$ \\
\hline
 $H$ & & $L$  & $N$ &  $M$ & & $L$  & $N$\\
 $M$ & $L$  &   0.79 &   0.21 & $L$ & $L$  &   0.52 &   0.47\\
 $M$ & $N$  & 0.13 & 0.87 &  $M$ & $N$  & 0.12 & 0.87\\
\hline
\end{tabular}
\caption{Agreement between the $HMM$ and MLA (and viceversa) on the Saccharomyces cerevisiae data set for
Nucleosomes (N) and Linker (L) regions. The table on the left shows the $RA$ results of $HMM$ when considering MLA as the truth classification, while the opposite is shown on the right table.
}
\label{tab4}
\end{center}
\end{table}

\begin{table}[!htb]
\begin{center}
\begin{tabular}{|ccc||ccc|}
\hline
  $M$ & $L$ & $M$ & $H$ & $M$ & $M$\\
\hline
   & $L$  & $N$ & & $L$  & $N$\\
   $L$ & 0.40 & 0.60 & $L$ & 0.40 & 0.60\\
   $N$ & 0.24 & 0.76 & $N$ & 0.53 & 0.46\\
\hline
\end{tabular}
\caption{Confusion matrices of MLA and $HMM$ on deep sequencing approach (DS) data by Pugh et Al. (2007).}
\label{tab5}
\end{center}
\end{table}

\begin{figure}[!htb]
\centering
\includegraphics [width=1 \textwidth]{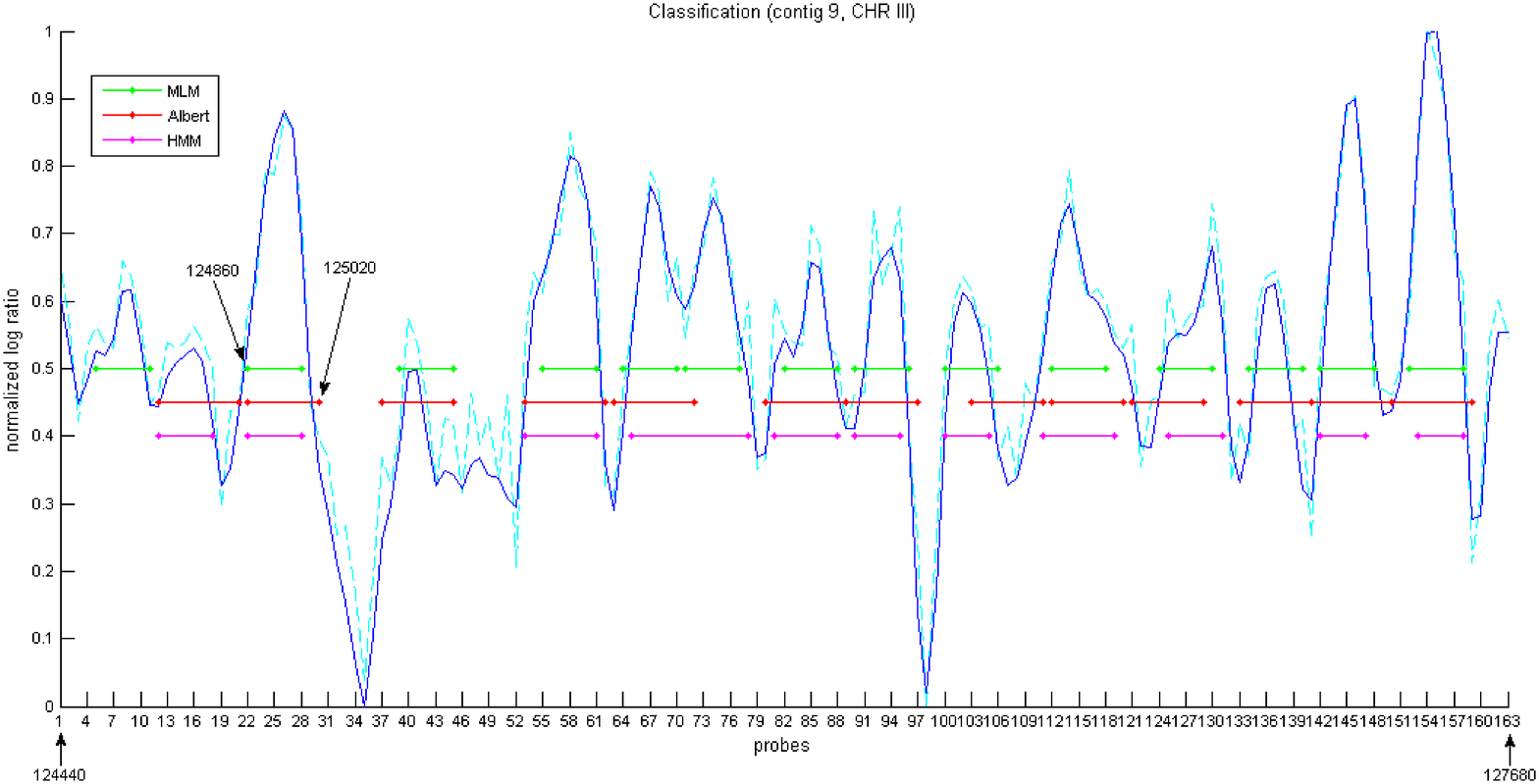}
\caption{A representative sample windows spanning 13 nuclesome where the agreement (disagreement) of the three methods is shown. The red draw represents the classification done by Pugh et Al. (2007) in \cite{pugh} }\label{fig:s3}.
\end{figure}

\subsection{Scalability and computational time of MLA and HMM:}
This point is fundamental because the size of a problem can vary significantly into this application domain, and if our method is not able to scale well it could become totally useless. The computation time of MLA and $HMM$ have been compared on $10$ experiments. In particular, $10$ synthetic signals have been generated, each one with a fixed number of well-positioned nucleosomes ranging from $10$ to $100$ by step of $10$. In figure \ref{fig:s4}, the ratios between the execution time of MLA ($T_m$) and $HMM$ ($T_h$) for each experiment is shown. From this study, it results that, on average, $T_h=1.7 \times 10^{4} \times T_m$.

\begin{figure}[bht]
\centering
\includegraphics [width=0.8 \textwidth]{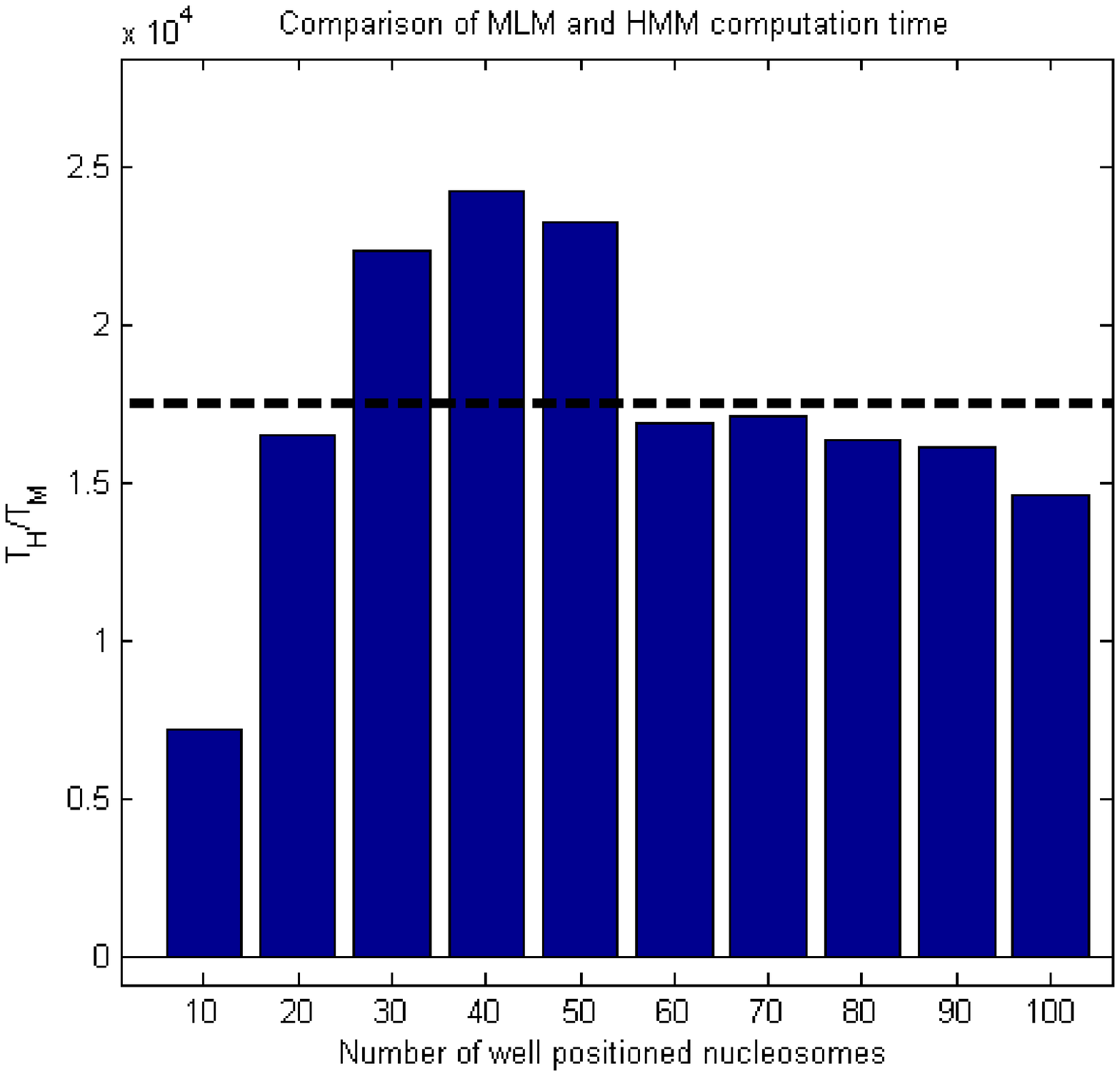}
\caption{\emph{Computation time performances}: The execution time ratio $T_h/T_m$ of the MLA ($T_m$) and HMM ($T_h$) for 10 synthetic signal generated with different number of well-positioned nucleosomes. The dashed line shows the average execution time.}\label{fig:s4}
\end{figure}

\section{One-Class Classifier and MLA}
 One of the key point of the MLA methodology applied on the case of nucleosome positioning, is the classification phase that is applied after the discovery phase. In this section a new classification schema that take advantage of MLA will be presented. As explained in chapter 1 classification algorithms bases the construction of their discriminating function on a training set that contains several examples for each class (or in the particular case of binary classification this means that are necessary both positive and negative examples). However, in many cases either only examples of a single class are available or the classes are very much unbalanced. To address this particular problem one-class classifiers have been introduced in order to discriminate a target class from the rest of the feature space \cite{TAX01}. The approach is based on finding the smallest volume hypersphere (in the feature space) that encloses most of the training data. This approach is mandatory when only examples of the target class are available or the cardinality of the target class is much greater than the other one so that too few training examples of the smallest class are available in order to properly train a classifier. It is important to pinpoint that the nucleosome positioning data considered, involve necessary the use of a one-class scheme, since a training set of only well-positioned nucleosome is available. This section present, a one-class classifier schema, in particular a one-class $KNN$ ($OC-KNN$) in order to distinguish between nucleosome and linkers. The performance of the one-class $KNN$ embedded in the MLA analysis, has been tested on the same kind of data previously described. Results have shown, in both cases, a good recognition rate.

\subsection{One-Class classifiers} \label{repa}
The first algorithms for one-class classification were based on neural networks, such as those of Moya et al. \cite{MOY93,MOY96} and Japowicz et al. \cite{JAP95}. More recently, one-class versions of the support vector machine have been proposed by Scholkopf et al. \cite{SCH01}. The aim is to find a binary function that takes the value +1 in a \emph{small} region capturing most of the data, and -1 elsewhere. Data transformations are applied such that the origin represents outliers, then the maximum margin, separating hyperplane between the data and the origin, is searched.\\
The application of machine learning to classification problems, that depends only on positive examples, is gaining attention in the computational biology community. This section lists some applications of one-class classifiers to biological and biomedical data.\\
In \cite{MAL08} a study using one-class machine learning for microRNA (miRNA) discovery is presented. Authors compare a One-class KNN to two-class approaches using naive Bayes and Support Vector Machines. Using the EBV genome as an external validation of the method they found one-class machine learning to work as well as or better than a two-class approach in identifying true miRNAs as well as predicting new miRNAs.\\
In \cite{NAN05b} a general method for predicting protein-protein interactions is presented. The search of feasible interactions is carried out by a learning system based on experimentally validated protein-protein interactions in the human gastric bacterium Helicobacter pylori. The author shows that the linear combination of discriminant classifier provides a low error rate.\\
In \cite{PEK04} a one-class classification problem is applied to the detection of diseased mucosa in oral cavity. Authors either combine several measures of dissimilarity of an element from a set of target examples in a single one-class classifier or combine several one-class classifiers trained with a given measure of dissimilarity. Results show that both approaches achieve a significant improvement in performance.

\subsection{One-Class $KNN$} \label{KNN}
Here, the one-class classifier named One-class $KNN$ will be described.
A $KNN$ classifier for an $M$ classes problem is based on a training set $T$ for each class $m$, $1 \le m \le M$. The assignment rule for an unclassified element $\mathbf{x}\in X$ is:

\begin{equation}
 j = \mathop{argmax}_{1 \le m \le M} \mid T_{K}^{(m)}(x) \mid
\end{equation}

\noindent where, $T_{K}^{(m)}(x)$ are the training elements of class $m$ in the $K$ nearest neighbors of $x$.

One of the crucial points of the $KNN$ is the choice of the \emph{best} $K$, which is usually obtained
minimizing the misclassification rate in validation data.\\
In the case of a binary classification ($M=2$), one-class training means that in the decision rule can be used examples of only one-class. Here, a one-class training $KNN$ ($OC-KNN$) is proposed and which is a generalization of the classical $KNN$ classifier \cite{jain}. Let $T_p$ be the training set for a generic pattern $p$ representing a \emph{positive} instance, and $\delta$ a dissimilarity function between patterns. Then the membership for an unknown pattern $x$ is:
\begin{equation}
 \chi_{\phi,K}(x)  = \begin{cases}
    $1$ \hbox{ \ \ } \hbox{if $|\{ y \in T_p \hbox{ such that }  \delta(y,x) \le \phi \}|  \ge K$}\\
    $0$ \hbox{ \ \ } \hbox{otherwise}
  \end{cases}
  \label{membership}
\end{equation}
\noindent Informally, the rule says that if there are at least $K$ patterns in $T_p$ dissimilar from $x$ at most $\phi$, then $x$ is supposed to be a
positive pattern, otherwise it is negative.\\
It can be simply proved that the  $OC-KNN$ has some interesting properties:

\begin{proposition} \label{pr1}
Let $D$ a dataset of patterns, $T_p \subseteq D$ the training set for the \emph{positives},  $S_{\phi,K}=\{ x \in D | \chi_{\phi,K}(x)=1\}$ the set with membership $\chi_{\phi,K}$, then:\\

a) $S_{\phi,K'}\subseteq S_{\phi,K}$  $\forall K' \ge K$

b) $S_{\phi,K}\subseteq S_{\phi',K}$  $\forall \phi \le \phi'$
\end{proposition}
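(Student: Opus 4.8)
The plan is to reduce both inclusions to a single monotonicity observation about the counting quantity that the membership function thresholds. First I would introduce the shorthand $n_\phi(x) = |\{ y \in T_p : \delta(y,x) \le \phi \}|$, the number of positive training patterns lying within dissimilarity $\phi$ of $x$. With this notation the definition in Equation \ref{membership} reads simply that $\chi_{\phi,K}(x) = 1$ if and only if $n_\phi(x) \ge K$, so that $x \in S_{\phi,K}$ exactly when $n_\phi(x) \ge K$. Both claims then become statements about how the inequality $n_\phi(x) \ge K$ behaves when one of the two parameters is relaxed.

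For part (a) I would argue directly. Take any $x \in S_{\phi,K'}$ with $K' \ge K$. By definition $n_\phi(x) \ge K'$, and since $K' \ge K$ this gives $n_\phi(x) \ge K$, whence $x \in S_{\phi,K}$. No property of $\delta$ is used here: the count $n_\phi(x)$ is held fixed and only the threshold it is compared against decreases.

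For part (b) the one point that genuinely uses the structure of the sets is the monotonicity of $n_\phi(x)$ in $\phi$. I would observe that whenever $\phi \le \phi'$ the condition $\delta(y,x) \le \phi$ implies $\delta(y,x) \le \phi'$, so that
\[
\{ y \in T_p : \delta(y,x) \le \phi \} \subseteq \{ y \in T_p : \delta(y,x) \le \phi' \},
\]
and passing to cardinalities yields $n_\phi(x) \le n_{\phi'}(x)$. Hence if $x \in S_{\phi,K}$, meaning $n_\phi(x) \ge K$, then $n_{\phi'}(x) \ge n_\phi(x) \ge K$ and therefore $x \in S_{\phi',K}$.

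There is essentially no hard part here: the statement is a pure monotonicity fact, one claim per parameter. The only thing worth being careful about is that part (b) rests solely on the nesting of the thresholded sets as $\phi$ grows and requires no additional hypotheses on $\delta$ (in particular no triangle inequality), which is relevant since the dissimilarity $\delta$ of Equation \ref{eq1} is not assumed to be a metric.
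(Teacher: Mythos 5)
Your proof is correct, and it is precisely the simple monotonicity argument the paper alludes to when it says the properties ``can be simply proved'' --- the paper itself omits the proof entirely. Both parts follow exactly as you argue: the count $n_\phi(x)$ is nondecreasing in $\phi$ and is compared against a threshold that only relaxes as $K$ decreases, and you are right that no metric properties of $\delta$ are needed.
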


The one-class $KNN$ performances depends on the threshold, $\phi$, and the number of neighbors, $K$, that are used in the classification phase.
Both of them can be determined by using a validation procedure applied on the training set of positives $Tp$. In the following, it will be described the procedure used to estimate the best pair $(\phi^*,K^*)$.\\
Let us define the performance function $M$:
\begin{equation}
 M(\phi,K)=\frac{\mid S_{\phi,K} \mid}{\mid T_p \mid}\\
\end{equation}

Note that, in this validation procedure $\forall x \in T_p$ assigned to $S_{\phi,K}$ use the membership $\chi_{\phi,K}(x)$ defined on the training set $T_p-\{x\}$. By using $M$ it is possible to define the functions $P$ and $Q$
\begin{equation}
P(\phi)=\sum_{k \in \{ K_m, K_M \}} M(\phi,k) \mbox{ and } Q(k)=\sum_{\phi \in \{ \phi_m, \phi_M \}} M(\phi,k) \\
\end{equation}
\noindent where $\{ \phi_m, \phi_M \}$ and $\{ K_m, K_M \}$ are sets of increasing values of thresholds and number of neighbors respectively.
By applying the proposition \ref{pr1}, it results that the function $M$ increases while the threshold $\phi$ increases, and decreases while the neighbors $K$ increases. In figure \ref{mplot} a $3d$ plot of the function $M$ relative to the classification of nucleosome and linker regions on the Saccharomyces cerevisiae data set is shown. Assigning the values, $\phi_m = \mathop{min}_{x,y \in T_p} \delta(x,y)$ and $\phi_M = max_{x,y \in T_p} \delta(x,y)$, $K_m=1$, $K_M=|T_p|$, the pair $(\phi^*,K^*)$ to choose is:

\begin{equation}
\phi^*=\mathop{min} \{ {\phi \mid P(\phi)=max\{P(\phi)}\}\}\\
\end{equation}

\begin{equation}
K^*=\mathop{max} \{ {K \mid Q(K) \ne 0} \}
\end{equation}

Informally, such estimation methodology selects the smallest threshold $\phi^*$ which causes the best performances on the validation data, most independently from the values of $K$. Moreover, the value $K^*$ is chosen to be the largest one causing performances different from zero. In this way it is possible to obtain a good compromise between the generalization ability of the classifier and its precision, in fact the best value of $\phi$ takes in account of several values of $K$ and the value of $K$ chosen should guarantee a good generalization ability. In figure \ref{estimate} an image representation of $M$ shows also the chosen $(\phi^*,K^*)$ concerning the classification of nulceosome and linker regions on the Saccharomyces cerevisiae data set. A fuzzy extension version of the $OC-KNN$, has been recently tested on two public data-sets \cite{dj-wilf07}, studying also the gain in classification performances when combining several one-class classifiers defined by different dissimilarity functions.

\begin{figure}[!htb]
\begin{center}
\subfigure[]{\epsfxsize=2.35 in \epsfbox{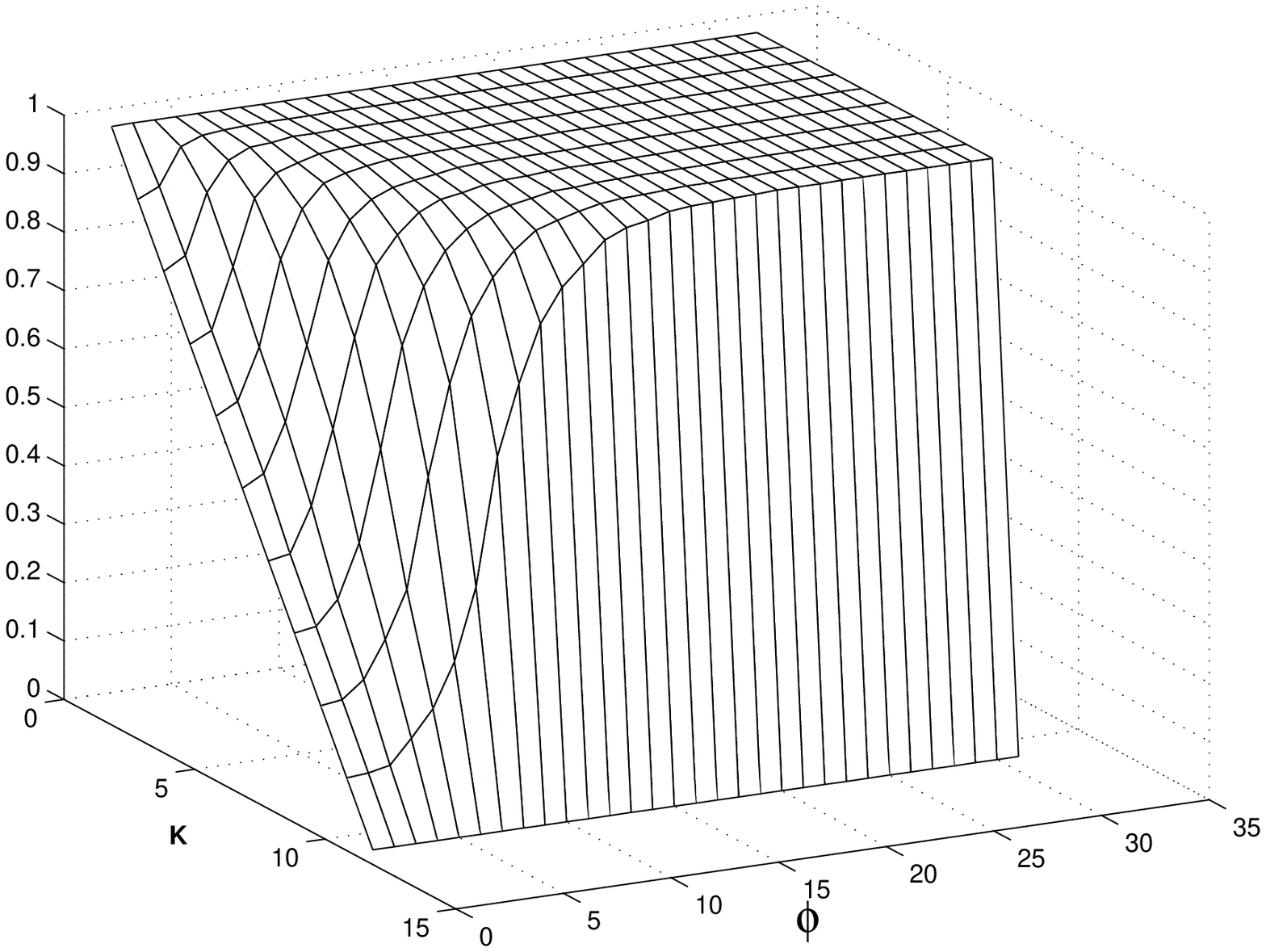}\label{mplot}}
\subfigure[]{\epsfxsize=2.35 in \epsfbox{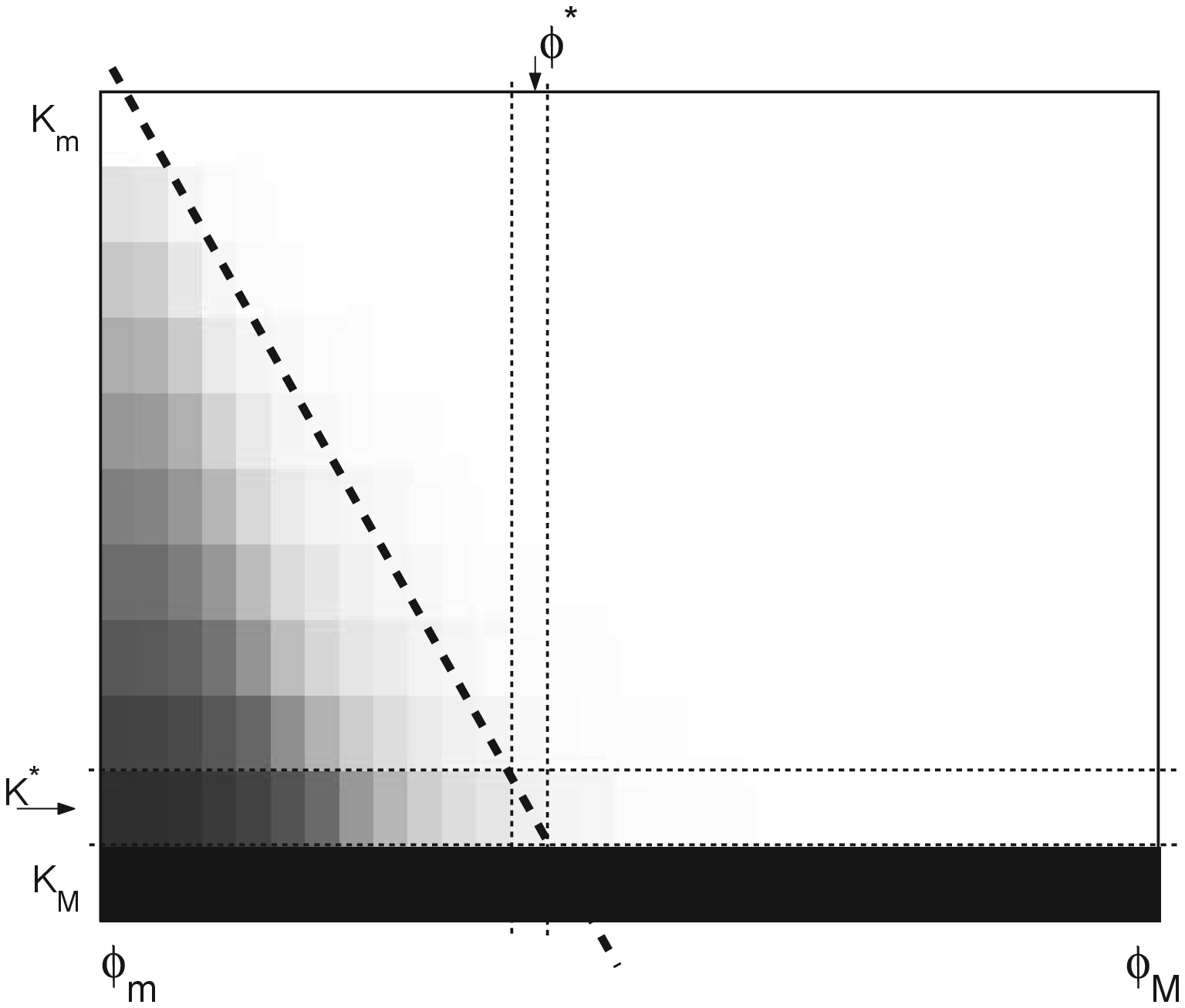}\label{estimate}}
\end{center}
\caption{Two different representations of $M$, on the left (a) a 3d plot, on the right (b) an image representation showing the values of $M$ using grayscale ($0$ is black, $1$ is white). In this latter figure, there are also the chosen pair $(\phi^*,K^*)$}
\end{figure}

\subsection{Results on synthetic data}
\label{one-class-experiments}
Also in this case, the performances have been evaluated in terms of \emph{Recognition
Accuracy}, $RA$ (see section \ref{experiment} for details). The synthetic experiments allows to test the robustness of the $OC-KNN$ to signal noise. All parameters used in the generation of synthetic data have been inspired by biological considerations and are $nn=200$, $nl=250$, $\lambda=200$, $r=50$, $o=20$, $nr=100$, $dp=0$, $dr=0$, $pur=0.8$, $nsv=0.01$,
$SNR=\{1,2,4,6,8,10\}$ and $ra=4$, resulting in $6$ synthetic signals at different $SNR$. The training set $Tp$ is represented by all $WPN$'s that fit better the conditions in Eq. \ref{eq0} with $os=4$, because, by biological consideration, it is known that a nucleosome is around $150$ base pairs which corresponds to $8$ probes. Thus, the training set $T_p$ and consequently its size $TL$, are automatically selected by the MLA depending on the generated input signal, resulting that, for the specific experiments reported here, $TL=\{63,98,127,142,145,147 \}$ for $SNR=\{1,2,4,6,8,10\}$ respectively. The optimal parameters for the MLA are derived by a calibration phase described in \cite{co2007} and have resulted $H=20$ and $m=5$. Here and in the next section $H$ represents the number of threshold operations of MLA analysis in order to avoid ambiguities with the $K$ of OC-KNN that represents the number of neighbors. The performances have been evaluated measuring the correspondence between the classified $WPN$ or $LN$ regions and the ones imposed in the
generated signal. The parameters ($\phi^*,\ K^*$) of the $OC-KNN$ has been chosen by the validation procedure described in section \ref{KNN} for each $SNR=\{1,2,4,6,8,10\}$.
Figure \ref{optim1} reports the best \emph{Accuracy} and \emph{FPR} values versus $SNR$, showing also, for each $SNR$ signal, the ($\phi^*,\ K^*$) causing such values. From this study, it results that the average accuracy and $FPR$ over the $6$ experiments is $94\%$ and $9\%$ respectively.

\begin{figure}[!htb]
\begin{center}
{\epsfxsize=3.35 in \epsfbox{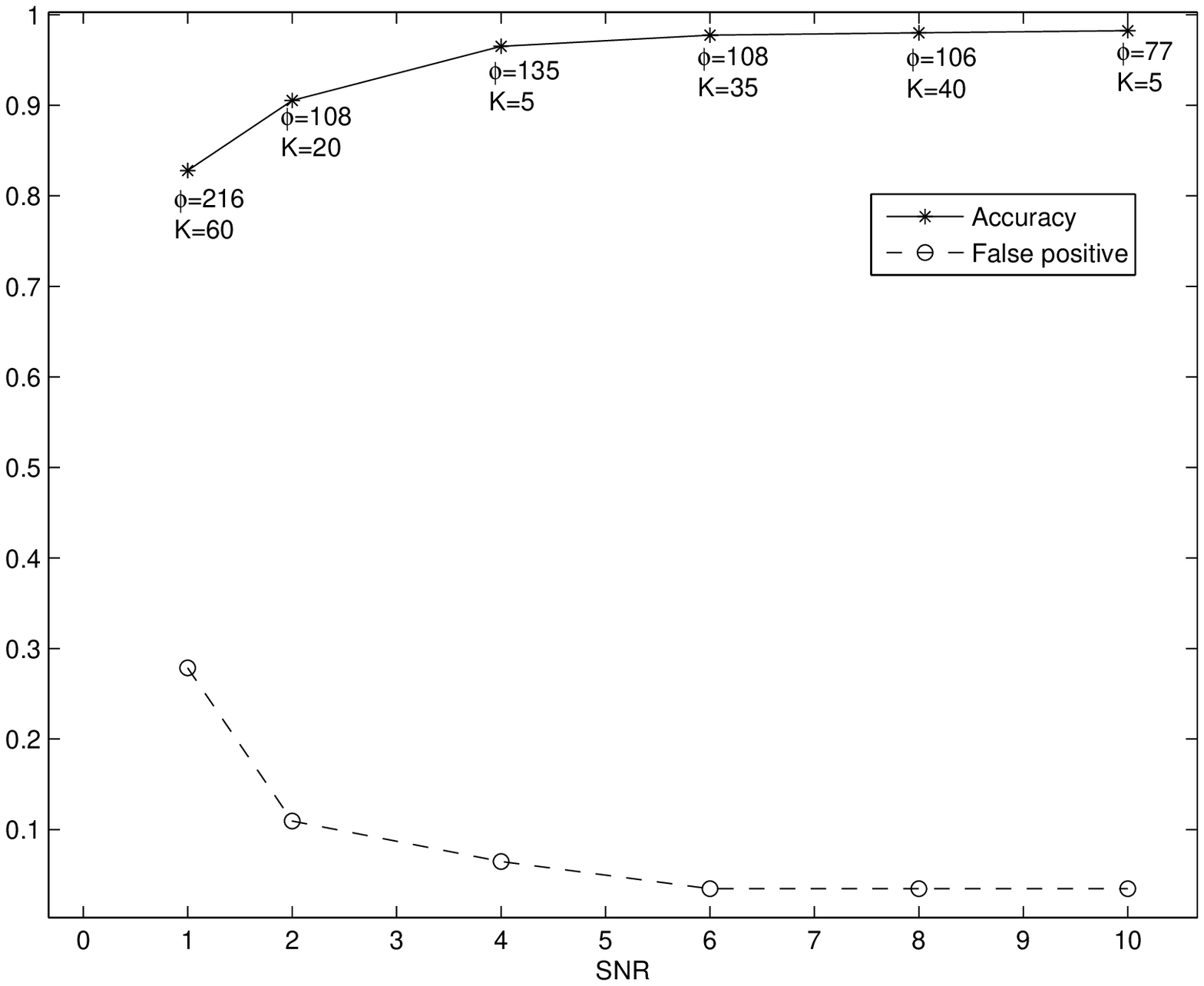}}
\end{center}
\caption{Best Accuracy and $FPR$ values versus SNR. The couples $(\phi,K)$ causing such results are also reported.} \label{optim1}
\end{figure}

\subsection{Results on real data:}
In this experiment, it has been again compared the accordance of the Hidden Markov model ($HMM$) for nucleosome positioning on the \emph{Saccharomyces cerevisiae} real data. The training set $Tp$ has been decided in the same way as above. In such experiment, $H=40$, $m=6$ were chosen by a calibration phase ($m=0.15 \times 40$) that is fully described in \cite{co2007}. The confusion matrices, which show the $RA$ of $HMM$ considering MLA as the truth classification and $RA$ of MLA considering $HMM$ as the truth classification, are reported in table \ref{tabocc}. The results can be summarized in an overall $RA$ of $(0.76)$ for the $HMM$ (MLA true) and $0.65$ for MLA ($HMM$ true).

In particular, from this studies it is possible to conclude that MLA does not fully agree with $HMM$ on the nucleosome patterns as in the previous case, in addition seems comparing  the tables \ref{tabocc} and \ref{tab4}, that this classifier doesn't introduce any significant improvement than the one used in Section \ref{bio_goal}.

\begin{table}
\centering
\begin{tabular}{|c|ccc||c|ccc|}
\hline
&$M$ & $L$ & $M$ & & $H$ & $M$ & $M$\\
\hline
$H$ &     & $L$  & $N$  &$M$ &      & $L$  & $N$\\
$M$ & $L$ & 0.66 & 0.33 &$L$ & $L$  & 0.65 & 0.34\\
$M$ & $N$ & 0.14 & 0.85 &$M$ & $N$  & 0.34 & 0.65\\
\hline
\end{tabular}
\caption{Agreement between the $HMM$ and MLA (and viceversa) on the Saccharomyces cerevisiae data set for
Nucleosomes (N) and Linker (L) regions. The table on the left shows the $RA$ results of $HMM$ when considering MLA as the truth classification, while the opposite is shown on the right table}.
\label{tabocc}
\end{table} 
\chapter{Test of Randomness by MLA}
\label{chap:4}
%\minitoc

This chapter presents a new nonparametric test of randomness of a set of one-dimensional signals that take advantage of MLA preprocessing step. In particular, this procedure is based on the probability density function of the symmetrized Kullback Leibler distance, estimated via a Monte Carlo simulation on the intervals lengths obtained by MLA. The main advantage of this new approach is that it allows to perform an exploratory analysis in order to verify directly the presence of several structures in an input signal. In particular this test differs from the other approaches because it exploits shape features that are rare in a random signal.

\section{Test of Randomness}
Given a signal or a sequence of symbols, it is first necessary to define the meaning of ``random''.
In fact the term \emph{randomness} has several meanings as used in several different fields.
A good literature survey about randomness tests can be found here \cite{URL:SurveyRandom}.
In the statistic literature, the concept of randomness is somewhat related to a sequence of random variables.
The non randomness could be suggested by any tendency of the observation to exhibit regularities in the sequence of observations.
For example, if an observation in a sequence is influenced by the previous observations or, more in general, if the observed value in a sequence is influenced by its position, the process is not truly random. More formally, a generic sequence is said random in statistical context if the process that has generated it, produces independent and identically distributed observations or i.i.d..
In some context, it is typical that the observations are not truly random in rigorous statistical sense i.e. i.i.d, but although the sequence are not formally random, it could be of interest to measure, fixed a certain degree of confidence, how close to random it is.
The application of these approaches are manifold: for example a test of randomness can be useful in the case of exploratory analysis in order to verify the possible presence of structures in an input signal; in the context of cryptography to assess the performance of a good pseudo-random generator (because it is a fundamental building block in a lot of algorithms) or  can be used to test the strength of a password \cite{Hollander:nonpara,Gibbons03:nonpara}.

\subsection{State of the art}
This section does not pretend to be a detailed revision of all the methodologies known in literature;
 the main ideas and their references will be presented instead. In particular in statistic literature, there are several approaches to test if a sequence is random, exploiting the ``non randomness'' in different ways:
\begin{itemize}
  \item test based on runs
  \item test based on entropy estimator
  \item test based on ranking
  \item test based on goodness of fitting to a given distribution
\end{itemize}

It will be shown that the test of randomness that uses the $MLA$ as preprocessing step belongs to the last class.

\subsection{Test based on runs}
These tests are based all on the central concept of \emph{run} given in the following definition:

\begin{definition}
Given an ordered sequence of one or more symbols, a \emph{run} is defined to be a succession of one or more type of symbols which are followed and preceded by different symbols or no symbol at all.
\end{definition}

Once the runs in the signal are identified, the measure of randomness could depend on their number, lengths or both. That's why in a real random sequence is very unusual to have too few or too many runs or runs of considerable length. So these information can be used as statistical criteria to assess if a signal is truly random. Common approaches to define runs starting from a signal are to dichotomize it (e.g. considering its sign for each observation), comparing the amplitude of consecutive points within respect to a focal point (e.g. its  mean or its  median) or looking for trends. More information about these approaches can be found here \cite{Gibbons03:nonpara}.

\subsection{Test based on entropy estimator}
These tests are based on the entropy of a signal or related features. In general the entropy
is a measure of the uncertainty associated with a random variable \cite{CT06:Information_Theory}:

\begin{definition}
Let $X$ a discrete random variable with alphabet $\Sigma$ and probability mass function $p(x)=Pr\{X=x\}$, $x \in \Sigma$. The \emph{entropy} $H(X)$ of a discrete random variable $X$ is defined by:

\begin{equation}
H(X) \equiv H(p) = - \sum_{x \in \Sigma} p(x)log_2 p(x)
\end{equation}

\end{definition}

For example if we consider the sign test \cite{Gibbons03:nonpara} (a particular run test) or a binary vector, it should be expected that the sequence of signs (or bits) are i.i.d. and this obviously follows from the fact that the positive and negative signs are equiprobable i.e. $P(s(i) \geq 0)= P(s(i) < 0)$. If this assumption is not true, it is easy to prove that the entropy will be strictly less than $1$. In general these tests use this null hypothesis:

\begin{equation}
H_0:H(p)=1
\end{equation}

Usually, given a signal $f$ these tests start approximating the probability distribution for $f$ and then
calculating its entropy. Further details can be found in \cite{Gao:Entropy_estimator} and \cite{Wegenkittl:Entropy_Chain} .

\subsection{Test based on ranking: Wilcoxon rank sum test}
These tests are based on the concept of ranking, where for ranking is meant a sorting of the observation in non-crescent or non-descdendent order. A very popular test that falls in this category and that can be used to evaluate the randomness of a signal is the Wilcoxon rank sum test.
Given two vectors of observations $X$ and $Y$ also of different lengths, test the null hypothesis that data in the vectors are independent samples from identical continuous distributions with equal medians, against the alternative that they do not have equal medians \cite{Hollander:nonpara}. More formally:\\

\noindent Given $N=m+n$ observations $X_1,\ldots,X_m$ and $Y_1,\ldots,Y_n$, the assumed model is:

\begin{eqnarray}
  X_i = e_i  \qquad i=1,\ldots,m \\
  Y_j = e_{m+j}+\Delta \qquad j=1,\ldots,n
\end{eqnarray}

\noindent where $e_{m+1},\ldots, e_{m+n}$ are unobservable random variables, and $\Delta$ is the shift between the samples. Here we suppose that the $N$ observations are mutually independent and each $e$ come from the same continuous population.

\noindent The test consist in evaluating the null hypothesis:

\begin{equation}
H_0:\Delta=0
\end{equation}

\noindent The first step is to sort the $N$ observations in increasing order and let $R_j$ denote the rank of $Y_j$ in this ordering. Then the statistic $W$ is calculated using this equation:

\begin{equation}
W= \sum _{j=1} ^{n} R_j
\end{equation}

\noindent For a one side test of $H_0$ versus the alternative $H_1:\Delta>0$, at $\alpha$ level of significance:
\begin{displaymath}
\begin{array}{lr}
    \mbox{reject } H_0    & \mbox{if } W \geq w(\alpha,m,n)\\
    \mbox{accept } H_0    & \mbox{if } W < w(\alpha,m,n)\\
\end{array}
\end{displaymath}

\noindent where the constant $w(\alpha,m,n)$ satisfies $P_0[W \geq w(\alpha,m,n)]=\alpha$

\noindent Let $R^{(1)}<, \ldots ,<R^{(n)}$ the ordered $Y$ ranks in the joint ranking of $X$ and $Y$ then the null distribution for $W= \sum _{j=1} ^{n} R_j = \sum _{j=1} ^{n} R^{(j)} $ can be obtained considering that under the hypothesis $H_0$ all possible ${N\choose n}$ assignments for $[ R^{(1)}, \ldots ,R^{(n)}]$ have probability $1 / {N\choose n}$ in this way it is possible to derive the null distribution without specifying the underling distributions of the $e's$.

\subsection{Test based on goodness of fit: Kolmogorov-Smirnov goodness of fit Test}
These tests start from a statistical model try to assess how well some observations fit the model. A very popular test that falls in this category and that can be used to evaluate if two samples are drawn from the same distribution is the Kolmogorov-Smirnov goodness of fit Test \cite{Senoglu:Goodneed-of-fit}. This distribution free test is used to check if one sample comes from a particular distribution or if two samples come from the same distribution. This test is based on the comparison between the empirical cumulative distribution function and the theoretical cumulative distribution function. More formally:\\

\noindent Let $X$ a random variable with cumulative function $F(x)$, given another cumulative function $F_N(x)$ this test check the hypothesis:

\begin{equation}
H_0: F(x) = F_N(x), \forall x
\end{equation}

\noindent Let $D$ the max absolute value of the difference between the two cumulative distribution, i.e.
\begin{equation}
    D = \underset{-\infty < x < + \infty}{sup} |F_N(x)- F(x)|
\end{equation}

\noindent where $F(x)$ is the theoretical cumulative function and $F_N(x)$ is the cumulative distribution observed. Let $x_1, x_2, ..., x_N$ a random sample, $F_N(X)$ is obtained as:

\begin{equation}
F_N(x)=
\begin{cases}
0 & \text{if $x\leq x_1$,}\\
\frac {k}{n} &\text{if $x_k \leq x \leq x_{k+1}$}\\
1 & \text{if $x\geq x_N$.}
\end{cases}
\end{equation}

\noindent $F_N(x)$ is a good estimator of $F(x)$, in fact it can be proven that $\underset{n \rightarrow \infty}{F_N(x)}=F(x)$.

\noindent At this point considering the observed value of $D$, and considering the theoretical distribution of $D$, once fixed a confidence level $\alpha$ it's possible to calculate $D_\alpha$, then choose to reject or not the hypothesis $H_0$ using the condition:

\begin{displaymath}
\begin{array}{lr}
    \mbox{reject } H_0    & \mbox{if } D \geq D_\alpha\\
    \mbox{accept } H_0    & \mbox{if } D < D_\alpha\\
\end{array}
\end{displaymath}

\section{MLA Test of Randomness}
As it was shown in the previous Chapters, the $MLA$ is strongly related to the class of methods successfully used in the analysis of very noisy data which, by using several views of the input data-set are especially able to recover statistical properties of a signal. Here a test of randomness, based on the distance of the interval
lengths p.d.f's detected by the Multi-Layer Analysis ($MLA$) will be presented. Such p.d.f's are estimated for each
cut-set and the hypothesis test is performed against random signals generated via a Monte Carlo simulation. At
this end the symmetrized Kullback-Leibler measure has been used to estimate the distribution distances.

\subsection{Monte Carlo simulation}
The  Monte Carlo methods \cite{Metropolis:montecarlo}  are a class of computational algorithms that perform their computation using a random process to simulate or sample the possible space of solutions. They are used in the case when a deterministic approach are inapplicable for example due to the complexity of the problem. A typical scenario is the use of these methods to randomly sample a large number of states of a complex system so to use those states to model the behavior of the the whole system. The Montecarlo Method is used in several different contexts, but shares the same general approach depicted in Figure \ref{alg:Montecarlo}.
In the MLA test of randomness, a Montecarlo Method is used to model the random case in term of  Kullback Leibler distance applied on the interval representation obtained by MLA on random signals, and will be shown in the following sections:

\begin{figure}[!hbt]
\newlength{\mylength}
\[
\setlength{\fboxsep}{11pt}
\setlength{\mylength}{\linewidth}
\addtolength{\mylength}{-2\fboxsep}
\addtolength{\mylength}{-2\fboxrule}
\ovalbox{
\parbox{\mylength}{
\setlength{\abovedisplayskip}{0pt}
\setlength{\belowdisplayskip}{0pt}

\begin{pseudocode}{Monte Carlo Method} {P}
\BEGIN
\,\,\,\,\,1. \mbox{ Define the space of inputs or solutions } S\\
\,\,\,\,\,2. \mbox{ Random sampling from } S  \mbox{ using a particular probability distribution } P\\
\,\,\,\,\,3. \mbox{ Use the sample of the previous step to perform a deterministic computation }\\
\,\,\,\,\,4. \mbox{ Aggregate the results of the previous step to produce the final result } R\\
%\FOR i\GETS 1 \TO n-k \DO \\
%\BEGIN
%3. \mbox{ }\mbox{ } \mbox{ Select the two most \vir{similar} clusters}\\
%4. \mbox{ }\mbox{ } \mbox{ Merge them}.\\
\END\\
\RETURN{R}
\end{pseudocode}
}
}
\]
\caption{The general schema of \codice{Montecarlo Method}.}\label{alg:Montecarlo}
\end{figure}

\subsection{Hypothesis test} \label{hypothesis}
In order to detect the presence of structures in the signal an hypothesis test based on the
expected probability distribution function (p.d.f.) of the segments length is proposed. The
null hypothesis ($H_0$) represents a \emph{random signal} and it is accepted if the p.d.f. of the segment lengths,
$p_1$, is compatible with a random signal distribution, $p_0$; the hypothesis $H_1$ represents a \emph{structured
signal} and it is accepted if the p.d.f. of the segment lengths is not compatible with a
random signal, $p_0$. It follows that we need to measure the similarity (dissimilarity) of two p.d.f.'s and set
a confidence level $\alpha$ to perform the decision.

The symmetric Kullback-Leibler measure, $SKL$, has been considered to evaluate the dissimilarity of the two
distributions $p_0$, and $p_1$ \cite{Johnson01symmetrizingthe}:
\[
SKL(p_0,p_1)=\frac{KL(p_0,p_1)+KL(p_1,p_0)}{2}
\]

\noindent where, $KL$ is the no-symmetric Kullback-Leibler measure. In the continuous case, p.d.f.'s are defined
in a dominion $I\subseteq \mathbb{R}$ and the $KL$ measure is defined as:
\[
KL(p,q)=\int_I p(x)log \frac{p(x)}{q(x)}dx
\]
In the discrete case $I\subseteq \mathbb{N}$ and the $KL$ become:
\[
KL(p,q)=\sum_{i\in I} p_i log \frac{p_i}{q_i}
\]
In order to perform the hypothesis test it is necessary to know the p.d.f. of the $SKL$ in the case of a random signal.
The derivation of analytical form of this p.d.f. is usually an hard problem that has been solved by a Monte
Carlo simulation. For example in \cite{Ikuo:KL_Normal_test} a goodness-of-fit test for normality is introduced; it is based on Kullback-Leibler information and a Monte Carlo simulation is performed to derive and estimate the p.d.f.'s. In
\cite{Senoglu:Goodneed-of-fit} an extension of the previous test is described for s-normal, exponential, and uniform distributions and also in this work a Monte Carlo simulation has been used to estimate the p.d.f. of the measure $KL$.

\subsection{Probability density functions estimation} \label{dens}
In this section, the simulation performed to estimate the p.d.f.'s of both the
intervals length, $IL_k$ ($PIL_k$), and the $SKL_k$ ($PSKL_k$), at a given threshold $t_k$
will be outlined. Here, $SKL_k$ is the distance between the p.d.f.'s of two interval length.

To estimate the p.d.f. of $IL_k$, $RS_n$,  $n=1,...,N$ signals of length $l$ have been generated, according to
a normal distribution with $\widehat{\mu}$ and $\widehat{\sigma}$ estimated from an input signal $S$ of length
$L$. Each signal, $RS_n$, is then used to evaluate experimentally $PIL^{(n)}_k$ ($n=1,2,...,N$).

In the simulation,  for each threshold, $t_k$, it is then possible to derive the experimental distributions of the $IL_k$ in $R_k$. Therefore $k=1,2,...,K$ normalized p.d.fs are obtained . $PIL^{(n)}_k$ with $nb$ bins. Figures \ref{p-a4}, \ref{p-a5}, \ref{p-a6}, show examples of $PIL_k$ for a simulation using $l=20000$, $L=200000$, $N=1000$, $K=9$, $nb=100$.

The estimation of the p.d.f. of $SKL_k$ and $PSKL_k$, is carried out by computing the $SKL_k$ between the
pairs $\left(PIL^{(m)}_k,\  PIL^{(n)}_k \right )$, with $m\neq n$. In this simulation it was drawn the evaluation of
$PSKL_k$ from a sample of $\frac{N \times (N-1)}{2}$ elements by using a density estimation with Gaussian kernel. Figures \ref{p-b4}, \ref{p-b5}, \ref{p-b6} shows examples of $PSKL_k$ for a simulation using $l=20000$, $L=200000$, $N=1000$, $K=9$, $nb=100$.

\begin{figure}[!htb]
\centering
\subfigure[] {\includegraphics[width=.7 \textwidth]{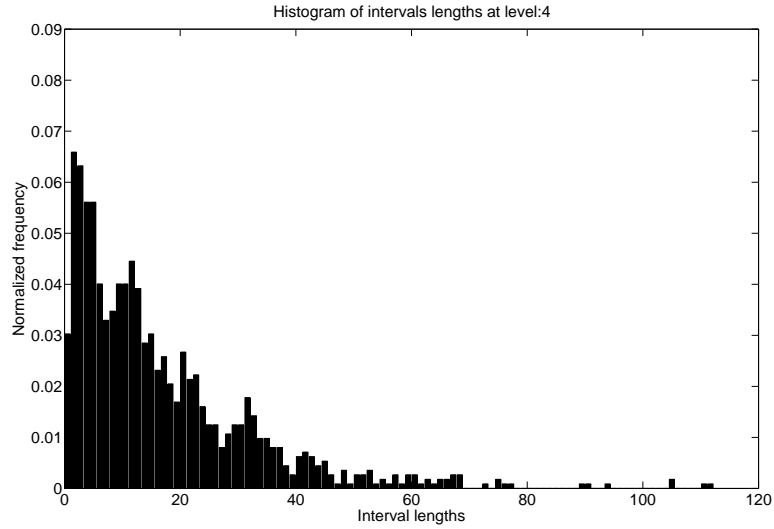} \label{p-a4}}
\subfigure[] {\includegraphics[width=.7 \textwidth]{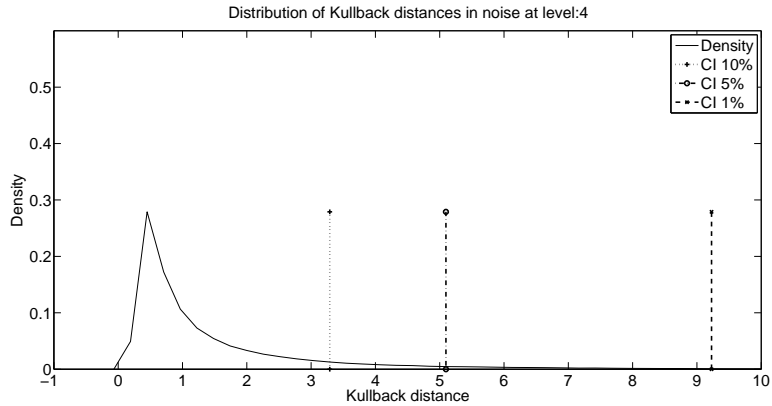} \label{p-b4}}
\caption{Examples of $PIL_k$ (a), and $PSKL_k$ (b) for $k=4$}

\end{figure}

\begin{figure}[!htb]
\centering
\subfigure[] {\includegraphics[width=.7 \textwidth]{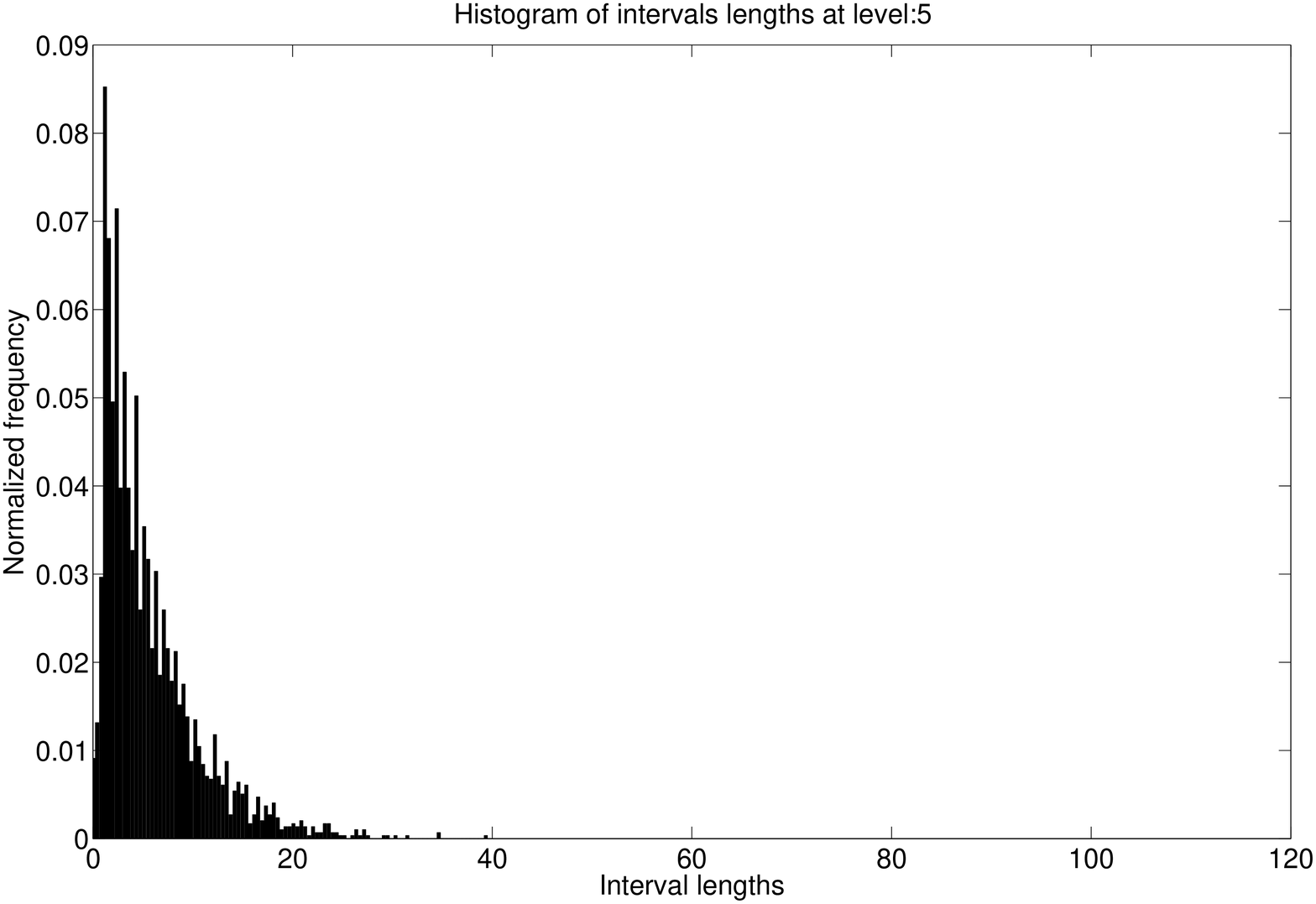} \label{p-a5}}
\subfigure[] {\includegraphics[width=.7 \textwidth]{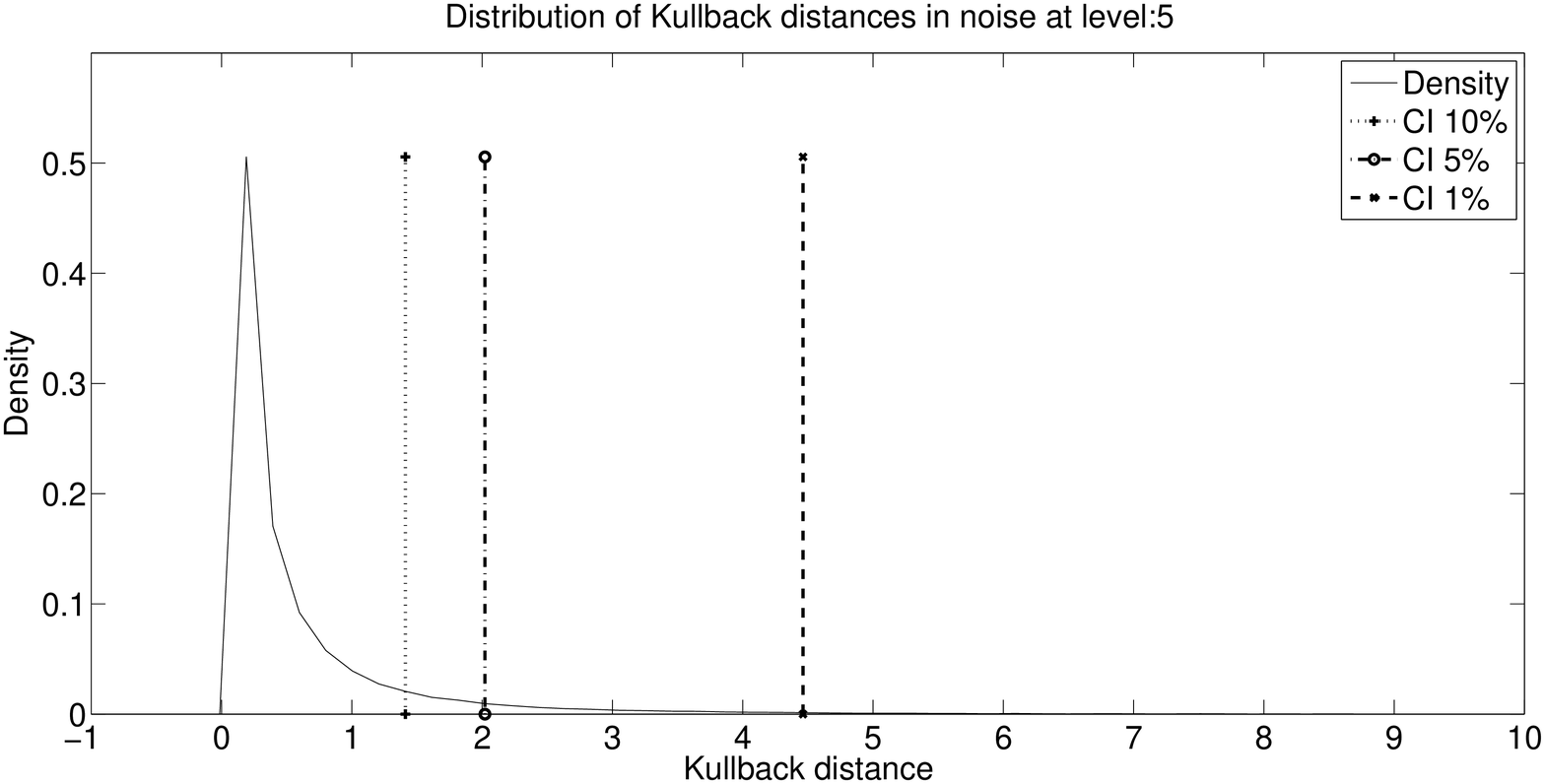} \label{p-b5}}
\caption{Examples of $PIL_k$ (a), and $PSKL_k$ (b) for $k=5$}
\end{figure}

\begin{figure}[!htb]
\centering
\subfigure[] {\includegraphics[width=.7 \textwidth]{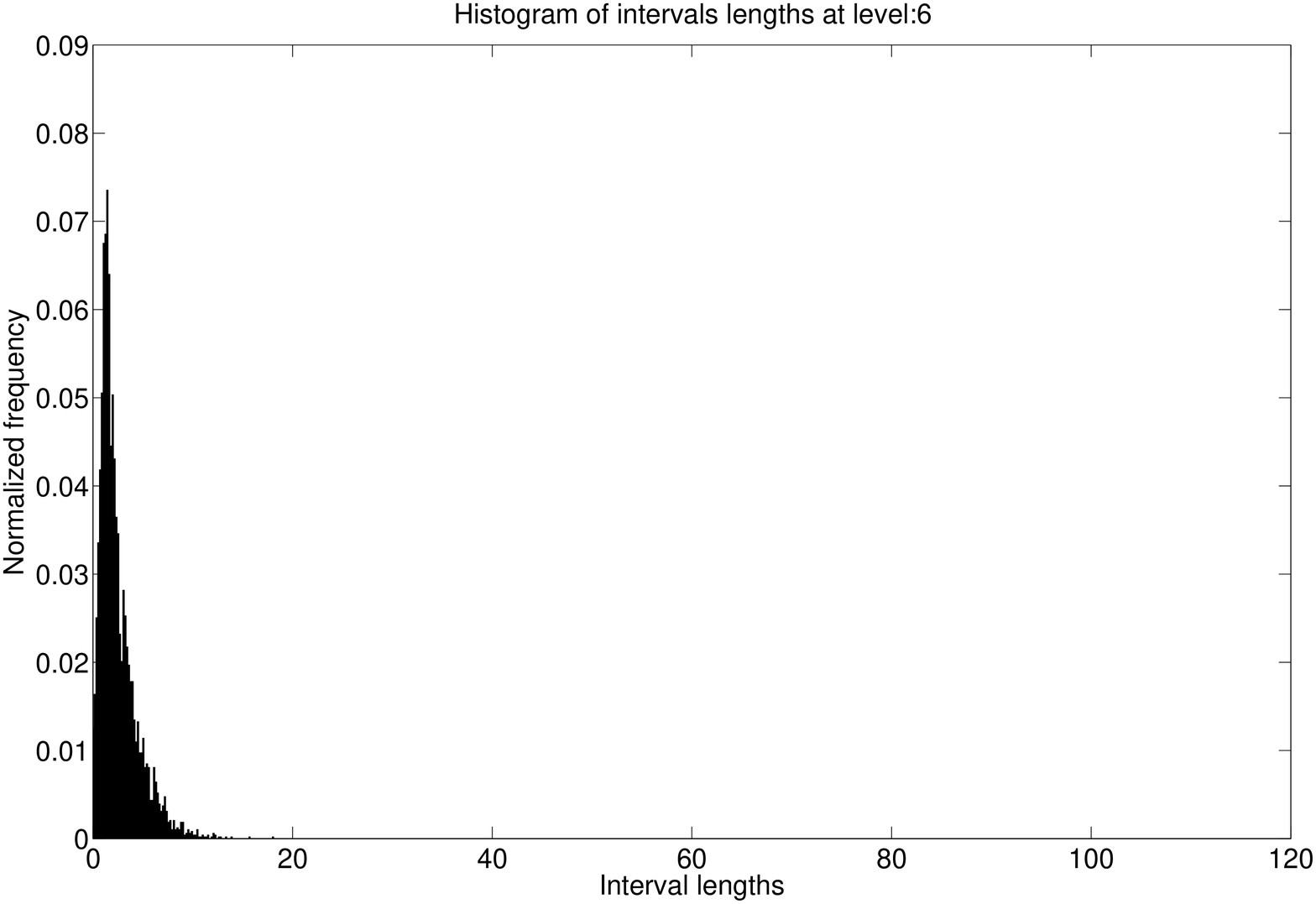} \label{p-a6}}
\subfigure[] {\includegraphics[width=.7 \textwidth]{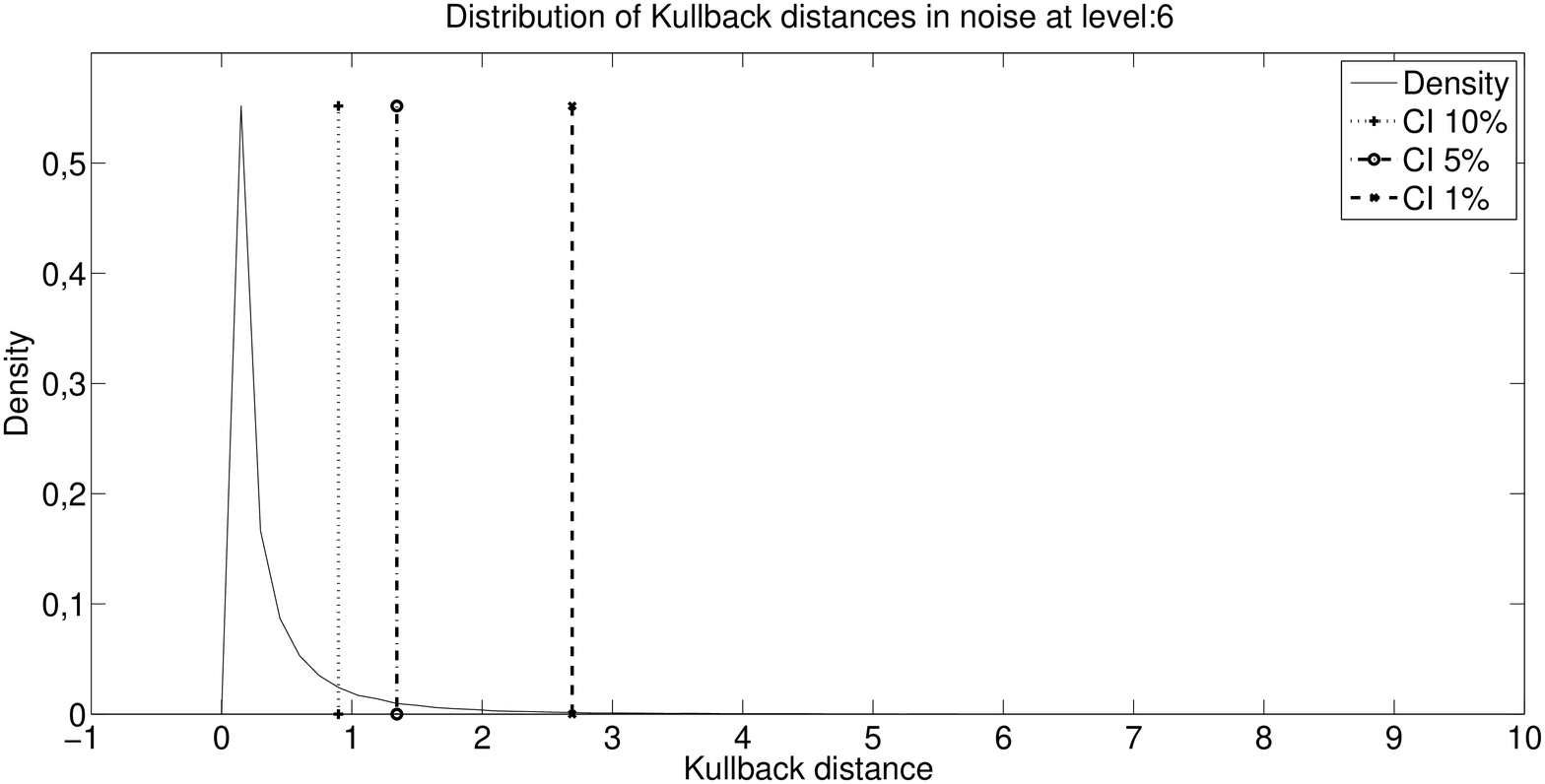} \label{p-b6}}
\caption{Examples of $PIL_k$ (a), and $PSKL_k$ (b) for $k=6$}
\end{figure}

\section{Experimental Setup}
Here, the evaluation of the test will be presented. In particular, the test has been carried out on simulated and real data respecting a particular tiled microarray approach able to reveal nucleosome positioning information on DNA \cite{YUAN05} and presented in detail in chapter 3. Here unlike to the case study presented in chapter 3 (in which the problem was to infer the nucleosome positions) the problem is to investigate if the shapes correspondent to nucleosome binding sequences have some specific features. Results indicate that such statistical test may indicate the presence of structures in real and simulated biological signals, showing also its robustness to data noise and its superiority to the Wilcoxon rank sum test. In Fig.s \ref{input-a},\ref{input-b},\ref{input-c} three examples of input signals with signal to noise ratio $SNR=1,1.5,10$, are given.
This allows to control the accuracy of the proposed test of randomness and perform the calibration of the methodology.
The same test has been applied to the  data used in the simulation phase.

\begin{figure*}[!htb]
\begin{center}
\subfigure[] {\includegraphics[width=.55 \textwidth]{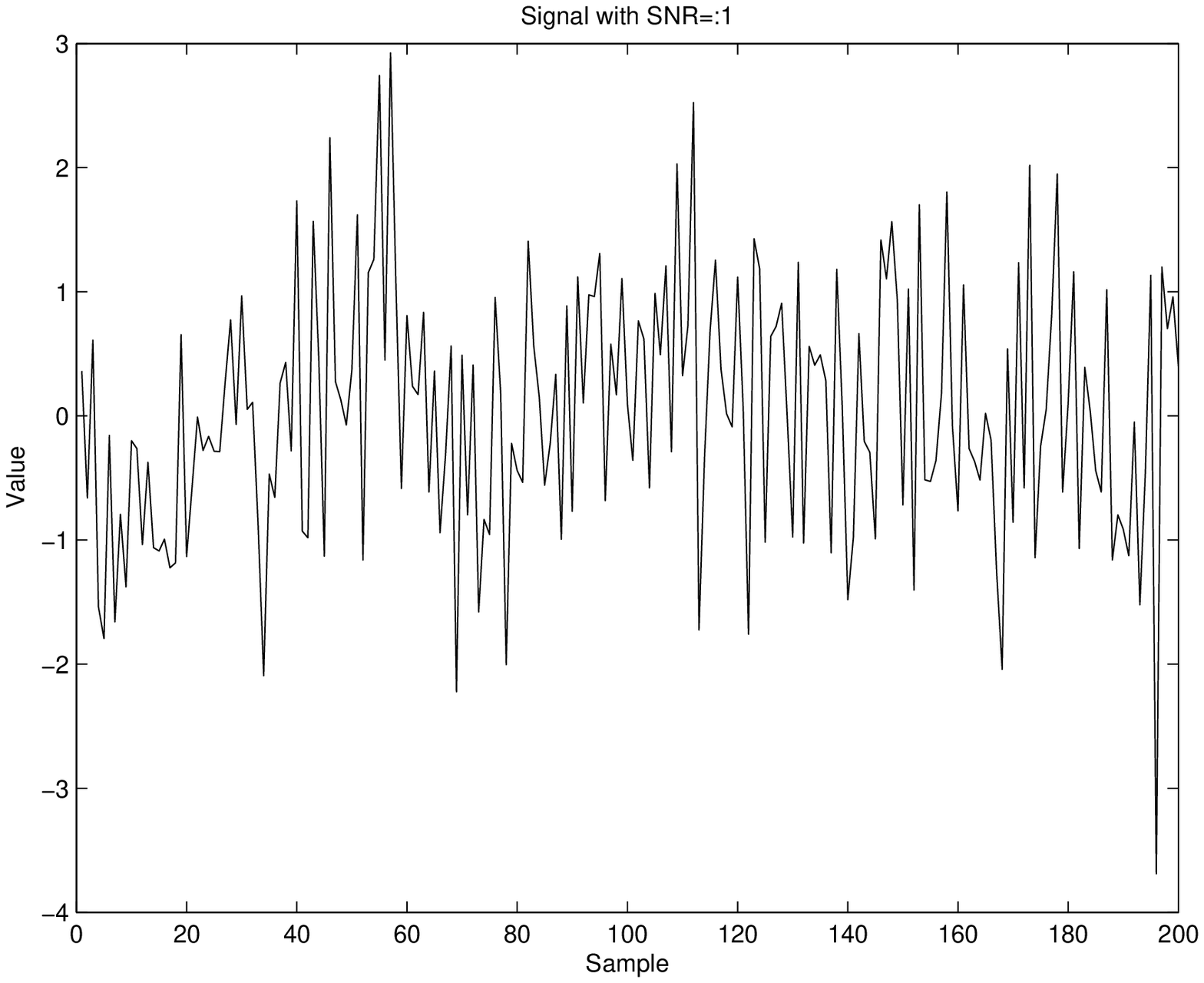}\label{input-a}}
\subfigure[] {\includegraphics[width=.55 \textwidth]{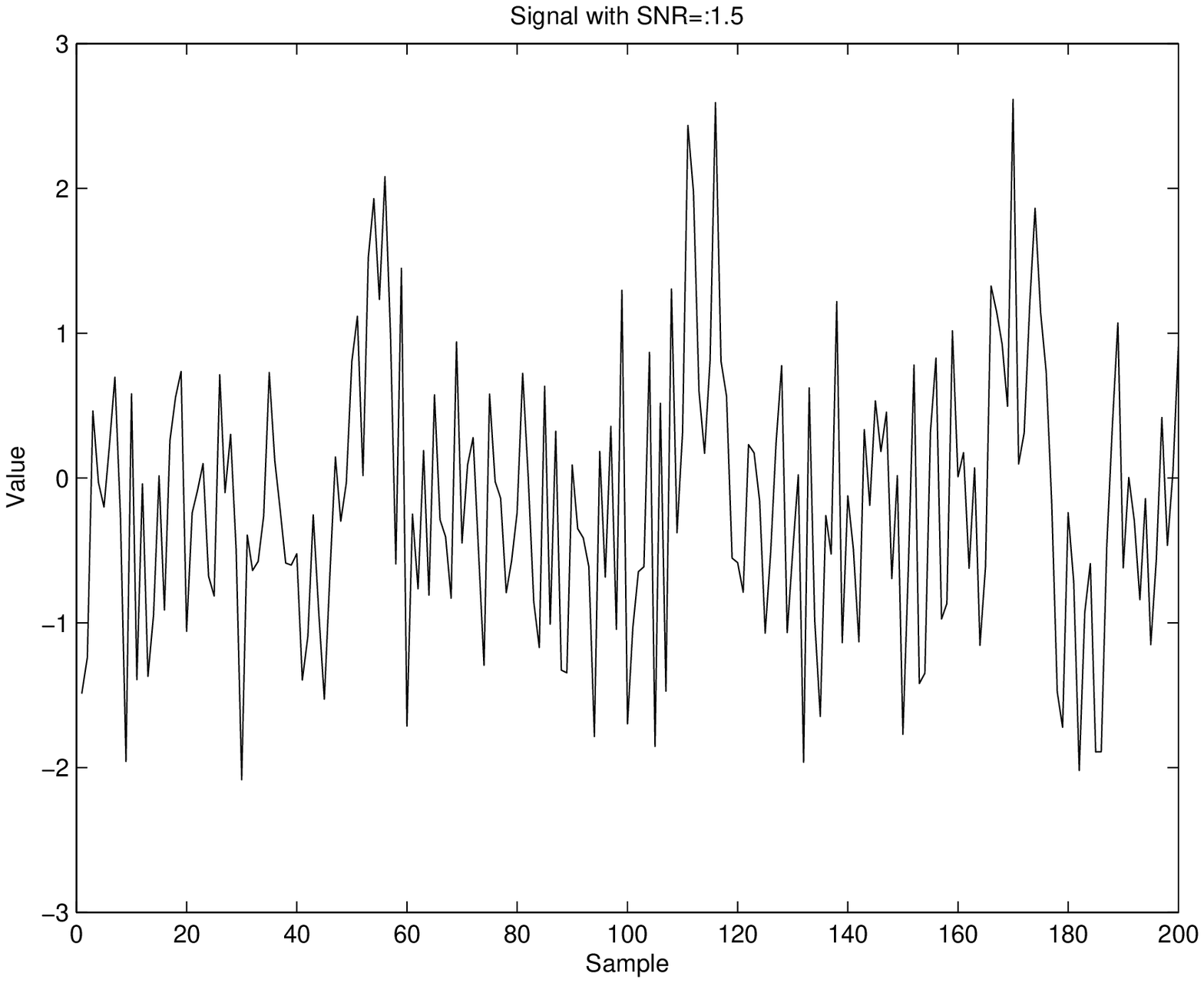}\label{input-b}}
\subfigure[] {\includegraphics[width=.55 \textwidth]{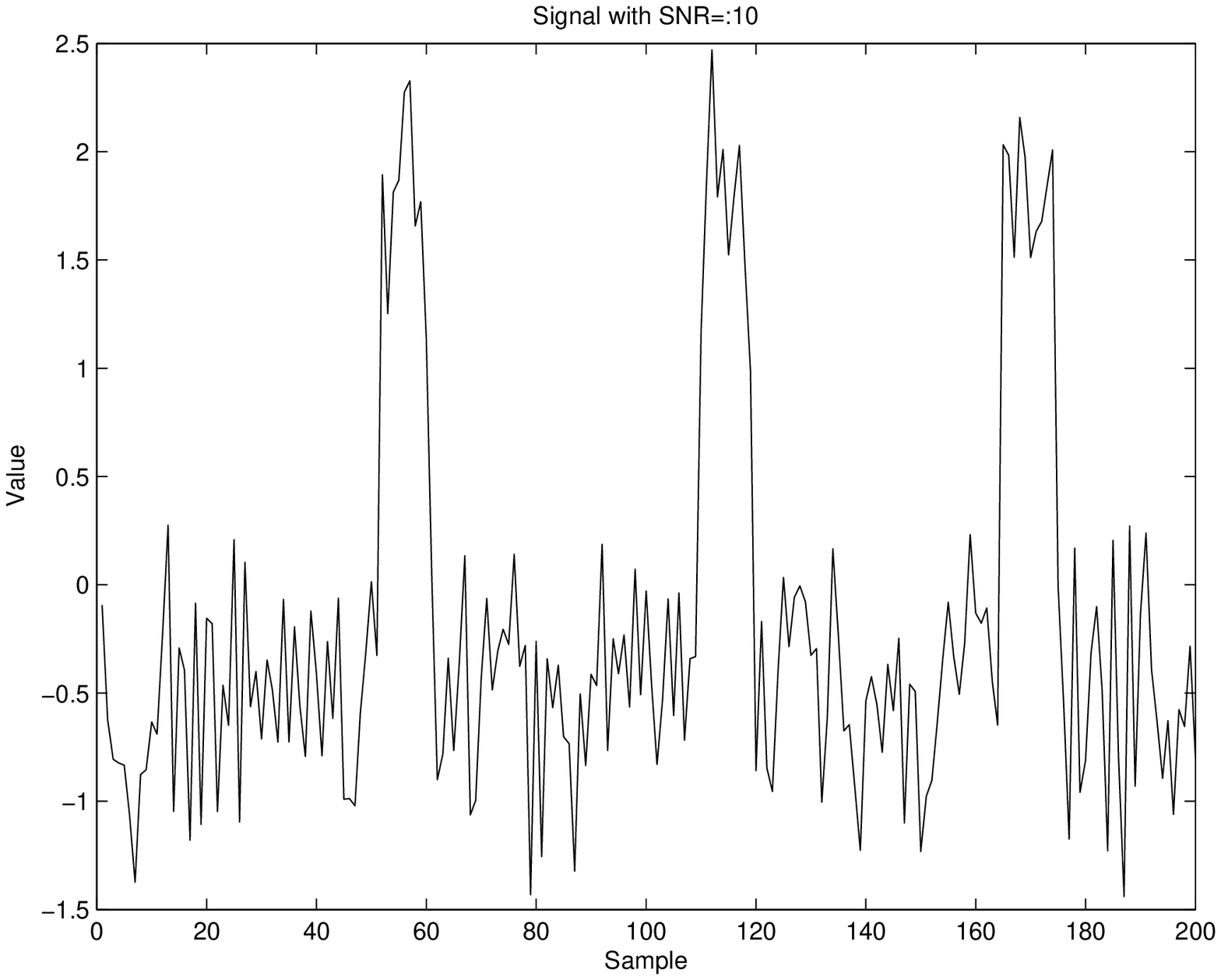} \label{input-c}}
\end{center}
\caption{Examples of input signals: (a) input signal $SNR=1$; (b) input signal  $SNR=1.5$; (b) input signal
$SNR=10$.}\label{input}
\end{figure*}

\subsection{Assessment on synthetic data}
The input signals used to evaluate the test, are synthetically generated following the procedure described in \cite{dig08} and in Chapter 3 and represent signals which emulate the nucleosome positioning data.

\begin{figure}[!htb]
\begin{center}
\subfigure[] {\includegraphics[width=.8 \textwidth]{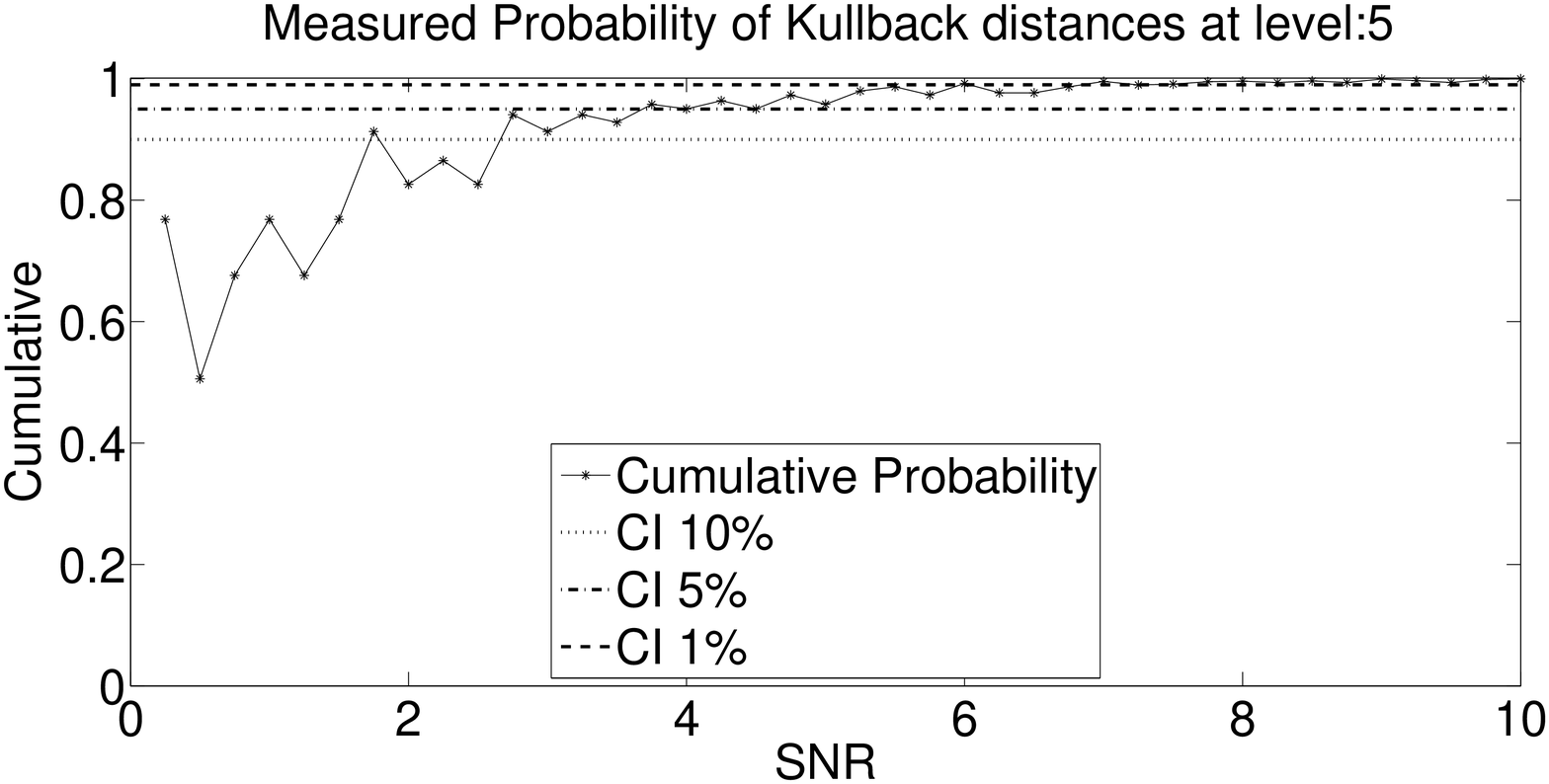}}
\subfigure[] {\includegraphics[width=.8 \textwidth]{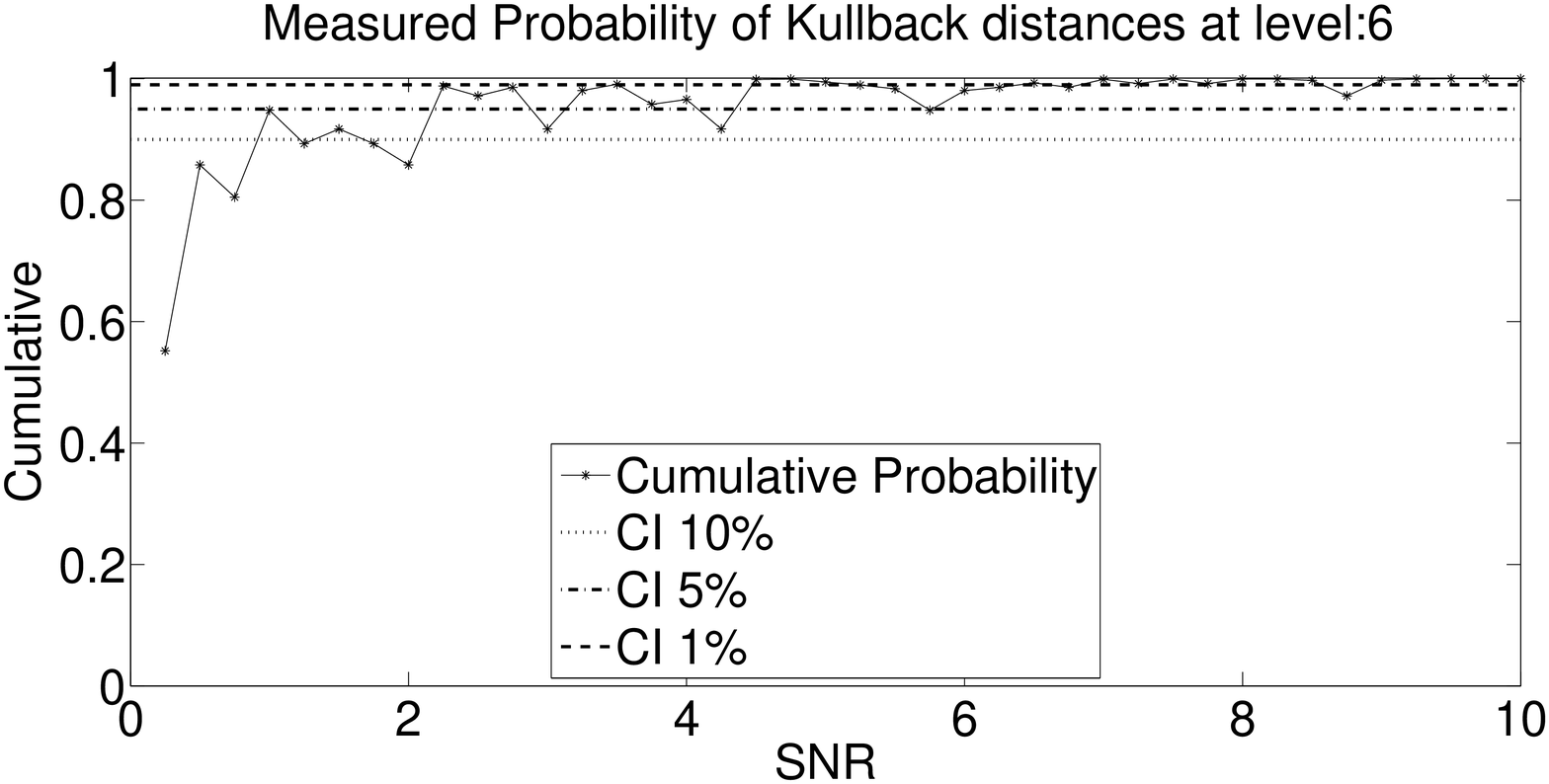}}
\end{center}
\caption{Examples of hypothesis test at different $SNR$ and thresholds.}\label{confi1}
\end{figure}

\begin{figure}[!htb]
\begin{center}
\subfigure[] {\includegraphics[width=.8 \textwidth]{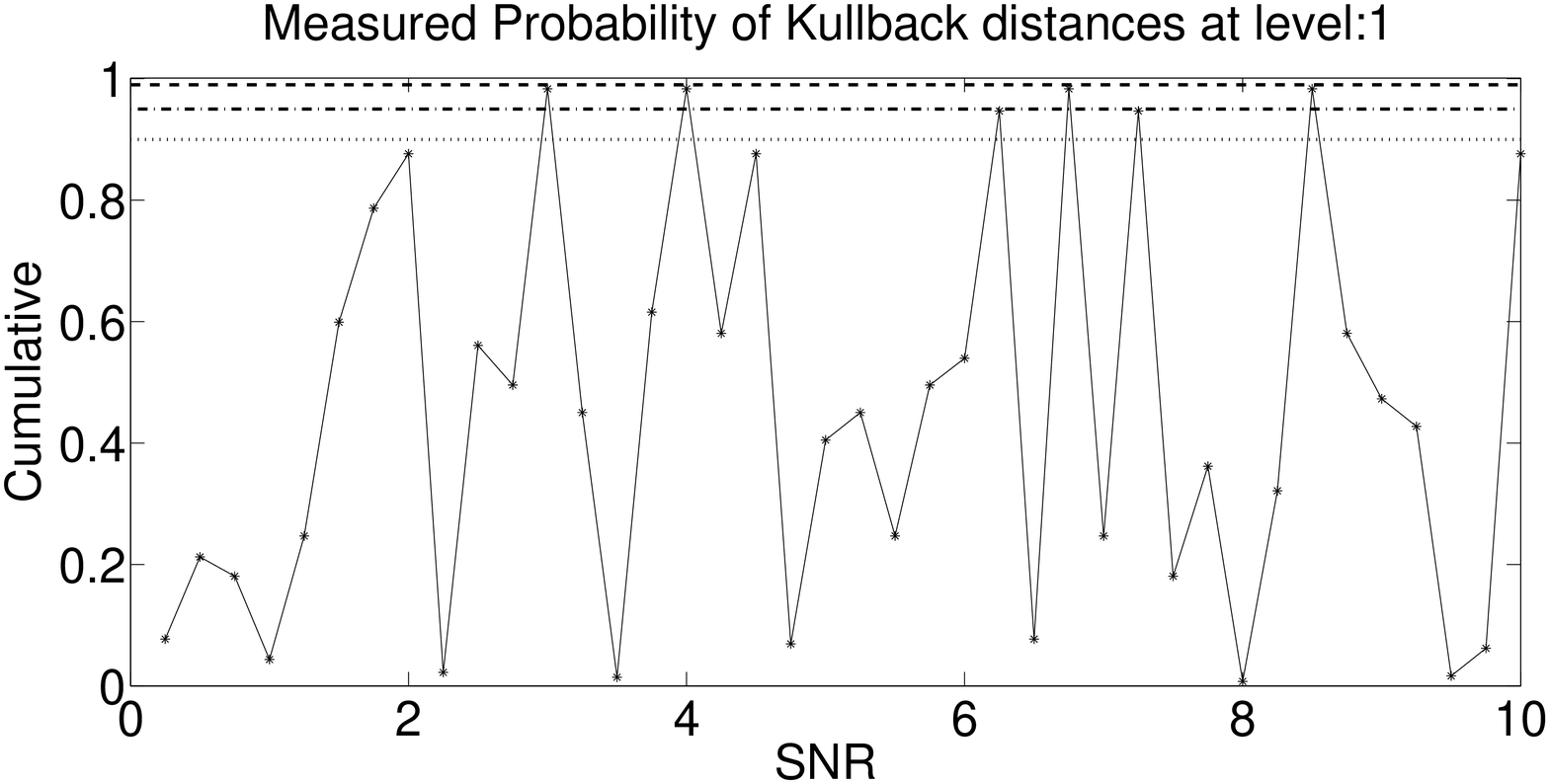}}
\subfigure[] {\includegraphics[width=.8 \textwidth]{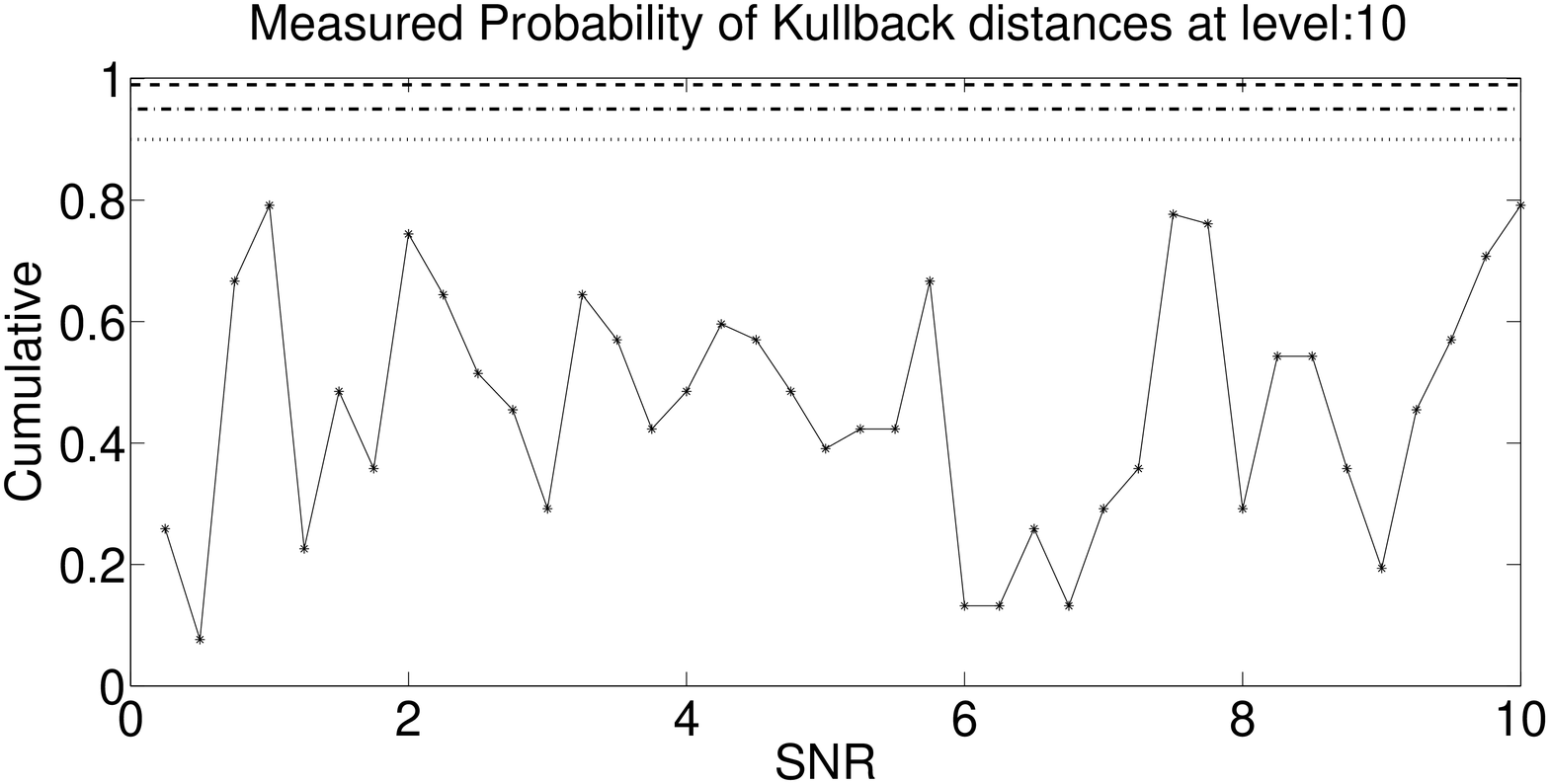}}

\end{center}
\caption{Examples of hypothesis test at different $SNR$ and thresholds.}\label{confi2}
\end{figure}

\begin{figure}[!htb]
\begin{center}
\includegraphics[width=.8 \textwidth]{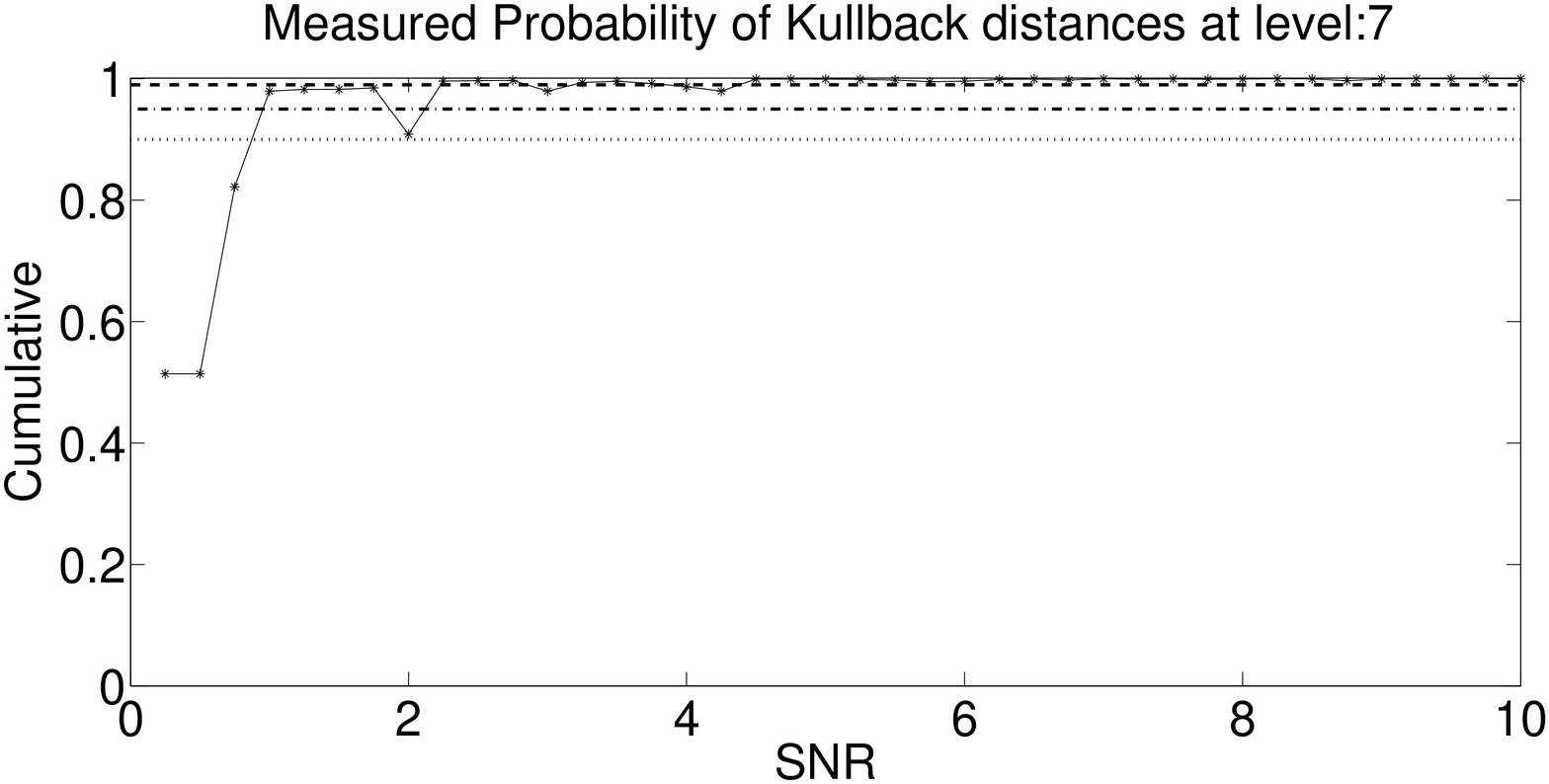}
\includegraphics[width=.8 \textwidth]{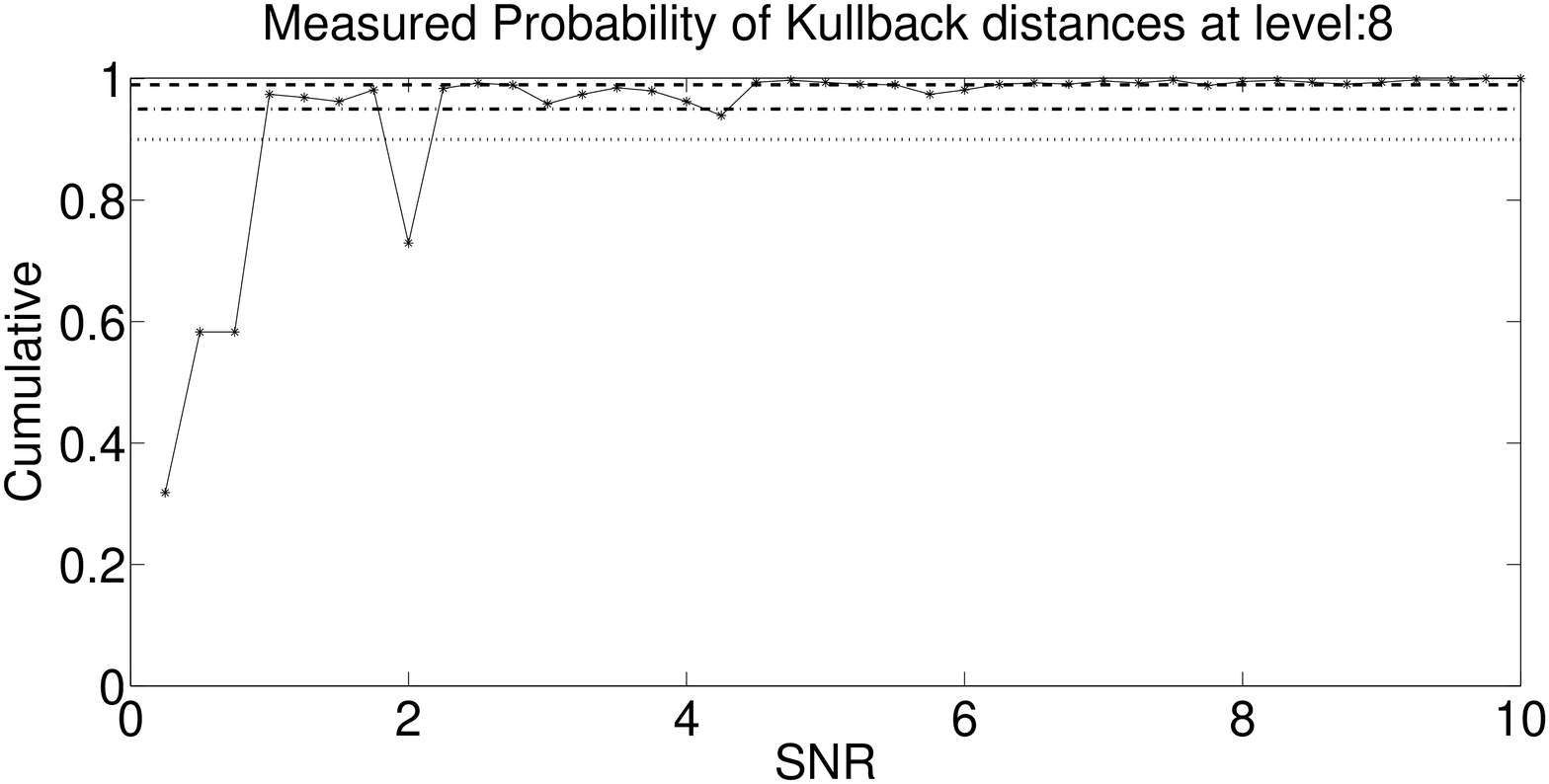}
\end{center}
\caption{Examples of hypothesis test at different $SNR$ and thresholds.}\label{confi3}
\end{figure}

In the following, an assessment of the proposed hypothesis test to guess the presence of structures in
an unknown input signal is performed. In this sense these can be considered part of an exploratory data analysis procedure.
This experiment has been carried out generating $40$ synthetic test signals of length $L=200000$ (base pairs), with signal to noise ratio ranging from $SNR=0$ to $SNR=10$ by steps of $0.25$ and $N$ random samples of length $l=20000$ (base pairs).
The simulation used to estimate the p.d.f. of $SKL$, has been done using the synthetic signals of length $L$ (base pairs),
$N=1000$ random samples of length $l$ (base pairs), $K=9$ thresholds and $nb=100$ bins.  The number of bins has been
set as a good compromise among different sample size at different thresholds.
For each test signal $S$ its $SKL_k$ from a random sample drawn from the $RS_n$
samples is computed and used to verify the test of hypothesis on the $PSKL_k$. In particular for each test signal $S$ recalling that
$R_n$, as defined above, is of the same length of $S$ and generated according to a normal distribution with $\widehat{\mu}$ and $\widehat{\sigma}$ estimated from $S$.
In figures \ref{confi1},  \ref{confi2},  \ref{confi3}  some results of the test are provided for increasing $SNR$, for confidence level
$\alpha=99\%,\ 95\%,\ 90\%$ and at different thresholds. In the abscissa  the $SNR$ is represented, while in the
ordinate the probability that the symmetrized Kullback-Leibler distance falls in the interval $[0,SKL_k]$. If the
ordinate value is greater than the confidence $\alpha$ the random test is rejected.
From previous results, it can be seen that the test is not reliable for lower and the higher thresholds while it
is quite sensitive for intermediated thresholds. For example, for $t_k=5,6,7,8$ and $\alpha\geq 90\%$ the random
hypothesis is rejected for $SNR\geq 3.0,\ 1.5,\ 1.25, 1.5$ respectively. Intuitively, this can be explained
because the number of intersections is low for higher and lower threshold values.

\subsection{Assessment on real data}
\label{real}
The test of randomness has been applied to real biological data derived from a tiled microarray approach able to reveal nucleosome positioning information on the Saccharomyces cerevisiae DNA \cite{YUAN05}.  The input microarray data, $\mathbb{S}$, are organized in  $T$ contiguous fragments $S_1,\cdots, S_T$ which represents $DNA$ sub-sequences. This dataset is explained in detail in Chapter 3.

In the experiment we set $K=10$, $nb=100$, for each signal fragment $S_i$ the corresponding intervals $INT_{ik}$ are extracted for each threshold $t_k$.

Finally, the set of intervals $INT_k= \bigcup_{i=1}^T INT_{ik}$ are used to compute the interval distribution length $PIL_k$. Then, the $SKL_k$ from a random sample $R$ drawn from the $RS_n$ samples is computed and used to verify the test of hypothesis on the $PSKL_k$.
Note that, in this experiment, the length of the real signal and of the random sample is $20000$ base pairs. Figures \ref{pr-a4}, \ref{pr-a5}, \ref{pr-a6} show $PIL_k$ for $k=4,5,6$.

The experiment indicates that the hypothesis test is rejected at confidence level $95\%$ for $k=5$, while for $k=6,7,8,9$ is rejected at a confidence level $\geq 99\%$. In figures \ref{pr-b4}, \ref{pr-b5}, \ref{pr-b6} are shown the result of the test of randomness for $k=4,5,6$. Moreover, the test of randomness is quite unstable for $k\leq 4$ and $k=10$; this property highlights that the central part of the signal contains the majority of the useful information for the test of randomness (see Figure \ref{central}).

\begin{figure}[!htb]
\centering
\subfigure[] {\includegraphics[width=.8 \textwidth]{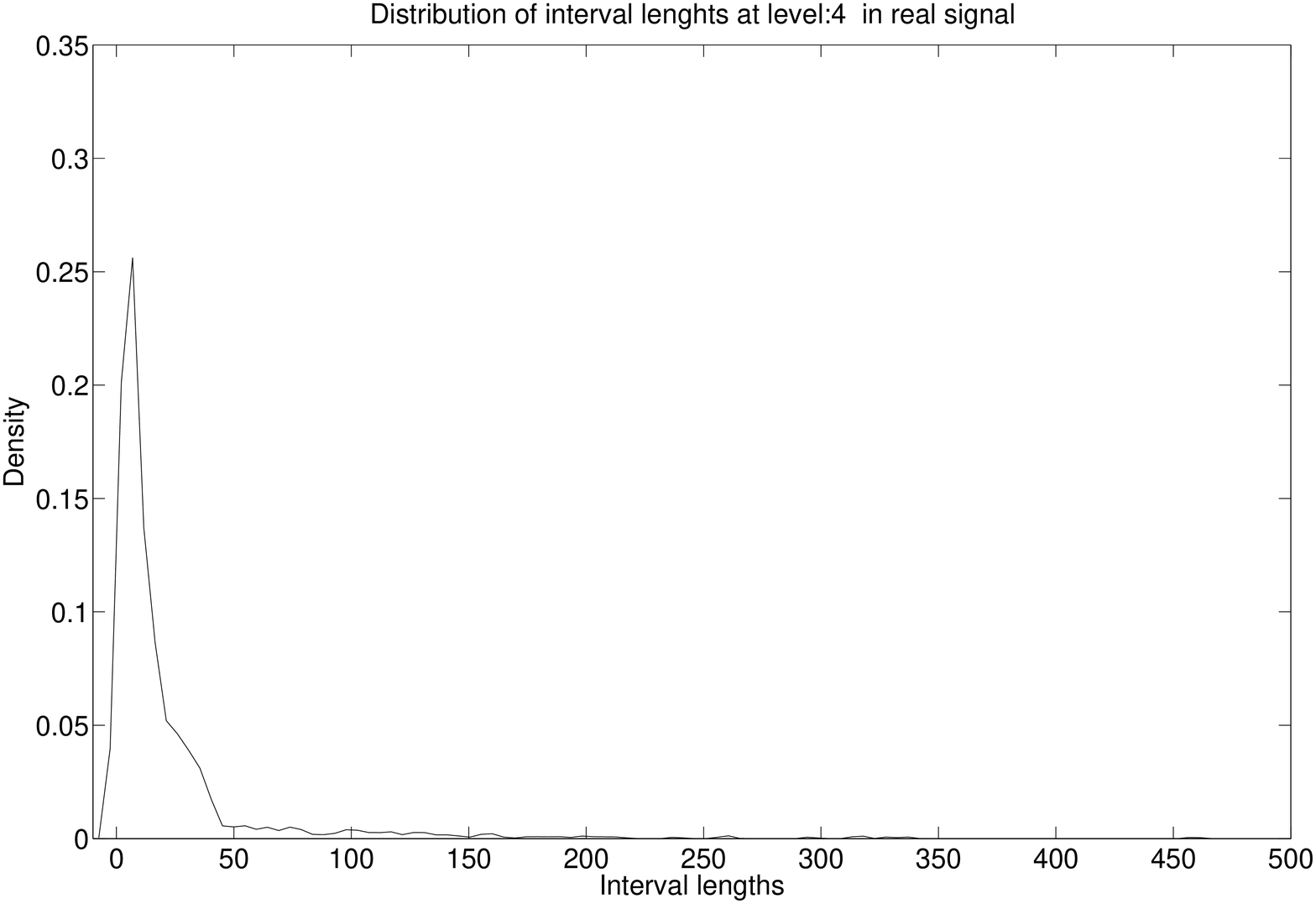} \label{pr-a4}}
\subfigure[] {\includegraphics[width=.8 \textwidth]{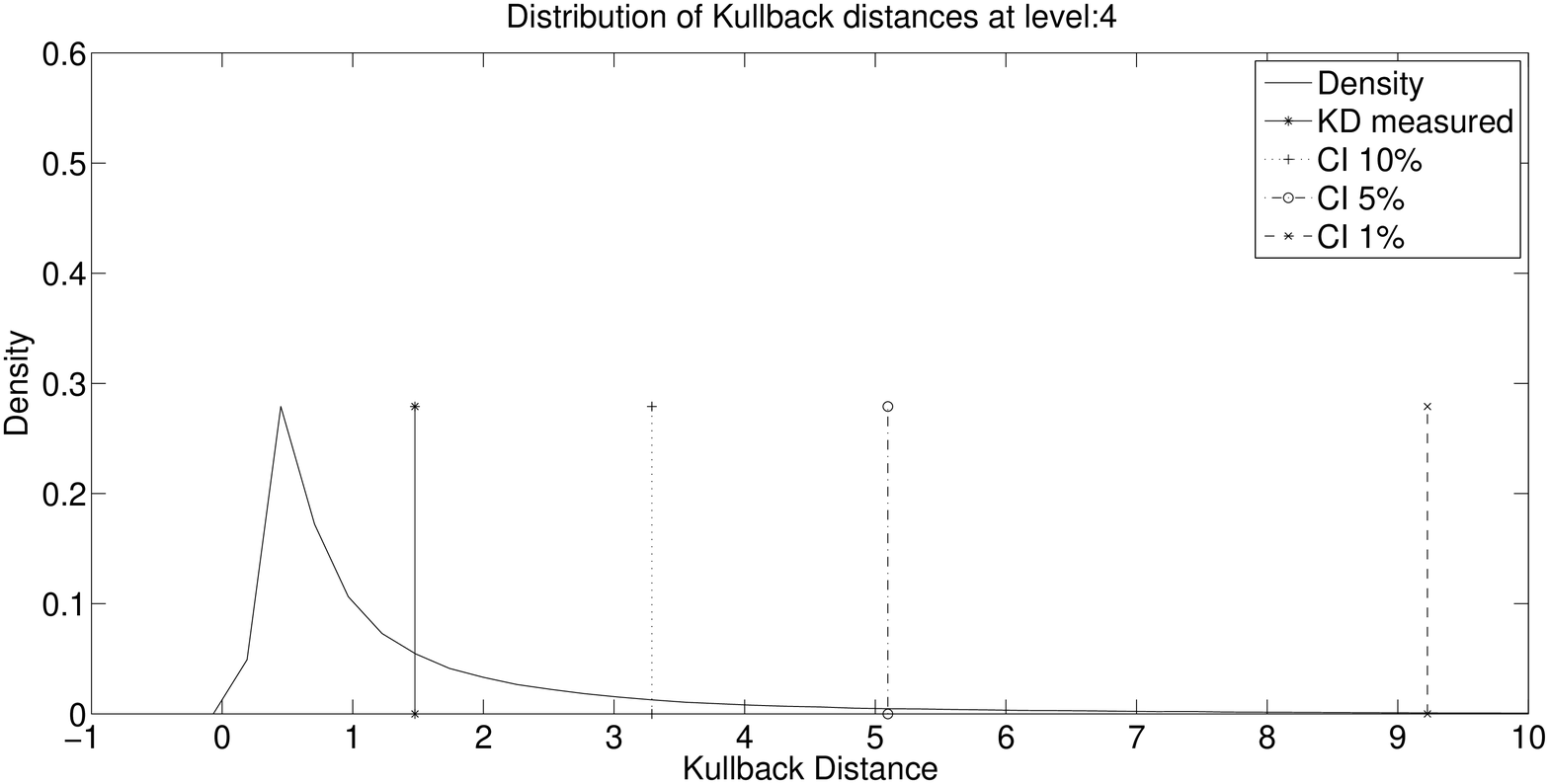} \label{pr-b4}}
\caption{$PIL_k$ (a) and $PSKL_k$ and hypothesis test results (b) of the real signal for $k=4$.}
\end{figure}

\begin{figure}[!htb]
\centering
\subfigure[] {\includegraphics[width=.8 \textwidth]{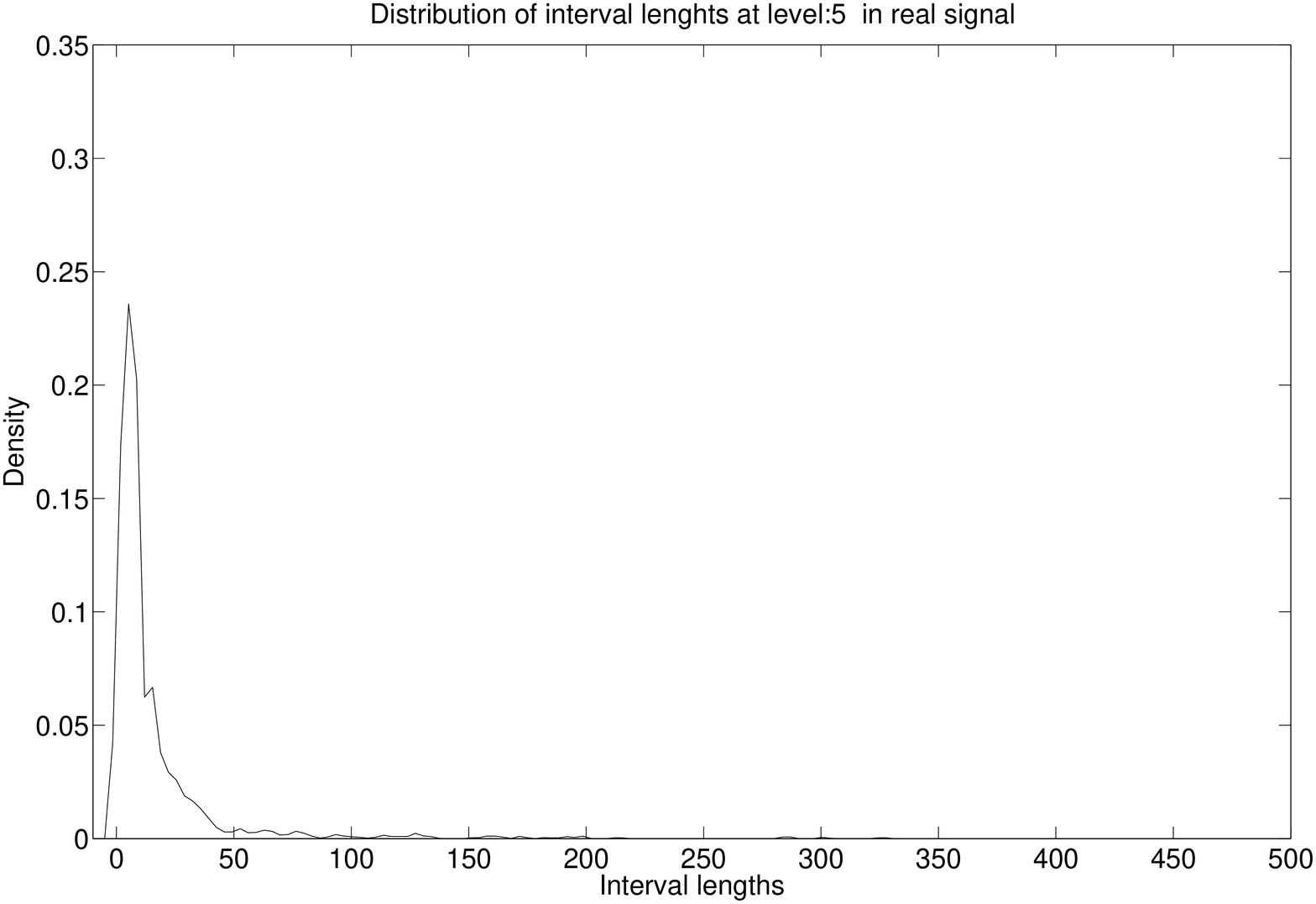} \label{pr-a5}}
\subfigure[] {\includegraphics[width=.8 \textwidth]{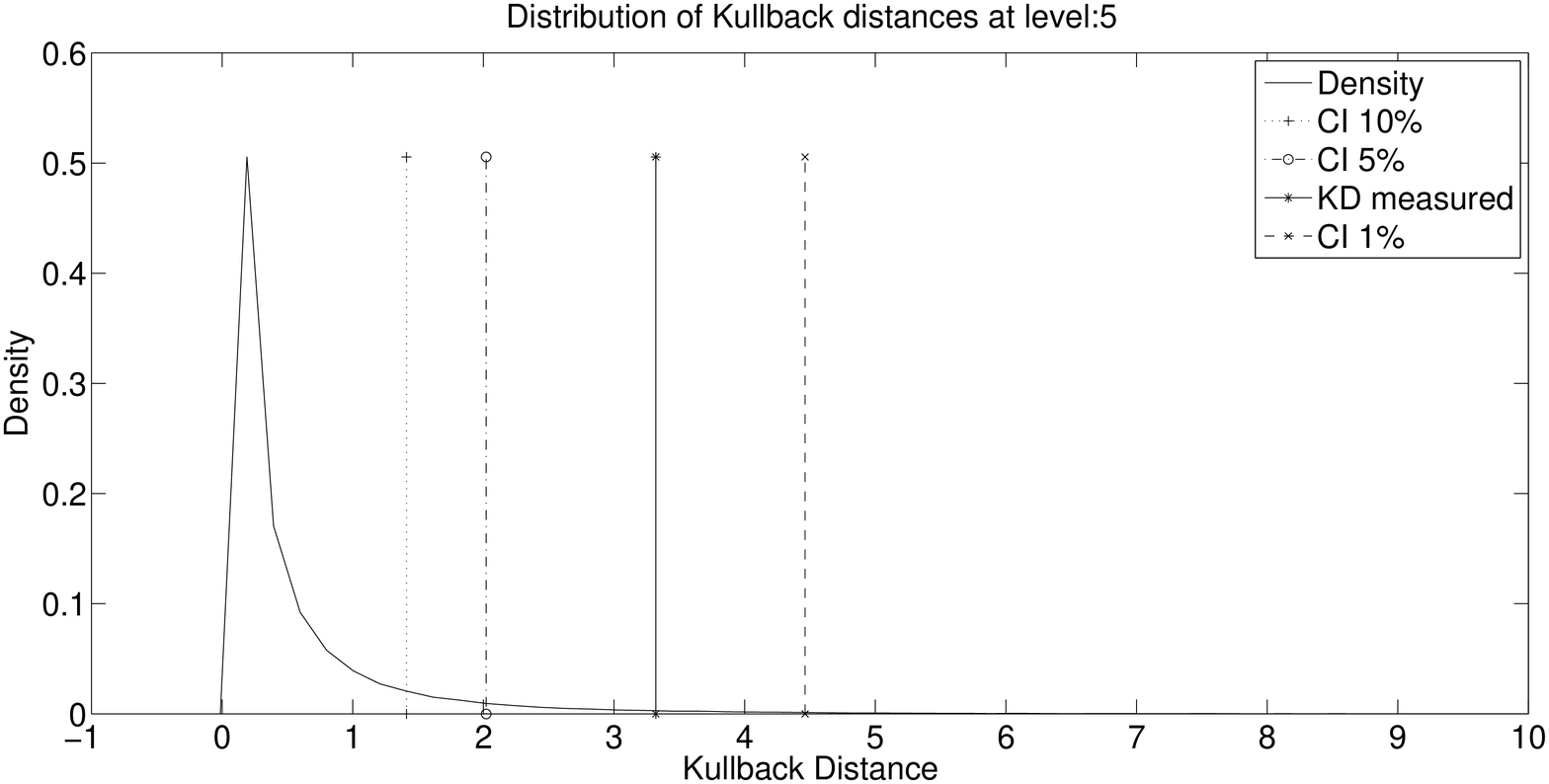} \label{pr-b5}}
\caption{$PIL_k$ (a) and $PSKL_k$ and hypothesis test results (b) of the real signal for $k=5$.}
\end{figure}

\begin{figure}[!htb]
\centering
\subfigure[] {\includegraphics[width=.8 \textwidth]{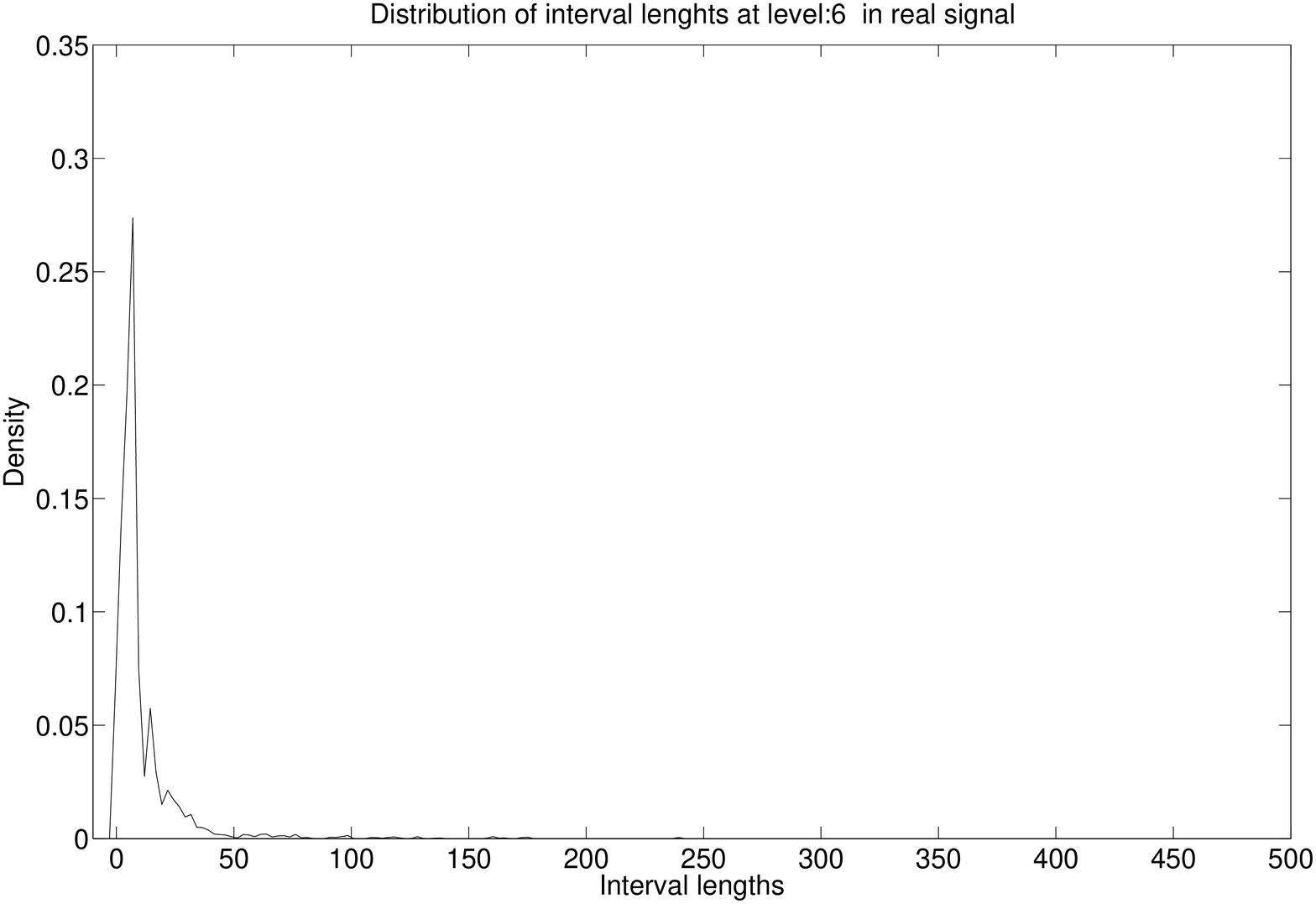} \label{pr-a6}}
\subfigure[] {\includegraphics[width=.8 \textwidth]{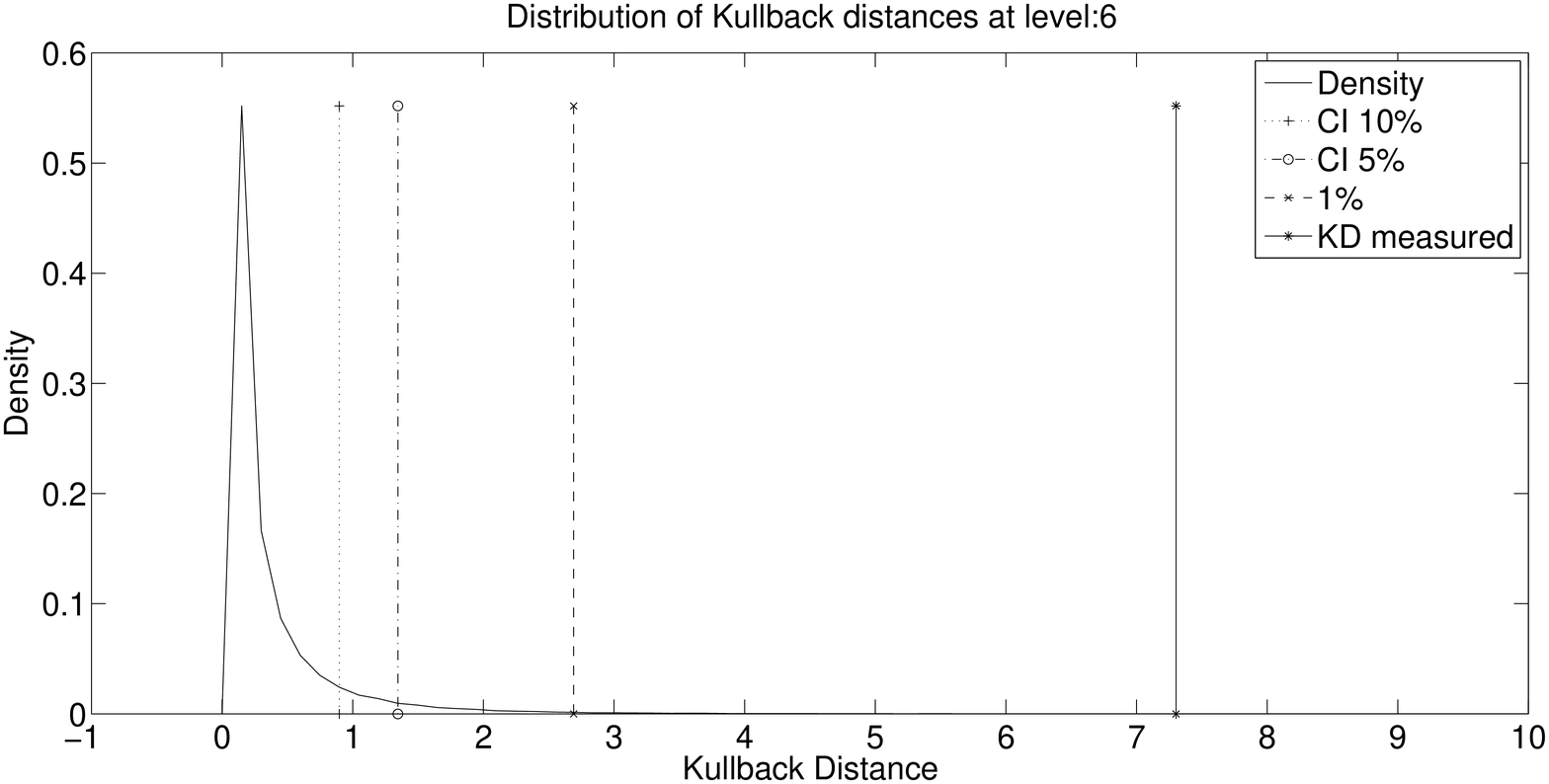} \label{pr-b6}}
\caption{$PIL_k$ (a) and $PSKL_k$ and hypothesis test results (b) of the real signal for $k=6$.}
\end{figure}

\begin{figure}[!htb]
\begin{center}
\includegraphics[width=.9 \textwidth]{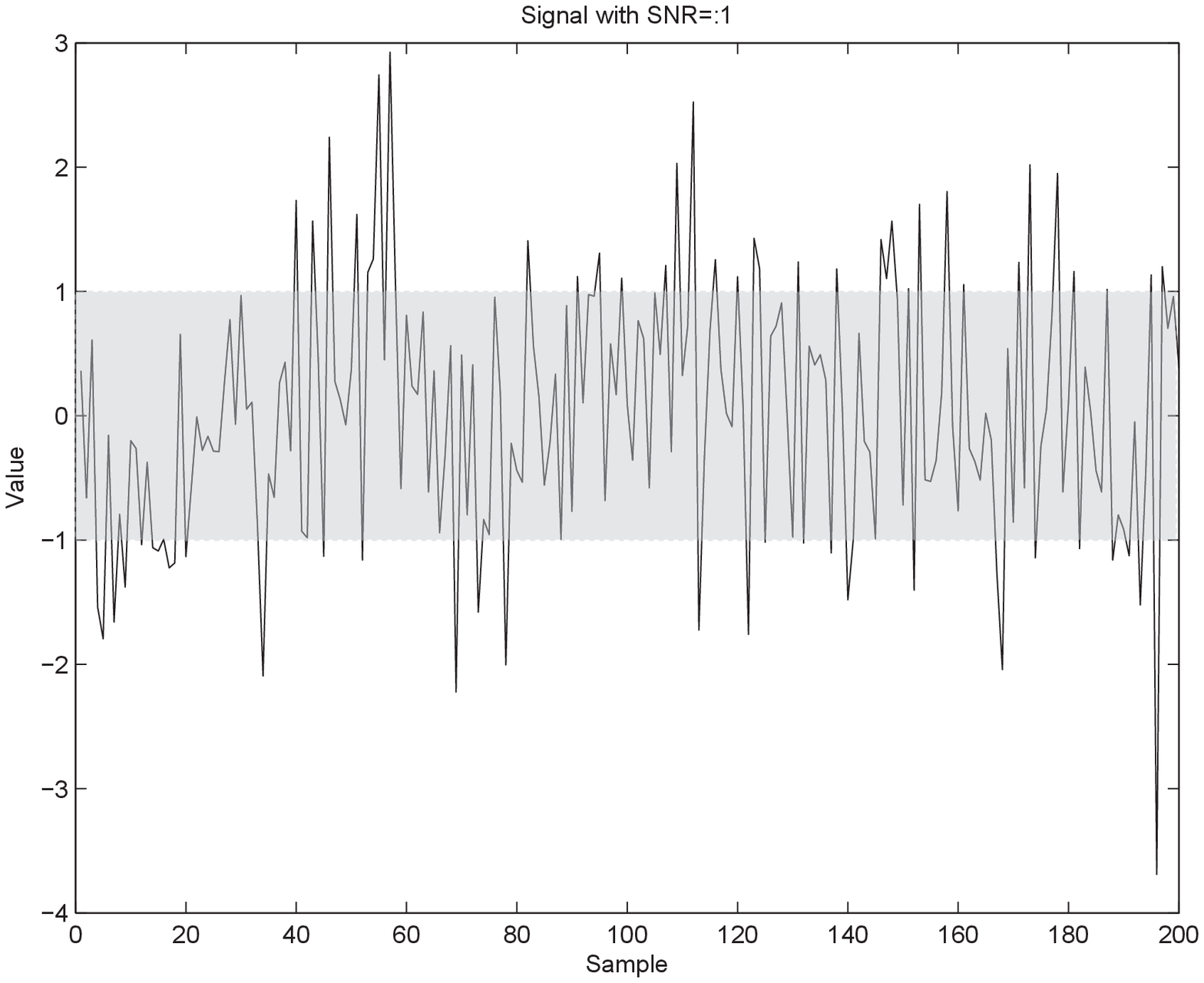}
\end{center}
\caption{The gray strep indicates the useful part of the input signal in order to perform the test of randomness.} \label{central}
\end{figure}

\subsection{Comparison with Wilcoxon rank sum test}
In this section a comparison of the  results of MLA test with the Wilcoxon rank sum test, both on synthetic and real data is presented. Both hypothesis tests, can be applied when no assumption about sample distribution can be made, condition which falls in this case.

Firstly, it was  verified if each synthetically generated $S$ (a total amount of $40$ signal) and $N=100$ random samples drawn from the $RS_n$ are significantly different by using a Wilcoxon rank sum test.  Figure \ref{ranksum-a} shows the results that can be summarized affirming that $S$ and a generic random signal $R$ are at least $90\%$ significantly different starting from $SNR=1.25$. This reveal that the Wilcoxon test and our test have quite the same predictive power when considering intermediate threshold levels of the $MLA$ (k=6,7,8,9) .

In the case of a real signal $S$, the Wilcoxon rank sum test has rejected the hypothesis of randomness on $S$ only $3$ times over $N=100$ tests (see figure \ref{ranksum-b}). This makes the Wilcoxon rank sum test not reliable for such kind of data, while our test, as already shown in section \ref{real}, confirms his predictive power on intermediate thresholds.\\\\

In this chapter several tests have been introduced in order to check the randomness of a set of one dimensional signals and a new test of randomness based on the MLA preprocess has been also presented. It makes uses of the Symmetrized Kullback-Leibler distance, and it has been shown to be useful in the case of exploratory analysis in order to verify the possible presence of structures in an input signal. Finally, it is able to guess structures in the case of real and simulated data for nucleosome positioning with low $SNR$ (1.5), while a simple Wilcoxon rank sum test has not shown enough reliability on the same kind of data.

\begin{figure}[!htb]
\centering
\subfigure[] {\includegraphics[width=.8 \textwidth]{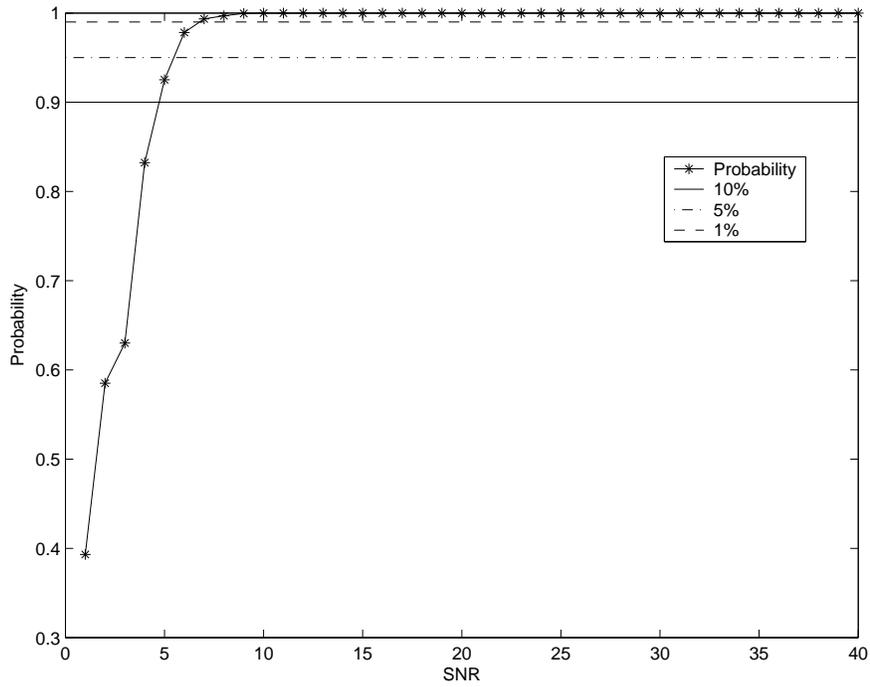} \label{ranksum-a}}
\subfigure[] {\includegraphics[width=.8 \textwidth]{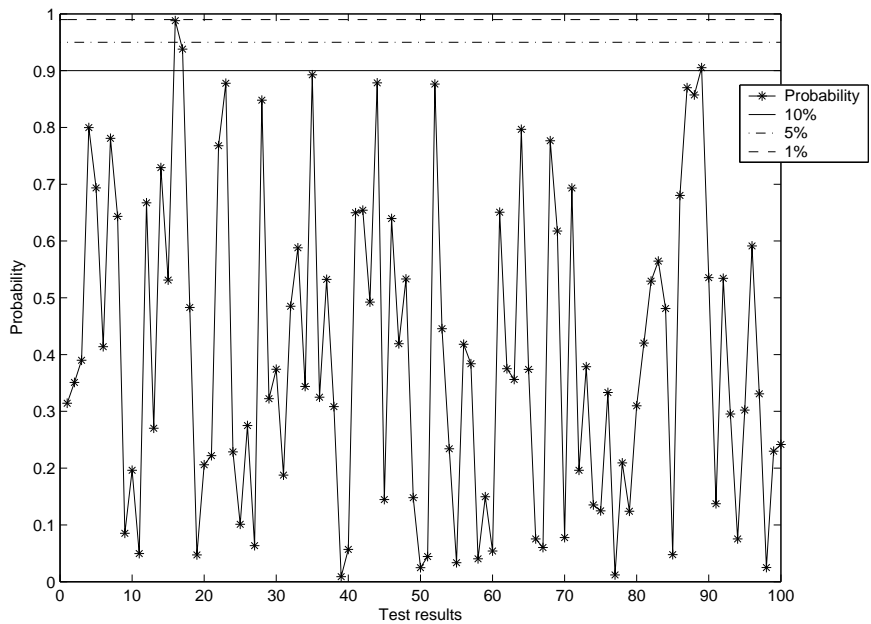} \label{ranksum-b}}
\caption{Mann-Whitney rank sum test results for different signal to noise ratio (a) and for the real signal (b).}
\end{figure}

\chapter{MLA and Kernel methods}
\label{chap:5}
%\minitoc
This chapter presents how the MLA can help on designing  new kernel functions that explicitly take into account the shape information contained in a one-dimensional signal. In the following, the main idea of kernel methods will be presented, giving more details to a particular subclass of kernel functions applicable to structured data, and in particular trees. The MLA is used to define a mapping from the set of one-dimensional signal to the set of trees. For this reason the main advantage of defining a kernel function based on MLA is that it is possible to incorporate shape information directly in a kernel function encoded as a tree.

\section{Kernel methods}

Kernel methods are a class of algorithms used in the context of pattern analysis. Although initially they were developed in the context of classification, with the well known Support Vector Machine (SVM) method first introduced by Vapnik \cite{Vapnik:statistical_learning}, the kernel approach has shown to be applicable to several key problems in data analysis (Principal Component Analysis, Clustering, Regression, Ranking, Correlation). In this sense, nowadays, it is usually referred to the kernel methods, as a general framework applicable to all kinds of data \cite{Taylor_2004_kernel}. In fact recently, kernel methods were developed to deal with data without an explicit vector representation such as complex objects or structured data (string, tree, graph, etc.).

\subsection{Main ideas of kernel methods}
The main advantage of kernel methods came from their modularity: all these methods consist of two parts: a kernel function, and an algorithm to analyze the data after the kernel mapping, as shown in figure \ref{fig:general_schema}. In particular, the kernel embeds the input space into a new vector space where the algorithm used to analyze the data could have better performance than the same algorithm applied on the original input space (see figure \ref{fig:kernel_mapping}). The kernel functions represent the spatial relation between pair of data elements, using an inner product in the new space without explicitly map such data. In this way it is also possible to use infinite dimensional space without encoding the data explicitly with new coordinate vectors. Moreover, in many case the computation of the inner product could be more efficient than explicitly map each point into the new vector space and computing for example the pairwise distances. This imply that it is not necessary to know the exact coordinates of the points in the vector space but only their pairwise inner product. In other words the dimensionality of the new vector space does not affect the computation time. This propriety is  usually called ``kernel-trick'', and can be summarized saying that, to perform data analysis with a kernel, it is not necessary to know explicitly the vector space where the data will be projected in. An important point of the kernel functions is that the mapping could catch non-linear relation present in the data linear in the new space.  This permits to take advantage of the large class of well understood methodologies that search linear relation in the data. In this way the choice of a particular kernel function is related to the vector space where the data points will be implicity projected. A deeper coverage of the theory and application of kernel methods can be found in the book by Taylor and Cristianini \cite{Cristianini_Kernel}.
Now, it will be given the formal definition of kernels and some of their properties will be count.

\begin{figure}[!htb]
\centering
\includegraphics [scale=.8]{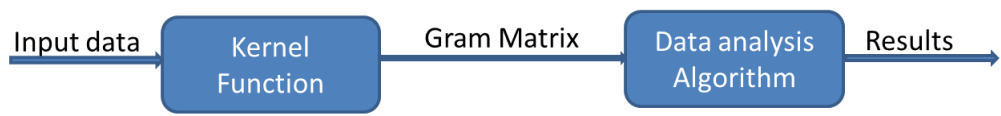}
\caption {General schema of kernel methods}
\label{fig:general_schema}
\end{figure}

\begin{figure}[!htb]
\centering
\includegraphics [width=0.9\textwidth]{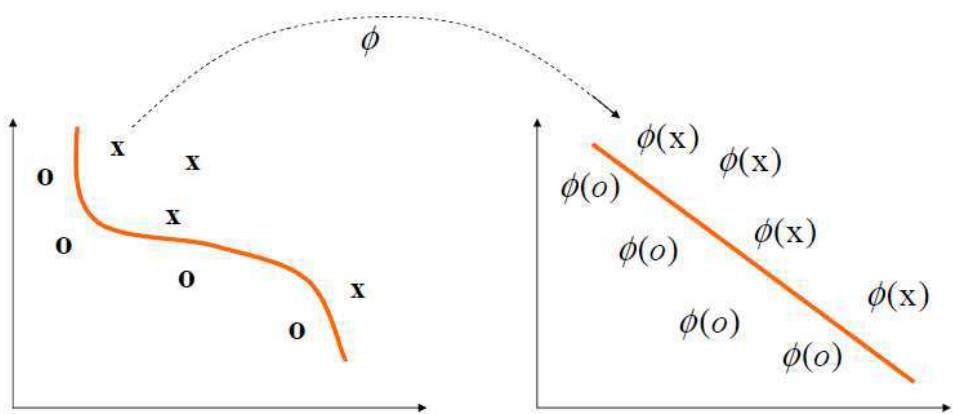}
\caption {Kernel mapping}
\label{fig:kernel_mapping}
\end{figure}

\subsection{Formal definition and properties of kernels}
\begin{definition} \emph{Kernel Function}\\
Given a set $X\neq \emptyset$, and a mapping function from $X$ to a features vector space $F$ i.e. $\phi(x):X \rightarrow F$ a kernel is a function $K: X \times X \rightarrow \mathbb{R}$ that for all $x,y\in X$ is:

\begin{equation}
k(x,y)= \langle \phi(x), \phi(y) \rangle
\end{equation}

where $\langle \cdot , \cdot \rangle$ denotes the euclidean inner product on $F$

\noindent It is clear that the function $K$ is symmetric i.e.:
\begin{equation}
k(x,y)= \langle \phi(x), \phi(y) \rangle =\langle \phi(y), \phi(x) \rangle = k(y,x)
\end{equation}

\end{definition}

An important theorem that provide a characterization of the class of kernel function is the here stated Mercer's theorem\cite{Cristianini_SVM}:

\begin{theorem}
\emph{(Mercer's Theorem)}
Let $X$ a compact subset of $\mathbb{R}^n$. Suppose $K$ is a continue symmetric function such that the integral operator $T_K: L_2(x) \rightarrow L_2(x),$

\begin{equation}
(T_Kf)(\cdot)=\int_X K(\cdot,x) f(x)dx
\end{equation}

is positive, that is:

\begin{equation}
 \int_{X \times X} K(x,z) f(x)f(z)dx dz \geq 0
\end{equation}

\noindent for all $f \in L_2(x)$. Then it is possible to expand $K(x,z)$ in a uniformly convergent series (on $X \times X$)
in terms of $T_k$'s eigen-function $\phi_j \in L_2(X)$, normalized in such a way that $\| \phi_j \|_{L_2}$, and positive associated eigenvalues $\lambda_j \geq 0$.

\begin{equation}
K(x,z) = \sum _{j=1}^{\infty} \lambda_j\phi_j(x)\phi_j(z)
\end{equation}
\end{theorem}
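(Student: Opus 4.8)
The plan is to realize $K$ through the spectral decomposition of its associated integral operator $T_K$ and then to upgrade the resulting $L_2$ expansion to a uniform one. First I would observe that since $X$ is compact and $K$ is continuous, $K$ is bounded on $X\times X$ and hence square-integrable; this makes $T_K$ a Hilbert--Schmidt operator on $L_2(X)$, and therefore compact. The symmetry $K(x,z)=K(z,x)$ makes $T_K$ self-adjoint. At this point the spectral theorem for compact self-adjoint operators applies directly: it furnishes a countable orthonormal system of eigenfunctions $\{\phi_j\}$ with real eigenvalues $\lambda_j$, whose only possible accumulation point is $0$, such that $T_K f=\sum_j \lambda_j\langle f,\phi_j\rangle\,\phi_j$ for every $f\in L_2(X)$.

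Next I would pin down the two properties the statement asks for. Testing the positivity hypothesis on $f=\phi_j$ gives $\lambda_j=\lambda_j\|\phi_j\|_{L_2}^2=\langle T_K\phi_j,\phi_j\rangle\ge 0$, so all eigenvalues are nonnegative. Moreover each eigenfunction with $\lambda_j\neq 0$ may be taken continuous, since $\phi_j=\lambda_j^{-1}T_K\phi_j$ and $T_K$ maps $L_2(X)$ into the continuous functions (again because $K$ is continuous on the compact set $X\times X$). The spectral expansion of $T_K$ is then equivalent to the $L_2(X\times X)$ identity $K(x,z)=\sum_j \lambda_j\phi_j(x)\phi_j(z)$; this is the easy, mean-square form of the claim.

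The hard part will be promoting this $L_2$ expansion to the uniform convergence asserted in the theorem, and this is where positivity is used essentially. I would set $K_N(x,z)=\sum_{j=1}^N\lambda_j\phi_j(x)\phi_j(z)$ and study the remainder $R_N=K-K_N$. The kernel $R_N$ is again continuous, symmetric, and positive (its operator retains only the eigenvalues $\lambda_j$ with $j>N$, all $\ge 0$), so in particular $R_N(x,x)\ge 0$, i.e. $\sum_{j=1}^N\lambda_j\phi_j(x)^2\le K(x,x)$. Thus on the diagonal the partial sums increase monotonically in $N$ and stay bounded by the continuous function $K(x,x)$, hence converge pointwise; since each partial sum is continuous and the limit is continuous, Dini's theorem upgrades this to uniform convergence on the diagonal. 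Finally I would pass from the diagonal to all of $X\times X$ by Cauchy--Schwarz applied to the tail,
\[
\Bigl|\sum_{j=N+1}^{M}\lambda_j\phi_j(x)\phi_j(z)\Bigr|
\le\Bigl(\sum_{j=N+1}^{M}\lambda_j\phi_j(x)^2\Bigr)^{1/2}
\Bigl(\sum_{j=N+1}^{M}\lambda_j\phi_j(z)^2\Bigr)^{1/2},
\]
whose two factors are controlled uniformly by the just-established diagonal convergence; this shows $(K_N)$ is uniformly Cauchy, hence uniformly and absolutely convergent on $X\times X$. The uniform limit is continuous and coincides with $K$ in $L_2$, so by continuity it equals $K$ everywhere, completing the argument.
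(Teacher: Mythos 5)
The paper itself contains no proof of this statement: Mercer's theorem is quoted from the literature (the citation to Cristianini and Shawe--Taylor), and only the finite-dimensional analogue stated immediately after it is proved in the text. So your attempt has to be judged on its own terms. Your overall strategy is the standard one --- $K$ continuous on a compact set, hence $T_K$ is Hilbert--Schmidt and compact, self-adjoint by symmetry, spectral theorem, nonnegativity of the $\lambda_j$ by testing positivity on eigenfunctions, continuity of eigenfunctions via $\phi_j=\lambda_j^{-1}T_K\phi_j$, and the Cauchy--Schwarz tail estimate to pass from the diagonal to all of $X\times X$ --- and those steps are sound.

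The genuine gap is at the Dini step. You apply Dini's theorem to the increasing partial sums $S_N(x)=\sum_{j\le N}\lambda_j\phi_j(x)^2$ on the grounds that ``the limit is continuous'', but nothing you have established says so: a monotone pointwise limit of continuous functions is in general only lower semicontinuous, and Dini's theorem genuinely fails without continuity of the limit (consider $1-x^N$ on $[0,1]$). Nor can you repair this by appealing to the $L_2(X\times X)$ identity $K=\sum_j\lambda_j\phi_j\otimes\phi_j$: that identity determines $K$ only almost everywhere on $X\times X$, and the diagonal is a null set, so it says nothing about whether $\lim_N S_N(x)$ equals $K(x,x)$. The missing ingredient is the fixed-variable identification: for each fixed $x$, the uniform bound $\sum_{j\le N}\lambda_j\phi_j(z)^2\le\max_z K(z,z)$ together with Cauchy--Schwarz shows that $\sum_j\lambda_j\phi_j(x)\phi_j(z)$ converges uniformly in $z$, hence to a function continuous in $z$; that function has the same coefficients against every $\phi_j$ as $K(x,\cdot)$, and both pairings vanish against the null space of $T_K$, so it equals $K(x,\cdot)$ in $L_2(X)$ and therefore, both being continuous in $z$, for every $z$ --- in particular at $z=x$. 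Only this identifies $\lim_N S_N(x)=K(x,x)$ as a continuous function, after which Dini and your off-diagonal Cauchy--Schwarz argument do finish the proof. A smaller omission of the same flavor: the inference from ``the operator of $R_N$ is positive'' to ``$R_N(x,x)\ge 0$ pointwise'' also needs a short argument --- if $R_N(x_0,x_0)<0$, continuity gives $R_N<0$ on a neighborhood $U\times U$, and testing the quadratic form on the indicator of $U$ contradicts positivity.
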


A special case of this theorem is the following, that characterizes the Kernel function on Finite spaces.

\begin{theorem}\label{th:Mercer_simple}
Let $X$ a finite input space with $K(x,z)$ a symmetric function on $X$. Then  $K(x,z)$ is a kernel function if and only if the matrix:
\begin{equation}
\mathbf{K}= (K(x_i,x_j))_{i,j=1}^n
\end{equation}

\noindent is positive semi-definite (has non negative eigenvalue) i.e:

\begin{equation}
\sum _{i=1} ^{n} \sum _{j=1} ^{n} c_i c_j K(x_i,x_j) \geq 0
\end{equation}

\noindent with $n>0$, $x_1,\ldots,x_n \in X$ and  $c_i,c_j \in \mathbb{R}$.
\end{theorem}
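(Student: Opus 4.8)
The plan is to prove both implications of the equivalence, using the definition of a kernel function given above (existence of a feature map $\phi$ with $K(x,y) = \langle \phi(x), \phi(y)\rangle$) together with the spectral theorem for real symmetric matrices, which is available because $X$ is finite.

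First I would dispatch the easy direction (necessity): if $K$ is a kernel, then $\mathbf{K}$ is positive semi-definite. Assuming $K(x,y) = \langle \phi(x), \phi(y)\rangle$ for some map $\phi: X \rightarrow F$, then for any $x_1, \ldots, x_n \in X$ and any $c_1, \ldots, c_n \in \mathbb{R}$ I would expand the quadratic form using bilinearity of the inner product:
\begin{equation}
\sum_{i=1}^n \sum_{j=1}^n c_i c_j K(x_i, x_j) = \left\langle \sum_{i=1}^n c_i \phi(x_i), \sum_{j=1}^n c_j \phi(x_j) \right\rangle = \left\| \sum_{i=1}^n c_i \phi(x_i) \right\|^2 \geq 0 ,
\end{equation}
which is exactly the claimed positive semi-definiteness.

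For the converse (sufficiency), I would exploit the finiteness of $X$. Writing $X = \{x_1, \ldots, x_n\}$, the matrix $\mathbf{K}$ is a real symmetric $n \times n$ matrix, so by the spectral theorem it admits an orthonormal basis of eigenvectors $u_1, \ldots, u_n$ with real eigenvalues $\lambda_1, \ldots, \lambda_n$, yielding $\mathbf{K} = \sum_{t=1}^n \lambda_t u_t u_t^T$. The decisive step is that the assumed positive semi-definiteness forces every $\lambda_t \geq 0$ (each eigenvalue equals $u_t^T \mathbf{K} u_t$, which is nonnegative by taking $c = u_t$ in the hypothesis), so each $\sqrt{\lambda_t}$ is a real number. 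I would then define $\phi: X \rightarrow \mathbb{R}^n$ by letting the $t$-th coordinate of $\phi(x_i)$ be $\sqrt{\lambda_t}\,(u_t)_i$, and verify
\begin{equation}
\langle \phi(x_i), \phi(x_j) \rangle = \sum_{t=1}^n \lambda_t (u_t)_i (u_t)_j = (\mathbf{K})_{ij} = K(x_i, x_j) ,
\end{equation}
which exhibits $K$ as a kernel with explicit feature space $F = \mathbb{R}^n$.

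The argument involves no genuine obstacle; it is essentially linear algebra. The one point deserving care — and the place where the hypothesis is actually used — is the passage from positive semi-definiteness to nonnegativity of the eigenvalues, since this is precisely what guarantees that the coordinates $\sqrt{\lambda_t}\,(u_t)_i$ are real and that the construction of $\phi$ goes through. Finiteness of $X$ is what lets me invoke the finite-dimensional spectral theorem directly, rather than the full Mercer machinery of the preceding theorem.
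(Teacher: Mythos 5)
Your proof is correct and follows essentially the same route as the paper's: the spectral decomposition $\mathbf{K}=\mathbf{V}\Lambda\mathbf{V}'$ with the feature map $\phi(x_i)=\left(\sqrt{\lambda_t}\,v_{ti}\right)_t$ for sufficiency, and nonnegativity of squared norms of $\sum_i c_i\phi(x_i)$ for necessity. The only cosmetic difference is that you prove necessity by a direct expansion of the quadratic form, whereas the paper phrases it contrapositively (a negative eigenvalue would yield a vector of negative squared norm), and you make explicit the step, left implicit in the paper, that the quadratic-form hypothesis forces the eigenvalues to be nonnegative.
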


\begin{proof}
Since the matrix $(K(x_i,x_j))_{i,j=1}^n$ is symmetric, there exists an orthogonal matrix $\mathbf{V}$ such that:
$K=\mathbf{V} \Lambda \mathbf{V'}$ where $\Lambda$ is the diagonal matrix containing the eigenvalues $\lambda_t$ of $\textbf{K}$, and the columns of $\mathbf{V}$ are the corresponding eigenvectors $v_t =(v_{ti })_{i=1}^{n}$. By hypothesis, the eigenvalues of $\textbf{K}$ are non-negative, so it is possible to define the mapping $\phi$:

\begin{equation}
\phi: x_i \mapsto (\sqrt{\lambda_t}v_{ti} )_{i=1}^{n}
\end{equation}

And express the inner product as:
\begin{equation}
\langle \phi(x_i), \phi(x_j) \rangle = \sum _{i=1}^{n}\lambda_t v_{ti} v_{tj} = (\mathbf{V} \Lambda \mathbf{V'})_{ij}= K(x,y)
\end{equation}

And this proves that $K$ is a kernel function that calculate the inner product in the vector space given by the mapping function $\phi$.
Note that the condition of positive semi-definiteness is necessary, since if it exists at least a negative eigenvalue $\lambda_s$  with corresponding eigenvector $\mathbf{v}_s$, the point:
\begin{equation}
\mathbf{z}= \sum _{i=1}^{n}\mathbf{v}_{si} \phi(x_i) = \sqrt{\Lambda} \mathbf{V'} \mathbf{v}_s
\end{equation}
would have a norm squared less than $0$ in that space that is impossible:
\begin{equation}
\parallel \mathbf{z} \parallel^2 = \langle \mathbf {z},\mathbf {z} \rangle=  \mathbf{v'}_s \mathbf{V} \sqrt{\Lambda} \sqrt{\Lambda} \mathbf{V'} \mathbf{v}_s = \mathbf{v'}_s \mathbf{V} \Lambda \mathbf{V'} \mathbf{v}_s= \mathbf{v'}_s \mathbf{K} \mathbf{v}_s = \lambda_s <0
\end{equation}

\end{proof}

\subsection{Kernels and distances}
A simple property of the inner product, is that it naturally induces a norm:

\begin{equation}
\| x \|_2 = \sqrt{\langle x,x \rangle}
\end{equation}

\noindent and thus a metric or distance:

\begin{equation}
d(x,z) = \| x-z \|_2
\end{equation}

\noindent It follows immediately, that a generic kernel function also induces a distance:

\begin{definition} \emph{Distance induced by a kernel function}\\
Given a kernel function k, and consider the Gram's matrix $G_{ij}=k(x_i,x_j)= \langle \phi(x_i), \phi(x_j) \rangle$, it is possible to obtain a pairwise distance matrix  $D_{ij}$ from $G$
using the following relation:

\begin{equation}\label{eq:distance}
D_{ij}= \sqrt{  \parallel  {\phi(x_i) - \phi(x_j) }\parallel ^2 } = \sqrt{ k(x_i,x_i)+ k(x_j,x_j) -2k(x_i,x_j)}
\end{equation}

\noindent As an example, let us consider the euclidean distance:

\begin{definition}
\emph{Euclidean Distance}\\
Given two signals $\vec{x}$ and $\vec{y}$ their Euclidean Distance is defined as:\\
\begin{equation}\label{eq:Euclidean}
d_n(\vec{x},\vec{y})=\sqrt{\sum_{i=1}^m (x_i - y_i)^2}
\end{equation}
\noindent where $\vec{x}=(x_1,\dots,x_m)$, $\vec{y}=(y_1,\dots,y_m)$.
\end{definition}

It is straightforward that the euclidean distance is induced by the linear kernel $K(x,y) = xy'$.

\end{definition}

\section{Kernel methods for tree}
All the classes of kernel function in this category are based on the concept of tree i.e. the input data are represented in a tree structure. One assumes that the reader is familiar with the general concepts of graph theory, in particular with the definition of tree structure. For an appropriate background, the reader is referred to standard literature  \cite{Bondy}. As stressed in the introduction, it is possible to define kernel function even  when the input data doesn't have an explicit vector representation. This is the case of structured data and in particular in the case of  tree structure. More in general, there exists a class of kernel function called \emph{Convolution Kernel} and firstly introduced by Hausler \cite{Haussler_1999_ConvolutionKernels} and later extended  by Shin and Kuboyama \cite{Shin08_generalization} \cite{2010Kuboyama} that decompose a data object into simpler parts and then define a kernel function in terms of such parts.

\subsection{Convolution kernel}
This class of kernels are particular devoted for problem involving the processing of structured data like string, trees, graph. In fact it provides a way to extract real-valued features and thus to map these data into a vector space $\mathbb{R}$ (finite case) or in the Hilbert space of all square summable sequences (infinite case).  The main idea of this approach is that in some case, it is easier to compare two objects in terms of their simpler parts or features. As the other kernels, it is not necessary to explicit map an input data in the feature space, the only requirement is the calculation of the inner product between two input data in the feature space. The name \emph{convolution} came from the fact that the value of the kernel is obtained from a sum of products of other kernels, similar to the idea of convolution between function.

\begin{definition} \emph{Convolution Kernel}\\
Let  $x\in X$ a structured data,  $X_1, \ldots X_D$ non-empty separable metric spaces and $\overrightarrow{x}=(x_1, \ldots ,x_D)$ the subparts of $x$ (for example in a string a subpart could be a substring) with each $x_d \in X_d$ with $1 \leq d \leq D$. Consider the relation $R: X_1 \times \ldots \times X_D \times X$ where $R(\overrightarrow{x},x)$ is true if and only if  $x_1,\ldots,x_D$ are the subparts of $x$. Let $R^{-1}(x)=\{\overrightarrow{x} : R( \overrightarrow{x} , x) \}$ and $R$ is said finite if $R^{-1}(x)$ is finite for all $x \in X$.
Given two element $x,y \in X$ their decomposition $ \overrightarrow{x}=(x_1, \ldots ,x_D), \overrightarrow{y}=(y_1, \ldots ,y_D)  $ in  $X_1, \ldots X_D$, suppose that for each $X_d$
with $1 \leq d \leq D$ exists a kernel $K_d$, then the Convolution Kernel is defined as:

\begin{equation}
K(x,y)= \sum _{ \overrightarrow{x} \in R^{-1}(x), \overrightarrow{y} \in R^{-1}(y)} \prod_{d=1}^{D} K_d(x_d,y_d)
\end{equation}

The proof that $K$ is a valid kernel can be found in the original paper \cite{Haussler_1999_ConvolutionKernels}.
\end{definition}

\subsection{Tree kernels}
In the last years a variety of convolution kernel has been proposed for different kind of structured data, such as string, tree and graph \cite{Gartner_Structured_data_2003}, \cite{Gärtner_Graph_Kernel_2008},\cite{Camastra2008}. Here, only the main idea on kernels for trees will be presented, the interested reader can found a good characterization of tree kernels in the phd thesis by Kuboyama \cite{2010Kuboyama}. Tree kernels \cite{CollinsandDuffy2001} can be applied to ordered trees and  they compute the similarity between trees considering their common subtrees.
There are several kind of tree kernels but all of them share the same idea of decomposing, in the convolution kernel framework, a tree in different kind of subtree (for example simple subtree or co-rooted subtree). As an example, let us consider a particular convolution kernel: let  $x \in X$ a rooted and ordered tree and $X_1, \ldots X_D$  the set of all $D$-degree ordered and rooted trees. In this case the relation $R$ defined before is: $R(\overrightarrow{x},x) \Leftrightarrow  x_1,\ldots,x_D$ are the $D$ subtrees of the tree $x$.
in the following, one tree kernel used in context of Natural Language Parsing that exploit this idea and that has inspired several works on tree kernel (and also the MLA tree kernel) will be defined.

\begin{definition} \emph{Collins and Duffy Tree Kernel}\cite{CollinsandDuffy2001} \label{Collins_tree_kernel} \\
Given a tree $T$, and considering the enumerable set of all possible trees $\mathbf{T}=\{T_1,T_2, \ldots ,T_n\}$, $T$ can be represented by an n-dimensional vector where the $i$'th component contains the number of occurrences of the $i$'th tree $T_i$ of $\mathbf{T}$ in $T$. This mapping is done considering the function $h_i(T)$ that count the number of occurrences of $T_i$ in $T$. In this way it is possible to represent a tree $T$ as $\textbf{h}(T)=(h_1(T),h_2(T),\ldots,h_n(T))$. Note that the number $n$ could be huge because the number of subtree of a given tree $T$ is exponential on its size.
The kernel is then defined as:

\begin{equation}
K(T_1,T_2)= \textbf{h}(T_1) \cdot \textbf{h}(T_2) = \sum _{i} h_i(T_1)h_i(T_2) = %\sum_{n_1 \in N_1} \sum_{n_2 \in N_2} \sum_{i} I_i(n1)I_i(n2) =
 %\sum_{n_1 \in N_1} \sum_{n_2 \in N_2} C(n_1,n_2)
\end{equation}

\begin{equation}
= \sum_{n_1 \in N_1} \sum_{n_2 \in N_2} \sum_{i} I_i(n1)I_i(n2) =  \sum_{n_1 \in N_1} \sum_{n_2 \in N_2} C(n_1,n_2)
\end{equation}

\noindent where $N_1$ is the number of node in $T_1$, $N_2$ is the number of node in $T_2$, $I_i(n)$ is an indicator function defined as:

\begin{equation}
 I_i(n)= \left \{
 \begin{array}{cc}
   1 & \mbox{if the subtree $T_i$ is seen rooted at node n}\\
   0 & \mbox{otherwise }\\
 \end{array}
 \right .
\end{equation}

\noindent and $C(n_1,n_2) =  \sum_{i} I_i(n_1)I_i(n_2) $

This kernel can computed in polynomial time, expressing  $C(n_1,n_2)$ with the following recursive definition:

\begin{itemize}
  \item if the productions at $n_1$ and $n_2$ are different:     $C(n_1,n_2)=0$
  \item if the productions at $n_1$ and $n_2$ are the same and $n_1$ and $n_2$ are pre-terminal nodes:     $C(n_1,n_2)=1$
  \item else if the productions at $n_1$ and $n_2$ are the same and $n_1$ and $n_2$ are not pre-terminal nodes:     \\
  \begin{equation}
   C(n_1,n_2)= \prod_{j=1}^{nc(n_1)}  (1+ C(ch(n_1,j),ch(n_2,j)))
  \end{equation}
  \noindent where $nc(n_1)$ is the number of children of $n_1$ in the tree (note that $nc(n_1)=nc(n_2)$
  because the productions are the same) and $ch(n_k,i)$ is the $i$'th son of node $n_k$ in a tree.

\end{itemize}

In the original paper some variant of this kernel is proposed to take into account some issues:
\begin{itemize}
  \item The value of kernels $K(T_1,T_2)$ depends strongly on the size of the trees $T_1$ and $T_2$. A possible
  solution is to use a new normalized kernel defined as:
  \begin{equation}
  \label{issues1}
  K'(T_1,T_2) = \frac{K(T_1,T_2)}{\sqrt{K(T_1,T_1) K(T_2,T_2)}}
  \end{equation}
  \noindent Note that K is still a kernel function because still satisfies the theorem \ref{th:Mercer_simple}.

  \item Since the number of subtree increases with size or depth,  it is necessary to scale the importance
   of each subtree taking in account their sizes:
  \begin{equation}
  \label{issues2}
  C(n_1,n_2)= \lambda \mbox{ and } C(n_1,n_2)= \lambda \prod_{j=1}^{nc(n_1)}  (1+ C(ch(n_1,j),ch(n_2,j))) \mbox{ with } 0 \leq \lambda \leq 1
  \end{equation}
  \noindent This correspond to the kernel:
    \begin{equation}
    \label{issues3}
  K(T_1,T_2)=  \sum _{i} \lambda ^{size_i} h_i(T_1)h_i(T_2)
  \end{equation}

  \noindent In order to obtain this result the parameter $0 \le \lambda \leq 1$ was introduced. In this way the kernel downweight the contributions of tree fragments exponentially with their size.

\end{itemize}
\end{definition}

\section{MLA Kernels}

\subsection{MLA Tree Kernel}
The MLA Tree Kernel is based on the MLA and in particular it is obtained using (1) the MLA on an input signal, (2) a particular aggregation rule that produce a tree from intervals and (3) a modified tree kernel adapted to the nature of the class of trees produced by the first two steps. A schematic view of the MLA Tree Kernel inserted on the whole process of Kernel Methods is depicted in figure \ref{fig:general_schema_mla_kernel}.

\begin{figure}[!htb]
\centering
\includegraphics [scale=.8]{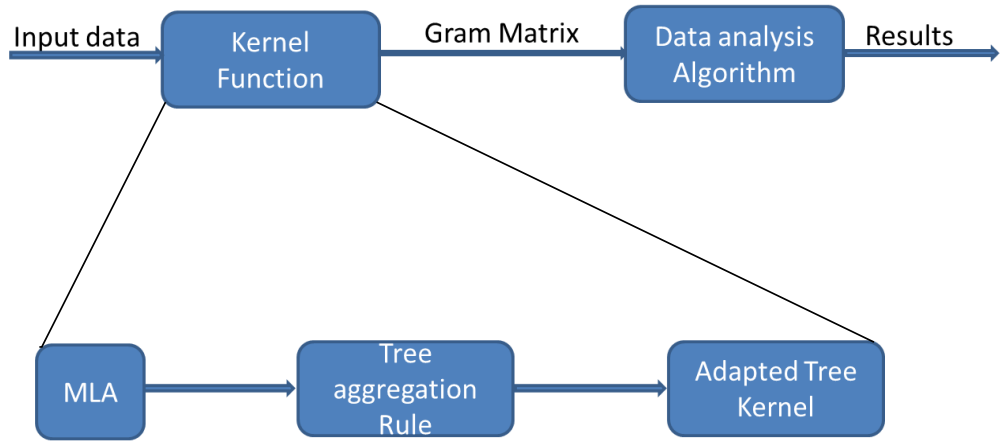}
\caption {General Schema of MLA Tree Kernel}
\label{fig:general_schema_mla_kernel}
\end{figure}

\subsubsection{From signal to tree}
\begin{definition} \emph{MLA tree aggregation rule}\\
Given a signal $f$ defined in $[a,b]$ and $K$ threshold operations $\sigma_k$ ($k=1,...,K$) after the application of Equally spaced simple MLA where the condition on each sigma is:

\begin{displaymath}
\sigma(x,\phi)=
\left\{\begin{array}{lr}
    f(x)    & \mbox{if } f(x) \leq \phi\\
    \phi       & \mbox{otherwise}\\
\end{array}
\right.
\end{displaymath}

\noindent it is possible to obtain the interval representation $\Upsilon(f)$ of $f$, recalling that $ \Upsilon(f) =\left \{ I_1,I_2,\cdots, I_K \right \}$
with $I_k=\left \{ i^1_k,i^2_k,\cdots, i^{n_k}_k \right \}$ the set of intervals corresponding to  $\sigma_k$.
To obtain a tree from the signal $f$ it is necessary to use its interval representation $ \Upsilon(f) $ using a particular aggregation rule on intervals.
It is necessary first to introduce a relation $R:I_k \times I_{k+1} $ with $I_k$ and $I_{k+1} \in \Upsilon(f)$. Given two intervals $i^s_k$ and $i^t_{k+1}$
they are in relation and it will be indicated as $R(i^s_k,i^t_{k+1})$ if and only if $i^t_{k+1}\subseteq i^t_k$.\\
Now, let us define the undirected tree $T=(V,E)$ such as:
\begin{equation}
\label{treedef}
V = I_0 \cup \bigcup_{i=1}^{K} {I_i} \mbox{ with } I_0=\{ r= [a,b] \}
\end{equation}

\noindent and \\
\begin{equation}
\label{treedef2}
 E=\{(i_1,i_2) \mbox{ with } i_1,i_2 \in V:R(i_1,i_2)\}.
\end{equation}

\end{definition}

\noindent In this way it is possible to define a labeled and rooted tree $T$ with root $r$ and in which each node encode the correspondent interval. The depth of the tree is exactly $K+1$ as it is necessary to  add the node $r$ that represents the interval $[a,b]$ where $f$ is defined.  It is possible to see an illustrative picture of the process in figure \ref{fig:general_schema_signal_to_tree}

\begin{figure}[!htb]
\centering
\includegraphics [scale=.8]{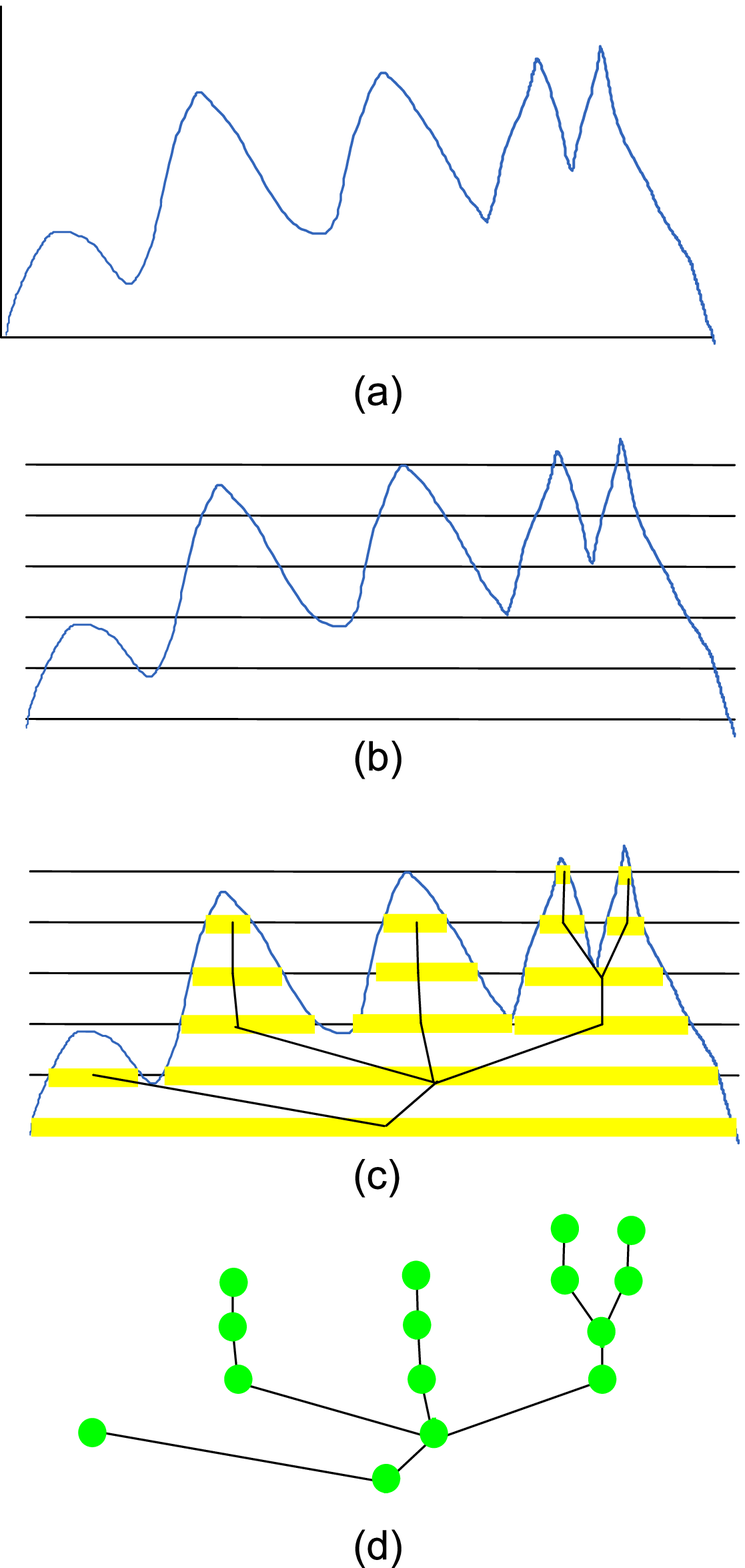}
\caption {General Schema of Kernel Methods}
\label{fig:general_schema_signal_to_tree}
\end{figure}

\subsubsection{Proposed Tree Kernel }
This kernel is defined starting from the tree $T$ previously defined in \ref{treedef} and \ref{treedef2}. The
idea behind this kernel is similar to the tree kernel proposed by Collins and Duffy introduced in section \ref{Collins_tree_kernel}.
In their original work they have used the tree kernel to characterize parse trees, here it is shown how adapt their approach to the set of tree obtained by MLA and representing the class
of one-dimensional signals defined for some interval $[a,b]$. The main idea of this kernel is to compare two signals using their tree representation. In the original kernel of
Collins and Duffy each node represent a production rule or a terminal symbol for some formal languages, here the nodes represent intervals.

\begin{definition} \emph{MLA Tree Kernel}\\
Using the same convention of tree kernel presented in \ref{Collins_tree_kernel}, the \emph{MLA tree kernel} is defined as:
\begin{equation}
K(T_1,T_2)= \textbf{h}(T_1) \cdot \textbf{h}(T_2) =  \sum_{n_1 \in N_1} \sum_{n_2 \in N_2} C(n_1,n_2,\delta)
\end{equation}

\noindent where $n_1$ and $n_2$ for simplicity of expression represent also the interval lengths associated to the nodes $n_1$ and $n_2$,  $\delta \in \mathbb{R}$ with  $0 <\delta<(b-a)$, and $C(n_1,n_2,\delta)$ recursively defined as:

\begin{itemize}
  \item if $n_1$ is a leaf and $n_2$ is not a leaf or viceversa then $C(n_1,n_2,\delta)=0$
  \item if $|n_1-n_2| > \delta $ and the intervals are pre-terminals (both fathers of a leaf) then $C(n_1,n_2,\delta)=0$ ($n_1$ and $n_2$ are considered different).
  \item if $|n_1-n_2| \leq \delta$ and the interval $n_1$ and $n_2$ are two leafs then $C(n_1,n_2,\delta)=1$ ($n_1$ and $n_2$ are considered equal).
  \item else if $|n_1-n_2| \leq \delta$  and the intervals $n_1$ and $n_2$ are not both fathers of a leaf then:     \\
  \begin{equation}
   C(n_1,n_2,\delta)= \prod_{j=1}^{nc(n_1)}  (1+ C(ch(n_1,j),ch(n_2,j),\delta))
  \end{equation}
  \
\end{itemize}

Note that this kernel suffers of the same issues as the Collins and Duffy tree kernel, for this reason it could be useful to consider the variant proposed in \ref{issues1}, \ref{issues2}, \ref{issues3}.Note also that here the node $n_1$ and $n_2$
\end{definition}

\subsection{MLA Convolution Kernel}

This kernel is defined starting from the interval representation of a signal trough the Equally Spaced MLA defined in Chapter $2$. In particular given $2$ signal $x,y$ and let $\Upsilon(x)=\{ Ix_1,Ix_2,\cdots, Ix_K \}$ and $\Upsilon(y)=\{ Iy_1,Iy_2,\cdots, Iy_K  \}$ their intervals representation with $K$ threshold operations.

\begin{definition} \emph{MLA Convolution Kernel}\\
Let $I$ a generic set of intervals from some interval representation of a signal of length $L$ and let define $B_I$ a signal of length $L$ with:

\begin{equation}
B_I(j)=
\begin{cases}
1 & \text{if $ \exists $ an interval $[a,b]\in I$ such that $j\in [a,b]$}\\
0 & \text{otherwise}
\end{cases}
\end{equation}
\noindent with $1 \leq j \leq L$. In this way to a generic interval representation it is possible to associate a set of binary string.

Finally the kernel is defined as:
\begin{equation}
S(x,y)=  \sum \limits _{k=1+hnp} ^{K-hnp+1} {\frac{1}{np} \left [ \left ( \sum \limits _{j=k-hnp+1} ^{k+hnp-1} B_{Ix_j}   \right ) \left( \sum \limits _{j=k-hnp+1} ^{k+hnp-1} B_{Iy_j} \right ) \right ]}
\end{equation}

\noindent where $0 \le \gamma \leq 1$ and $np= \left | \gamma*K \right |$ and $hnp=\frac{np}{2}$.
\end{definition}

This kernel function can be seen as a local correlation between correspondent internal portions of the signals and in which the size of the portion is controlled by the parameter $\gamma$.

\section{Support Vector Machines}
Support Vector Machines (SVM) are learning systems that use an hypothesis space of linear functions in an high dimensional space, trained with a learning algorithm for optimization motivated from statistical learning theory \cite{Cristianini_SVM}. SVM are binary classifiers; in particular the discriminative function of the SVM represent
a linear decision boundary also called margin. More formally, a SVM constructs an hyperplane in a high (eventually infinite) dimensional space, using the implicit projection of the kernel functions, in order to obtain a good separation between positive and negative points. In particular SVM  consider the hyperplane that has the largest distance to the nearest training data points of any class since in general the larger the margin the lower the generalization error of the classifier. In figure \ref{fig:margin} it is possible to see the concept of margin and the hyperplane (a straight line in 2 dimensions).
The interested reader can found a good survey of SVM classifiers in \cite{Cristianini_Kernel}.

\begin{figure}[!htb]
\centering
\includegraphics [width=0.6\textwidth]{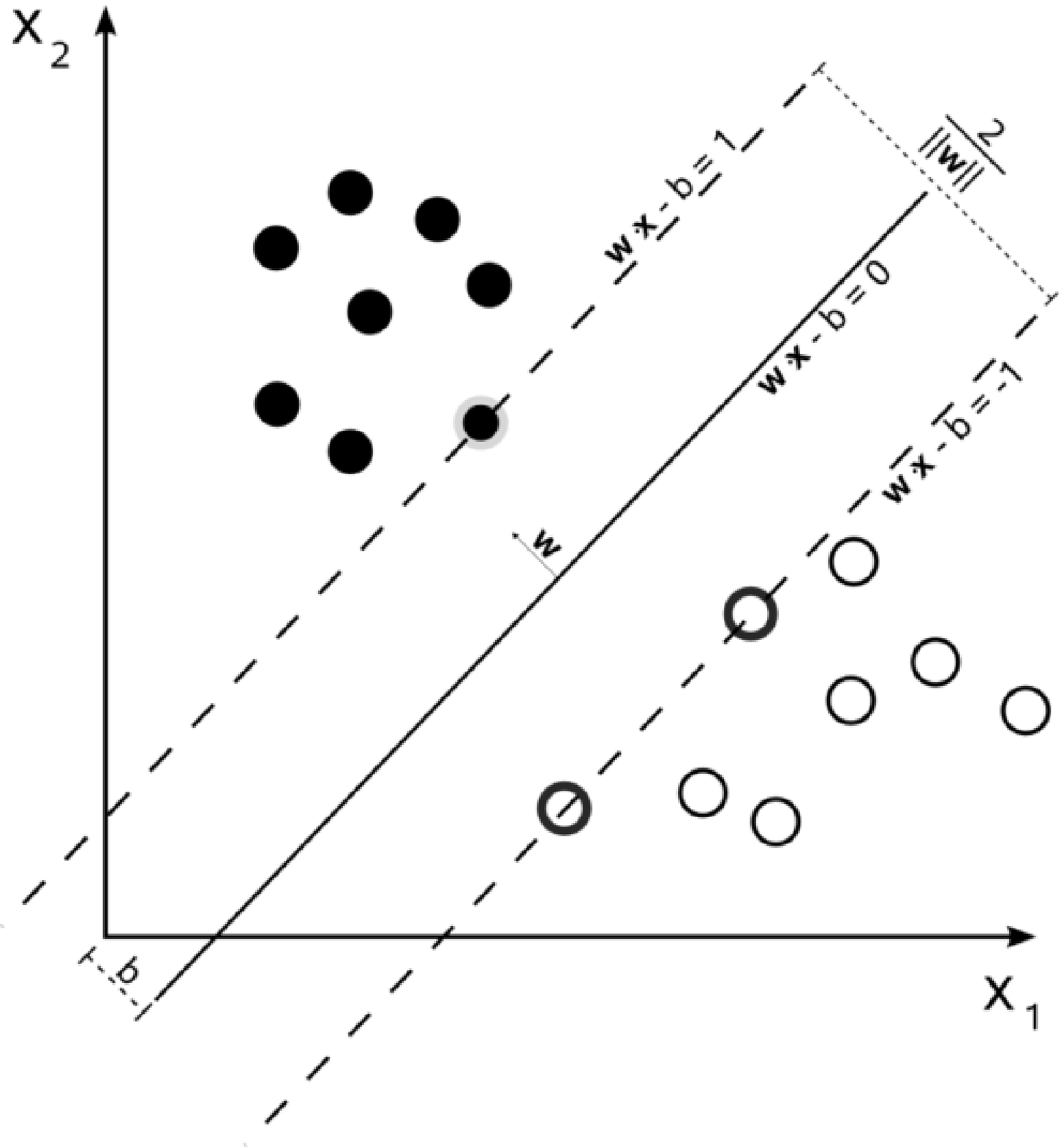}
\caption {SVM margin and the separation hyperplane}
\label{fig:margin}
\end{figure}

\section{Experimental Setup}
In this section  three experiments that use the MLA Tree kernel will be presented, in particular the first two involve a classification, while the third is related to clustering.

\subsection{Synthetic data: discrimination power of MLA Tree Kernel on basic functions}
To validate MLA Tree Kernel, three basic signals that can be characterized in term of shape in time domain, has been considered (see figure \ref{fig:syn}):
\begin{itemize}
  \item sinusoid signal
  \item rectangular pulse signal
  \item sawtooth signal
\end{itemize}

As training set $S$, $N$ signals have been generated with an increasing linear SNR noise value ranging from  $0.1$ to $1$, for each of the three categories. In this way, one dispose of a training set with $3 \times N$ elements and with $3$ classes. Analogously a Test Set $T$ disjointed from $S$ was taken into account, with the same cardinality i.e. $3 \times N$. To validate the performances, a Support Vector Machine with different kernel functions has been considered: linear, polynomial, RBF, sigmoid and MLA Tree. The results obtained with $N=50$ and with different kernels are shown in table \ref{tab:syn}. As it it is possible to see all the kernels obtain very good performances although in the case of very noisy signal the MLA Tree Kernel can still recover the shape information leading to a slightly better result. This makes the MLA tree kernel more robust to noise than the other kernels.

\begin{table}[t]
\centering
\begin{tabular}{|l||c||c|}
\hline
Kernel Function & Correctly Classified & Accuracy \\
\hline
MLA Tree &   150/150  & 100\% \\
Linear & 143/150 & 95\%\\
Polinomial(2) & 131/150 & 87\%\\
RBF &  130/150& 87\%\\
Sigmoid & 141/150 & 94\%\\
\hline
\end{tabular}
\caption{Classification accuracy on basic functions dataset.}
\label{tab:syn}
\end{table}

\begin{figure}[!htb]
\centering
\includegraphics [width=0.8\textwidth]{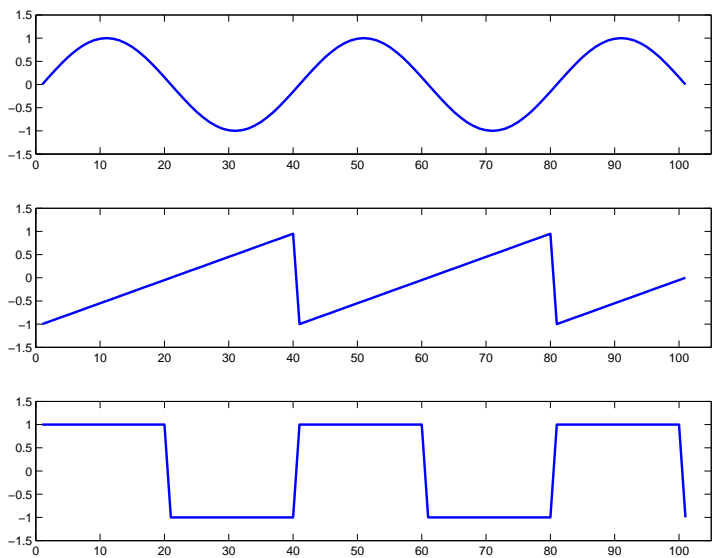}
\caption {Basic function}
\label{fig:syn}
\end{figure}

\begin{figure}[!htb]
\centering
\includegraphics [width=0.8\textwidth]{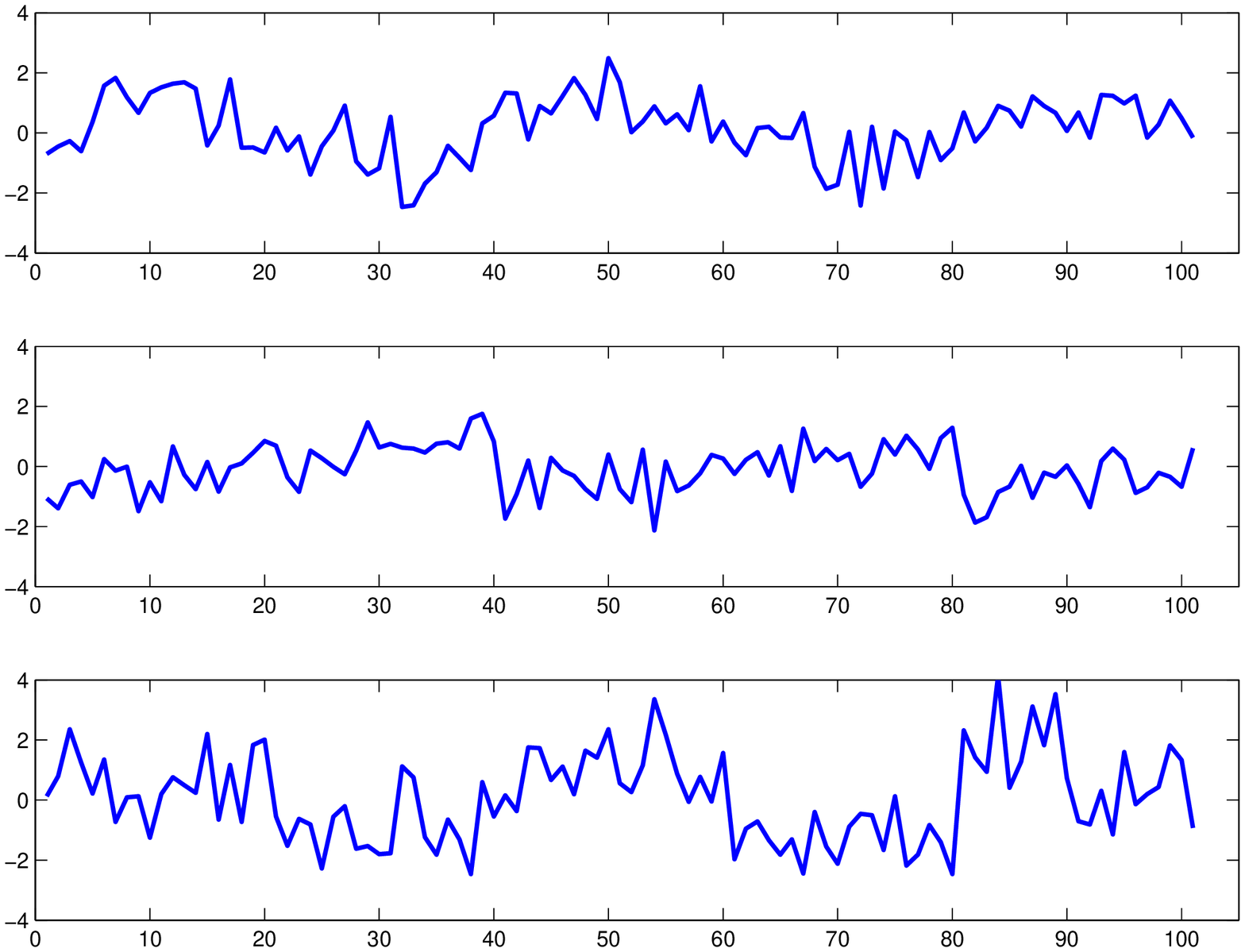}
\caption {Basic function plus noise}
\label{fig:syn}
\end{figure}

\subsection{Synthetic data: MLA Tree Kernel on waveform dataset}
In this experiment the dataset from \cite{Breiman} was considered. It contains $5000$ instances divided in $3$ classes of waves of $21$ attributes, all of which include gaussian noise with mean $0$ and variance $1$.  In particular, each class is generated from a combination of $2$ of $3$ ``base'' waves. The best accuracy that has been obtained processing this dataset has been reached by the Optimal Bayes classifier, with a value of $86\%$. Here the dataset was split in two balanced parts (training and test sets) of $1500$ elements equally distributed into the tree classes for evaluating the performances of MLA Tree kernel with a SVM classifier. In particular as in the previous experiment, linear, polynomial, RBF, sigmoid and MLA Tree kernels functions have been used. In the table \ref{tab:waveform} results are shown. As it it is possible to see all the kernels obtain very good performances.

\begin{table}[t]
\centering
\begin{tabular}{|l||c||c|}
\hline
Kernel Function & Correctly Classified & Accuracy \\
\hline
MLA Tree &   1364/1500  & 91\% \\
Linear & 1286/1500 & 86\%\\
Polinomial(2) & 1187/1500 & 80\%\\
RBF &  1286/1500& 86\%\\
Sigmoid & 795/1500 & 53\%\\
\hline
\end{tabular}
\caption{Classification accuracy on waveforms dataset.}
\label{tab:waveform}
\end{table}

\subsection{Assessment of induced distance of MLA Convolution Kernel for clustering of seismic signal}
The dataset taken in exam for this experiment consists of $n$ undersea explosion of an array of bombs at different distanced from a ship. This dataset was builded in order to have a well characterized set of signals to use as a benchmark for problems involving geological signals. In particular, the ship record for each explosion at time $t_i$ a signal $s_i$ that express the variation on pressure level. The explosions take place at regular intervals of $300$ seconds and each signal is sampled at $100$hz. A particularity of this dataset, as it is possible to see in figure \ref{fig:ship_bomb}, is that close temporal explosions occurs at similar distances from the ship. This means that given a signal  $s_i$, with high probability the most similar signal in term of shape is the signal  $s_{i+d}$ with $d$ close to $1$ or $-1$ i.e. a signal recorded in proximity of instant $t_i$. This property allows to test in a natural way the performances of a similarity or dissimilarity function comparing the "order" that it induces on the set of signals. In particular let $s_1, \ldots ,s_n $ the set of signals recorded  at starting time $t_1, \ldots, t_n$ respectively, and the natural order of the signals can be represented by the permutation $P=(1,2, \ldots, n)$. Given a generic distance $d$, let $D$ the $n \times n$ distance matrix containing all the pairwise distances between the signals i.e. $D_{i,j}=d(s_i,s_j)$ with $1 \leq i,j \leq n$. A measure of goodness of distance, can be defined by the distance optimality function so defined:

\begin{figure}[!htb]
\centering
\includegraphics [scale=1]{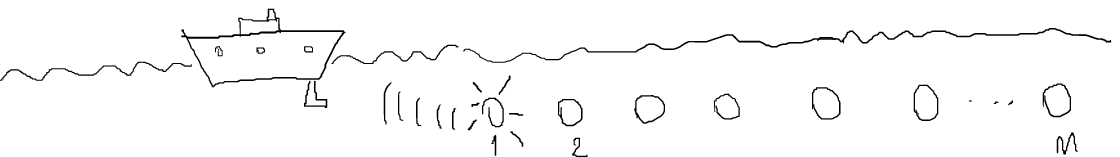}
\caption {Schema of the experiment}
\label{fig:ship_bomb}
\end{figure}

\begin{definition}\emph{Distance Optimality}\\
Given a distance $d$ and a dataset $S$ of size $N$, and let $D$ the pairwise distance matrix with $D_{i,j}=d(s_i,s_j)$, $s_i,s_j \in S$ and $1 \leq i,j \leq n$, the distance optimality of $d$ is  defined as:

\begin{equation}
    do= \sum _{i=1}^{n} \frac{|i-j-1|}{n-2} \mbox{  with    } j=\underset{1 \leq k \leq n, k \neq i}{\operatorname{argmin}} D_{i,k}
\end{equation}

\end{definition}

What is expected, in the case of a good distance measure, is a $do \approx 0$. It was assessed the performances of the distance induced by MLA Tree Kernel (using the equation \ref{eq:distance} on its Gram's matrix) and compared its results with two common distances i.e. Euclidean distance and Spearman correlation distance by the distance optimality function. Note that the used Spearman correlation distance is defined as $1-r$ where $r$ is the Spearman correlation index defined in \ref{chap:2} by equation \ref{eq:Pearson}. The results of this analysis are shown on table \ref{tab:do}. As it is possible to see, the induced distance from MLA Convolution Kernel can exploit better the natural similarity between signals than the other classic measures.\\\\

\begin{table}[t]
\centering
\begin{tabular}{|l||c|}
\hline
Distance & Distance Optimality \\
\hline
MLA Convolution &  0.2369 \\
Euclidean  & 0.3889\\
Pearson Correlation & 0.2813\\
\hline
\end{tabular}
\caption{Distance optimality on geological signals}
\label{tab:do}
\end{table}

This chapter has shown how the data extracted by MLA can be optimally organized in a tree of intervals, encoding the shape properties of a signal, using a particular aggregation rule. It was shown also an example of kernel trees properly adapted to be used with this tree representation induced by MLA. In addition another convolution kernel and based on local correlations was introduced. The first results are encouraging although it is necessary to do a more systematic study on the class of kernel functions that can be induced by the proposed aggregation rule on the interval representation and also on their properties and extensions. The major suggestion of the study carried out in this chapter is the connection between the class of algorithms on trees and graph and the class of digital signal processing technique. In fact, the MLA transformation can be useful to search for relation between operation on trees and graph and signal manipulation in time or frequency domain.

\chapter{Conclusions and Future Directions}
\label{chap:conclusion}

This thesis has  introduced a new methodology called Multi Layer Analysis (MLA), and its use on several contexts such as Pattern Discovery, Classification, Clustering and also Test of Randomness. In chapter 3, 4, and 5 several application domains related to these problems have been faced with the MLA approach.  In some sense, the use of MLA can be considered as a general boosting step to improve classic algorithms in the fields of classification or clustering. The main idea behind MLA is the transformation from the space of one-dimensional signals into a new space called the space of intervals in which a more detailed analysis could be performed.\\

In particular, in chapter 3 it has been shown that, by using particular aggregation rules on such space, it is possible to characterize different signal shapes; this allows to approach some key problems in biology i.e. the nucleosome spacing problem.\\

Moreover, in chapter 5 it has been proposed another aggregation rule that is capable to represent a one-dimensional signal in terms of a tree of intervals, and thus permits to express or characterize any kind of shape. This point has strong implications since it establishes a connection between the class of algorithms that process one-dimensional signal such as digital signal processing techniques, and algorithms on trees and graphs. This result is really important because it makes possible the application of particular transformations on a one-dimensional signal, modifying its tree representation and viceversa. In this sense further investigation in this direction will be performed.\\

The final consideration is that MLA can be fruitfully applied on problems that involve the processing of one-dimensional signals, such as Geology, Biomedicine, Biology and other disciplines. In some cases MLA on such problems have comparable or sometimes superior performances to other methodologies currently applied for the same purposes. Further investigation on MLA properties and its extension to multidimensional data will be investigated.\\

\bibliographystyle{plain}
\bibliography{Thesis}
\end{document}